\documentclass{article} 
\usepackage[final]{colm2026_conference}

\usepackage{microtype}
\usepackage{hyperref}
\usepackage{url}
\usepackage{booktabs}

\usepackage{amsmath}
\usepackage{amssymb}
\usepackage{mathtools}
\usepackage{amsthm}

\usepackage{lineno}

\usepackage{amsfonts}
\usepackage{bbm}
\usepackage{enumerate}
\usepackage{enumitem}
\usepackage{array}
\usepackage{listings}
\usepackage{dsfont}
\usepackage{placeins}
\usepackage[ruled,vlined,linesnumbered]{algorithm2e}
\usepackage{caption} 
\usepackage{subcaption}

\usepackage{adjustbox} 
\usepackage{wrapfig}
\usepackage{titletoc}
\usepackage{makecell}

\usepackage[capitalize,noabbrev]{cleveref}
\usepackage[table,xcdraw, dvipsnames]{xcolor}
\definecolor{gtrow}{RGB}{255, 255, 200}   
\definecolor{seabornbluemid}{HTML}{08519C}
\definecolor{seabornorangemid}{HTML}{F16913}
\definecolor{seaborngreymid}{HTML}{737373}
\definecolor{seabornredmid}{HTML}{EF3B2C}

\makeatletter
\def\And{\end{tabular}\hspace{3em}%
        \begin{tabular}[t]{c}\bf\rule{\z@}{24pt}\ignorespaces}%
\def\AND{\end{tabular}\hfil\linebreak[4]\hfil
        \begin{tabular}[t]{c}\bf\rule{\z@}{24pt}\ignorespaces}%
\def\@maketitle{\vbox{\hsize\textwidth
{\Large\bf \@title\par}
\ifcolmfinal
    \lhead{Published as a conference paper at COLM 2026}
    \centering\begin{tabular}[t]{c}\bf\rule{\z@}{24pt}\@author\end{tabular}%
\else\ifcolmpreprint
    \lhead{Preprint.}
    \centering\begin{tabular}[t]{c}\bf\rule{\z@}{24pt}\@author\end{tabular}%
\else
    \lhead{Under review as a conference paper at COLM 2026}
    \centering\begin{tabular}[t]{c}\bf\rule{\z@}{24pt}Anonymous authors\\Paper under double-blind review\end{tabular}%
\fi\fi
\vskip 0.3in minus 0.1in}}
\makeatother

\newif\ifusemath
\usemathtrue

\newcommand{\todo}[1]{\textcolor{red}{\textbf{TODO:} #1}}
\newcommand{\custompar}[1]{\vspace{.1cm} \noindent{\bf #1.}\:}

\SetKwFunction{TopK}{TopK}
\SetKwFunction{LogSoftmax}{LogSoftmax}
\SetKwFunction{LogSumExp}{LogSumExp}

\newlist{steplist}{enumerate}{1}
\setlist[steplist]{label=(\Alph*), ref=(\Alph*)}

\usepackage{amsmath,amsfonts,bm}

\newcommand{\newterm}[1]{{\bf #1}}

\def\eqref#1{equation~\ref{#1}}
\def\Eqref#1{Equation~\ref{#1}}

\def\ceil#1{\lceil #1 \rceil}

\def\1{\bm{1}}

\def\vb{{\bm{b}}}
\def\vc{{\bm{c}}}

\def\vr{{\bm{r}}}

\def\vu{{\bm{u}}}
\def\vv{{\bm{v}}}
\def\vw{{\bm{w}}}
\def\vx{{\bm{x}}}
\def\vy{{\bm{y}}}
\def\vz{{\bm{z}}}

\DeclareMathAlphabet{\mathsfit}{\encodingdefault}{\sfdefault}{m}{sl}
\SetMathAlphabet{\mathsfit}{bold}{\encodingdefault}{\sfdefault}{bx}{n}

\def\sA{{\mathbb{A}}}
\def\sB{{\mathbb{B}}}
\def\sC{{\mathbb{C}}}
\def\sD{{\mathbb{D}}}
\def\sF{{\mathbb{F}}}

\def\sL{{\mathbb{L}}}

\def\sN{{\mathbb{N}}}

\def\sR{{\mathbb{R}}}
\def\sS{{\mathbb{S}}}

\def\sU{{\mathbb{U}}}
\def\sV{{\mathbb{V}}}

\def\sX{{\mathbb{X}}}
\def\sY{{\mathbb{Y}}}
\def\sZ{{\mathbb{Z}}}

\newcommand{\E}{\mathbb{E}}

\newcommand{\R}{\mathbb{R}}

\newcommand{\Var}{\mathrm{Var}}
\newcommand{\standarderror}{\mathrm{SE}}

\DeclareMathOperator*{\argmax}{arg\,max}

\usepackage{amsthm}

\ifusemath
\theoremstyle{plain}
\newtheorem{theorem}{Theorem}
\newtheorem{proposition}[theorem]{Proposition}
\newtheorem{lemma}[theorem]{Lemma}
\newtheorem{corollary}[theorem]{Corollary}
\theoremstyle{definition}

\theoremstyle{definition}

\theoremstyle{definition}

\theoremstyle{remark}
\newtheorem{remark}[theorem]{Remark}
\theoremstyle{plain}

\fi

\newcommand{\expect}[2]{\mathds{E}_{{#1}} \left[ {#2} \right]}

\newcommand{\model}{\theta}
\newcommand{\dec}{\phi}
\newcommand{\logit}{\vy}
\newcommand{\dist}{\mathsf{dist}}
\newcommand{\ham}{\mathsf{Hamming}}
\newcommand{\hamshort}{\mathsf{Ham}}
\newcommand{\lev}{\mathsf{Levenshtein}}
\newcommand{\levshort}{\mathsf{Lev}}
\newcommand{\normlev}{\mathsf{Norm\_Levenshtein}}
\newcommand{\softmaxsf}{\mathsf{softmax}}
\newcommand{\gen}{\mathsf{generate}}
\newcommand{\vocab}{\sV}
\newcommand{\tok}{z}
\newcommand{\seq}{\vz}

\newcommand{\genseq}{\hat{\seq}} 
\newcommand{\gentok}{\hat{\tok}} 

\newcommand{\dataset}{\sD}
\newcommand{\seqset}{\sZ}

\newcommand{\preend}{a}
\newcommand{\prelen}{\preend}
\newcommand{\preinds}{1:\prelen}
\newcommand{\prefix}{\seq_{\preinds}}

\newcommand{\sufstart}{\preend+1}
\newcommand{\suflen}{T}
\newcommand{\sufend}{\preend+\suflen}
\newcommand{\sufinds}{\sufstart:\sufend}
\newcommand{\suffix}{\seq_{\sufinds}}
\newcommand{\gensuffix}{\genseq_{\sufinds}}

\newcommand{\suc}{s}
\newcommand{\greedy}{\mathsf{greedy}}
\newcommand{\pseq}{p_\seq}
\newcommand{\pseqe}{p_{\seq,\varepsilon}^\dist}
\newcommand{\pseqeham}{p_{\seq,\varepsilon}^\hamshort}
\newcommand{\pseqelev}{p_{\seq,\varepsilon}^\levshort}
\newcommand{\hatpseqe}{\hat{p}_{\seq,\varepsilon}^\dist}
\newcommand{\taumin}{\tau_{\min}}

\newcommand{\seqb}{\vb}
\newcommand{\tokb}{b}
\newcommand{\seqc}{\vc}
\newcommand{\tokc}{c}

\newcommand{\bw}{B}
\newcommand{\ballset}{\sB}
\newcommand{\ball}{\ballset^{\dist}_{\varepsilon}}
\newcommand{\hamball}{\ballset^{\hamshort}_{\varepsilon}}
\newcommand{\levball}{\ballset^{\levshort}_{\varepsilon}}

\newcommand{\pre}{\seq_{\textnormal{(pre)}}}
\newcommand{\suf}{\seq_{\textnormal{(suf)}}}
\newcommand{\gensuf}{\genseq_{\textnormal{(cont)}}}
\newcommand{\cont}{\seq_{\textnormal{(cont)}}}
\newcommand{\mismatch}{n}
\newcommand{\suftok}[1]{\tok^{\textnormal{(suf)}}_{#1}}         
\newcommand{\lb}{\mathrm{LB}_{\varepsilon,\dist}}
\newcommand{\ub}{\mathrm{UB}_{\varepsilon,\dist}}

\newcommand{\cover}{\mathrm{covered\_mass}}
\newcommand{\temp}{\beta}
\newcommand{\dptab}{D}
\newcommand{\subsuf}[1]{\seq^{\textnormal{(suf)}}_{#1}}
\newcommand{\subgensuf}[1]{\genseq^{\textnormal{(cont)}}_{#1}}
\newcommand{\subgensuftok}[1]{\gentok^{\textnormal{(cont)}}_{#1}}

\newcommand{\customtarget}[2]{\hypertarget{#1}{\textbf{#2}}}
\newcommand{\customlink}[2]{\hyperlink{#1}{#2}}

\definecolor{darkblue}{rgb}{0, 0, 0.5}
\hypersetup{colorlinks=true, citecolor=darkblue, linkcolor=darkblue, urlcolor=darkblue}

\title{Estimating near-verbatim extraction risk in language models with decoding-constrained beam search}

\author{%
  A. Feder Cooper\thanks{Corresponding author: \texttt{a.feder.cooper@yale.edu}} \\
  Stanford \& Yale\\
  \And 
  Mark A. Lemley\\
  Stanford\\
  \And
  Christopher De Sa\\
  Cornell\\
  \AND
  Lea Duesterwald\\
  Cornell\\
  \And 
  Allison Casasola\\
  Stanford\\
  \And 
  Jamie Hayes\\
  Google DeepMind
  \AND 
  Katherine Lee\\
  \And
  Daniel E. Ho\\
  Stanford\\
  \And
  Percy Liang\\
  Stanford
}

\begin{document}

\ifcolmsubmission
\linenumbers
\fi

\maketitle

\vspace{-.3cm}
\begin{abstract}
\vspace{-.1cm}
Recent work shows that standard greedy-decoding extraction methods for quantifying memorization in LLMs miss how extraction risk varies across sequences.
Probabilistic extraction---computing the probability of generating a target 
suffix given a prefix under a decoding scheme---addresses this, but is tractable only for verbatim memorization, missing near-verbatim instances that pose similar privacy and copyright risks.
Quantifying near-verbatim extraction risk is expensive:
the set of near-verbatim suffixes is combinatorially large, and reliable Monte Carlo (MC) estimation can require ${\approx}\,100{,}000$ samples per sequence.
To mitigate this cost, we introduce decoding-constrained beam search, which yields deterministic lower bounds on near-verbatim extraction risk at a cost comparable to ${\approx}\,20$ MC samples per sequence.
Across experiments, our approach surfaces information invisible to verbatim methods:
many more extractable sequences, substantially larger per-sequence extraction mass, and patterns in how near-verbatim extraction risk manifests across model sizes and types of text.\looseness=-1
\end{abstract}

\vspace{-.2cm}
\section{Introduction}\label{sec:intro}
\vspace{-.1cm}

LLMs memorize portions of their training data, and it is sometimes possible to extract those memorized sequences in outputs~\citep{carlini2021extracting,carlini2023quantifying}.
Understanding memorization is important for assessing model quality, as it may indicate overfitting rather than generalization.
Extraction of memorized training data also raises other concerns:
it can expose private information~\citep{brown2022privacy, nasr2023scalable, nolte2025machinelearnersacknowledgelegal} or reproduce copyrighted material present in the training data~\citep{lee2023talkin, cooper2024files, ahmed_extracting_2026}.\looseness=-1

Most prior work measures extraction through \emph{verbatim} comparisons:
prompt an LLM with a training-data prefix, use greedy decoding to produce an output, and check whether that output \emph{exactly} matches the corresponding training-data suffix~\citep{lee2022dedup}.
This approach undercounts memorization in two important ways.
First, requiring verbatim matches misses \emph{near-verbatim} memorization~\citep{ippolito-etal-2023-preventing}. 
Second, greedy decoding produces a single deterministic output, so it cannot capture how extraction risk varies across sequences. 
Recent work on \emph{probabilistic} extraction---computing the probability of generating the target suffix under a decoding scheme---addresses the latter, but remains tractable only for verbatim memorization (\citet{hayes2025measuringmemorizationlanguagemodels}, Section~\ref{sec:rw}).
Computing \emph{near-verbatim probabilistic} extraction is far more expensive, as it requires estimating  probability mass over a combinatorially large set of near-verbatim suffixes.\looseness=-1

To address this, we introduce a tractable method for estimating near-verbatim extraction risk:
\newterm{decoding-constrained beam search}, a family of algorithms that produce a provably correct deterministic lower bound on a sequence $\seq$'s near-verbatim extraction probability $\pseqe$ for a given distance metric $\dist$ and tolerance $\varepsilon$ (Section~\ref{sec:kcbs}).
While reliable Monte Carlo (MC) estimation can require ${\approx}\,100{,}000$ samples per sequence (Section~\ref{sec:warmup}), our experiments with top-$k$-constrained beam search ($k$-CBS) yield practically useful lower bounds at a cost comparable to ${\approx}\,20$ MC samples per sequence.
We provide variants that integrate distance-based $\varepsilon$-viability pruning into the search, often yielding tighter lower bounds on $\pseqe$ at reduced runtime cost.  
We evaluate $k$-CBS across multiple model families, model sizes, and datasets (Section~\ref{sec:experiments} \& Appendix~\ref{app:sec:experiments}). 
Our experiments show that near-verbatim probabilistic extraction reveals information invisible to verbatim methods:
far more extractable sequences (e.g., $2.57\%$ of sequences for \textsc{OLMo 2 32B} on Wikipedia, compared to $1.42\%$ for verbatim probabilistic extraction);  
substantially larger per-sequence extraction risk (e.g., in some cases, from $0$ verbatim risk to over $0.85$ near-verbatim risk); 
and patterns in how near-verbatim extraction risk manifests across model sizes and types of text.\looseness=-1

\vspace{-.2cm}
\section{Background and related work}\label{sec:rw}
\vspace{-.1cm}

\begin{figure*}[t!]
\centering
\includegraphics[width=\linewidth]{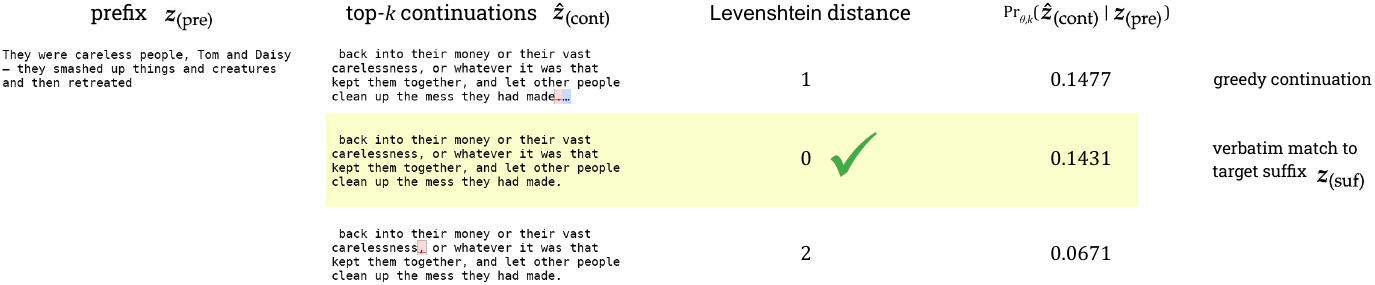}
\vspace{-.5cm}
\caption{\textbf{Probabilistic extraction.} 
For $\model\!=$\textsc{Llama 1 13B} and a training sequence $\seq$ from \emph{The Great Gatsby}, we show 
prefix $\pre\coloneqq\prefix$ and $3$ continuations $\gensuf \coloneqq\gensuffix$ under $\dec\!=$ top-$k\!=\!40$ 
(Equation~\ref{eq:topk:main}) with  conditional probabilities $\Pr_{\model, k}(\gensuf \mid \pre)$ (Equation~\ref{eq:pz:main}). 
We diff each $\gensuf$ with the target suffix 
$\suf\coloneqq\suffix$ (character space: \textcolor{blue}{blue} additions, \textcolor{red}{red} deletions) 
and compute the Levenshtein distance (token space, Equation~\ref{eq:dist:main}).  
We highlight verbatim extraction 
(i.e., $\gensuf\!=\!\suf$, $0.1431\geq\taumin=0.001$), which is \emph{not} the greedy continuation (top row).
All three 
$\gensuf$ are \emph{near-verbatim} matches to  
$\suf$ (Section~\ref{sec:warmup}).\looseness=-1
}
\label{fig:example}
\vspace{-.2cm}
\end{figure*}

Let $\vocab$ denote the token \newterm{vocabulary} for a \newterm{large language model (LLM)} (e.g., $|\vocab|=32{,}000$ for \textsc{Llama 1}).                                                                                         An LLM with \newterm{weights} $\model$ maps a sequence of tokens $\seqb=(\tokb_1, \ldots, \tokb_n) \in \vocab^n$ to a logit vector $\logit \in \R^{|\vocab|}$:                                                                    
$\model: \vocab^n \to \R^{|\vocab|}$.
A \newterm{decoding scheme} $\dec$ defines how logits are mapped to a next-token sampling distribution, $\softmaxsf_\dec: \R^{|\vocab|} \to \mathcal{P}(\vocab)$, where $\mathcal{P}(\vocab)$ is the set of probability distributions over~$\vocab$.
Together, $(\model,\dec)$ define an \newterm{autoregressive generation process}: 
given a \newterm{prompt} $\seqb_{1:i}$, at each step $t > i$ we compute $\logit_t = \model(\seqb_{1:t-1})$ and sample $\tokb_t \sim \softmaxsf_\dec(\logit_t)$; for $j$ steps, this produces a \newterm{continuation} $\hat{\seqb}_{i+1:i+j}$. 

We focus on \newterm{top-$k$ decoding}~\citep{fan-etal-2018-hierarchical}.                                 
At step $t$, let $\vocab_k \subseteq \vocab$ be the set of the $k\!>\!0$ tokens with the largest logits in $\logit_t$.                                                                                         
Top-$k$ zeros out all but the $k$ highest logits and renormalizes:\looseness=-1                               \vspace*{-.1cm}                                                                                                    \begin{align}           
  \label{eq:topk:main}
  \Pr_{\model,k}(\tokb_t{=}v \mid \seqb_{1:t-1}) = \frac{\exp(\logit_t[v])}{\sum_{u \in \vocab_k} \exp(\logit_t[u])} \;\mathbf{1}\{v \in \vocab_k\}. 
\end{align}

\vspace*{-.12cm}\textbf{Verbatim memorization and extraction.} 
\newterm{Memorization} refers to the encoding of specific training sequences in a model's weights~\citep{feldman2020mem, yeom2018privacy,shokri2017membership}, such that the learned distribution assigns very high probability to them. 
For LLMs, this can make \newterm{extraction} possible: 
memorized sequences can sometimes be reproduced in outputs~\citep{carlini2021extracting}. 
The most common way to assess memorization in LLMs is \newterm{discoverable extraction} (\newterm{greedy extraction}): 
split a training sequence $\seq$ into an $\prelen$-length \newterm{prefix} $\prefix$ and a $\suflen$-length \newterm{target suffix} $\suffix$ (i.e., $\seq = \prefix \parallel \suffix$), prompt the LLM with $\prefix$, use \newterm{greedy decoding} as a cheap proxy for deterministically generating a high-probability $\suflen$-length continuation $\gensuffix$, and deem extraction successful if $\gensuffix$ \emph{exactly} matches $\suffix$ (i.e., \(\1[\,\gensuffix = \suffix\,]\))~\citep{lee2022dedup, carlini2023quantifying}. 

\citet{hayes2025measuringmemorizationlanguagemodels} show that greedy extraction greatly underestimates memorization, as memorized continuations that have very high generation probability may not be reachable through greedy (top-$1$) decoding. 
They suggest a definition for \newterm{probabilistic extraction}, which accounts for the non-determinism of more typical decoding schemes $\dec$ (e.g., top-$k$ with $k\!>\!1$) by computing the probability of generating the exact target $\suffix$ given $\prefix$:\looseness=-1 
\vspace*{-.05cm}
\begin{align}
\label{eq:pz:main}
\pseq \;\triangleq\;  \Pr_{\model,\dec}\!\big(\suffix\,\big|\,\prefix\big)
\;=\; \prod_{t=\sufstart}^{\sufend} \Pr_{\model,\dec}\!\big(\tok_t \,\big|\, \seq_{1:t-1}\big)
\;=\; \exp\!\Bigg(
  \sum_{t=\sufstart}^{\sufend}
  \log \Pr_{\model,\dec}\!\big(\tok_t \,\big|\, \seq_{1:t-1}\big)
\Bigg).
\end{align} 
\vspace*{-.05cm}For a threshold $\taumin\in(0,1]$, verbatim probabilistic extraction is successful when the generation probability of $\suffix$ given $\prefix$ is at least $\taumin$, i.e., 
\(
\1[\,\pseq \geq \taumin\,].
\)
\citet{hayes2025measuringmemorizationlanguagemodels} justify studying probabilistic extraction using top-$k$ decoding with $k\!=\!40$, for which \citet{cooper2025books} validate a conservative $\taumin\!=\!0.001$. 
We examine the same settings here.

Figure~\ref{fig:example} provides a concrete example. 
For \textsc{Llama 1 13B} and a training sequence from \emph{The Great Gatsby}, we illustrate the generation of three continuations under top-$k$ decoding (Equation~\ref{eq:topk:main}) for the same prefix. 
Verbatim probabilistic extraction succeeds (middle row): 
$\pseq\!=\!0.1431\!>\!\taumin$. 
Verbatim greedy extraction fails, as the greedy continuation (top row) is not a verbatim match to the target suffix. 
Probabilistic extraction also surfaces nuanced information about \newterm{extraction risk}.
While greedy decoding flattens extraction success to a binary, yes-or-no outcome,  probabilistic extraction under $\dec$ returns an extraction probability $\pseq \in [0,1]$. 
For training sequences $A$ and $B$, if \(p_{\seq^{(A)}} > p_{\seq^{(B)}} \), then in expectation LLM \(\model\) will leak $A$ more often than $B$.
In Figure~\ref{fig:example}, 
$\pseq\!=\!0.1431$ means that \textsc{Llama 1 13B} leaks the exact target suffix about $1$ out of every $7$ times it is prompted with the prefix. 
Finally, probabilistic extraction offers these benefits at comparable cost to greedy decoding.
Verbatim extraction probability $\pseq$ (Equation~\ref{eq:pz:main}) can be computed exactly via \newterm{teacher-forced inference}---a single forward pass over $\seq$, with no sampling required (\citet{cooper2025books}, Appendix~\ref{app:sec:intuition:cost}).\looseness=-1

\vspace{-.2cm}
\section{The challenge of quantifying near-verbatim extraction risk}\label{sec:warmup}
\vspace{-.1cm}

For efficiency reasons, much prior work studies \emph{verbatim} extraction~\citep{lee2022dedup, nasr2023scalable} (Section~\ref{sec:rw}). 
However, as intuited in Figure~\ref{fig:example}, verbatim extraction underestimates memorization.
For greedy extraction, accounting for \emph{near-verbatim} continuations 
of a prefix  
reveals substantially more memorized sequences~\citep{ippolito-etal-2023-preventing}.
(In Figure~\ref{fig:example}, near-verbatim greedy extraction succeeds.) 
For probabilistic extraction, near-verbatim continuations would similarly provide richer information about extraction risk.
(In Figure~\ref{fig:example}, the probabilities of all three continuations contribute to near-verbatim extraction risk:
verbatim risk is $0.1431$, but near-verbatim risk computed over all near-verbatim suffixes is at least $0.1477\!+\!0.1431\!+\!0.0671\!=\!0.3579$.) 
Yet, as \citet{hayes2025measuringmemorizationlanguagemodels} note, straightforward approaches for computing near-verbatim probabilistic extraction---e.g., Monte Carlo (MC) sampling---are computationally expensive. 
In this section, we explain this cost, motivating our algorithm for efficiently estimating near-verbatim probabilistic extraction (Section~\ref{sec:kcbs}).

\custompar{Near-verbatim extraction}
To simplify indexing, 
denote $\prelen$-length 
prefix $\pre \coloneqq \prefix$, 
$\suflen$-length target suffix $\suf \coloneqq\suffix$, and $\suflen$-length generated continuation $\gensuf \coloneqq \gensuffix$. 
For both greedy 
and probabilistic extraction, we 
accommodate near-verbatim 
cases by introducing a distance metric $\dist$ and tolerance $\varepsilon$. 
For greedy extraction, success counts 
\(
    \1[\, \dist(\gensuf, \suf) \le \varepsilon \,]
\). 
(When $\varepsilon\!=\!0$, this reduces to verbatim extraction.) 

For verbatim probabilistic extraction, 
success 
depends on the probability of the exact target, $\pseq$ (Equation~\ref{eq:pz:main}). 
For the near-verbatim case, many continuations---e.g., all three in Figure~\ref{fig:example}---may qualify as extraction success, any of which may reasonably be sampled with a non-deterministic decoding scheme $\dec$.
We therefore define the near-verbatim extraction probability as the aggregate mass of  the set of $\varepsilon$-viable continuations. 
For target suffix $\suf$, denote the \newterm{$\varepsilon$-ball} of near-verbatim matches for distance  
$\dist$ as 
\begin{align}
\label{eq:eball:main}
\ball(\suf)
    \;\triangleq\; \{\,\vv \in \vocab^\suflen: \dist\big(\vv,\suf\big) \le \varepsilon \,\}.  
\end{align}
The \newterm{near-verbatim extraction risk} $\pseqe$ is the total mass on $\ball(\suf)$,\looseness=-1 
\begin{equation}
\label{eq:pze:main}
\pseqe \;\triangleq\; 
\sum_{\vv \in \ball(\suf)} \Pr_{\model,\dec}\big(\vv \mid \pre\big),
\end{equation}
and we count  
success with 
\(
\1\big[\, \pseqe \geq \taumin\,\big] 
\). 
$\ballset^\dist_0(\suf)$ contains only $\suf$, so $p^\dist_{\seq,0}\!=\!\pseq$, i.e., reduces to the verbatim case.) 

We consider two standard token-level distances $\dist: \vocab^\suflen \times \vocab^\suflen \rightarrow \{0, 1, \ldots, \suflen\}$---the \newterm{Hamming} and \newterm{Levenshtein} distances. 
For $\seqb,\seqc \in \vocab^\suflen$,
\vspace*{-.15cm}
\begin{align}
\label{eq:dist:main}
\textstyle
\hamshort(\seqb,\seqc) \;\triangleq\; \sum_{t} \1[\tokb_t \neq \tokc_t]; \qquad 
\levshort(\seqb,\seqc)\;\triangleq\;\min\{d \!: \seqb \xrightarrow{\text{$d$}} \seqc\}.
\end{align}
Hamming ($\hamshort$) counts positional mismatches 
(unit-cost token substitutions). 
Levenshtein ($\levshort$) is more general: 
it is the minimum number of (unit cost) substitution, insertion, or deletion edits $d$ required to transform one sequence into the other. 
For near-verbatim greedy extraction, these metrics are cheap to compute:
just as with verbatim extraction, there is only one suffix to evaluate---the greedy continuation. But to produce more nuanced information about extraction risk, the set of near-verbatim suffixes in $\ball(\suf)$ can be enormous, which can make computing $\pseqe$ enormously expensive.\looseness=-1 

\begin{wrapfigure}{r}{0.45\textwidth}
    \centering
    \vspace{-.3cm}     
    \includegraphics[width=\linewidth]{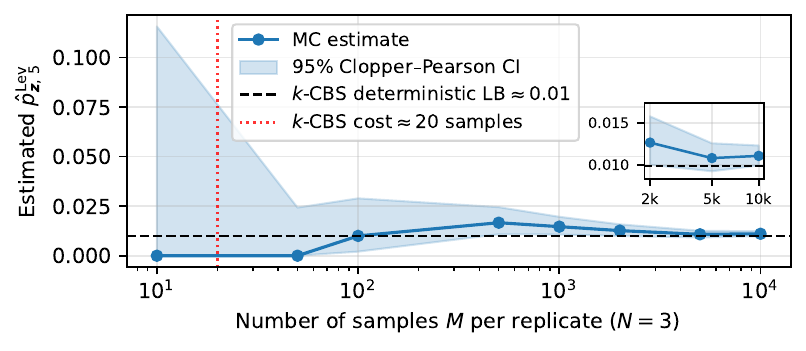}
    \vspace{-.65cm}
    \caption{\textbf{Monte Carlo (MC) estimation.} 
    For Levenshtein distance $\leq\!5$ ($\hat{p}^{\mathsf{Lev}}_{\vz,\, 5}$), we plot convergence for a single sequence $\seq$ from \emph{The Great Gatsby} for \textsc{Llama~2 7B}, showing the pooled MC estimate with a 95\% confidence interval over 3 replicates.
    Our algorithm ($k$-CBS, Section~\ref{sec:kcbs}) produces a deterministic, provably correct lower bound (LB) of $\approx\!0.01$.
    It captures $89.4\%$ of the mean MC estimate at $M\!=\!10^4$ samples, at a cost of $\approx\!20$ MC samples (Appendix~\ref{app:sec:intuition:cost})---a budget at which MC produces no hits.\looseness=-1}
    \label{fig:mc-convergence}
    \vspace{-1.2cm}
\end{wrapfigure}
\custompar{Exactly computing $\pseqe$ is intractable} 
For Hamming, the size of the $\varepsilon$-ball is 
\(
|\hamball(\suf)|
  \!=\! \sum_{r=0}^{\varepsilon} \binom{\suflen}{r}\,(|\vocab|-1)^{r}
\), where 
$\binom{\suflen}{r}$ chooses the $r$ substitution positions and
$(|\vocab|-1)^r$ counts possible substitutions at those positions. 
For $|\vocab|\!=\!32{,}000$, 
$\suflen\!=\!50$, and just \(\varepsilon\!\leq\!2\), \(|\ballset^\hamshort_2(\suf)|\!>\!10^{12}\).
For Levenshtein, the  
$\varepsilon$-ball is even larger 
(Appendix~\ref{app:sec:nv:compute:ham-dominate-lev}). 
It is intractable to teacher force that many suffixes, and it should also be unnecessary. 
Many suffixes in the $\varepsilon$-ball will have $0$ or extremely low probability (e.g., consider a continuation $\gensuf$ that substitutes the token \texttt{\_jazz} 
for \texttt{\_the} in the target suffix $\suf$).\looseness=-1 

\custompar{Estimating $\pseqe$ with MC is expensive}
Alternatively, 
one could prompt $M$ times with the prefix and estimate $\pseqe$ as the proportion of sampled continuations (with decoding scheme $\dec$) lying within the $\varepsilon$-ball of the target suffix. 
This is  statistically unbiased, but also infeasible at scale. 
The probability of never hitting the $\varepsilon$-ball in $M$ i.i.d.\ samples is \(\Pr\big(\textnormal{miss }\ball(\suf)\big) = (1-\pseqe)^M.\)
To guarantee a miss probability of at most $\delta$ requires 
\(
(1 - \pseqe)^M \le \delta
\;\Longleftrightarrow\;
M \ge \ln(1/\delta)/(-\ln(1-\pseqe)).
\) 
Even for a modest miss tolerance of $\delta=0.05$, merely \emph{detecting} an $\varepsilon$-ball with a relatively high mass of $\pseqe=0.01$ requires $M\!\approx\!300$ samples.
Further, detection only guarantees one hit; 
it does not guarantee that the MC estimate of $\pseqe$ is \emph{reliable}.
Figure~\ref{fig:mc-convergence} shows thousands of samples are necessary for an accurate estimate for $p_{\seq, 5}^\levshort\!\approx\!0.01$;
$M\!\approx\!100{,}000$ samples would be needed for reliable estimation of $\pseqe\!=\!\taumin=0.001$ (Appendix~\ref{app:sec:intuition:mc}). 
Compared to teacher-forced inference for verbatim $\pseq$, MC for $\pseqe$ is roughly $M\times$ more expensive (Appendix~\ref{app:sec:intuition:cost}).\looseness=-1 

\vspace{-.2cm}
\section{Decoding-constrained beam search}\label{sec:kcbs}
\vspace{-.1cm}

We show that near-verbatim extraction risk can be estimated at much lower cost than MC sampling, making it feasible to evaluate at scale. 
The overarching intuition is simple:
for a given prefix, beam search is a decoding algorithm that tends to find a set of continuations that are high probability under the model $\model$ (Section~\ref{sec:kcbs:beam}); 
memorized suffixes are especially high probability under the model (Section~\ref{sec:rw}); 
and so, when a sequence is memorized, beam search should return a set of continuations that are near-verbatim matches to the memorized suffix---i.e., continuations in the $\varepsilon$-ball $\ball(\suf)$. 
Following this intuition, we propose algorithms (Sections~\ref{sec:kcbs:baseline} \&~\ref{sec:pruning}) that modify beam search to incorporate decoding scheme $\dec$, which lets us compute an inexpensive, deterministic lower bound on the near-verbatim extraction probability $\pseqe$ (Equation~\ref{eq:pze:main}).
These algorithms have cost comparable to $\bw \ll M$ MC samples (Appendix~\ref{app:sec:intuition:cost}), where $\bw\!=\!20$ works well in practice (Appendix~\ref{app:experiments:width}).\looseness=-1 

\setlength{\textfloatsep}{4pt}
\begin{algorithm}[t]
\caption{Top-$k$ Constrained Beam Search ($k$-CBS)}
\label{alg:kcbs}
{\footnotesize
\KwIn{LLM $\model$; prefix $\pre$; suffix length $\suflen$; beam width $\bw$; top-$k$ parameter $k$; EOS token id; optional $\taumin > 0$}
\KwOut{Set of up to $\bw \cdot k$ pairs $(\genseq,\; \log p)$, where $\genseq = \pre \,\Vert\, \gensuf$ and $\log p = \log \Pr_{\model,\dec}(\gensuf \mid \pre)$; or $\emptyset$ if early termination}
\BlankLine
Compute $\logit_1(\pre)$ via forward pass on $\pre$\tcp*{Prefill}
$\sL_0 \gets \big\{(\pre,\; 0)\big\}$\tcp*{Beam (max. capacity $|\sL_t|=\bw$): pairs $(\genseq,\; \log p)$}
\For{$t = 1, \ldots, \suflen$}{
  $\sC_t \gets \emptyset$\tcp*{Candidate set for step $t$}
  \ForEach{$(\genseq,\; \log p) \in \sL_{t-1}$}{
    $\vocab_{t,k} \gets \TopK_k\big(\logit_t(\genseq)\big)$; \,
    $\vr_t \gets \LogSoftmax\big(\logit_t(\genseq)\big)$; \,
    $Z_t \gets \LogSumExp\big(\vr_t[\vocab_{t,k}]\big)$\;
    \ForEach{$\gentok \in \vocab_{t,k}$}{
      \tcp{append token to partial history, update continuation $\log$ prob}
      $\genseq' \leftarrow \genseq \,\Vert\, \gentok$; \,
      $\log p' \leftarrow \log p + \vr_t[\gentok] - Z_t$; \,
      $\sC_t \gets \sC_t \cup
        \big\{\!\big(\genseq',\;\;
        \log p' \big)\!\big\}$
    }
  }
  \lIf(\tcp*[f]{Return up to $\bw \cdot k$ candidates}){$t = \suflen$}{\Return $\sC_\suflen$}
  Remove from $\sC_t$ any $(\genseq',\, \log p')$ whose last token is EOS\;
  $\sL_t \gets$ top-$\bw$ elements of $\sC_t$ by $\log p'$\tcp*{Prune to beam width}
  \lIf(\tcp*[f]{LB cannot reach $\taumin$}){$\taumin$ and $\max_{(\_,\,\log p) \in \sL_t} \exp(\log p) < \taumin / (\bw \cdot k)$}{\Return $\emptyset$}
  Compute $\logit_{t+1}(\genseq)$ for each $(\genseq, \cdot) \in \sL_t$\;
}
}
\end{algorithm}

\vspace{-.2cm}
\subsection{Beam search}\label{sec:kcbs:beam}
\vspace{-.1cm}

\newterm{Beam search}~\citep{Lowerre1976HARPY} is a deterministic decoding algorithm for autoregressive language models.  
For LLM $\model$ and a prompt $\pre$, at each generation step $t \in 1, \ldots, \suflen$ we maintain a \newterm{beam} of at most $\bw$ $(\prelen+t-1)$-length partial \newterm{histories} $\genseq\coloneqq\pre \,\Vert\,\genseq^{\textnormal{(cont)}}_{<t}$, each with its accumulated \newterm{score}, $\log p(\genseq)\coloneqq\log \Pr_{\model}(\genseq^{\textnormal{(cont)}}_{<t}\mid \pre)$. 
Then, 
\begin{steplist}[leftmargin=.75cm,topsep=.01cm,itemsep=.1cm]    
    \item For each $\genseq$ in the beam, use $\model$ to compute the next-token probability $\Pr_{\model}(\gentok{\mid}\genseq)$ for every vocabulary token $\gentok\in\vocab$.\looseness=-1
    
    \item Expand each history $\genseq$ by every token $\gentok \in \vocab$, yielding a candidate set $\sC_t$ of  
    at most $\bw \cdot |\vocab|$ children $\genseq'\!=\!\genseq \Vert \gentok$ with updated scores
    \(
      \log p(\genseq') = \log p(\genseq) + \log \Pr_{\model}(\gentok{\mid}\genseq)
    \). 
    
    \item Perform an across-beam prune, retaining the $\bw$ highest-scoring $(\prelen+t)$-length histories $\genseq'$ in candidate set $\sC_t$.
    These histories form the $\bw$-sized beam $\sL_t$ for step $t+1$. 
\end{steplist} 
After $\suflen$ steps, the search returns the $\bw$ highest-scoring $\suflen$-length continuations $\gensuf$ that remain in the beam, which 
approximately maximize the conditional probability $\Pr_\model(\gensuf \mid \pre)$. 
Because the joint $\log$-probability of a continuation decomposes across time steps, a continuation must maintain a competitive cumulative score at \emph{every} depth to survive pruning.
Conversely, if at most depths 
a given continuation's 
score is near the top of the model's distribution, 
it is very likely to survive all across-beam prunes and appear in the final beam.
Beam search therefore concentrates computational effort on a thin, high-mass region of candidate continuations rather than exploring the full, exponentially large space (Appendix~\ref{app:sec:intuition:beam}).\looseness=-1

\vspace{-.2cm}
\subsection{Computing a deterministic lower bound on extraction risk}\label{sec:kcbs:baseline}
\vspace{-.1cm}

To quantify near-verbatim probabilistic extraction, we introduce \newterm{decoding-constrained beam search}, which modifies beam search to operate under the distribution induced by decoding scheme $\dec$, and therefore returns continuations with their exact probabilities under $\dec$.
Building on prior work on probabilistic extraction~\citep{hayes2025measuringmemorizationlanguagemodels,cooper2025books}, we focus on top-$k$ decoding (Section~\ref{sec:rw}) to develop \newterm{top-$k$-constrained beam search ($k$-CBS)} (Algorithm~\ref{alg:kcbs}), but our approach 
can be applied with other $\dec$ 
(Appendix~\ref{app:sec:other:nucleus}).\looseness=-1

We highlight key changes to beam search in \textcolor{seabornredmid}{red}.   
For LLM $\model$, decoding scheme \textcolor{seabornredmid}{$\dec=$}\textcolor{seabornredmid}{top-}\textcolor{seabornredmid}{$k$}, and a prompt $\pre$, at each generation step $t \in 1, \ldots, \suflen$ we maintain a beam of at most $\bw$ $(\prelen+t-1)$-length partial histories $\genseq\coloneqq\pre \,\Vert\,\genseq^{\textnormal{(cont)}}_{<t}$, each with  accumulated {score} $\log p(\genseq)\coloneqq\log \Pr_{\model,\textcolor{seabornredmid}{\dec}}(\genseq^{\textnormal{(cont)}}_{<t}\mid \pre)$. 
At each step $t$, 
$\textcolor{seabornredmid}{\vocab_{t,k}(\genseq)}$ 
is the set of \textcolor{seabornredmid}{top-}\textcolor{seabornredmid}{$k$} next tokens with the highest probabilities, given  
history $\genseq$. 
\begin{steplist}[leftmargin=.75cm,topsep=0cm,itemsep=0.1cm]
    \item \label{beam1} For each $\genseq$ in the beam, use $\model$ with \textcolor{seabornredmid}{$\dec$}  to get next-token probabilities $\Pr_{\model,\textcolor{seabornredmid}{\dec}}(\gentok \mid \genseq)$;
    these probabilities are computed only over the \textcolor{seabornredmid}{top-}\textcolor{seabornredmid}{$k$} set 
$\textcolor{seabornredmid}{\vocab_{t,k}(\genseq)}$
(Equation~\ref{eq:topk:main}). 
    
    \item \label{beam2} Expand each history $\genseq$ by the tokens in $\textcolor{seabornredmid}{\vocab_{t,k}(\genseq)}$, 
yielding at most $\bw \cdot \textcolor{seabornredmid}{k}$ 
candidates. 
We update these candidates' scores according to the \textcolor{seabornredmid}{top-}\textcolor{seabornredmid}{$k$} distribution probabilities---i.e., $\log p(\genseq') = \log p(\genseq) + \log \Pr_{\model,\textcolor{seabornredmid}{\dec}}(\gentok \mid \genseq)$. 
    
    \item \label{beam3} Perform an across-beam prune to $\bw$, \textcolor{seabornredmid}{except for at the final step $\suflen$}. 
    At step $\suflen$, return up to $\bw \cdot \textcolor{seabornredmid}{k}$ $\suflen$-length continuations $\gensuf$ and their scores $\log \Pr_{\model,\textcolor{seabornredmid}{\dec}}(\gensuf \mid \pre)$---the exact probabilities under the \textcolor{seabornredmid}{top-}$\textcolor{seabornredmid}{k}$-renormalized decoding distribution.\looseness=-1
\end{steplist}
Notably, beam search's well-known lack of output diversity~\citep{li-etal-2016-diversity,vijayakumar2018diversebeamsearchdecoding} is a \emph{feature} here: 
for memorized sequences, it is useful for the beam to concentrate on the high-mass region near $\suf$, not to explore diverse alternatives.\looseness=-1 

\custompar{From $k$-CBS outputs to a lower bound}
For each returned $\gensuf$, its score is exactly the conditional probability given the prefix under top-$k$ decoding. 
For the sequence from \emph{The Great Gatsby} in Figure~\ref{fig:example}, $\bw=20$, and $k=40$, so $k$-CBS (Algorithm~\ref{alg:kcbs}) returns $\bw \cdot k = 800$ $\suflen$-length continuations $\gensuf$ and their probabilities---including the three near-verbatim $\gensuf$ in the figure. 
We identify the $\gensuf$ that are near-verbatim matches to the target $\suf$ (i.e., are in $\varepsilon$-ball $\ball(\suf)$), and we sum their probabilities to get a lower bound on the near-verbatim extraction probability $\pseqe$ with respect to top-$k$ decoding (Equation~\ref{eq:pze:main}). 

More formally, let $\sF$ denote the set of $(\genseq, \log p)$ pairs returned by $k$-CBS after $\suflen$ steps.
We filter $\sF$ to retain continuations $\gensuf$ that satisfy $\dist(\gensuf, \suf) \leq \varepsilon$, yielding $\sF^{(\leq\varepsilon)} \subseteq \sF$. Then,\looseness=-1 
\begin{align}
\label{eq:pseqe}
\lb \;\triangleq\;  \sum_{(\_, \log p) \in \sF^{(\leq\varepsilon)}} \!\!\!\exp(\log p) \;\leq\; \pseqe.
\end{align}
This is a valid lower bound on $\pseqe$: the candidates in $\sF^{(\leq\varepsilon)}$ are a subset of all continuations within $\varepsilon$ of $\suf$, and their probabilities under top-$k$ are computed exactly 
(Appendix~\ref{app:sec:kcbs}).
Unlike MC sampling (Section~\ref{sec:warmup}), because beam search is deterministic, this bound is deterministic;
it requires no repeated trials.
As shown in Figure~\ref{fig:mc-convergence}, for beam width $\bw=20$, $k$-CBS produces this bound at a cost comparable to just ${\approx}\,20$ MC samples---a budget at which MC produces no hits.
For the sequence in Figure~\ref{fig:example}, 
$\mathrm{LB}_{5, \levshort} = 0.716$---nearly $5\times$ the mass of the verbatim target suffix (i.e., the $0.1431$ mass that verbatim probabilistic extraction captures). 

We use $\hatpseqe = \lb$ to estimate $\pseqe$, keeping in mind that it is an underestimate. 
$k$-CBS may miss near-verbatim continuations that were pruned from the beam, or miss extractable sequences entirely if no near-verbatim continuation survives.
Nevertheless, our experiments (Section~\ref{sec:experiments}) show that $k$-CBS captures rich information about extraction risk in practice.\looseness=-1 

\custompar{Why $k$-CBS produces useful lower bounds} 
The lower bound in Equation~\ref{eq:pseqe} is valid but could  
be vacuous (e.g., $\lb = 0$ if no near-verbatim path survives).
However, for memorized sequences, it captures substantial mass because high-probability continuations are constrained to pick high-ranked tokens at almost every step---precisely the region that $k$-CBS explores. 
A simple counting argument illustrates why.
For continuation $\gensuf$, the conditional probability given prefix $\pre$ is the product of per-token conditional probabilities across $\suflen$ steps (Equation~\ref{eq:pz:main}).
For this product to exceed the extraction minimum $\taumin\!=\!0.001$, per token, the continuation must sustain a geometric-mean probability of at least $\taumin^{1/\suflen}\!=\!(0.001)^{1/50} \!\approx\!0.87$---on average, enormous probability over $\suflen\!=\!50$ steps.

For instance, at most $3$ steps can have per-token probability as low as $0.1$, since $(0.1)^3 \cdot 1^{47} = 0.001 = \taumin$.  
More generally, a token with probability $p$ must be ranked at least $\lceil 1/p \rceil$, since the tokens ranked above it each have probability $\geq p$ and probabilities sum to $1$.
Following similar reasoning, which we prove formally in the appendix,\footnote{A token ranked $R$ or worse contributes probability at most $1/R$.
    At decoding step $t$, let $r_t$ 
    be the rank of the realized token $\subgensuftok{t}$ ($1$ = highest probability).
    We can show that, for any integer $R\ge 2$, the number of steps $t\leq\suflen$ for which $r_t\geq R$ is at most $\lfloor {\ln \taumin}/{\ln(1/R)}\rfloor$ (Appendix~\ref{app:sec:intuition:beam:math}).
}
a continuation can pick a token that falls below the top-$10$ on at most $2$ of $50$ steps;
it can deviate from the distribution's top-$1$ token on at most $9$ steps.
This is exactly the region $k$-CBS explores---continuations that ``hug'' the top of the distribution's ranking at nearly every step.
For memorized sequences, the near-verbatim $\gensuf\in\ball(\suf)$ that dominate $\pseqe$ are therefore very likely to survive across-beam pruning and be captured by $k$-CBS. (See Appendix~\ref{app:sec:intuition:beam:math} for further discussion, including a survival guarantee for heavy-mass paths.)\looseness=-1

\vspace{-.2cm}
\subsection{Constrained beam search with an $\varepsilon$-viable pruning rule}\label{sec:pruning}
\vspace{-.1cm}

Baseline $k$-CBS (Algorithm~\ref{alg:kcbs}) is agnostic to $\dist$: 
it returns $\bw \cdot k$ candidates, which are post-processed to identify those within $\varepsilon$ of the target. 
This generality comes at a cost.  
Throughout the search, beam capacity may be spent on continuations that can never satisfy reasonable $\dist(\gensuf, \suf) \leq \varepsilon$, and the search runs for all $\suflen$ steps even when no viable paths remain. 
We can further modify $k$-CBS to remedy both of these issues with 
\newterm{$\varepsilon$-viability pruning}, which integrates a chosen distance metric $\dist$ and tolerance $\varepsilon$ directly into the search. 
At each step $t$, for \ref{beam2} we check whether each token-expanded partial history $\genseq'$ 
can still produce a final continuation $\gensuf$ within distance $\varepsilon$
of the target---whether there exists \emph{any} possible completion of the continuation that satisfies $\dist(\gensuf, \suf) \leq \varepsilon$. 
If not, the candidate is provably non-$\varepsilon$-viable; 
we do not include it in $\sC_t$, 
freeing beam capacity for $\varepsilon$-viable candidates.\looseness=-1

The incorporation of this $\varepsilon$-viability pruning rule requires only bookkeeping for each item in the beam---no additional model forward passes.
It also enables early termination when no $\varepsilon$-viable candidates remain, often making it cheaper to run than baseline $k$-CBS for non-extractable sequences (Appendix~\ref{app:sec:intuition:cost}).
The trade-off is that, unlike for baseline $k$-CBS (Appendix~\ref{app:sec:bleu}), $\dist$ must be a token-level  metric for which a lower bound on the final distance can be computed incrementally from streamable per-path state (Appendix~\ref{app:sec:prune:invariants:generic-viability-kcbs}), and $\dist$ and $\varepsilon$ 
must be specified upfront.\looseness=-1

\custompar{Hamming distance}
For $\hamshort$ (Equation~\ref{eq:dist:main}), $\varepsilon$-viability is straightforward: 
we maintain a running mismatch count for each candidate in the beam. 
At step $t$, if the generated token $\gentok$ differs from the target token $\suftok{t}$, the count increments. 
Since positional mismatches cannot be undone, this count is monotonically non-decreasing; 
once it exceeds $\varepsilon$, the candidate is permanently non-$\varepsilon$-viable and is removed; it is not expanded further (Appendix~\ref{app:sec:prune:hamming}).\looseness=-1

\custompar{Levenshtein distance}
For $\levshort$ (Equation~\ref{eq:dist:main}), $\varepsilon$-viability is more subtle. 
Later insertions and deletions can repair earlier misalignments, so the minimum achievable distance can \emph{decrease} between steps. 
We therefore track richer per-continuation state in the beam: 
the range of $\levshort$ distances between the partial $t$-length continuation and each $j$-length prefix $\subsuf{1:j}$ of the target suffix $\suf$.\footnote{We propose a streamed version of the \citeauthor{ukkonen}-banded 
    \citeauthor{wagnerfischer1974} dynamic program. 
    Define the dynamic programming 
    table $\dptab[t,j] \triangleq \levshort\big(\subgensuf{1:t},\subsuf{1:j}\big)$, with $\dptab[0,j]\!=\!j$ and $\dptab[t,0]\!=\!t$. 
    At each step $t$, consider generated token $\subgensuftok{t}$ and each suffix token $\suftok{j}$ ($j \!\in\! [\max\{0, t{-}\varepsilon\},\, \min\{\suflen, t{+}\varepsilon\}]$). 
    Update the table to edit cost $\dptab[t,j] = \min\!\big\{\overbrace{\dptab[t{-}1,j]{+}1}^{\text{delete } \subgensuftok{t}}, \overbrace{\dptab[t, j{-}1]{+}1}^{\text{insert } \suftok{j}}, \overbrace{\dptab[t{-}1, j{-}1]{+}\1[\subgensuftok{t}{\neq}\suftok{j}]}^{\text{match/substitute}}\big\}$. 
} 
The continuation remains viable so long as the minimum of this range is at most $\varepsilon$; 
once the minimum exceeds $\varepsilon$, no possible completion can bring the distance back within the $\varepsilon$ budget and the continuation is pruned (Appendix~\ref{app:sec:prune:lev}).\looseness=-1

All returned continuations are $\varepsilon$-viable by construction---no post-processing is needed. 
The 
algorithm also tracks an upper bound $\mathrm{UB} = \mathrm{LB} + \textit{bank}$, where \textit{bank} accumulates the mass of $\varepsilon$-viable continuations removed by the across-beam prune; 
in practice this bound is often loose, so we use $\hatpseqe = \lb$. 
Full 
details 
and invariant proofs are in Appendix~\ref{app:sec:prune}.\looseness=-1

\vspace{-.2cm}
\section{Experiments}\label{sec:experiments}
\vspace{-.1cm}

In the main paper, we show results for $k$-CBS on two model families and datasets, which illustrate different patterns in near-verbatim extraction:
(i)~\textsc{OLMo 2} (7B, 13B, 32B)~\citep{olmo2} on sequences from $10{,}000$ Wikipedia articles from its training data (Dolma 1.7~\citep{dolma}); 
and (ii)~\textsc{Llama 2} (7B, 13B, 70B) on \emph{The Great Gatsby} from the Books3 corpus, which is known to be in \textsc{Llama}'s training data~\citep{touvron2023llamaopenefficientfoundation, kadreyamendedconsolidated}.
In these experiments, we use baseline and $\levshort$-pruned $k$-CBS with $k\!=\!40$, beam width $\bw\!=\!20$, $\varepsilon\!=\!5$, prefix length $\prelen\!=\!50$, suffix length $\suflen\!=\!50$, and minimum extraction probability $\taumin\!=\!0.001$.
Additional details (e.g., runtime, \href{https://github.com/pasta41/probabilistic-extraction-toolkit}{code}, \href{https://afedercooper.info/near-verbatim/}{visualizations}), extraction experiments, 
negative controls, and investigations of $k$-CBS algorithm parameters are in Appendix~\ref{app:sec:experiments}.\looseness=-1

\vspace{-.2cm}
\subsection{Revealing additional extracted sequences and increased risk}\label{sec:experiments:a}
\vspace{-.1cm}

\setlength{\textfloatsep}{10pt}

\begin{figure*}[t!]
\centering
\includegraphics[width=\linewidth]{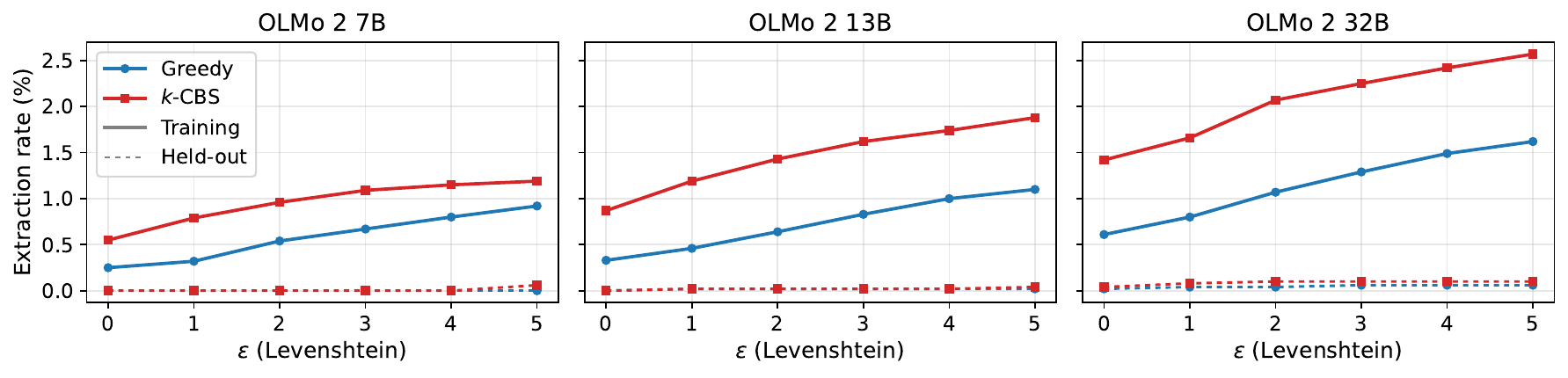}
\vspace{-.6cm}
\caption{\textbf{Comparing extraction rates.} 
For \textsc{OLMo 2 7B}, \textsc{13B}, and \textsc{32B}, we show  
rates for verbatim 
($\varepsilon\!=\!0$) and near-verbatim extraction for $\levshort\, \varepsilon \in \{1,\ldots,5\}$.
For greedy near-verbatim, one generates the single greedy $\gensuf$ and checks if $\levshort(\gensuf,\suf) \leq \varepsilon$.
We use a sample of $10{,}000$ sequences from Wikipedia from \textsc{OLMo 2}'s training data;
to assess validity, we also run negative controls on $5{,}000$ held-out sequences scraped from Wikipedia that post-date  \textsc{OLMo 2}'s training cutoff.
Greedy rates are computed exactly. 
Probabilistic rates are computed with $k$-CBS (Section~\ref{sec:kcbs:baseline});
they may miss some valid instances of extraction, and thus should be interpreted as lower bounds on extraction rates.
We set $B\!=\!20$ for $k$-CBS, so those results cost $\sim\!20\times$ more than greedy extraction (Section~\ref{sec:warmup}).}
\label{fig:olmo:rates:main}
\vspace{-.2cm}
\end{figure*}
\begin{figure*}[t!]
\centering
\hspace{-.49cm}
\includegraphics[width=1.03\linewidth]{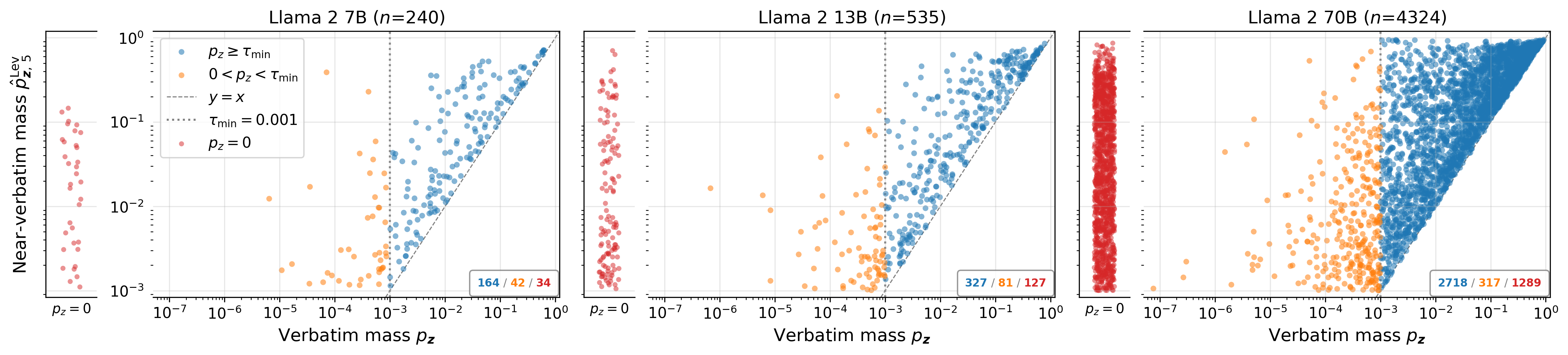}
\vspace{-.4cm}
\caption{\textbf{Near-verbatim mass vs.\ verbatim mass.}  
\textsc{Llama 2} on \emph{The Great Gatsby}; 
each point is one sequence.
Axes show near-verbatim ($p_{\seq,5}^\levshort$, $\levshort\,\varepsilon\!=\!5$) vs.\ verbatim ($\pseq$) extraction mass on a $\log$--$\log$ scale.
\textcolor{seabornredmid}{\textbf{Red}}/\textcolor{seabornorangemid}{\textbf{orange}}
points are ``unlocked'' by near-verbatim extraction (to the left of the $\taumin$ dotted reference line, $\pseq\!<\!\taumin$, but $p_{\seq,5}^\levshort\!\geq\!\taumin$);
\textcolor{seabornbluemid}{\textbf{blue}} points are verbatim-extractable ($\pseq\!\geq\!\taumin$).
Points above the dashed $y\!=\!x$ line show increased extraction risk when near-verbatim mass is accounted for.\looseness=-1}
\label{fig:gatsby:scatter:main}
\end{figure*}

\custompar{Increased extraction rates}
Figure~\ref{fig:olmo:rates:main} compares greedy and baseline $k$-CBS extraction rates (Appendix~\ref{app:sec:experiments:metrics}) for \textsc{OLMo 2} across model sizes and 
$\levshort\, \varepsilon \in \{0,\ldots,5\}$.
$k$-CBS dominates greedy at every $\varepsilon$ and model size, with both rates and gaps growing with model size.
That is, greedy extraction under-counts extraction more for larger models. 
For \textsc{OLMo 2 32B}, the rate at $\varepsilon\!=\!5$ reaches $2.57\%$, compared to $1.62\%$ for greedy near-verbatim, $1.42\%$ for baseline $k$-CBS at $\varepsilon\!=\!0$ (verbatim), and $0.61\%$ for greedy verbatim.
Lines for held-out data (negative controls) 
stay flat near zero, supporting the validity of our extraction procedure (Appendix~\ref{app:experiments:olmo:extraction}).\looseness=-1

\custompar{Analyzing ``unlocked'' instances of extraction}
Figure~\ref{fig:gatsby:scatter:main} plots each sequence's near-verbatim extraction mass ($\hat{p}_{\seq,5}^{\levshort}$) against its verbatim mass ($\pseq$) for \textsc{Llama 2} on \emph{The Great Gatsby}.
The \textcolor{seabornredmid}{\textbf{red}} and \textcolor{seabornorangemid}{\textbf{orange}} points are sequences ``unlocked'' by near-verbatim extraction: 
they fall below the $\taumin$ threshold for verbatim extraction, but are extractable when we account for near-verbatim mass.
\textcolor{seabornredmid}{\textbf{Red}} points have \emph{zero} verbatim mass, 
while \textcolor{seabornorangemid}{\textbf{orange}} points have nonzero but sub-threshold (below $\taumin\!=\!0.001$) verbatim mass.

At 70B, $1{,}606$ sequences are unlocked, of which $1{,}289$ ($80.3\%$) have zero verbatim mass.
These unlocked sequences are not marginal: their mean near-verbatim mass is $0.086$ ($86\times$ the extraction threshold $\taumin$), with a maximum of $0.863$.
Typical edits include 
spacing around punctuation (e.g., ``money---~that'' vs.\ ``money---that'', $\levshort\!=\!1$, mass $0.777$). 
\textcolor{seabornbluemid}{\textbf{Blue}} points are verbatim-extractable; those above the $y\!=\!x$ line show increased extraction risk when near-verbatim mass is included.
The number of unlocked sequences grows from $76$ (7B)  
to $1{,}606$ (70B)---a $21\times$ increase for a $10\times$ increase in parameters.
Interestingly, many unlocked sequences at smaller model sizes are verbatim extractable at larger ones (Appendix~\ref{app:sec:experiments:setup:llama}).\looseness=-1 

\vspace{-.2cm}
\subsection{Patterns in near-verbatim extraction risk}\label{sec:experiments:b}
\vspace{-.1cm}

\begin{figure*}[t!]
\centering
\begin{subfigure}[t]{0.49\textwidth}
    \centering
    \includegraphics[width=\linewidth]{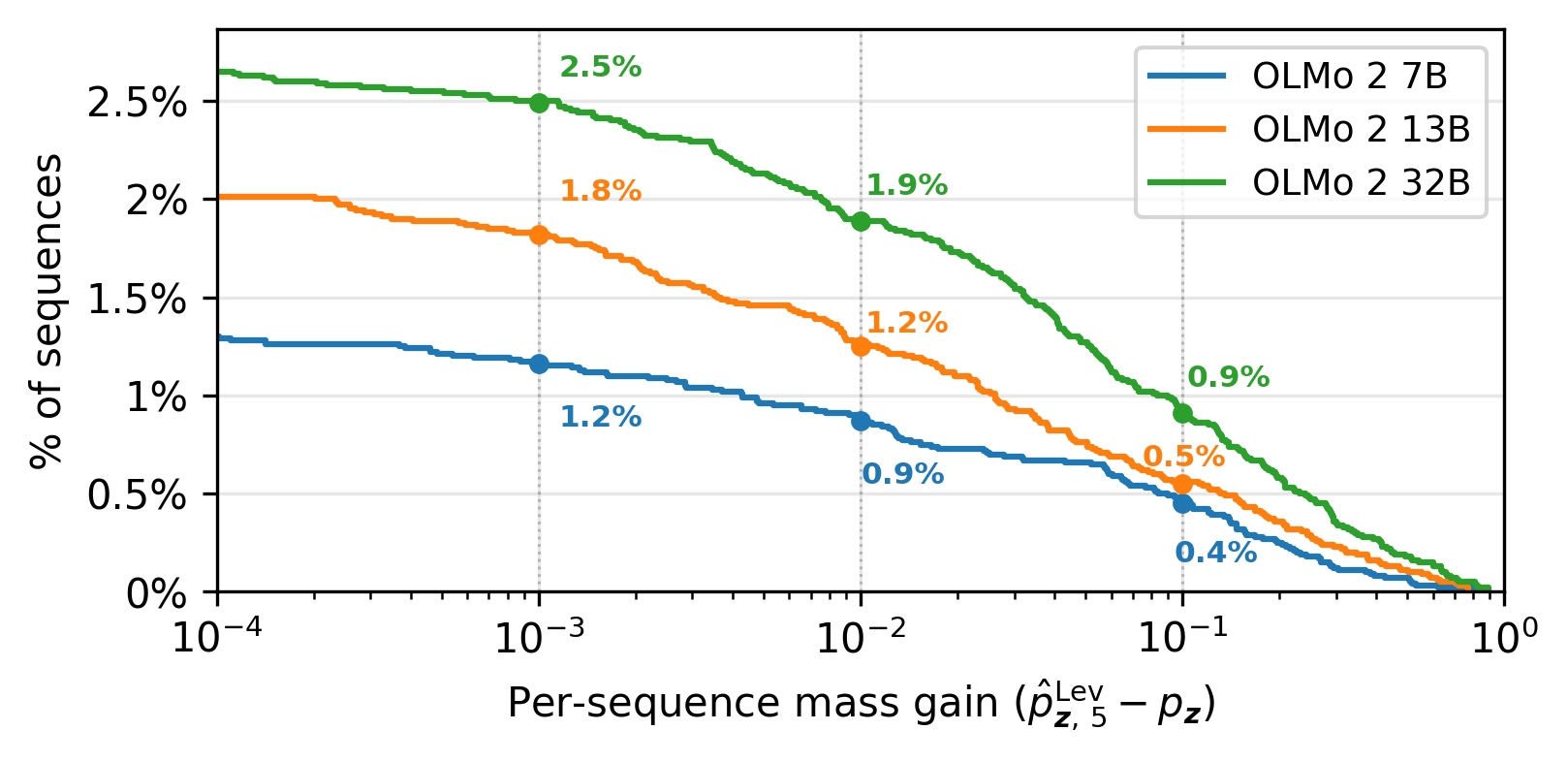}
    \vspace{-.5cm}
    \caption{\textsc{OLMo 2} on Wikipedia}
    \label{fig:ccdf:olmo:main}
\end{subfigure}
\hfill
\begin{subfigure}[t]{0.49\textwidth}
    \centering
    \includegraphics[width=\linewidth]{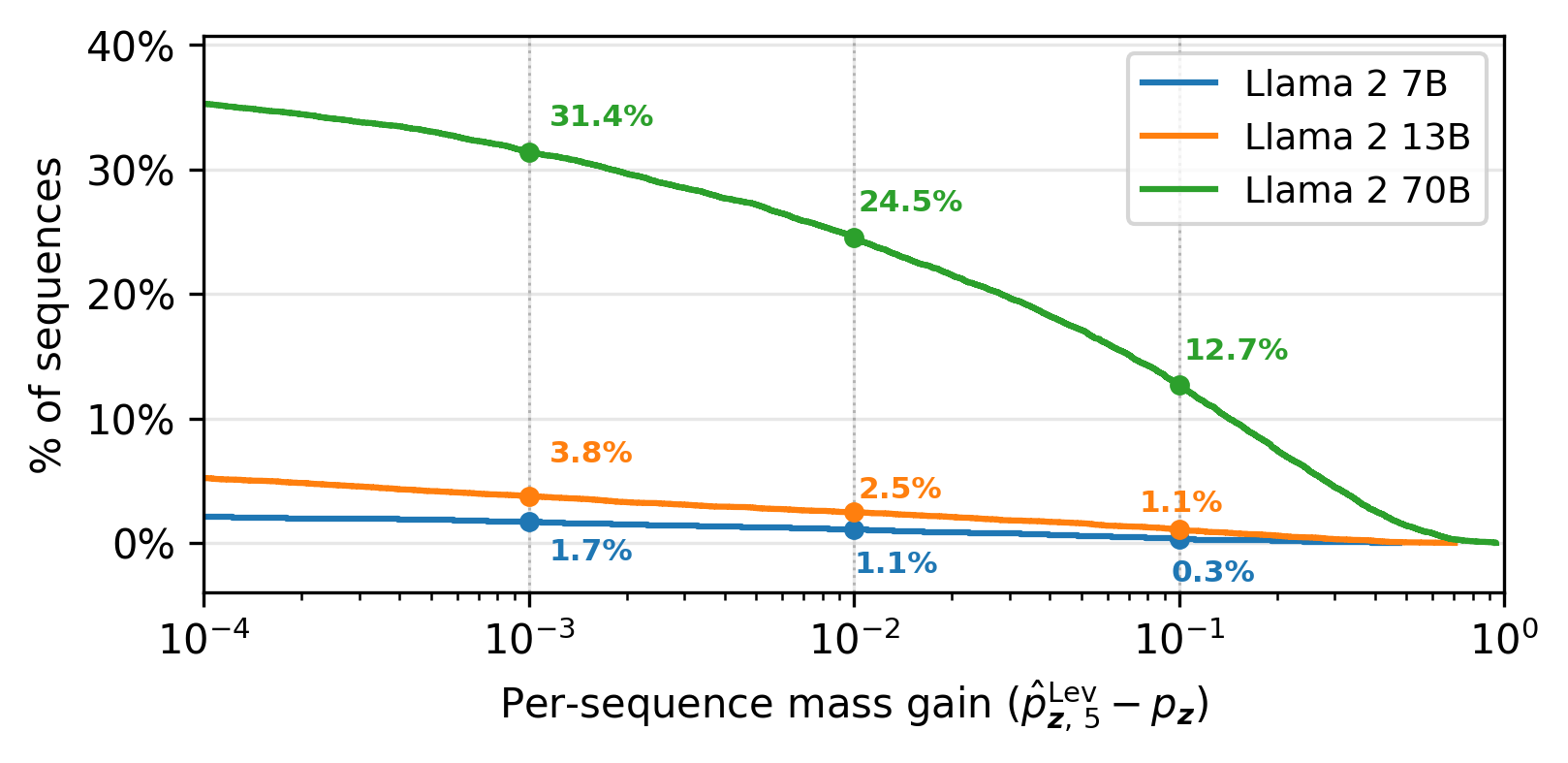}
    \vspace{-.5cm}
    \caption{\textsc{Llama 2} on \emph{The Great Gatsby}}
    \label{fig:ccdf:gatsby:main}
\end{subfigure}
\vspace{-.1cm}
\caption{\textbf{CCDF of per-sequence near-verbatim mass gain.} 
For $\levshort\, \varepsilon\!=\!5$ mass minus verbatim mass ($\hat{p}_{\seq,5}^{\levshort} - \pseq$), a point $(x, y)$ means $y\%$ of sequences have extraction-mass gain $\geq x$.\looseness=-1}
\label{fig:ccdf:main}
\end{figure*}

\custompar{Risk increases scale with model size}
Figure~\ref{fig:gatsby:scatter:main} suggests that larger models have more unlocked sequences with larger near-verbatim mass.
Figure~\ref{fig:ccdf:main} quantifies this directly, showing the complementary CDF 
(CCDF) of absolute per-sequence 
mass gain ($\hat{p}_{\seq,5}^{\levshort} - \pseq$)---i.e., the additional extraction mass from near-verbatim continuations.
Figure~\ref{fig:ccdf:gatsby:main} shows that, for \textsc{Llama 2 70B} and \emph{The Great Gatsby}, $12.7\%$ of all sequences in the book ($32.9\%$ of extractable sequences) exhibit a $\geq\!0.1$ mass gain.
For both CCDFs, 
the curves shift upward and rightward with model size: 
larger models have both \emph{more} sequences with positive extraction risk gain and \emph{larger} absolute per-sequence gains.
Verbatim probabilistic extraction therefore produces increasingly large undercounts of near-verbatim extraction risk at larger model scales.\looseness=-1

\custompar{Per-sequence verbatim mass share varies by model size and text}
The absolute mass gains described above raise a natural question: 
for a given extractable sequence, what relative portion of its total extraction risk comes from the verbatim continuation?
For each extracted sequence $\seq$, 
we compute this as the \newterm{verbatim share}:\looseness=-1 
\vspace{-.15cm}
\begin{align}
\label{eq:verbatim-share}
    \mathsf{verbatim\_share}(\seq) \;\triangleq\;  \tfrac{\pseq}{\hat{p}^\levshort_{\seq,\,5}} \times 100\%. 
\end{align}
This percentage reflects how much the extraction mass concentrates on the verbatim sequence versus on near-verbatim variants (for $\seq$, the near-verbatim only share is $100\% - \mathsf{verbatim\_share}(\seq)$). 
Sequences with $0\%$ verbatim share are those for which the extraction mass comes entirely from near-verbatim continuations ($\pseq\!=\!0$, e.g., the \textcolor{seabornredmid}{\textbf{red}} ``unlocked'' points in Figure~\ref{fig:gatsby:scatter:main}). 
Those with $100\%$ verbatim share exhibit no increase in extraction risk when we account for near-verbatim mass ($\pseq\!=\!\hat{p}^\levshort_{\seq,\,5}$, e.g., the \textcolor{seabornbluemid}{\textbf{blue}} points on $y=x$ in Figure~\ref{fig:gatsby:scatter:main}). 
Most sequences fall somewhere in between these extremes, like the one in Figure~\ref{fig:verbatim-share-ex}, for which verbatim share is $27.56\%$.\looseness=-1

We can compute verbatim share for every extracted sequence and plot the distribution, as in Figure~\ref{fig:violin:main}. 
We indicate the median to summarize the verbatim share of the typical extracted sequence.
For instance, for \textsc{Llama 2 7B} on \emph{The Great Gatsby} (Figure~\ref{fig:violin:gatsby:main}), the median verbatim share is $27.53\%$: 
the median sequence resembles the example in Figure~\ref{fig:verbatim-share-ex}, with respect to the decomposition of verbatim and near-verbatim mass. 

In this setting, the verbatim share decreases substantially with model size. 
Larger \textsc{Llama 2} models spread more of their extraction mass across near-verbatim variants of memorized \emph{Gatsby} text.
This appears to be driven by changes in the composition of the extractable set (e.g., the huge increase in \textcolor{seabornredmid}{\textbf{red}} points in Figure~\ref{fig:gatsby:scatter:main}). 
The set grows dramatically with model size (for \emph{Gatsby}, $240 \to 535 \to 4{,}324$), and newly extractable sequences at larger model sizes tend to have low verbatim share (median verbatim share of $12.9\%$ for sequences first extractable at 70B). 
Through qualitative analysis, we observe near-verbatim edits include spelling, punctuation, and minor syntactic variation---possibly reflecting training on multiple editions or learned variation in natural language syntax (Appendix~\ref{app:experiments:llama2-scale:extraction}).\looseness=-1

The results for \textsc{OLMo 2} on Wikipedia exhibit a different pattern. 
In contrast to \textsc{Llama 2} on \emph{Gatsby}, the median verbatim share \emph{increases} slightly with model size, 
consistent with larger models concentrating more mass on the exact Wikipedia-article training text.
However, verbatim share remains very low overall (Appendix~\ref{app:experiments:olmo}). 
For instance, half of extracted sequences for \textsc{OLMo 2 7B} have verbatim share below the $0.03\%$ median---i.e., nearly half of all extractable sequences have zero verbatim mass ($\pseq\!=\!0$). 

\begin{figure*}[t!]
\centering
\includegraphics[width=\linewidth]{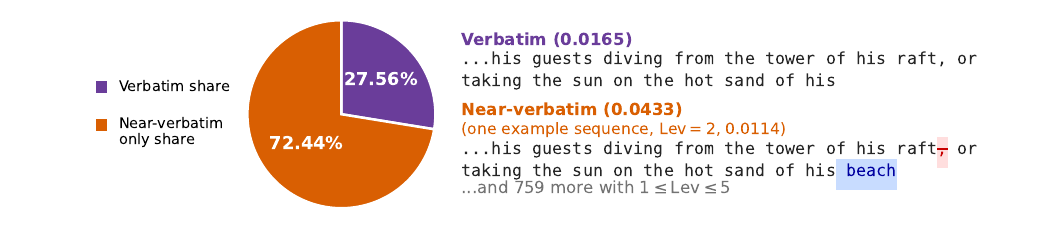}
\vspace{-.8cm}
\caption{\textbf{Verbatim share.}
For \textsc{Llama 2 7B} and one sequence from \emph{The Great Gatsby}, we show the verbatim share (Equation~\ref{eq:verbatim-share}): 
$27.56\%$ of the total near-verbatim mass $p_{\seq,5}^\levshort\!\approx\!0.06$.\looseness=-1}
\label{fig:verbatim-share-ex}
\end{figure*}
\begin{figure*}[t!]
\vspace{-.1cm}
\centering
\begin{subfigure}[t]{0.44\textwidth}
    \centering
    \includegraphics[width=\linewidth]{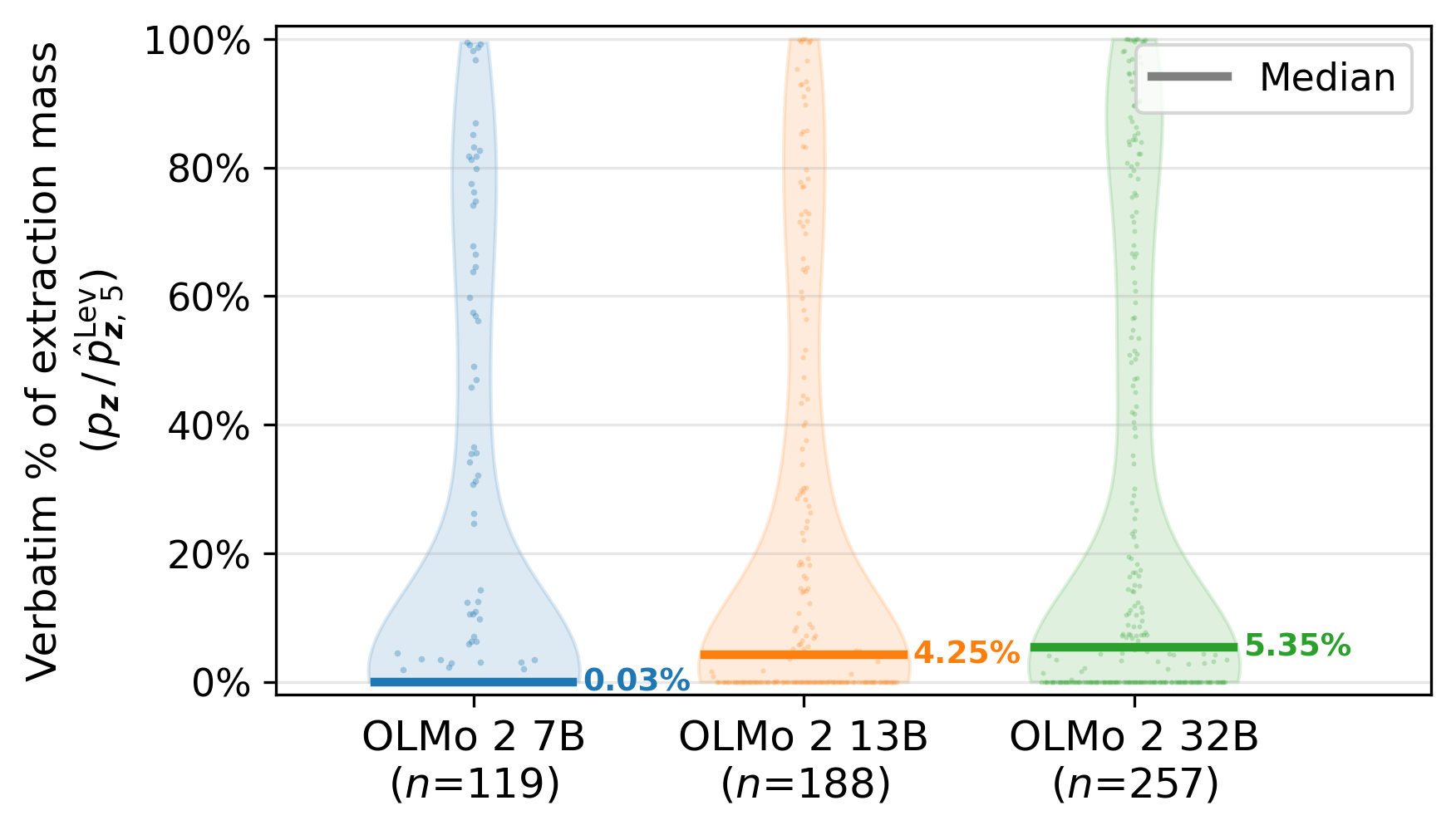}
    \vspace{-.4cm}
    \caption{\textsc{OLMo 2} on Wikipedia}
    \label{fig:violin:olmo:main}
\end{subfigure}
\hfill
\begin{subfigure}[t]{0.44\textwidth}
    \centering
    \includegraphics[width=\linewidth]{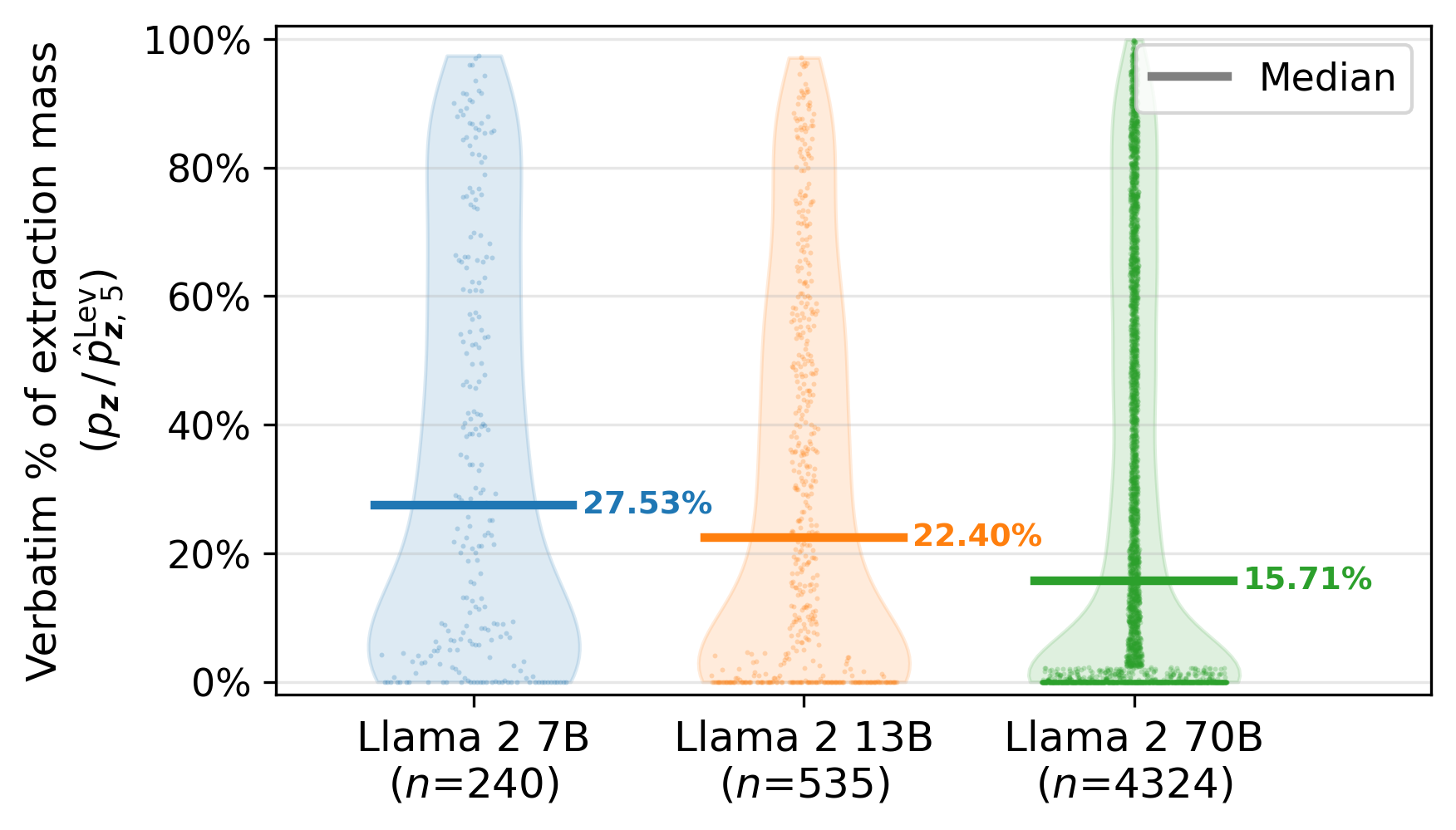}
    \vspace{-.4cm}
    \caption{\textsc{Llama 2} on \emph{The Great Gatsby}}
    \label{fig:violin:gatsby:main}
\end{subfigure}
\vspace{-.1cm}
\caption{\textbf{Distribution of verbatim share.}
We compute verbatim share (Equation~\ref{eq:verbatim-share}) for each extracted sequence and plot the distribution.
Higher values indicate extraction mass is dominated by verbatim memorization;
lower values indicate mass spread across near-verbatim variants.
We annotate the median verbatim-mass shares.\looseness=-1}
\label{fig:violin:main}
\end{figure*}

These patterns illustrate that near-verbatim extraction risk manifests differently depending on the type of text, model, and model size---underscoring the value of analysis beyond verbatim extraction.
We defer investigation of the underlying causes to future work.
Additional results on other datasets and models appear in Appendix~\ref{app:sec:experiments}.\footnote{\textsc{Pythia} models on Enron show a stark increase in verbatim share with model size (Appendix~\ref{app:experiments:pythia}).} 
\vspace{-.2cm}
\section{Conclusion}\label{sec:conclusion}
\vspace{-.1cm}

We introduce decoding-constrained beam search (Section~\ref{sec:kcbs}): 
a family of algorithms that produces deterministic lower bounds on near-verbatim extraction probability at a fraction of the cost of Monte Carlo estimation (Section~\ref{sec:warmup}).
Our experiments show that accounting for near-verbatim probabilistic extraction (Section~\ref{sec:rw}) reveals substantially more memorization and extraction risk, with rich variation across model sizes and types of text (Section~\ref{sec:experiments}).
In future work, we will further investigate patterns in near-verbatim extraction risk, and explore decoding-constrained beam search as a memorization diagnostic tool during model training.\looseness=-1

\section*{Acknowledgments}
We thank Ahmed Ahmed and Abe Hou for feedback on earlier versions of this work.
Katherine Lee joined OpenAI as a full-time employee after contributing to this work.

\bibliographystyle{colm2026_conference}
\bibliography{references}

\clearpage
\newpage
\appendix
\onecolumn

\section*{Appendix Table of Contents}
\startcontents[appendix]
\printcontents[appendix]{l}{1}{\setcounter{tocdepth}{3}}
\clearpage

\section{Language models and verbatim training-data extraction}\label{app:sec:background}
\vspace{-.1cm}

We give basic background and our notation for language models and decoding schemes (Appendix~\ref{app:sec:background:lm}).
We then discuss background and related work concerning the state-of-the-art approach for measuring verbatim extraction (Appendix~\ref{app:sec:background:extraction}).

\subsection{Language models and decoding schemes}\label{app:sec:background:lm}

Let $\vocab$ denote the token \newterm{vocabulary} for a \newterm{large language model (LLM)}.
For example, $|\vocab|=32{,}000$ for \textsc{Llama 1} models. 
An LLM with \newterm{weights} $\model$ maps a sequence of tokens $\seqb=(\tokb_1, \ldots, \tokb_n) \in \vocab^n$ to a logit vector $\logit \in \R^{|\vocab|}$:                                                                    
$\model: \vocab^n \to \R^{|\vocab|}$.
A \newterm{decoding scheme} $\dec$ defines how logits are mapped to a next-token sampling distribution, $\softmaxsf_\dec: \R^{|\vocab|} \to \mathcal{P}(\vocab)$, where $\mathcal{P}(\vocab)$ is the set of probability distributions over~$\vocab$.
For example, temperature scaling or top-$k$ filtering each define such transformations (discussed below). 
Together, $(\model,\dec)$ define an \newterm{autoregressive generation process}: 
given a \newterm{prompt} $\seqb_{1:i}$, at each step $t > i$ we (i) compute $\logit_t = \model(\seqb_{1:t-1})$ and (ii) sample $\tokb_t \sim \softmaxsf_\dec(\logit_t)$, (iii) producing a \newterm{continuation}. 
After $\suflen$ iterations, we obtain a generated \newterm{continuation} $\hat{\seqb}_{\sufinds}$.\looseness=-1 

\paragraph{Logits and probabilities.}
Given a sequence $\seqb_{1:t-1}$, the model $\model$ outputs for every token $v \in \vocab$ a real value $\logit_t[v]$ called a \newterm{logit} associated with the next token $\tokb_t$. 
($\logit_t \in \R^{|\sV|}$ is the entire logit vector.) 
Logits are unnormalized scores, which are mapped to a probability distribution over~$\vocab$ via the \newterm{softmax function}:  
\begin{align}
\label{eq:softmax}
\softmaxsf(\logit_t)[v]
\;=\;
\frac{\exp\!\big(\logit_t[v]\big)}{\sum_{u \in \vocab} \exp\!\big(\logit_t[u]\big)},
\qquad v \in \vocab,
\end{align}
where the model's next-token conditional probability distribution is
\[
\Pr(\cdot \mid \seqb_{1:t-1};\model) \;=\; \softmaxsf(\logit_t),
\]
which is a function $\R^{|\vocab|} \to [0,1]^{|\vocab|}$. 
In particular, the probability of a given token $v$ is
\[
\Pr(\tokb_t = v \mid \seqb_{1:t-1};\model) \;=\; \softmaxsf(\logit_t)[v].
\]

\paragraph{Temperature scaling.}
A \newterm{temperature} parameter $\temp>0$ can rescale the logits before applying softmax:
\begin{align}
\label{eq:temp}
\Pr_\temp(\cdot \mid \seqb_{1:t-1};\model) \;=\; \softmaxsf\!\left(\tfrac{1}{\temp} \logit_t\right).
\end{align}
For a specific token $v$, 
\begin{align}
\label{eq:temp2}
\Pr_\temp(\tokb_t = v \mid \seqb_{1:t-1};\model) 
= \frac{\exp(\frac{\logit_t[v]}{\temp})}{\sum_{u \in \vocab} \exp(\frac{\logit_t[u]}{\temp})}.
\end{align}
Here, $\temp$ is a hyperparameter of the decoding scheme that modifies the distribution used to sample the next token. 
When $\temp=1$, we recover the base distribution of $\model$.  
As $\temp \to 0$, the distribution sharpens toward a point mass on the maximum-logit token, corresponding to deterministic greedy decoding (discussed below).  
As $\temp \to \infty$, the distribution approaches uniform, corresponding to random sampling where each token is equally likely.

\paragraph{Top-$k$ filtering.}
Let $\vocab_k \subseteq \vocab$ be the set of the $k>0$ tokens with largest logits at step $t$. 
\newterm{Top-$k$} filtering~\citep{fan-etal-2018-hierarchical} masks out all other tokens by setting their logits to $-\infty$, which is equivalent to assigning them probability $0$ after softmax (Equation~\ref{eq:softmax}). 
The remaining $k$ logits are normalized to form a distribution:
\begin{align}
\label{eq:topk}
\Pr_k(\tokb_t = v \mid \seqb_{1:t-1};\model) \;=\;
\begin{cases}
\displaystyle \frac{\exp(\logit_t[v])}{\sum_{u \in \vocab_k} \exp(\logit_t[u])}, & v \in \vocab_k, \\[1.2ex]
0, & v \notin \vocab_k.
\end{cases}
\end{align}

Temperature scaling and top-$k$ filtering can be combined: 
first scale the logits by $\tfrac{1}{\temp}$, then mask all but the top-$k$ tokens by setting their logits to $-\infty$, and finally apply softmax to normalize the remaining logits into a probability distribution.
\newterm{Greedy decoding} is the special case $k=1$ (temperature is immaterial):
at each step $t$, the highest-probability (top-$1$) token is chosen deterministically.

\subsection{Verbatim extraction of training data}\label{app:sec:background:extraction}

Models are known to memorize portions, but not all, of their training data.
\newterm{Memorization} refers to the encoding of specific training sequences in a model's weights 
\citep{feldman2020mem, yeom2018privacy,hayes2025strongmia,cooper2023report}, 
such that the learned distribution assigns very high probability to them.
For generative models like LLMs, this can make \newterm{extraction} possible:
memorized sequences can sometimes be reproduced in outputs~\citep{carlini2021extracting, lee2022dedup}.

We denote an LLM $\model$'s \newterm{training dataset} by $\dataset$, which consists of training sequences of tokens. 
A given training sequence $\seq$ can be split into a \newterm{prefix} $\prefix$ and a \newterm{target suffix} $\suffix$, i.e., $\seq = \prefix \parallel \suffix$. 
We quantify \newterm{extraction} of such a training sequence using model $\model$ and decoding scheme $\dec$ as follows.
Given the prefix $\prefix$, we run $\suflen$ autoregressive steps where, at each step, $\model$ produces a distribution over the vocabulary $\vocab$, $\dec$ transforms that distribution, and one token is sampled and appended.
We denote the generated continuation by\looseness=-1
\begin{align*}
\gensuffix \;=\; \gen_{\model,\dec}(\prefix, \suflen).
\end{align*}
In the extraction literature, it is common to set $\prelen=\suflen=50$, i.e., use $100$-token sequences~\citep{carlini2023quantifying, hayes2025measuringmemorizationlanguagemodels, cooper2025books}.
Most prior work studies extraction success in terms of an exact match between the generated continuation $\gensuffix$ and the target suffix $\suffix$.

\subsubsection{Defining verbatim extraction success}\label{app:sec:background:extraction:verbatim-def}

We use a predicate $\suc_{\model,\dec}(\seq)\in\{0,1\}$ to indicate whether verbatim extraction of the target suffix succeeds under the chosen criterion (specified below).

\paragraph{Success with respect to greedy decoding.}
Greedy decoding, which we denote $\dec=\greedy$, is deterministic;
for a given prompt, model, and output length, it always results in the same output. 
As a result, in the greedy case, verbatim extraction success reduces to exact string equality between the generated continuation and the ground-truth target suffix from the training data, i.e.,
\begin{align}
\label{eq:greedy-success}
\suc_{\model,\greedy}(\seq) \;\triangleq\;
\1\big[\,\gensuffix \;=\; \suffix\,\big].
\end{align}
This is the most common approach for quantifying memorization in both research and technical reports for new model releases~\citep{carlini2023quantifying, lee2022dedup, reid2024gemini, gemma2, llama3, pythia}. 
The associated metric is called \newterm{discoverable extraction}~\citep{carlini2023quantifying}.\looseness=-1  

\paragraph{Success with respect to non-deterministic sampling.}
\citet{hayes2025measuringmemorizationlanguagemodels} provide an alternative way of measuring extraction, which accounts for the non-determinism of sampling from distributions produced by most decoding schemes. 
For their \newterm{probabilistic discoverable extraction} metric, one can pick a non-deterministic decoding scheme $\dec$ (e.g., base distribution $\temp=1$, or top-$k$ with $k=40$), where the conditional next-token distribution can be used to compute suffix probabilities $\pseq \in [0,1]$.

In this setting, \citet{cooper2025books} define the success predicate as follows.
For a non-deterministic decoding scheme $\dec$, the probability $\pseq$ of generating the exact $\suflen$-token target suffix given an $\prelen$-token prefix, under $(\model,\dec)$, is
\begin{align}
\label{eq:pz}
\pseq \;\triangleq\;  \Pr_{\model,\dec}\!\big(\suffix\,\big|\,\prefix\big)
\;=\; \prod_{t=\sufstart}^{\sufend} \Pr_{\model,\dec}\!\big(\tok_t \,\big|\, \seq_{1:t-1}\big)
\;=\; \exp\!\Bigg(
  \sum_{t=\sufstart}^{\sufend}
  \log \Pr_{\model,\dec}\!\big(\tok_t \,\big|\, \seq_{1:t-1}\big)
\Bigg).
\end{align}
This is the probability that the model $\model$ generates the exact target suffix $\suffix$, token by token under decoding scheme $\dec$---i.e., the probability that $\gensuffix$ is equal to $\suffix$. 
For a threshold $\taumin\in(0,1]$, verbatim probabilistic extraction success occurs when this probability is at least $\taumin$:\looseness=-1
\begin{align}
\label{eq:prob-success}
\suc_{\model,\dec}(\seq;\taumin) \;\triangleq\; \1\big[\,\pseq \;\geq\; \taumin\,\big].
\end{align}
\citet{cooper2025books} set and validate a very conservative $\taumin=0.001$ for top-$k$ decoding with $\temp=1$ and $k=40$. 
For convenience, we do the same here. 
\looseness=-1

\subsubsection{Observations about probabilistic extraction}\label{app:sec:background:extraction:probabilistic}

Three important observations follow from the above definitions for probabilistic discoverable extraction (which we refer to as \newterm{probabilistic extraction}, for short):

\begin{enumerate}[leftmargin=0.5cm]
\item Greedy-decoded extraction is a special case of probabilistic extraction:
under $\dec=\greedy$, the per-step distribution is a point mass on the $\argmax$ token, so $\pseq=1$ if and only if the generated continuation equals the target suffix (otherwise $\pseq=0$), recovering Equation~\ref{eq:greedy-success}.
\item For stochastic $\dec$, a single-run, one-shot equality indicator $\1[\gen_{\model,\dec}(\prefix,\suflen)=\suffix]$ is a Bernoulli random variable with mean $\pseq$.
\item In practice, it is unnecessary to actually generate any continuations to estimate $\pseq$; 
teacher-forced scoring (described below) computes $\pseq$ directly from the logits $\logit$ in a single forward pass~\citep{cooper2025books}. 
\end{enumerate}

\paragraph{Computing $\pseq$ without sampling (teacher-forced scoring).}
To evaluate $\pseq$ for verbatim extraction, we do not need to generate any tokens. 
Instead, we perform a single forward pass on the \emph{entire} sequence from the training data $\seq=\prefix\!\parallel\!\suffix$ to obtain next-token logit vectors $\logit_t$ for $t=\sufstart,\dots,\sufend$ (each conditioned on $\seq_{1:t-1}$, the prefix and any earlier tokens in the suffix).
We then apply the decoding scheme $\dec$ in \emph{logit space} (e.g., temperature rescaling and/or top-$k$ filtering), and obtain the probability of the ground-truth suffix token $\tok_t$ by applying the softmax:
\begin{align}
\label{eq:teacherforce}
\Pr_{\model,\dec}\!\big(\tok_t \mid \seq_{1:t-1}\big)
\;=\;
\softmaxsf\big(\dec(\logit_t)\big)[\tok_t],
\qquad t=\sufstart,\ldots,\sufend.
\end{align}
Finally,
\(
\pseq
=
\prod_{t=\sufstart}^{\sufend}
\Pr_{\model,\dec}\!\big(\tok_t \mid \seq_{1:t-1}\big),
\)
or equivalently the sum of their $\log$-probabilities (as in Equation~\ref{eq:pz}).  
This procedure, called \newterm{teacher-forced scoring}, computes all $\suflen$ probabilities in one batched pass that reuses the model's internal parallelism.
It avoids $\suflen$ sequential sampling steps from autoregressive generation and the associated overhead.
Even with KV caching, in practice, autoregressive generation is more expensive than teacher forcing (Appendix~\ref{app:sec:intuition:cost}).\looseness=-1

\paragraph{Benefits of probabilistic extraction over discoverable extraction.}
Prior work has shown that probabilistic extraction provides a more nuanced measure of memorization than greedy-decoded discoverable extraction~\citep{hayes2025measuringmemorizationlanguagemodels, cooper2025books}. 
In summary, probabilistic extraction:\looseness=-1
\begin{itemize}[leftmargin=0.5cm]
    \item \textbf{Is more realistic.} Non-deterministic sampling schemes are the norm in practice, so probabilistic extraction more accurately reflects how LLMs are actually used. 
    \item \textbf{Identifies more extractable sequences.} 
    Greedy decoding systematically under-counts valid extraction. 
    High-probability training sequences can be missed simply because greedy decoding always picks the locally highest-probability token at each step. 
    By exploring more of the model's distribution, non-deterministic decoding schemes can surface valid extraction that greedy decoding fails to detect. 
    This gap grows with model size~\citep{cooper2025books, hayes2025measuringmemorizationlanguagemodels}.
    \item \textbf{Quantifies extraction risk.} 
    The probability $\pseq$ is the expected frequency with which the model outputs the suffix when prompted with the prefix. 
    For example, $\pseq=0.5$ means the model will reproduce the suffix about once every two prompts, while $\pseq=0.001$---our chosen $\taumin$---means about once in a thousand prompts. 
    This scalar measure provides a direct notion of \newterm{extraction risk} that can be compared across sequences. 
    Some sequences are more extractable (higher $\pseq$) and therefore more vulnerable. 
    In contrast, since greedy-decoded extraction is deterministic, it collapses this information into a single bit---was the suffix extractable or not?
    With probabilistic extraction, it is possible to analyze how extraction risk varies across different sequences~\citep{cooper2025books}. 
    This makes it especially useful for analyzing how different types of data are at risk of leakage at generation time. 
\end{itemize}

\subsubsection{Computing an extraction rate}\label{app:sec:background:extraction:rate}
Now, let $\seqset$ be a set of sequences $\seq$ drawn from some training dataset $\dataset$ (i.e., $\seqset \subseteq \dataset$).
We can compute an \newterm{extraction rate} over $\seqset$ with 
\begin{align}
\label{eq:extraction-rate}
\mathsf{extraction\_rate}(\seqset; \suc) \;\triangleq\; \frac{1}{|\seqset|}\sum_{\seq\in\seqset} \1[\,\suc(\seq)\,],
\end{align}
where $\suc(\seq)$ is a success predicate defined according to the chosen criterion.
In this work, for verbatim extraction, this predicate can either be the greedy success condition in Equation~\ref{eq:greedy-success} or the more general probabilistic success condition in Equation~\ref{eq:prob-success}.
We will use extraction rates as one of the main metrics in this work.
(We will also visualize distributions over extraction probabilities for a set $\seqset$, in order to convey how risk varies across sequences.)
Equation~\ref{eq:extraction-rate} is general enough to also apply to near-verbatim extraction metrics.\looseness=-1

\vspace{-.1cm}
\section{Problem setup for near-verbatim extraction}\label{app:sec:nv}
\vspace{-.1cm}

The extraction metrics discussed in Appendix~\ref{app:sec:background} all register success only when the generated continuation is an \emph{exact} match to the target suffix.
Even if one token is off---an additional space is inserted, a deleted comma---both success criteria would return $0$. 
Because of this dependence on strict equality with the target suffix, both metrics underestimate memorization~\citep{lee2022dedup,ippolito-etal-2023-preventing}.

To address this, it is possible to extend both greedy-decoded discoverable extraction and probabilistic extraction (Appendix~\ref{app:sec:background:extraction}) to capture instances of near-exact (\newterm{near-verbatim}) memorization. 
In our context of comparing generated suffixes to target suffixes of the same length $\suflen$, we choose a distance metric $\dist: \vocab^\suflen \times \vocab^\suflen \to \R_{\geq 0}$ and a tolerance $\varepsilon \in \R_{\geq0}$. 
Using this metric, if the distance between the generated continuation and the target suffix is at most $\varepsilon$, we count the sequence as extracted. 

In this appendix, we describe the two distance metrics that we consider in this paper (Appendix~\ref{app:sec:nv:distance}), the success criteria for near-verbatim extraction (Appendix~\ref{app:sec:nv:success}), and considerations for computing near-verbatim extraction in practice (Appendix~\ref{app:sec:nv:compute}). 

\subsection{Distance metrics}\label{app:sec:nv:distance}

We consider two standard token-sequence distance measures. 
For $\seqb,\seqc \in \vocab^\suflen$,
\begin{align}
\label{eq:hamming}
\ham(\seqb,\seqc) &\triangleq \sum_{t=1}^\suflen \1[\,\tokb_t \neq \tokc_t\,],
\end{align}
\begin{align}
\label{eq:levenshtein}
\lev(\seqb,\seqc) &\triangleq \min\{\,d : \seqb \xrightarrow{\;\text{$d$ edits}\;} \seqc \,\}.
\end{align}
The Hamming distance counts positional mismatches: 
it computes the distance with respect to token substitutions, where each substitution has unit cost.
The Levenshtein distance is more general: 
it allows substitutions, insertions, and deletions, each with unit cost, and is the minimum number of such edits required to transform one sequence into the other. 
Normalized versions of these metrics are also often useful, e.g., 
\begin{align*}
\normlev(\seqb,\seqc) \triangleq \frac{\lev(\seqb,\seqc)}{\suflen} \in [0,1].
\end{align*}
Since we always consider comparisons for $\suflen$-length sequences, we use unnormalized distance metrics.
We will also often abbreviate $\ham$ as $\hamshort$ and $\lev$ as $\levshort$.\looseness=-1 

\paragraph{Comparative remark.}
For equal-length token sequences, the Hamming and Levenshtein distances need not coincide. 
The Hamming distance counts the number of mismatched positions, while the Levenshtein may use insertions and deletions to find a shorter edit path. 
For example, suppose $\seqb$ is obtained from $\seqc$ by inserting a token at the beginning and deleting one at the end. 
The Levenshtein assigns distance $2$, while the Hamming registers $\suflen$ mismatches, making the two sequences appear more dissimilar than they actually are. 
We will revisit this below in Appendix~\ref{app:sec:nv:compute:challenges}.

\paragraph{Character-based variants.}
Although we compute distances over tokens, the same definitions apply to characters. 
In that case, decoded token sequences may yield different character lengths, but the Levenshtein distance remains well-defined.  
(Hamming requires the two strings being compared to have equal length, so may not apply to character-based distances.)
Since our analysis focuses on tokens, we omit explicit character-based definitions.

\subsection{Success criteria for near-verbatim extraction}\label{app:sec:nv:success}

For both discoverable extraction and probabilistic extraction, we can generalize the success criteria to accommodate near-verbatim memorization.

\paragraph{Discoverable extraction.}
For the greedy-decoded continuation $\gensuffix$ and target suffix $\suffix$, 
we define success under a chosen distance metric $\dist$ and tolerance $\varepsilon \in \R_{\geq0}$ as 
\begin{align}
\label{eq:success-greedy-nv}
    \suc^\dist_{\model,\greedy,\varepsilon}(\seq) \;\triangleq\;
    \1\Big[\, \dist\big(\gensuffix, \suffix\big) \le \varepsilon \,\Big].
\end{align}
When $\varepsilon=0$, this reduces to the verbatim extraction success criterion in Equation~\ref{eq:greedy-success}.

\paragraph{Probabilistic extraction.}
For verbatim extraction, the success criterion depends on the probability of the exact target $\pseq$. 
Now, with tolerance $\varepsilon$, multiple suffixes may satisfy the success criterion---any of which may reasonably be sampled with a non-deterministic decoding scheme $\dec$.
We therefore need to define extraction probability as the aggregate mass over the set of all such $\varepsilon$-viable suffixes.\looseness=-1 

To do so, for target suffix $\suffix$, we define the \newterm{$\varepsilon$-ball of near-verbatim suffixes} for distance metric $\dist$ as
\begin{align}
\label{eq:eball}
\ball\big(\suffix\big)
    \;\triangleq\; \big\{\,\vv \in \vocab^\suflen: \dist\big(\vv,\suffix\big) \le \varepsilon \,\big\}.
\end{align}
For an LLM $\model$ and decoding scheme $\dec$, the total probability mass on this set is 
\begin{equation}
\label{eq:pze}
\begin{aligned}
\pseqe
&\triangleq \Pr_{\model,\dec}\!\big(\vv \in \ball(\suffix) \,\big|\, \prefix\big) \\
&= \expect{{\vv \sim \Pr_{\model,\dec}(\cdot \mid \prefix)}}{
   \,\1\left\{\dist(\vv,\suffix)\le \varepsilon\right\}} \\
&= \sum_{\vv \in \ball(\suffix)} \Pr_{\model,\dec}\big(\vv \mid \prefix\big).
\end{aligned}
\end{equation}
The near-verbatim success criterion is then 
\begin{align}
\label{eq:prob-success-nv}
\suc^\dist_{\model,\dec,\varepsilon}(\seq;\taumin)
\;\triangleq\;
\1\big[\, \pseqe \;\geq\; \taumin\,\big].
\end{align}
When $\varepsilon=0$, $\ballset^\dist_0(\suffix)$ contains only the verbatim suffix, so $p_{\seq,0}^\dist=\pseq$, and Equation~\ref{eq:prob-success-nv} reduces to Equation~\ref{eq:prob-success}.

\subsection{Quantifying near-verbatim extraction success in practice}\label{app:sec:nv:compute}

Having modified the extraction success criteria to allow for near-verbatim matches to the target suffix, we next begin to discuss how one might go about instantiating these criteria in practice.\looseness=-1

\paragraph{Computing greedy-decoded, near-verbatim discoverable extraction is cheap.}
For greedy decoding, near-verbatim success is straightforward to compute in practice:
prompt with the prefix $\prefix$, generate the deterministic greedy continuation $\gensuffix$, compute the distance to the target suffix $\dist(\gensuffix,\suffix)$, and declare extraction success if that distance does not exceed $\varepsilon$.
This requires no additional forward passes through the model beyond those needed to generate the greedy continuation.\footnote{This applies to the setup for computing discoverable extraction that takes a known sequence from the training data, splits it into a prefix and target suffix, and evaluates the generated continuation against the target suffix.
    Other approaches compare the generated continuation against an entire corpus of training data---not just the target suffix.
    For efficiency, such work often uses a suffix array to search for the verbatim generated continuation among the training data 
    \citep{lee2022dedup, nasr2023scalable, nasr2025scalable}.  
    This data structure enables fast exact substring searches. 
}
Because greedy decoding is deterministic, there is only one suffix to evaluate.

In contrast, computing near-verbatim probabilistic extraction exhibits other challenges. 

\subsubsection{Challenges for computing near-verbatim probabilistic extraction}\label{app:sec:nv:compute:challenges}

\paragraph{The set of near-verbatim suffixes can be enormous.}
While, in principle, Equation~\ref{eq:prob-success-nv} only involves a slight modification of the verbatim probabilistic extraction success criterion, in practice, it is significantly more expensive to compute. 

For the Hamming distance (Equation~\ref{eq:hamming}), the number of $\suflen$-length suffixes within radius $\varepsilon$ around a $\suflen$-length target suffix over vocabulary $\vocab$ is
\begin{align}
\label{eq:hamming-ball}
\big|\ballset^{\ham}_\varepsilon(\suffix)\big|
  \;=\; \sum_{r=0}^{\varepsilon} \binom{\suflen}{r}\,(|\vocab|-1)^{r}.
\end{align}
Here $\binom{\suflen}{r}$ chooses the $r$ substitution positions, and
$(|\vocab|-1)^r$ counts all possible substitutions at those positions.
For instance, consider $|\vocab|=32{,}000$ (as is the case for \textsc{Llama 1} models) and $\suflen=50$:
\[
\begin{aligned}
\varepsilon &= 0 &:& \; 1 && (\text{single verbatim match}),\\
\varepsilon &\leq 1 &:& \; 1 + \binom{50}{1}(31{,}999) &&= 1{,}599{,}951,\\
\varepsilon &\leq 2 &:& \; 1 + \binom{50}{1}(31{,}999) + \binom{50}{2}(31{,}999)^2 &&\approx 1.254\times 10^{12}.
\end{aligned}
\]
So, even when just setting $\varepsilon=2$, the number of near-verbatim suffixes is already in
the trillions.

For the Levenshtein distance (Equation~\ref{eq:levenshtein}), as noted in Appendix~\ref{app:sec:nv:distance}, the computed distance can be smaller than the Hamming distance for the same pair of $\suflen$-length sequences, since an insertion-deletion pair may substitute for multiple mismatches. 
Put differently, the Levenshtein distance can effectively ``shift'' a sequence by pairing a deletion with an insertion to realign tokens, whereas the Hamming distance can only compare fixed positions;
as a result, the Levenshtein can need fewer edits to transform one sequence to another.
Therefore, for the same $\varepsilon$, the Levenshtein $\varepsilon$-ball is always at least as large as the Hamming $\varepsilon$-ball, and can be strictly larger (Theorem~\ref{thm:epsballs}).\looseness=-1 

\paragraph{No single-forward-pass, teacher-forced analogue.}
In the verbatim case, the probability $\pseq$ can be computed with a single teacher-forced forward pass through the model (Equation~\ref{eq:teacherforce}).
By contrast, recall that computing $\pseqe$ for near-verbatim extraction requires
\[
  \pseqe \;\triangleq\; \sum_{\vv \in \ballset^{\dist}_\varepsilon(\suffix)}
  \Pr_{\model,\dec}(\vv \mid \prefix).
  \tag{\ref{eq:pze}}
\]
If we were to use the same measurement approach, this would involve one teacher-forced evaluation for each $\vv$ in the
$\varepsilon$-ball.
Given the size of the Hamming ball (Equation~\ref{eq:hamming-ball})---and the
potentially larger Levenshtein ball (Theorem~\ref{thm:epsballs})---this is
prohibitively expensive even for very small $\varepsilon$.
(We discuss cost in more detail in Appendix~\ref{app:sec:intuition:cost}, in the context of providing an intuition for other ways of computing near-verbatim probabilistic extraction.)

\subsubsection{Levenshtein never exceeds Hamming for $\suflen$-length sequences}\label{app:sec:nv:compute:ham-dominate-lev}

We show that, for sequences of the same length, the Levenshtein distance (Equation~\ref{eq:levenshtein}) never exceeds the Hamming distance (Equation~\ref{eq:hamming}). 
As a result, for the same $\varepsilon$ and $\suflen$-length target suffix, the $\varepsilon$-ball of the Hamming distance is a subset of the $\varepsilon$-ball of the Levenshtein distance. 
This follows intuitively by definition:
both metrics allow substitution edits, but the Levenshtein distance may find a cheaper edit path by additionally allowing for insertions and deletions.

\begin{theorem}
\label{thm:epsballs}
Fix a token vocabulary $\vocab$ and $\suflen\in\sN$.
For all $\seqb,\seqc\in \vocab^\suflen$, 
\begin{align}
\label{eq:ham-lev-ineq}
\lev(\seqb,\seqc) \;\leq\; \ham(\seqb,\seqc).
\end{align}
Further: 
\begin{enumerate}[leftmargin=0.5cm]
  \item \textbf{(Inclusion)} For every $\seqb \in \vocab^\suflen$ and $\varepsilon \ge 0$,
        $\ballset^\hamshort_\varepsilon(\seqb) \subseteq \ballset^\levshort_\varepsilon(\seqb)$. 

  \item \textbf{(Equality for $\varepsilon\in\{0,1\}$)} 
        If $\varepsilon\in\{0,1\}$, then, for every $\seqb\in \vocab^\suflen$, 
        $\ballset^\hamshort_\varepsilon(\seqb)=\ballset^\levshort_\varepsilon(\seqb)$.
  
  \item \textbf{(Strictness for $\varepsilon \in \{2,\dots,\suflen-1\}$; existence)} 
        If $\suflen \geq 3$ and $|\vocab| \geq \suflen$, then there exists $\seqb\in\vocab^\suflen$
        such that for every $\varepsilon$ with $2 \le \varepsilon < \suflen$, 
        \[
          \ballset^\hamshort_\varepsilon(\seqb) \;\subsetneq\; \ballset^\levshort_\varepsilon(\seqb).
        \]

  \item \textbf{(Saturation)} For every $\seqb\in\vocab^\suflen$, every $\varepsilon\ge \suflen$, and $\dist\in\{\ham,\lev\}$, 
        $\ballset^{\dist}_\varepsilon(\seqb) = \vocab^\suflen$. 
\end{enumerate}
\end{theorem}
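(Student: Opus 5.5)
The main inequality comes from the Hamming mismatches themselves being an edit script. If $\seqb$ and $\seqc$ differ at positions $t_1,\dots,t_h$ with $h=\ham(\seqb,\seqc)$, then substituting $\tokb_{t_i}\mapsto\tokc_{t_i}$ at each such position transforms $\seqb$ into $\seqc$ using $h$ unit-cost edits. Since $\lev$ is a minimum over all edit scripts, $\lev(\seqb,\seqc)\le h$, which is Equation~\ref{eq:ham-lev-ineq}.

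Part 1 (inclusion) follows directly: if $\ham(\vv,\seqb)\le\varepsilon$, then $\lev(\vv,\seqb)\le\varepsilon$. Part 4 (saturation) uses the trivial bound $\ham\le\suflen$, since there are only $\suflen$ positions. Hence $\lev\le\ham\le\suflen\le\varepsilon$, and both balls equal $\vocab^\suflen$.

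For Part 2, the case $\varepsilon=0$ holds because both distances vanish exactly when $\vv=\seqb$. For $\varepsilon=1$ it suffices to show $\lev(\vv,\seqb)=1\Rightarrow\ham(\vv,\seqb)\le1$. A single edit changes the length by $0$ or $\pm1$. Since $|\vv|=|\seqb|$, the one edit must be a substitution, so the two sequences differ in exactly one position.

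Part 3 is the main obstacle, because we need a single $\seqb$ that works for every $\varepsilon\in\{2,\dots,\suflen-1\}$. We take $\seqb=(v_1,\dots,v_\suflen)$ with pairwise distinct tokens, which is possible since $|\vocab|\ge\suflen$. Given $\varepsilon$, we build a witness $\vv$ by a block shift. Let $m=\varepsilon$, and set $\vv=(v_2,v_3,\dots,v_{m},\,u,\,v_{m+1},\dots,v_\suflen)$, where $u$ is any token. In words, we delete $v_1$, shift the next $m-1$ tokens left, insert a filler at position $m$, and leave positions $m+1,\dots,\suflen$ unchanged.

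\begin{itemize}
\item \textbf{Levenshtein bound.} Deleting $v_1$ and inserting $u$ gives $\lev(\vv,\seqb)\le 2\le\varepsilon$, so $\vv\in\levball(\seqb)$.
\item \textbf{Hamming count.} For positions $1,\dots,m-1$, the entry of $\vv$ is $v_{t+1}\neq v_t$ because the $v_i$ are distinct. At position $m$ we choose $u\neq v_m$; either $u=v_1$ (when $\suflen\ge2$) or any other token distinct from $v_m$ works. That gives $m$ mismatches. Positions after $m$ match.
\end{itemize}

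This yields only $\ham=\varepsilon$, which still lies inside the Hamming ball, so the shifted block must be lengthened to $m=\varepsilon+1$. This requires $\varepsilon+1\le\suflen$, which holds because $\varepsilon<\suflen$. With $m=\varepsilon+1$ we get $\ham(\vv,\seqb)=\varepsilon+1>\varepsilon$ while $\lev(\vv,\seqb)\le2\le\varepsilon$. So $\vv\in\levball(\seqb)\setminus\hamball(\seqb)$, and combined with Part 1 the inclusion is strict.

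The hypothesis $\suflen\ge3$ only ensures that the range $2\le\varepsilon<\suflen$ is nonempty. The points to check carefully are the index bookkeeping for $m=\varepsilon+1$ and that distinctness of the $v_i$ gives every mismatch, including at the filler position.
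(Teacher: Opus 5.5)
Your proof is correct and follows essentially the same route as the paper. It treats the Hamming mismatches as a substitution-only edit script, uses the length argument (a single insertion or deletion changes length) for $\varepsilon=1$, uses $\ham\le\suflen$ for saturation, and uses a delete-then-insert shift of pairwise distinct tokens for strictness. The one difference is in Part 3: the paper uses your construction with $m=\suflen$ and $u=v_1$, i.e.\ the cyclic rotation $(v_2,\dots,v_\suflen,v_1)$. That single witness has $\ham=\suflen>\varepsilon$ for every $\varepsilon<\suflen$ simultaneously, so neither the $\varepsilon$-dependent block length $m=\varepsilon+1$ nor the false start with $m=\varepsilon$ is needed.
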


\begin{proof}
First, note that if $\seqb$ and $\seqc$ differ at $r=\ham(\seqb,\seqc)$ positions, then performing those $r$ position-wise substitutions is a valid edit script from $\seqb$ to $\seqc$ of cost $r$ (Equation~\ref{eq:hamming}).
By the definition of the Levenshtein distance (Equation~\ref{eq:levenshtein}), $\lev(\seqb,\seqc) \leq r$.\looseness=-1
We can now take each case in turn.
\begin{enumerate}[leftmargin=0.5cm]
    \item \textbf{(Inclusion)} By Equation~\ref{eq:ham-lev-ineq},  
    if $\seqc\in\ballset^\hamshort_\varepsilon(\seqb)$, then
    $\lev(\seqb,\seqc) \leq \ham(\seqb,\seqc) \leq \varepsilon$. 
    And so $\seqc\in\ballset^\levshort_\varepsilon(\seqb)$---i.e., every member of $\ballset^\hamshort_\varepsilon(\seqb)$ must also be a member of $\ballset^\levshort_\varepsilon(\seqb)$. 

    \item \textbf{(Equality for $\varepsilon\in\{0,1\}$)} For $\varepsilon=0$ both $\varepsilon$-balls are $\{\seqb\}$.  
    
    For $\varepsilon=1$, this corresponds to $\lev(\seqb,\seqc) \leq 1$ with $\seqb,\seqc\in\vocab^\suflen$. 
    In this case, the single edit cannot be an insertion or deletion, as this would change the length to $\suflen+1$ or $\suflen-1$, respectively. 
    So, the only possibilities are either (a) no edit (i.e., $\seqb=\seqc$) or (b) a single substitution. 
    Therefore, it is also the case that $\ham(\seqb,\seqc) \leq 1$. 
    Combining with (1), this means the two $\varepsilon$-balls are equal. 

    \item \textbf{(Strictness for $\varepsilon \in \{2, \ldots, \suflen-1\}$; existence)} 
    The reason $\ballset^\hamshort_\varepsilon(\seqb)$ can be strictly contained in
    $\ballset^\levshort_\varepsilon(\seqb)$ is that Levenshtein distance allows an
    insertion-deletion pair to ``shift'' tokens and realign two sequences.
    For example, if $\suflen\ge 3$ and the vocabulary $\vocab$ is large enough to
    choose pairwise distinct tokens $v_1,\dots,v_\suflen$, consider 
        \[
          \seqb \coloneqq (v_1, v_2, \ldots, v_\suflen),\qquad
          \seqc \coloneqq (v_2, v_3, \ldots, v_\suflen, v_1).
        \]
    Then $\ham(\seqb,\seqc)=\suflen$, because every position mismatches. 
    But $\seqc$ can be obtained from $\seqb$ by deleting $v_1$ and then inserting it at the end, i.e., with two edits
    \[
      \seqb \xrightarrow{\text{delete }v_1} (v_2,\ldots,v_\suflen)
      \xrightarrow{\text{insert }v_1\text{ at end}} \seqc,
    \]
    so $\lev(\seqb,\seqc)\le 2$. 
    Since $\seqb \ne \seqc$ (eliminating $\varepsilon=0$) and a single edit cannot map one $\suflen$-token sequence to another unless it is a substitution, which here can fix at most one of the $\suflen \geq 3$ mismatched positions (eliminating $\varepsilon=1$), we have $\lev(\seqb,\seqc)=2$.
    Thus, for any $\varepsilon \in \{2, \ldots, \suflen-1\}$, $\seqc$ lies in
    $\ballset^\levshort_\varepsilon(\seqb)$ but not in $\ballset^\hamshort_\varepsilon(\seqb)$, proving strict inclusion for this $\seqb$.

    \item \textbf{(Saturation)} For any $\seqc \in \vocab^\suflen$, by definition, $\ham(\seqb,\seqc)\leq\suflen$, since there are at most $\suflen$ mismatched positions. 
    And so, by Equation~\ref{eq:ham-lev-ineq}, $\lev(\seqb,\seqc) \leq \ham(\seqb,\seqc)\leq\suflen$.
    Therefore, once $\varepsilon \geq \suflen$, every $\seqc$ is included; 
    the $\varepsilon$-ball stabilizes to be every sequence in $\vocab^\suflen$. 
\end{enumerate}
\end{proof}

\vspace{-.1cm}
\section{An intuition for more efficient near-verbatim probabilistic extraction}\label{app:sec:intuition}
\vspace{-.1cm}

Even though the set of near-verbatim suffixes can be enormous (Appendix~\ref{app:sec:nv:compute:challenges}), many sequences in that set will be very unlikely (if not $0$-probability) under the model $\model$ and decoding scheme $\dec$.
Imagine taking a particular piece of a famous, memorized text and replacing the token for \texttt{\_the} with the token for \texttt{\_jazz}, \texttt{\_eight}, or \texttt{\_github}, using the \textsc{Llama 1} tokenizer;
those would all be within $\varepsilon\leq{1}$, but are very unlikely to be sequences with any meaningful probability. 
The point is, we should be able to derive a significantly more efficient approach to estimating near-verbatim probabilistic extraction---one that does not enumerate the entire $\ball$, but only sequences with non-zero or non-trivial mass.

In this appendix, we first describe a simple Monte Carlo baseline that samples continuations from $\model$ under $\dec$ (Appendix~\ref{app:sec:intuition:mc}).
While this is an intuitive approach, it turns out to be prohibitively expensive for producing useful and reliable estimates of $\pseqe$.
We then describe the intuition behind the approach that we take in this paper---a modification of beam search that produces a deterministic lower bound on $\pseqe$ at a fraction of the cost (Appendix~\ref{app:sec:intuition:beam}).
We provide quantitative intuition for why this approach succeeds in practice, and compare it with MC sampling.
Finally, we define a common unit of cost---token evaluations---that allows direct comparison across methods (Appendix~\ref{app:sec:intuition:cost}).
This shows why our beam-based approach is significantly cheaper than MC in terms of token evaluations, and enables us to compare with greedy discoverable extraction.
We do not go into formal details about our beam-search-based algorithm here, and instead defer this discussion to Appendices~\ref{app:sec:kcbs} and~\ref{app:sec:prune}.\looseness=-1 

\subsection{Monte Carlo sampling}\label{app:sec:intuition:mc}

Instead of enumerating the entire set $\ball(\suffix)$ and computing the conditional probability for each continuation given the prefix, a straightforward alternative is to estimate $\pseqe$ via sampling.
Denote a training-data sequence $\seq$, composed of the $\prelen$-length prefix $\prefix\equiv\pre$ and the $\suflen$-length target suffix $\suffix\equiv\suf$---i.e., $\seq\coloneqq\pre\Vert\suf$.
Denote a generated $\suflen$-length continuation $\gensuffix\equiv\gensuf$.
Given prefix $\pre$, sample $M$ i.i.d.\ $\suflen$-length continuations
$\{\gensuf^{(1)},\dots,\gensuf^{(M)}\} \sim \Pr_{\model,\dec}(\cdot\mid\pre)$.
Define the empirical, sampling-based estimate for $\pseqe$ as
\begin{align}
\label{eq:hat-pze}
\hatpseqe
   \;=\; \frac{1}{M} \sum_{w=1}^M
      \1\big[\,\dist(\gensuf^{(w)},\,\suf) \le \varepsilon\,\big], \quad \text{where } \gensuf^{(w)}\underset{\text{i.i.d.}}{\sim}\Pr_{\model,\dec}(\cdot \mid \pre).
\end{align}
That is, we prompt the model $M$ times with the prefix $\pre$ and estimate $\pseqe$ as the proportion of sampled continuations (according to decoding scheme $\dec$) lying within the $\varepsilon$-ball of the target suffix.
This estimator is unbiased---$\E[\hatpseqe]=\pseqe$---and straightforward to compute.
However, there are two important reasons why it is expensive for estimating near-verbatim extraction probabilities.

\paragraph{1) There is no guarantee MC will sample from a high-probability $\varepsilon$-ball.}
Each of the $M$ i.i.d.\ samples independently either hits or misses the $\varepsilon$-ball.
The probability of \emph{never} hitting it in $M$ samples is
\begin{align}
\label{eq:mc-miss}
\Pr\big(\textnormal{miss }\ball(\suf)\big) = (1-\pseqe)^M.
\end{align}
To guarantee a miss probability of at most $\delta$, we require
\[
(1 - \pseqe)^M \le \delta
\;\Longleftrightarrow\;
M \ge \frac{\ln(1/\delta)}{-\ln(1-\pseqe)}.
\]
Table~\ref{app:tab:mc-hit} shows the minimum number of samples required for several values of $\pseqe$ and $\delta$, computed using this formula.

\begin{table}[h]
\centering
\caption{\textbf{Samples required to hit at least once.}
Minimum number of MC samples $M$ to hit the $\varepsilon$-ball at least once with probability $\ge 1-\delta$.}
\label{app:tab:mc-hit}
\begin{tabular}{lrrr}
\toprule
$\pseqe$ & \multicolumn{3}{c}{$\ceil{M}$} \\
\cmidrule(lr){2-4}
 & $\delta = 0.005$
 & $\delta = 0.05$ & $\delta = 0.5$ \\
\midrule
 $10^{-1}$ & $51$ & $29$ & $7$  \\
 $10^{-2}$ & $528$ & $299$ & $69$ \\
 $10^{-3}$ & $5{,}296$ & $2{,}995$ & $693$ \\
\bottomrule
\end{tabular}
\end{table}

Even for modest miss tolerance (e.g., $\delta=0.05$), merely \emph{detecting} an $\varepsilon$-ball with mass $\pseqe=10^{-3}$ requires on the order of $3{,}000$ samples.
Note, however, that detection only guarantees \emph{one} hit; it does not guarantee that the MC estimate of $\pseqe$ is accurate.

\paragraph{2) Even when MC hits the $\varepsilon$-ball, the estimate $\hatpseqe$ can be unreliable.}
Each term in Equation~\ref{eq:hat-pze} is a Bernoulli random variable with success probability $\pseqe$.
Because $\hatpseqe$ is the mean of $M$ such i.i.d.\ variables, its variance and standard error are
\[
\Var[\hatpseqe]
= \frac{\pseqe(1-\pseqe)}{M},
\qquad
\standarderror[\hatpseqe]
   \;=\; \sqrt{\frac{\pseqe(1-\pseqe)}{M}}.
\]
To achieve a relative standard error of $\eta$ (i.e., $\standarderror[\hatpseqe] \le \eta \cdot \pseqe$), we need
\[
\sqrt{\frac{\pseqe(1-\pseqe)}{M}} \;\le\; \eta\cdot\pseqe
\quad\Longleftrightarrow\quad
M \;\ge\; \frac{1-\pseqe}{\eta^2\,\pseqe}
\;\approx\; \frac{1}{\eta^2\,\pseqe} \quad \text{when $\pseqe$ is small}.
\]
For the minimum extraction threshold $\pseqe \approx \taumin=10^{-3}$ and $\eta=10\%$, this gives $M \gtrsim 10^{5}$; for $\eta=1\%$, $M \gtrsim 10^{7}$.
In other words, \emph{reliable} estimation of small $\pseqe$ is orders of magnitude more expensive than merely detecting the $\varepsilon$-ball.

\begin{figure}[t!]
    \centering                                    
    \includegraphics[width=.8\linewidth]{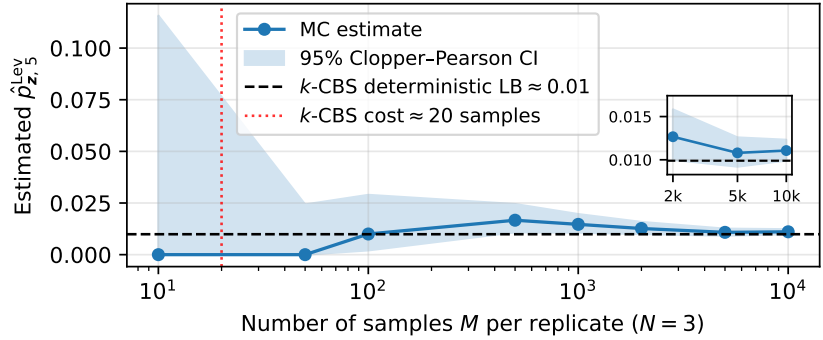}
    \caption{\textbf{Monte Carlo (MC) estimation of near-verbatim extraction probability.}
    For Levenshtein distance $\leq\!5$ ($p^{\mathsf{Lev}}_{\vz,\, 5}$), we plot convergence for a single sequence $\seq$ from \emph{The Great Gatsby} for \textsc{Llama~2 7B}, showing the pooled MC estimate with a 95\% confidence interval over 3 replicates.
    Our algorithm ($k$-CBS) produces a deterministic, provably correct lower bound (LB) of $\approx\!0.01$, which captures $89.4\%$ of the mean MC estimate at $M=10^4$ samples, at a cost of $\approx\!20$ MC samples (see Appendix~\ref{app:sec:intuition:cost} for cost accounting)---a budget at which MC produces no hits.\looseness=-1}
    \label{app:fig:mc-convergence}
    \vspace{-.2cm}
\end{figure}

\paragraph{Empirical illustration.}
We illustrate this cost empirically in Figure~\ref{app:fig:mc-convergence}, which shows MC convergence for a single sequence from \emph{The Great Gatsby} under \textsc{Llama~2~7B} with top-$k$ ($k=40$) decoding.
At $30$ pooled samples (across $N=3$ replicates at $10^1$), the 95\% confidence interval for $\pseqe$ spans from $0$ to over $0.10$---the estimate is consistent with both ``no extraction'' and ``10\% extraction probability.''
The estimate only stabilizes near the true value after thousands of samples.
By contrast, our approach produces a deterministic lower bound on $\pseqe$ at a cost equivalent to roughly $20$ MC samples.

While MC sampling is unbiased and simple, its cost scales as $1/\pseqe$ for detection and roughly $1/(\eta^2\,\pseqe)$ for reliable estimation---prohibitive at scale even when $\pseqe$ is moderately large (e.g., ${\approx}\,0.01$; see Figure~\ref{app:fig:mc-convergence}).
The fundamental limitation is that MC explores the sequence space blindly: most samples land far from the $\varepsilon$-ball and contribute no useful information.
This motivates an alternative approach that concentrates computation on the high-mass region of the sequence space, yielding a deterministic lower bound on $\pseqe$ at a fraction of the cost.\looseness=-1

\subsection{An intuition for using a decoding-constrained beam search approach}\label{app:sec:intuition:beam}

In this work, we propose a method for obtaining a useful, efficient lower bound on near-verbatim extraction probability.
With far fewer evaluations than MC sampling would require, we use a variant of \newterm{beam search} to produce a downward-biased (but still correct) lower bound on $\pseqe$.
We provide brief background on beam search (Appendix~\ref{app:sec:intuition:beam:background}), describe the modifications that turn it into an efficient procedure for lower-bounding $\pseqe$ (Appendix~\ref{app:sec:intuition:beam:summary}), and then provide mathematical intuition for why this approach succeeds in practice (Appendix~\ref{app:sec:intuition:beam:math}).

\subsubsection{Beam search}\label{app:sec:intuition:beam:background}

Beam search~\citep{Lowerre1976HARPY} is a standard decoding algorithm for autoregressive language models that approximately maximizes the conditional probability of a continuation given a prefix~\citep{graves2012sequencetransductionrecurrentneural, BoulangerLewandowski2013AudioCR,sutskever2014sequencesequencelearningneural, bahdanau2016neuralmachinetranslationjointly}.
Denote the length-$(t-1)$ prefix of $\gensuf$ by
$\genseq^{\textnormal{(cont)}}_{<t} \coloneqq (\gentok^{\textnormal{(cont)}}_1,\dots,\gentok^{\textnormal{(cont)}}_{t-1})$.
Then, for $\pre$ and $\gensuf$, the model $\model$ defines
\[
\Pr_{\model}(\gensuf \mid \pre)
\;=\;
\prod_{t=1}^{\suflen}
\Pr_{\model}\!\big(\gentok^{\textnormal{(cont)}}_t \,\big\vert\, \pre \Vert \genseq^{\textnormal{(cont)}}_{<t}\big),
\]
or, in $\log$ space,
\[
\log \Pr_{\model}(\gensuf \mid \pre)
\;=\;
\sum_{t=1}^{\suflen}
\log \Pr_{\model}\!\big(\gentok^{\textnormal{(cont)}}_t \,\big\vert\, \pre \Vert\genseq^{\textnormal{(cont)}}_{<t}\big).
\]
Beam search maintains, at each depth $t$, a \newterm{beam} of at most $\bw$ partial continuations, each with its accumulated $\log$-probability.
At $t=0$ the beam contains only the prefix $\pre$ with score $0$.
At step $t$:
\begin{enumerate}[leftmargin=.5cm]
  \item For each partial history $\genseq$ in the beam ($\pre \Vert\genseq^{\textnormal{(cont)}}_{<t}$, the prefix concatenated with the generated continuation so far), the model produces next-token probabilities
  $\Pr_{\model}(\gentok \mid \genseq)$ over $\gentok\in\vocab$.

  \item Each beam element is expanded by each of these tokens, yielding candidate children $\genseq' = \genseq \Vert \gentok$ with updated scores (the conditional $\log$ probability of the generated continuation so far, given the prefix): 
  \[
  \log p(\genseq') \;=\; \log p(\genseq) + \log \Pr_{\model}(\gentok \mid \genseq).
  \]
  \item For the next iteration ($t+1$), beam search performs an across-beam prune to keep only (at most) the $\bw$ unique highest-scoring partial sequences and discards the rest.
  This is done for efficiency, i.e., to prevent explosive blow-up of the number of sequences under consideration.
\end{enumerate}

After $\suflen$ steps, the algorithm returns the $\bw$ highest-scoring complete continuations in the beam as approximate maximum-probability sequences under the model.

\paragraph{Why beam search favors high-probability sequences.}
The joint $\log$-probability of a continuation decomposes additively across time steps.
Therefore, a high-probability continuation must maintain a high cumulative $\log$-probability at \emph{every} depth in order to remain competitive and stay in the beam.
If at some step $t$ a partial continuation falls far behind many competing sequences, any extension of it will remain disadvantaged when ranked. 

Beam search exploits this structure by pruning low-scoring partial continuations as soon as they fall outside the current top-$\bw$.
Partial continuations that consistently receive high next-token probabilities remain in the top-$\bw$ and are expanded further;
those that accumulate several low-probability token choices are quickly outrun by alternatives and permanently removed via the across-beam prune.

In this sense, beam search concentrates computational effort on a thin, high-mass region of candidate continuations rather than exploring the full, exponentially large space of possible sequences.
Our approach leverages this property: we impose additional constraints to turn beam search into an efficient procedure for lower-bounding the near-verbatim extraction probability $\pseqe$.\looseness=-1

\subsubsection{Modifying beam search with our approach}\label{app:sec:intuition:beam:summary}

The general idea behind our modification is simple.
Rather than scoring candidate sequences under the full model distribution, we implement expansion and scoring to respect the decoding policy $\dec$ that we choose for computing $\pseqe$.
In doing so, beam search (with width $\bw$) will return $\bw$ candidate sequences that are viable under $\dec$, with their associated probabilities under the $\dec$-constrained and renormalized probability distribution.
(In fact, we return $\bw \cdot k$ such candidates by not performing the last across-beam prune at step $\suflen$.)

In particular, for top-$k$ decoding, we keep a beam of width $\bw$.
For a partial history $\genseq$ in the beam at step $t$, we denote
\[
\sS_t(\genseq)\coloneqq \TopK_{k}\big(\logit_t(\genseq)\big)
\]
to be the set of $k$ tokens in $\vocab$ with the largest logits in $\logit_t(\genseq)$ (renaming $\vocab_{t,k}$ from Equation~\ref{eq:topk} and Algorithm~\ref{alg:kcbs}).
At each step $t$, maintain a beam of at most $\bw$ partial continuations, each with its accumulated $\log$-probability.
At $t=0$ the beam contains only the prefix $\pre$ with score $0$.
At step $t$:
\begin{enumerate}[leftmargin=0.5cm]
    \item For each partial history $\genseq$ in the beam, the model $\model$ and top-$k$ decoding $\dec$ produce next-token probabilities
    $\Pr_{\model,\dec}(\gentok \mid \genseq)$ over only the top-$k$-token set, $\gentok\in\sS_t(\genseq)$.

    \item Each beam element is expanded by each of these $k$ tokens, yielding $\bw \cdot k$ candidate children $\genseq' = \genseq \Vert \gentok$ with updated scores
  \[
  \log p(\genseq') \;=\; \log p(\genseq) + \log \Pr_{\model,\dec}(\gentok \mid \genseq).
  \]

     \item For the next iteration ($t+1$), beam search performs an across-beam prune to keep only the $\bw$ unique highest-scoring partial sequences and discards the rest.
\end{enumerate}
After $\suflen$ steps, the algorithm returns all $\bw \cdot k$ complete continuations (i.e., without performing a final across-beam prune).
Similar to traditional beam search, these continuations approximate the maximum-probability sequences under the truncated distribution induced by the model and the top-$k$ decoding policy. 
We refer to this procedure as \newterm{top-$k$ constrained beam search}, or $k$-CBS for short, and we refine it to attempt to get tighter bounds on $\pseqe$ throughout this paper.
Note, also, that we could use other decoding policies, though we focus on top-$k$ in this work (Appendix~\ref{app:sec:other:nucleus}). 

\paragraph{Connection to memorization.}
Intuitively, $k$-CBS should recover near-verbatim (and possibly the exact verbatim) suffixes whenever the sequence is memorized.
When memorized, the training-data suffix $\suf$ and small perturbations of it in $\ball(\suf)$ typically carry unusually high conditional probability under $\Pr_{\model,\dec}(\cdot\mid\pre)$, so the partial continuations corresponding to those suffixes remain among the highest-scoring across depths.
As a result, for a comfortably sized beam width $\bw$, these near-verbatim paths are unlikely to be pruned;
they are likely to survive to the final beam as $\bw$ high-probability candidates.
We can then post-process the returned continuations by applying a distance filter $\dist(\gensuf,\suf)\le\varepsilon$.
That is, let $\sF^{(\leq\varepsilon)}$ denote the set of filtered continuations;
we can sum their masses,
\[
\sum_{\gensuf \in \sF^{(\le\varepsilon)}} \Pr_{\model,\dec}(\gensuf\mid\pre),
\]
yielding a deterministic, rigorous (but biased-downward) lower bound on $\pseqe$ (with respect to $\model$ and top-$k$ for $\dec$). 
We will miss (potentially many) near-verbatim suffixes with this approach, but if the sequence is memorized and therefore high probability, its (near-)verbatim variants should appear in the final beam.
(Again, we actually return all $\bw \cdot k$ candidates from the last iteration, rather than performing the final across-beam prune to width $\bw$, as these are all $\suflen$-length continuations under the decoding policy.) 

Moreover, rather than applying the distance filter only at the end, we can incorporate $\varepsilon$-viability filtering \emph{during} the search: at each depth, prune any partial continuation that can no longer yield a final suffix within distance $\varepsilon$ of $\suf$.
This further focuses the beam on near-verbatim paths, freeing beam capacity and potentially yielding tighter lower bounds (we develop these pruned variants in Appendix~\ref{app:sec:prune}).\looseness=-1

While it is typically a drawback that beam search does not always produce diverse outputs~\citep{li-etal-2016-diversity,vijayakumar2018diversebeamsearchdecoding}, in our setting this is a \emph{feature}:
we explicitly focus on a small set of high-probability continuations that lie close to the training suffix, rather than exploring a wide variety of semantically different but low-probability alternatives.\looseness=-1

\paragraph{Working assumptions (informal).}
Informally, $k$-CBS should produce a useful lower bound on $\pseqe$, given that two conditions hold (which are both realistic for a memorized sequence):
\begin{enumerate}[leftmargin=0.5cm]
    \item \textbf{Token rank.}
    At most steps $t$, the true suffix token $z_t$ lies in the model's top-$k$ set, and deviations are rare enough that the full continuation remains within $\ball(\suf)$.

    \item \textbf{Beam dominance.}
    The verbatim path and/or its near-verbatim variants maintain cumulative $\log$-probabilities that stay within the beam's top-$\bw$ at each depth, so they survive all across-beam prunes.
\end{enumerate}
We formalize these conditions quantitatively in Appendix~\ref{app:sec:intuition:beam:math} below, and show that they follow naturally from the structure of high-mass continuations without distributional assumptions.

\paragraph{Cost comparison to MC.}
Even though $k$-CBS provides a downward-biased lower bound on $\pseqe$, it tends to recover more probability mass than MC sampling at smaller compute budgets because it concentrates computation on the (greedily) highest-mass continuations.
We defer a detailed cost comparison to Appendix~\ref{app:sec:intuition:cost}.

\subsubsection{Mathematical intuition for good performance in practice}\label{app:sec:intuition:beam:math}

The algorithm (Appendices~\ref{app:sec:kcbs} \& \ref{app:sec:prune}) guarantees that the downward-biased lower bound is \emph{valid}---it never exceeds the true $\pseqe$. 
But a lower bound of $0$ is also valid.
So, why should we expect the lower bound to be \emph{useful}---to capture a substantial fraction of $\pseqe$?
In this appendix, we provide quantitative intuition.
The key observation is that any continuation with probability $\ge\taumin$ (our extraction threshold)---or even slightly smaller---must pick high-rank tokens at almost every step.
As a result, such continuations take up spots in the beam;
it is likely that our algorithm retains them. 

\paragraph{Setup.}
Consider a complete history $\genseq\coloneqq(\gentok_1, \ldots, \gentok_\suflen)$.
We write $\genseq\coloneqq \pre \Vert \gensuf$, where $\pre$ is the training-data prefix and $\gensuf$ is a generated continuation given $\pre$.
The probability of such a continuation decomposes as
\[
\Pr_{\model}(\gensuf \mid \pre) = \prod_{t=1}^{\suflen} \Pr_{\model}(\gentok^{\textnormal{(cont)}}_t \;\big|\; \pre\Vert\genseq^{\textnormal{(cont)}}_{<t}),
\]
where $\genseq^{\textnormal{(cont)}}_{<t}$ denotes the length-$(t-1)$ prefix of $\gensuf$.
Abbreviate 
\[
p_t \coloneqq \Pr_{\model}(\gentok^{\textnormal{(cont)}}_t \;\big|\; \pre\Vert\genseq^{\textnormal{(cont)}}_{<t})
\] 
for the conditional probability of the realized token at step $t$.

\begin{lemma}[Low-probability budget from a sequence-mass floor $\tau$]
\label{lem:low-prob-bucket}
Fix a minimum mass threshold $\tau\in(0,1]$.
Let $\gensuf\in\vocab^\suflen$ be a continuation with
$\Pr_{\model}(\gensuf \mid \pre) \geq \tau$.
Then for any $\alpha\in(0,1)$,
\[
\#\{\,t\le\suflen:\ p_t \le \alpha\,\}
\ \le\
\Big\lfloor \frac{\ln \tau}{\ln \alpha} \Big\rfloor.
\]
\end{lemma}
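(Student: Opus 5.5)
The plan is a direct counting argument on the product decomposition of $\Pr_{\model}(\gensuf \mid \pre)$. Let $S \coloneqq \{t \le \suflen : p_t \le \alpha\}$ and $m \coloneqq |S|$. Since every $p_t \in [0,1]$, dropping the factors with $t \notin S$ can only increase the product:
\[
\tau \;\le\; \prod_{t=1}^{\suflen} p_t \;\le\; \prod_{t\in S} p_t \;\le\; \alpha^{m}.
\]

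First I handle the degenerate case. If some $p_t = 0$, then the product is $0 < \tau$, which contradicts the hypothesis. So every $p_t > 0$, and all logarithms below are finite.

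Next I take logarithms of $\tau \le \alpha^m$ to get $\ln \tau \le m \ln \alpha$. Because $\alpha \in (0,1)$, we have $\ln \alpha < 0$. Dividing by $\ln\alpha$ reverses the inequality, which gives $m \le \ln\tau / \ln\alpha$. This is the one step that needs care: the sign flip is what makes the bound an upper bound. The ratio $\ln\tau/\ln\alpha$ is nonnegative, since both logarithms are nonpositive (and it equals $0$ when $\tau = 1$).

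Finally, $m$ is an integer bounded above by the real number $\ln\tau/\ln\alpha$, so $m \le \lfloor \ln\tau/\ln\alpha \rfloor$, which is the claim.

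I do not expect a genuine obstacle here. The only points to get right are the sign reversal when dividing by $\ln\alpha$ and the observation that discarding factors $p_t \le 1$ preserves the inequality. The argument applies equally to the top-$k$-renormalized probabilities $\Pr_{\model,\dec}$, since it uses nothing about the per-step distributions beyond $p_t \in [0,1]$.
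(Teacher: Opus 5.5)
Your proof is correct and follows essentially the same argument as the paper. You bound the factors with $p_t > \alpha$ by $1$ to get $\tau \le \alpha^m$, take logarithms, flip the inequality using $\ln\alpha < 0$, and then apply the floor since $m$ is an integer; your separate treatment of the case $p_t = 0$ is harmless but unnecessary, because the only logarithms taken are of $\tau$ and $\alpha^m$, both strictly positive.
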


\begin{proof}
If exactly $c$ of the $p_t$ are $\le\alpha$ and on all other steps we use the bound $p_t\leq1$, then 
\[
\Pr_{\model}(\gensuf\mid\pre)=\prod_{t=1}^{\suflen} p_t \le \alpha^c.
\]
The assumption that $\Pr_{\model}(\gensuf\mid\pre)\ge\tau$ forces 
$\alpha^c \ge \tau$. 
Taking the $\ln$ (noting that $\ln\alpha<0$ for $\alpha\in(0,1)$),
\[
\ln \tau \;\le\; c \ln \alpha
\quad\Longrightarrow\quad
c \;\le\; \frac{\ln \tau}{\ln \alpha},
\]
and the integer bound follows by applying $\lfloor\cdot\rfloor$.
\end{proof}

We can translate this point to the rank-ordering of tokens at every step.

\begin{corollary}[Rank-bucket consequences]
\label{cor:rank-bucket}
At step $t$, let $r_t\in\{1,2,\dots,|\vocab|\}$ be the rank of the realized token ($1$ = highest), and let $p_t$ be its conditional probability.
Then for any integer $R\ge 2$ and any $\alpha\in(0,1)$:
\begin{enumerate}[leftmargin=.75cm]
\item[\textnormal{(a)}] \textbf{Outside-head budget.}
Since $r_t \ge R \Rightarrow p_t \le 1/R$ (by the pigeonhole principle on the probability simplex),
\[
\#\{\,t\le\suflen:\ r_t \ge R\,\}
\ \le\
\Big\lfloor \frac{\ln \tau}{\ln(1/R)} \Big\rfloor.
\]
\item[\textnormal{(b)}] \textbf{Head coverage via a probability floor.}
\[
\#\{\,t\le\suflen:\ p_t > \alpha\,\}
\ \ge\
\suflen\ -\ \Big\lfloor \frac{\ln \tau}{\ln \alpha} \Big\rfloor, 
\quad\text{and on those steps}\quad
r_t \le \Big\lfloor \tfrac{1}{\alpha} \Big\rfloor.
\]
\end{enumerate}
\end{corollary}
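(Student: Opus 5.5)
The plan is to reduce both parts to Lemma~\ref{lem:low-prob-bucket}. Throughout, $\tau$ is the mass floor of that lemma: the continuation $\gensuf$ satisfies $\Pr_{\model}(\gensuf\mid\pre)\ge\tau$. Note that Lemma~\ref{lem:low-prob-bucket} bounds the number of steps with $p_t\le\alpha$. So it suffices, in each part, to turn a statement about ranks into a statement about per-step probabilities, or the reverse. The one real ingredient is an elementary pigeonhole fact about the next-token distribution at a fixed step $t$. If the realized token has rank $r_t$ and probability $p_t$, then the $r_t-1$ tokens ranked above it each have probability at least $p_t$. Together with the realized token, these give $r_t$ tokens of mass at least $p_t$ each. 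Since probabilities sum to at most $1$, this forces $r_t\,p_t\le 1$. Ties in ranking can be broken arbitrarily; the argument only needs that higher-ranked tokens have probability $\ge p_t$.

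For part (a), I will apply the pigeonhole fact to any step with $r_t\ge R$. This gives $p_t\le 1/r_t\le 1/R$, so $\{t: r_t\ge R\}\subseteq\{t: p_t\le 1/R\}$. Because $R\ge 2$, we have $\alpha\coloneqq 1/R\in(0,1)$. Lemma~\ref{lem:low-prob-bucket} with this $\alpha$ then bounds the larger set by $\lfloor \ln\tau/\ln(1/R)\rfloor$, and the subset inherits the bound.

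For part (b), the first claim is a complement count. Lemma~\ref{lem:low-prob-bucket} gives at most $\lfloor\ln\tau/\ln\alpha\rfloor$ steps with $p_t\le\alpha$ among the $\suflen$ steps. Every other step has $p_t>\alpha$, so at least $\suflen-\lfloor\ln\tau/\ln\alpha\rfloor$ steps do. For the rank claim on those steps, I will use the pigeonhole fact again. From $r_t\,p_t\le 1$ and $p_t>\alpha$ we get $r_t<1/\alpha$. Since $r_t$ is an integer, $r_t\le\lfloor 1/\alpha\rfloor$. There is a minor edge case when $1/\alpha$ is itself an integer; there the strict inequality even gives $r_t\le 1/\alpha-1$, which is stronger than needed.

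The main (mild) obstacle is stating the pigeonhole step cleanly, handling ties and the strict versus non-strict inequalities. Beyond that, everything is a direct invocation of Lemma~\ref{lem:low-prob-bucket}. I would also note that the argument is unchanged whether $\Pr_{\model}$ is the base distribution or the top-$k$-renormalized $\Pr_{\model,\dec}$, since only normalization of each per-step distribution is used.
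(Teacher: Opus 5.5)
Your proposal is correct and follows essentially the same route as the paper: both parts reduce to Lemma~\ref{lem:low-prob-bucket}. In (a) you use the pigeonhole bound $p_t\le 1/R$ for $r_t\ge R$, and in (b) a complement count plus the fact that at most $\lfloor 1/\alpha\rfloor$ tokens can carry mass at least $\alpha$. Your single inequality $r_t\,p_t\le 1$ is just a unified form of the paper's two pigeonhole steps (sorting the top-$R$ probabilities in (a), counting tokens with probability $\ge\alpha$ in (b)).
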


\begin{proof}
\textbf{(a) Outside-head budget.}
Fix a step $t$ and sort the step-$t$ probabilities in nonincreasing order as
$p^{(t)}_{(1)} \ge p^{(t)}_{(2)} \ge \ldots \ge p^{(t)}_{(|\sV|)}$.
Because the probabilities at step $t$ sum to $1$,
\[
\sum_{i=1}^{R} p^{(t)}_{(i)} \ \le\ 1
\quad\Longrightarrow\quad
R\cdot p^{(t)}_{(R)} \ \le\ 1
\quad\Longrightarrow\quad
p^{(t)}_{(R)} \ \le\ \frac{1}{R}.
\]
If the realized token has rank $r_t\ge R$, then by definition
$p_t \le p^{(t)}_{(R)} \le 1/R$.
Let $c \coloneqq \#\{t\le\suflen : r_t \ge R\}$, i.e., $c$ is the number of steps where the rank of the realized token meets or exceeds $R$.
On those $c$ steps, $p_t \le 1/R$.
(Apply Lemma~\ref{lem:low-prob-bucket} with $\alpha = 1/R$.)

\textbf{(b) Head coverage.}
This is the complement of Lemma~\ref{lem:low-prob-bucket}. 
That is, set 
\(
c \;=\; \#\{\,t\le\suflen:\ p_t \le \alpha\,\}
\ \le\
\Big\lfloor \frac{\ln \tau}{\ln \alpha} \Big\rfloor 
\).
Since there are exactly $\suflen$ steps, 
\[
\#\{t\le\suflen : p_t > \alpha\}
\ =\
\suflen - c
\ \ge\
\suflen - \Big\lfloor \tfrac{\ln\tau}{\ln\alpha} \Big\rfloor.
\]
On steps with $p_t > \alpha$, at most $\lfloor 1/\alpha \rfloor$ tokens can have probability $\ge \alpha$ (otherwise the total mass at that step would exceed $1$), and so $r_t \le \lfloor 1/\alpha \rfloor$.
\end{proof}

Note that we write the above results in terms of the base model distribution $\model$.
Nevertheless, these results hold for \emph{any} model $\model$ and scoring rule $\dec$ that induces a probability distribution---we deliberately avoid distributional assumptions so that the intuition applies broadly.
To translate these for a decoding rule $\dec$, the ranks of realized tokens are limited to a subset of $\{1,2,\ldots,|\vocab|\}$---i.e., a subset of the size of the vocabulary (e.g, $k$ for top-$k$). 

\paragraph{Instantiations for $\tau=\taumin=10^{-3}$ and $\suflen=50$.}

\begin{itemize}[leftmargin=0.5cm]
\item \textbf{Greedy steps} ($\alpha=0.5$).
At most $\lfloor \ln 10^{-3}/\ln 0.5 \rfloor = 9$ steps can have $p_t \le 0.5$.
On any step, at most $1$ token can have probability $>0.5$;
this is the case for the remaining $\ge 41$ steps---i.e., at each of these steps, it is necessarily the model's top-$1$ (greedy) token.

\item \textbf{Top-$10$ steps} ($\alpha=0.1$).
At most $3$ steps can have $p_t \le 0.1$; on the remaining $\ge 47$ steps, the realized token has rank $r_t \le 10$.

\item \textbf{Top-$k$ steps} ($R=40$).
By Corollary~\ref{cor:rank-bucket}(a), at most $\lfloor \ln 10^{-3}/\ln(1/40) \rfloor = 1$ step can have the realized token at rank $\ge 40$.
Under top-$k$ decoding with $k=40$, this means at most one step where the realized token is at the very bottom of the top-$k$ set; on all other steps it lies strictly above rank $40$.
\end{itemize}

These bounds are deliberately worst-case.
For instance, the $\alpha=0.1$ case allows a highly atypical situation in which ten different tokens all have probability exactly $0.1$ at many steps.
In realistic, extracted model outputs, the distribution is much peakier. 
Therefore, Lemma~\ref{lem:low-prob-bucket} and Corollary~\ref{cor:rank-bucket} are conservative: 
they only require that high-mass continuations use high-ranked tokens on almost all steps.

\paragraph{Why high-mass paths survive the beam.}
The instantiations above show that a continuation with probability $\ge\taumin$ is constrained to pick high-rank tokens on almost every step.
Concretely, the geometric mean of the per-token conditional probabilities for such a continuation is
\[
\big(\taumin\big)^{1/\suflen} = (10^{-3})^{1/50} \approx 0.87,
\]
meaning the typical (geometric-mean) per-step conditional probability is roughly $87\%$.
(For $\tau=10^{-4}$, the geometric mean is still ${\approx}\,83\%$.)

For a language model, sustaining ${\approx}\,87\%$ conditional probability over $50$ consecutive tokens is very high---matching the case of memorization.
We observe empirically that continuations at this mass level are overwhelmingly (near-)verbatim matches to the training data.
(We confirm this with negative-control experiments on non-training data, where $k$-CBS does not detect near-verbatim matches above $\taumin$.)
Such continuations maintain competitive cumulative $\log$-probabilities at every depth, because they consistently pick high-rank tokens.
With beam width $\bw=20$ (what we choose in practice, see Appendix~\ref{app:experiments:width}), the beam has ample capacity to retain these paths.

A path is pruned from the beam only if $\bw$ other paths outrank it at some depth---a point we make precise below.
Because the rank-budget constraints force high-mass paths to stay near the top of the cumulative-probability ranking at each step, these paths naturally maintain their competitive position across depths.
This is the core reason our approach of decoding constrained beam search produces useful (not just valid) lower bounds for memorized sequences, and it is fundamentally different from MC sampling, which has no mechanism to concentrate on these paths.\looseness=-1

\paragraph{A rigorous floor: heavy-mass path survival.}
We can prove a rigorous sufficient condition for very high-mass sequences, requiring no assumptions beyond the mass conservation property of the top-$k$ tree (proven in Appendix~\ref{app:sec:kcbs}).

\begin{lemma}[Heavy-mass path survival under across-beam top-$\bw$]
\label{lem:heavy-survival}
Fix a depth $t\in\{1,\ldots,\suflen\}$. Let $\sC_t$ be the candidate set at depth $t$
(the union of all children formed from the beam at depth $t-1$, possibly after any optional $\varepsilon$-viability filtering).
Let $p'(\cdot)>0$ denote the cumulative path probability used for ranking.

If a length-$t$ partial continuation $\genseq_{1:t}\in\sC_t$ satisfies
\[
p'(\genseq_{1:t})\;>\;\frac{1}{\bw+1},
\]
then $\genseq_{1:t}$ is kept by the across-beam top-$\bw$ selection at depth $t$ (with deterministic tie-breaking).
Therefore, if a full path $\genseq_{1:\suflen}$ has $p'(\genseq_{1:t})>\frac{1}{\bw+1}$ for every $t=1,\ldots,\suflen$, then it survives all across-beam prunes and is returned by $k$-CBS (or a pruned variant of $k$-CBS).
\end{lemma}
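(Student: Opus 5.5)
The plan is to prove the single-depth statement with a mass-counting argument, and then get the full-path statement by induction on depth.

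\textbf{Step 1: the candidates at depth $t$ are disjoint prefixes.} The candidates in $\sC_t$ are distinct length-$t$ partial continuations. Each is a child of a distinct beam element at depth $t-1$, extended by a distinct token. So they correspond to disjoint sets of full continuations in the top-$k$ tree.

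\textbf{Step 2: their masses sum to at most one.} By the mass-conservation property of the top-$k$ tree (Appendix~\ref{app:sec:kcbs}), the cumulative probability $p'(\genseq_{1:t})$ of a prefix equals the total probability, under $\Pr_{\model,\dec}(\cdot\mid\pre)$, of the full continuations extending it. Combining this with disjointness gives
\[
\sum_{\seqb\in\sC_t} p'(\seqb)\;\le\;1 .
\]
Any $\varepsilon$-viability filtering only removes candidates, so the inequality still holds after filtering.

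\textbf{Step 3: the counting argument.} Suppose $\genseq_{1:t}$ has $p'(\genseq_{1:t})>\frac{1}{\bw+1}$ and is not kept by the top-$\bw$ selection. Then at least $\bw$ other candidates rank weakly above it, each with $p'\ge p'(\genseq_{1:t})>\frac{1}{\bw+1}$. Adding these to $\genseq_{1:t}$ itself gives at least $\bw+1$ distinct candidates, each with mass strictly above $\frac{1}{\bw+1}$. Their total mass therefore strictly exceeds $1$, contradicting Step 2.

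\textbf{Step 4: ties.} A candidate can be displaced by a tied candidate only when at least $\bw$ other candidates already have mass $\ge p'(\genseq_{1:t})$. The counting in Step 3 covers that case, so the conclusion holds under any deterministic tie-breaking rule.

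\textbf{Step 5: the full-path claim.} Argue by induction on $t$.
\begin{itemize}
\item At $t=1$, the child $\genseq_{1:1}$ is in $\sC_1$, since $\sL_0=\{(\pre,0)\}$ and a positive-probability token lies in the top-$k$ set.
\item If $\genseq_{1:t-1}\in\sL_{t-1}$, then $\genseq_{1:t}\in\sC_t$. The reason is that $p'(\genseq_{1:t})>0$ forces $\gentok_t\in\vocab_{t,k}$, so the expansion step generates it.
\item If a viability filter is applied, we need $\genseq_{1:t}$ to survive it. That is implicit in the statement's premise $\genseq_{1:t}\in\sC_t$, or follows if the full path is $\varepsilon$-viable.
\item The single-depth result then keeps $\genseq_{1:t}$ in $\sL_t$.
\end{itemize}
At $t=\suflen$ there is no final prune, so the path appears in the returned set.

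\textbf{Main obstacle.} The only delicate point is Step 2: it must use true disjointness of prefixes together with mass conservation under the renormalized top-$k$ distribution. I would therefore invoke that appendix result explicitly. I would also handle EOS removal the same way as filtering, since removing candidates can only lower the total mass.
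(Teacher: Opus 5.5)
Your proposal is correct and follows the paper's proof. Both arguments assume $\genseq_{1:t}$ is displaced, so that $\bw$ distinct candidates have $p'\ge p'(\genseq_{1:t})>\frac{1}{\bw+1}$; summing these $\bw+1$ masses gives more than $1$, contradicting mass conservation, and an induction over depths then gives the full-path claim. Your Step 2 (equal-length distinct candidates have disjoint descendant sets, combined with the descendant-sum identity, so $\sum_{\sC_t} p'\le 1$) is a somewhat more explicit justification of the mass bound than the paper gives, but the argument is the same.
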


\begin{proof}
Write $\vx \coloneqq \genseq_{1:t}$ for the partial continuation in question, and suppose for contradiction that $p'(\vx) > \frac{1}{\bw+1}$ but $\vx$ is not kept by the top-$\bw$ selection.
Then there exist $\bw$ distinct candidates $\vw_1, \ldots, \vw_{\bw} \in \sC_t$ that each outrank $\vx$, meaning
\[
p'(\vw_i) \;\ge\; p'(\vx) \quad \text{for each } i = 1, \ldots, \bw.
\]
Summing over these $\bw$ candidates and adding $p'(\vx)$:
\[
\sum_{i=1}^{\bw} p'(\vw_i) \;+\; p'(\vx)
\;\ge\; \bw \cdot p'(\vx) \;+\; p'(\vx)
\;=\; (\bw + 1) \cdot p'(\vx)
\;>\; (\bw + 1) \cdot \frac{1}{\bw+1}
\;=\; 1.
\]
But $\vw_1, \ldots, \vw_{\bw}$ and $\vx$ are $\bw + 1$ distinct elements of $\sC_t$, so their probabilities are a subset of the total mass at depth $t$.
Under top-$k$ renormalization, all depth-$t$ candidates sum to exactly $1$ (at most $1$ with viability filtering; see Appendix~\ref{app:sec:kcbs}), so their total probability cannot exceed $1$---a contradiction.

Applying this argument at each depth $t = 1, \ldots, \suflen$ establishes the second claim by induction.
\end{proof}

This lemma applies to both the baseline and $\varepsilon$-pruned $k$-CBS variants, provided the path passes the viability filter at every depth.
For our setting ($\bw=20$), the threshold is $1/(\bw+~1) \approx 0.048$---any continuation maintaining cumulative probability above ${\approx}\,5\%$ at every depth is guaranteed to appear in the final beam.
There can be at most $\bw$ such paths (since $\bw+1$ of them would violate mass conservation), so the beam has capacity for all of them.\looseness=-1

\paragraph{Comparison with MC at the heavy-mass floor.}
Even this high-probability floor dominates MC sampling on both detection and reliability.
Consider a continuation with probability $p > 1/(\bw+1)$.
From Table~\ref{app:tab:mc-hit}, MC requires $m \ge \ln(1/\delta) / (-\ln(1-p))$ samples merely to detect this continuation with confidence $1{-}\delta$.
Using $-\ln(1-p) \approx p$ for moderate $p$, this is approximately $m \ge \ln(1/\delta)/p$.
For $\bw=20$, $p = 1/21$, and $\delta=0.05$, this gives $m \gtrsim 21 \cdot \ln(20) \approx 63$ samples just for detection.
And reliable estimation of its mass is even more expensive (Appendix~\ref{app:sec:intuition:mc}).
By contrast, $k$-CBS recovers this continuation \emph{deterministically}---with certainty and with its exact probability under the scoring distribution.

\paragraph{Beyond the floor.}
The $1/(\bw+1)$ threshold is a sufficient condition for survival of the search, not a necessary one.
In practice, $k$-CBS (and the pruned variants) captures continuations with probability far below this floor.
For most extractable sequences ($\pseqe \ge \taumin$), no individual continuation exceeds $1/(\bw+1)$; those that survive the beam have individual probability closer to $\taumin$.
We cannot provide a comparable guarantee for these lower-probability continuations without distributional assumptions. 

Nevertheless, the results in this appendix show why it is likely that such paths survive the search.
As the geometric-mean argument above shows, even at $\tau=\taumin=10^{-3}$, it is intuitively the case that high-mass paths maintain very high per-token probabilities and stay competitive in the beam.

We deliberately avoid imposing distributional assumptions that would let us formalize this observation further. 
The rank-budget and heavy-survival results hold for any model and any scoring rule that induces a valid probability distribution, and the practical tightness of the lower bound is an empirical finding that we document in our experiments (Section~\ref{sec:experiments} and Appendix~\ref{app:sec:experiments}).\looseness=-1

\subsection{Comparing method costs with token evaluations}\label{app:sec:intuition:cost}

We compare the cost of different extraction methods in terms of \newterm{token evaluations}---the number of tokens processed by the model in forward passes.
These costs show why our beam-search-based approach is far more efficient than Monte Carlo sampling for estimating $\pseqe$.
We also include greedy decoding and teacher-forced likelihood as reference points, since we report these metrics in our experiments.
Throughout, we consider a single prefix--suffix pair with prefix length $\prelen$ and target suffix length $\suflen$.\looseness=-1

\paragraph{Greedy decoding (verbatim and near-verbatim discoverable extraction).}
Given a prefix of length $\prelen$, greedy decoding produces exactly one (deterministic) continuation of length $\suflen$.
The prefill processes the $\prelen$ prefix tokens, producing logits that predict the first suffix token.
Then $\suflen - 1$ autoregressive decode steps each process one token (the last decode step predicts the $\suflen$th suffix token), for a total of\looseness=-1
\[
\underbrace{\prelen}_{\text{prefill}} \;+\; \underbrace{(\suflen - 1)}_{\text{decode steps}}
\quad\text{token evaluations.}
\]
The downstream check---whether the generated continuation matches the target suffix verbatim or near-verbatim (i.e., within distance $\varepsilon$ for a chosen metric $\dist$)---is post-processing that does not involve the model, so the cost is the same for both verbatim and near-verbatim discoverable extraction.
For $\prelen = 50$ and $\suflen = 50$, this is $99$ token evaluations.
(With EOS tokens allowed, decoding may terminate early, producing fewer than $\suflen$ tokens.)
This is the standard way to evaluate extraction, originating from \citet{lee2022dedup} and \citet{carlini2023quantifying}. 

\paragraph{Teacher-forced likelihood (verbatim probabilistic extraction).}
To compute the verbatim probability of a fixed suffix of length $\suflen$ under a given prefix, we evaluate the next-token distributions along the full prefix $+$ suffix via teacher forcing---a single forward pass over $\prelen + \suflen$ positions:\looseness=-1
\[
\underbrace{\prelen + \suflen}_{\text{one pass over prefix $+$ suffix}}
\quad\text{token evaluations.}
\]
From this single pass, we obtain logits at every position, which can then be post-processed with respect to any decoding scheme $\dec$ to get the suffix probability under that scheme, at no additional token-evaluation cost (Appendix~\ref{app:sec:background:extraction:probabilistic}).
Verbatim probabilistic (discoverable) extraction originates from \citet{hayes2025measuringmemorizationlanguagemodels}, and was made more efficient in practice in \citet{cooper2025books}.
Greedy decoding and teacher forcing thus require essentially the same number of token evaluations ($\prelen + \suflen - 1$ vs.\ $\prelen + \suflen$).

\paragraph{Monte Carlo sampling (near-verbatim probabilistic extraction).}
Under the MC baseline (Appendix~\ref{app:sec:intuition:mc}), we draw $M$ independent $\suflen$-length continuations from the prefix and check each for near-verbatim similarity to the target.
With KV-cache reuse of the shared prefix across all $M$ samples,%
\footnote{This is a lower bound on cost: in principle, the prefix KV cache can be computed once and cloned for each sample.
In practice, standard generation APIs (e.g., HuggingFace \texttt{model.generate()}) re-process the prefix for each batch chunk, yielding a per-sample cost closer to $\prelen + \suflen - 1$ and a total closer to $M(\prelen + \suflen - 1)$.
We present the shared-prefill cost as the theoretical minimum; either way, the dominant cost is the $(\suflen - 1) \cdot M$ decode term.}
the cost is
\begin{equation}
\label{eq:mc-cost}
\underbrace{\prelen}_{\text{one prefill}} \;+\; \underbrace{(\suflen - 1)\,M}_{\text{$M$ samples $\times$ $(\suflen-1)$ decode steps each}}
\quad\text{token evaluations.}
\end{equation}
For $\prelen = 50$, $\suflen = 50$, and $M = 3{,}000$ (the order of magnitude required merely to detect an $\varepsilon$-ball with mass $\pseqe = 10^{-3}$ at $95\%$ confidence; Table~\ref{app:tab:mc-hit}), this gives $50 + 49 \cdot 3{,}000 = 147{,}050$ token evaluations---roughly $1{,}500{\times}$ the cost of teacher-forced verbatim extraction for the same pair.
For reliable estimation with $10\%$ relative standard error, the required sample size grows to $M \gtrsim 10^5$ (Appendix~\ref{app:sec:intuition:mc}), and the cost scales proportionally.
(This is the worst case among extractable sequences: $M$ scales as $1/\pseqe$, so sequences with larger $\pseqe$ require fewer samples.)\looseness=-1

\paragraph{$k$-CBS (near-verbatim probabilistic extraction).}
Our beam-search-based approach maintains a beam of width $\bw$ under top-$k$ decoding.
Write $\sL_t$ for the beam at depth $t$ (partial continuations of length $t$).
The cost breaks down as follows:
\begin{enumerate}[leftmargin=0.5cm]
    \item \textbf{Prefill.}
    Process the $\prelen$ prefix tokens ($\sL_0$, a single element) in one forward pass, producing logits for the first suffix position.
    Select the top-$\bw$ candidates from the top-$k$ tokens $\to$ $\sL_1$ (no additional forward pass needed).
    Cost: $\prelen$ token evaluations.

    \item \textbf{Decode steps $t = 1, \ldots, \suflen - 1$.}
    Compute logits for each element of $\sL_t$ ($|\sL_t|$ token evaluations), expand each by the top-$k$ tokens ($\bw \cdot k$ candidates), and prune to the top-$\bw$ $\to$ $\sL_{t+1}$.
    At the final step ($t = \suflen - 1$), return all $\bw \cdot k$ candidates without pruning (Appendix~\ref{app:sec:kcbs}).
    Cost: $\sum_{t=1}^{\suflen - 1} |\sL_t|$ token evaluations; without viability pruning, $|\sL_t| = \bw$ at every depth, giving $(\suflen - 1) \cdot \bw$.
\end{enumerate}
The total is therefore
\begin{equation}
\label{eq:kcbs-cost}
\underbrace{\prelen}_{\text{prefill}} \;+\; \underbrace{(\suflen - 1)\,\bw}_{\text{$(\suflen-1)$ steps $\times$ $\bw$ beams}}
\quad\text{token evaluations,}
\end{equation}
returning up to $\bw \cdot k$ unique continuations.

For $\prelen = 50$, $\suflen = 50$, $\bw = 20$, and $k = 40$, this is $50 + 49 \cdot 20 = 1{,}030$ token evaluations, returning up to $\bw \cdot k = 800$ candidates.
This is roughly $10{\times}$ the cost of greedy extraction.
By comparison, MC requires ${\approx}\,147{,}000$ token evaluations merely for detection ($m = 3{,}000$; Table~\ref{app:tab:mc-hit}) and ${\approx}\,4{,}900{,}000$ for reliable estimation ($m = 10^5$)---roughly $140{\times}$ and $4{,}800{\times}$ more than $k$-CBS, respectively.

\paragraph{Equal-budget comparison.}
Setting $M = \bw$ in Equation~\ref{eq:mc-cost} gives the same token-evaluation cost as Equation~\ref{eq:kcbs-cost}: both equal $\prelen + (\suflen - 1)\,\bw$.
For $\bw = 20$:
MC returns $20$ random samples, while $k$-CBS returns up to $800$ deterministic candidates to evaluate for near-verbatim extraction.
At this budget ($M = \bw = 20$), MC is overwhelmingly unlikely to produce even a single hit for an $\varepsilon$-ball with mass $\pseqe = 10^{-3}$ (Table~\ref{app:tab:mc-hit}; see also Figure~\ref{app:fig:mc-convergence}).
Even for $\pseqe = 10^{-2}$, MC at $M = 20$ has only an ${\approx}\,18\%$ chance of producing a single hit ($0.99^{20} \approx 0.82$ miss probability).
Moreover, $k$-CBS concentrates its budget on high-mass continuations by greedily retaining the highest-scoring partial sequences at each depth (Appendix~\ref{app:sec:intuition:beam:math}), rather than sampling uniformly from the full distribution as MC does.

\paragraph{$\varepsilon$-pruned variants and early stopping.}
The $\varepsilon$-viability-pruned variants of $k$-CBS (Appendix~\ref{app:sec:prune}) can be cheaper still.
At each step, any partial continuation that can no longer produce a final suffix within distance $\varepsilon$ of the target is pruned, potentially reducing the beam size below $\bw$.
Writing $|\sL_t|$ for the number of surviving beam elements at depth $t$, the cost becomes
\[
\underbrace{\prelen}_{\text{one prefill}}
\;+\;
\underbrace{\sum_{t=1}^{\suflen - 1}\!|\sL_t|}_{\text{one eval per beam element per depth}}
\ \le\
\prelen \;+\; (\suflen - 1)\,\bw \quad \text{token evaluations.}
\]
If the viable beam empties entirely at some step $t^\star < \suflen$, decoding halts after
\[
\prelen \;+\; \sum_{t=1}^{t^\star}\!|\sL_t|
\ \le\
\prelen \;+\; t^\star \cdot \bw
\quad \text{token evaluations.}
\]
For example, with $\prelen = 50$, $\bw = 20$, and $t^\star = 3$, a rough upper bound is $50 + 20 \cdot 3 = 110$ token evaluations---comparable to the cost of greedy extraction.
By contrast, MC has no analogous early stopping and costs $\prelen + (\suflen - 1) \cdot m$ regardless of whether the sequence is extractable.
The same cost reduction applies when using the minimum-extraction-probability early termination criterion (Appendix~\ref{app:sec:prune}), which halts a sequence once the best beam's cumulative probability falls below $\taumin / (\bw \cdot k)$.
In practice, with batched processing, the wall-clock time for a batch is determined by the slowest-to-terminate sequence; early-terminated sequences are padded until the batch completes, so the wall-clock savings depend on the fraction of sequences that terminate early.

\paragraph{Token evaluations as a cost metric.}
Each token evaluation involves passing a token through the model's feed-forward and projection layers (whose per-token cost is fixed, since these layers apply the same weight matrices to each token's representation without attending to other positions).
In contrast, the attention component costs scale with the KV-cache length (i.e., the number of preceding tokens that each new token must attend to).
At the sequence lengths in our setting ($\prelen + \suflen = 100$), the per-token feed-forward and projection cost overwhelmingly dominates:
the crossover where per-step attention FLOPs match per-token feed-forward FLOPs occurs at KV-cache lengths that are orders of magnitude beyond our setting.%
\footnote{For \textsc{Llama~2~7B} ($d=4096$, $d_{\text{ff}}=11008$), one token evaluation costs ${\approx}\,25{,}000{\times}$ more FLOPs than one attention position lookup;
for \textsc{Llama~2~70B} ($d=8192$, $d_{\text{ff}}=28672$), the ratio is ${\approx}\,59{,}000{\times}$.
At a KV-cache length of $100$, the total attention cost per token is therefore $100/25{,}000 \approx 0.4\%$ of the feed-forward cost at 7B scale, and even less at 70B.}
Token-evaluation ratios are therefore a reasonable way to approximate FLOP ratios for our experimental conditions.

However, token evaluations do not capture all factors that affect wall-clock runtime.
Teacher forcing processes all $\prelen + \suflen$ tokens in a single parallel forward pass---effectively one large matrix multiplication with one kernel launch and excellent GPU utilization (using causal masking rather than a KV cache, which also frees memory for larger batch sizes).
Autoregressive methods (greedy, MC, and $k$-CBS) instead require $\suflen - 1$ sequential decode steps, each incurring a separate kernel launch plus per-step KV-cache read/write overhead.
As a result, teacher forcing is substantially faster in wall-clock time than greedy decoding despite requiring essentially the same number of token evaluations.
For one prefix--suffix pair, greedy processes $1$ token per decode step, MC processes $M$, and $k$-CBS processes $\bw$.
Batching across multiple sequences improves GPU utilization for all autoregressive methods, though $k$-CBS requires ${\approx}\,\bw{\times}$ more KV-cache memory per sequence than greedy, limiting batch sizes accordingly.
$k$-CBS also incurs per-step KV-cache gather operations (reordering cache rows to match the pruned beam) that are not reflected in the token-evaluation count.

Overall, token evaluations provide a fair common unit for comparing the computational work performed by each method, but the wall-clock advantage of teacher forcing over autoregressive methods is larger than the token-evaluation ratio alone would suggest.
In practice, for larger models (e.g., 70B), the per-token compute cost dominates fixed per-step overhead, and observed wall-clock ratios approach the token-evaluation ratios.
For smaller models (e.g., 7B), kernel-launch and cache-management overhead represent a larger fraction of total time, widening the gap.
Viability pruning and early termination help offset this overhead by reducing the number of decode steps.
\looseness=-1

\paragraph{Summary for our experimental conditions.}
Table~\ref{app:tab:cost-comparison} summarizes the token-evaluation costs at our experimental settings.

\begin{table}[h]
\centering
\caption{\textbf{Token-evaluation cost comparison} for a single prefix--suffix pair ($\prelen = 50$, $\suflen = 50$, $\bw = 20$, $k = 40$).}
\label{app:tab:cost-comparison}
\begin{tabular}{lrrl}
\toprule
Method & Token evals & vs.\ greedy & Candidates returned \\
\midrule
Greedy / Teacher forcing & $99$ / $100$ & $1{\times}$ & $1$ \\
$k$-CBS ($\bw = 20$) & $1{,}030$ & ${\approx}\,10{\times}$ & up to $800$ (deterministic) \\
MC ($M = 20$, same budget) & $1{,}030$ & ${\approx}\,10{\times}$ & $20$ (random) \\
MC ($M = 3{,}000$, detection) & $147{,}050$ & ${\approx}\,1{,}500{\times}$ & $3{,}000$ (random) \\
MC ($M = 10^5$, estimation) & $4{,}900{,}050$ & ${\approx}\,49{,}500{\times}$ & $10^5$ (random) \\
\bottomrule
\end{tabular}
\end{table}\looseness=-1

\vspace{-.1cm}
\section{Decoding-constrained beam search}\label{app:sec:kcbs}
\vspace{-.1cm}

The prior appendix gives a mathematical intuition and cost analysis for our approach: \newterm{decoding-constrained beam search}. 
Here, we provide more details on our baseline algorithmic approach and invariants concerning the lower bound (and trivial upper bound) on $\pseqe$ that this approach returns. 
Our main focus in this paper is to implement decoding-constrained beam search for top-$k$, but this is not a strict requirement;
any decoding policy that produces a valid probability distribution could be adapted accordingly. 
Therefore, we focus here on presenting details on the top-$k$ version of our algorithm, starting with notation (Appendix~\ref{app:sec:kcbs:notation}) before describing the concrete algorithm (\newterm{baseline top-$k$ constrained beam search (CBS)}) in detail (Appendix~\ref{app:sec:kcbs:baseline:details}) and its invariants (Appendix~\ref{app:sec:kcbs:baseline:invariants}).

In brief, this algorithm is beam search, replacing the full softmax distribution with the renormalized top-$k$ distribution and disabling the last across-beam prune.
For beam width $\bw$ and $\suflen$ iterations, this algorithm returns up to $\bw \cdot k$ $\suflen$-length continuations of the input prefix, each with the associated conditional probability computed with respect to top-$k$ decoding.
These continuations can then be post-processed to check for set membership in the near-verbatim $\varepsilon$-ball $\ball$, for a chosen distance $\dist$ and tolerance $\varepsilon$; 
we sum over the probabilities of those continuations to produce a deterministic lower bound on $\pseqe$ and (often very loose) upper bound on $\pseqe$.

We close this appendix with a discussion of how one might also apply decoding-constrained beam search for nucleus sampling as the decoding policy (Appendix~\ref{app:sec:other:nucleus}).

As we show in Appendix~\ref{app:sec:prune}, we can generally improve on this approach with the addition of $\varepsilon$-viability pruning rule ($\ham$ or $\lev$) to the search procedure, which prunes non-$\varepsilon$-viable continuations from the beam rather than retaining and returning them at step $\suflen$.
This approach, of course, trades off the generality that comes from post-processing in the baseline approach we discuss in this appendix, as it bakes in a distance metric to the search process.
However, this comes with benefits:
without any additional token evaluations---just some additional bookkeeping---the pruned variants often return tighter lower and upper bounds for near-verbatim extractable sequences, also often at lower cost.
The cost of all of these methods is discussed in Appendix~\ref{app:sec:intuition:cost}.
Decoding-constrained beam search is significantly cheaper than MC sampling, and (under a matched compute budget) returns a significantly more useful (and deterministic) estimate of $\pseqe$ (Appendix~\ref{app:sec:intuition:mc}).

\subsection{Notation}\label{app:sec:kcbs:notation}

We describe some common notation for our algorithms and proofs.

\paragraph{Token sequences.} 

\begin{itemize}[leftmargin=0.5cm]
    \item $\pre$: the $\prelen$-token prefix from the training data.
    $\suf$: the corresponding $\suflen$-token ground-truth target suffix.
    The training-data sequence is $\seq = \pre \Vert \suf$.

    \item $\cont$: an arbitrary $\suflen$-token continuation of $\pre$. 
    
    \item $\gensuf$: an actual, generated $\suflen$-token continuation of $\pre$.
    A history is the concatenation of a prefix and a generated continuation: $\genseq = \pre \Vert \gensuf$.

    \item $\subgensuf{1:t}$: the first $t$ tokens of a generated continuation ($\subgensuf{1:0} = \emptyset$, the empty token sequence).
    At step $t$ of Algorithm~\ref{app:alg:beam-topk}, each input beam element holds a history $\genseq = \pre \Vert \subgensuf{1:t-1}$;
    after expansion, the updated history is $\genseq' = \pre \Vert \subgensuf{1:t}$.
\end{itemize}
This notation lets us refer to prefixes, suffixes, and partial continuations directly, avoiding index arithmetic on the full history~$\genseq$.

\paragraph{Candidate sets and beam operations.} 
At depth $t\in\{0,\ldots,\suflen\}$, denote
\begin{itemize}[leftmargin=0.5cm]
    \item $\sL_t$: the beam at depth $t$ (at most $\bw$ elements), obtained by expanding $\sL_{t-1}$, applying across-beam pruning.
    \item $\sC_t$: all children formed from $\sL_{t-1}$ by one-step top-$k$ expansion (exactly $|\sL_{t-1}| \cdot k$ before EOS removal; EOS candidates are removed and recorded but do not enter the beam).
    \item $\sU_t$: the (at most) top-$\bw$ of $\sC_t$ by cumulative probability (across-beam prune); $\sL_t \gets \sU_t$.
    \item $\sF$, the set of returned finals.
\end{itemize}

\paragraph{Per-step $t$ notation.}  
Given a beam element with current history $\genseq$ (the prefix $\pre$ concatenated with a continuation of length $t{-}1$), 
{\small
\begin{align*}
\logit_t(\genseq)\in\R^{|\vocab|} \quad &\triangleq \quad \text{logit vector from $\model$ for the next token given the history $\genseq$};\\
\sS_t(\genseq)\leftarrow \TopK_{k}\big(\logit_t(\genseq)\big) \quad &\triangleq \quad \text{the set of $k$ tokens in $\vocab$ with the largest logits in $\logit_t(\genseq)$};\\
\vr_t(\genseq) \in\R^{|\vocab|}\leftarrow \LogSoftmax\big(\logit_t(\genseq)\big) \quad &\triangleq \quad \text{$\log$ probs vector over $\vocab$, obtained from $\log \softmaxsf$ on $\logit_t(\genseq)$},\\
\quad & \quad\quad\;\, \vr_t(\genseq)[v]=\log \Pr_{\model}(\tok=v\mid\genseq) \quad \forall v\in\vocab\text{ (full $\model$ distribution, before top-$k$)};\\
Z_t(\genseq) \leftarrow \LogSumExp\big(\vr_t(\genseq)[\sS_t(\genseq)]\big) \quad &\triangleq \quad \text{$\log$ of the total probability mass on $\sS_t(\genseq)$ (normalizing constant for top-$k$)},\\
\quad & \quad\quad\;\,Z_t(\genseq) \;=\; \log\sum_{u \in \sS_t(\genseq)} \exp\big(\vr_t(\genseq)[u]\big) \;=\; \log\sum_{u \in \sS_t(\genseq)} \Pr_{\model}(u\mid\genseq).
\end{align*}
}%
Then, for the update rule, selecting any $\gentok\in\sS_t(\genseq)$ under $\dec$ (top-$k$ decoding) has probability
\[
\Pr_{\model,\dec}(\gentok\mid\genseq)=\exp(\vr_t(\genseq)[\gentok]-Z_t(\genseq)),
\]
and the $\log$-prob update is
\[
\vr_t(\genseq)[\gentok] - Z_t(\genseq) = \log \Pr_{\model,\dec}(\gentok \mid \genseq).
\]

In all of our algorithms, we focus on top-$k$ decoding and set temperature $\temp=1$.
However, we could also use temperature scaling with different settings in the decoding policy $\dec$.
If temperature $\temp>0$ is applied, logits are rescaled by $1/\temp$ before the softmax:
\begin{align}
\label{app:eq:temp-scaling}
{\logit}^{(\temp)}_t(\genseq) \;=\; \tfrac{1}{\temp}\,\logit_t(\genseq),
\qquad
\vr_t^{(\temp)}(\genseq) \;=\; \log \softmaxsf\big({\logit}^{(\temp)}_t(\genseq)\big)
  \;=\; \log \softmaxsf\big(\tfrac{1}{\temp}\,\logit_t(\genseq)\big).
\end{align}
Because scaling by a positive $\temp$ preserves order, the top-$k$ set is unchanged:
\[
\sS_t(\genseq)\;=\;\TopK_k\big({\logit}^{(\temp)}_t(\genseq)\big)
  \;=\;\TopK_k\big(\logit_t(\genseq)\big).
\] 
For $\dec=\temp$, we define the temperature-aware $\log$-normalizer
over this constrained set from the $\log$-probs:
\[
Z_t^{(\temp)}(\genseq)
\;\triangleq\;
\LogSumExp\big(\vr_t^{(\temp)}(\genseq)[\sS_t(\genseq)]\big)
\;=\;
\log\sum_{u\in\sS_t(\genseq)} \Pr_{\model,\temp}(u\mid\genseq).
\]
The per-step $t$, per-next-token $\gentok$ top-$k$ $\log$-probability update is then
\begin{equation}
\label{eq:temp-update}
\vr_t^{(\temp)}(\genseq)[\gentok] - Z_t^{(\temp)}(\genseq)
\;=\;
\log \Pr_{\model,(\temp,k)}(\gentok\mid\genseq),
\qquad
\gentok\in\sS_t(\genseq).
\end{equation}
To see that this simplifies to a $\log$ $\softmaxsf$ (Equation~\ref{eq:softmax}) over the top-$k$ scaled logits,
let 
\begin{align*}
C_t \triangleq \log\sum_{w\in\vocab}\exp(\tfrac{1}{\temp}\,\logit_t(\genseq)[w])
\end{align*}
denote the full-vocabulary $\log$-partition function.
For token $\gentok$, we can expand $\vr_t^{(\temp)}$ from its definition (Equation~\ref{app:eq:temp-scaling}):
\begin{align*}
\vr_t^{(\temp)}(\genseq)[\gentok]
&\;=\;
\log\softmaxsf\big(\tfrac{1}{\temp}\,\logit_t(\genseq)\big)[\gentok]
\\
&\;=\;
\log\frac{\exp\!\big(\tfrac{1}{\temp}\,\logit_t(\genseq)[\gentok]\big)}
         {\sum_{w\in\vocab}\exp\!\big(\tfrac{1}{\temp}\,\logit_t(\genseq)[w]\big)}
\\
&\;=\;
\log\exp\!\big(\tfrac{1}{\temp}\,\logit_t(\genseq)[\gentok]\big)
\;-\;
\log\sum_{w\in\vocab}\exp\!\big(\tfrac{1}{\temp}\,\logit_t(\genseq)[w]\big)
\\
&\;=\;
\frac{1}{\temp}\,\logit_t(\genseq)[\gentok] \;-\; C_t.
\end{align*}
Now, expanding $Z_t^{(\temp)}$:
\begin{align*}
Z_t^{(\temp)}(\genseq)
\;=\;
\log\sum_{u\in\sS_t(\genseq)} \exp\big(\vr_t^{(\temp)}(\genseq)[u]\big)
&\;=\;
\log\sum_{u\in\sS_t(\genseq)} \exp\Big(\frac{1}{\temp}\,\logit_t(\genseq)[u] - C_t\Big)\\
&\;=\;
\log\sum_{u\in\sS_t(\genseq)} \Big[ \exp\Big(\frac{1}{\temp}\,\logit_t(\genseq)[u]\Big) \;\cdot\; \exp(-C_t) \Big]\\
&\;=\; 
\log\sum_{u\in\sS_t(\genseq)} \exp\Big(\frac{1}{\temp}\,\logit_t(\genseq)[u]\Big) \;-\; C_t.
\end{align*}

Subtracting:
\begin{align*}
\vr_t^{(\temp)}(\genseq)[\gentok] - Z_t^{(\temp)}(\genseq)
&\;=\;
\Big(\frac{1}{\temp}\,\logit_t(\genseq)[\gentok] - C_t\Big)
\;-\;
\Big(\log\sum_{u\in\sS_t(\genseq)} \exp\Big(\frac{1}{\temp}\,\logit_t(\genseq)[u]\Big) - C_t\Big)\\
&\;=\;
\frac{1}{\temp}\,\logit_t(\genseq)[\gentok]
\;-\;
\log\sum_{u\in\sS_t(\genseq)}
   \exp\Big(\tfrac{1}{\temp}\,\logit_t(\genseq)[u]\Big),
\end{align*}
That is, the update reduces to a $\log$ $\softmaxsf$ over the top-$k$ scaled logits.
Setting $\temp=1$ recovers the update that we use throughout:
$\vr_t(\genseq)[\gentok]-Z_t(\genseq)
 = \log \Pr_{\model,\dec}(\gentok\mid\genseq)$ (and for $\temp=1$, $\dec$ is just top-$k$ without temperature).
(We will generally refer to the decoding strategy as $\dec$, but this means in practice, with respect to the notation introduced here, we set $\temp=1$ with top-$k$, i.e., $(\temp, k) = (1, k)$.)

\subsection{Top-$k$ constrained beam search}\label{app:sec:kcbs:baseline}

In Algorithm~\ref{app:alg:beam-topk}, we describe the baseline approach for the top-$k$ constrained beam search algorithm:
a slight variation on beam search that replaces the full softmax distribution with the renormalized top-$k$ distribution and omits the last step's across-beam prune to width $\bw$.
This approach returns a deterministic lower bound on $\pseqe$ and (almost always very loose) upper bound on $\pseqe$, computed over an $\varepsilon$-viable filtered subset of the returned sequences (for a chosen distance metric and $\varepsilon$).

We describe the algorithm in detail (Appendix~\ref{app:sec:kcbs:baseline:details}) and then prove several invariants (Appendix~\ref{app:sec:kcbs:baseline:invariants}). 
These invariants are straightforward, but are nevertheless important because they justify the soundness of our approach more formally than the intuition provided in Section~\ref{sec:kcbs} and Appendix~\ref{app:sec:intuition}.
The novelty of our work comes from connecting these previously disconnected observations in the service of making near-verbatim probabilistic extraction computationally feasible. 

\begin{algorithm}[t]
\caption{Top-$k$ Constrained Beam Search ($k$-CBS)}\label{app:alg:beam-topk}
\KwIn{LLM $\model$; prefix $\pre$ of length $\prelen$; suffix length $\suflen$; beam width $\bw$;
  top-$k$ parameter $k$ for decoding policy $\dec$ ($\bw \leq k^2$, since each of at most $k$ beams expands to $k$ candidates; $k \ll |\vocab|$);
  EOS token id; optional extraction threshold $\taumin > 0$}
\KwOut{Set $\sF$ of up to $\bw \cdot k$ pairs
  $(\genseq,\; \log p)$,
  where $\genseq = \pre \,\Vert\, \gensuf$ is the full history and $\log p = \log \Pr_{\model,\dec}(\gensuf \mid \pre)$;
  or $\sF = \emptyset$ if early termination is triggered}
\BlankLine
\textbf{Notation (per step $t$ for partial sequence $\genseq$):}
Given a beam element with current history $\genseq$ (the prefix $\pre$ concatenated with a continuation of length $t{-}1$), let:
\begin{itemize}[topsep=1pt,leftmargin=0.75cm, itemsep=1pt]
  \item $\logit_t(\genseq)\in\R^{|\vocab|}$: logits from $\model$ for the next token given $\genseq$;
  \item $\sS_t(\genseq) \subset \vocab$, $|\sS_t(\genseq)| = k$:
  $k$ tokens in $\vocab$ with the largest logits in $\logit_t(\genseq)$;
  \item $\vr_t(\genseq) \in\R^{|\vocab|}$: $\log$ probs over $\vocab$, $\vr_t(\genseq)[v]=\log \Pr_{\model}(v\mid\genseq)$ $\forall v\in\vocab$;
  \item $Z_t(\genseq) \in \R$: normalizing constant for top-$k$,
    \(
      Z_t(\genseq) \;=\; \log\sum_{u \in \sS_t(\genseq)} \exp\big(\vr_t(\genseq)[u]\big).
    \)
\end{itemize}
\BlankLine
\textbf{Update rule (top-$k$ decoding):}
Selecting any $\gentok\in\sS_t(\genseq)$ under top-$k$ has probability \\$\Pr_{\model,\dec}(\gentok\mid\genseq)=\exp(\vr_t(\genseq)[\gentok]-Z_t(\genseq))$, i.e., $\log \Pr_{\model,\dec}(\gentok \mid \genseq)=
\vr_t(\genseq)[\gentok] - Z_t(\genseq)$.\looseness=-1
\BlankLine

\textbf{Beam state.}
Maintain $\sL_t$ as pairs $(\genseq,\log p)$, where $\log p$ is the accumulated top-$k$ decoding $\log$-probability of the partial sequence $\genseq$ so far.
Max. beam capacity $|\sL_t|=\bw$.
\BlankLine
Compute $\logit_1(\pre)$ via forward pass on $\pre$\tcp*{Prefill: $\prelen$ token evals}
$\sL_0 \gets \big\{(\pre,\; 0)\big\}$\tcp*{Beam: pairs $(\genseq,\; \log p)$}
\BlankLine
\For{$t = 1, \ldots, \suflen$}{
  $\sC_t \gets \emptyset$\tcp*{Candidate set for step $t$}
  \ForEach{$(\genseq,\; \log p) \in \sL_{t-1}$}{
    $\sS_t(\genseq) \gets \TopK_k\big(\logit_t(\genseq)\big)$\;
    $\vr_t(\genseq) \gets \LogSoftmax\big(\logit_t(\genseq)\big)$\;
    $Z_t(\genseq) \gets \LogSumExp\big(\vr_t(\genseq)[\sS_t(\genseq)]\big)$\;
    \ForEach{$\gentok \in \sS_t(\genseq)$}{
      $\genseq' \leftarrow \genseq \,\Vert\, \gentok$ \tcp*{append token to partial history}
      $\log p' \leftarrow \log p + \vr_t(\genseq)[\gentok] - Z_t(\genseq)$ \tcp*{update continuation $\log$ prob}
      $\sC_t \gets \sC_t \cup
        \big\{\!\big(\genseq',\;\;
        \log p' \big)\!\big\}$\;
    }
  }
  \BlankLine
  \If(\tcp*[f]{Final step: return all candidates (up to $\bw \cdot k$)}){$t = \suflen$}{
    \Return $\sF \gets \sC_\suflen$\;
  }
  \BlankLine
  \tcp{Non-final step ($t < \suflen$): EOS handling and beam pruning}
  \ForEach{$(\genseq',\; \log p') \in \sC_t$ with latest $\gentok = \textup{EOS}$}{
    Record $(\genseq',\; \log p',\; t)$ as early-termination path;
      remove from $\sC_t$\;
  }
  \lIf{$\sC_t = \emptyset$}{\Return $\emptyset$}
  $\sU_t \gets$ top-$\bw$ elements of $\sC_t$ ranked by $\log p'$\;
  $\sL_t \gets \sU_t$ \tcp*{Prune to beam width}
  \If{$\taumin$ and $\max_{(\_,\,\log p) \in \sL_t} \exp(\log p) < \taumin/(\bw \cdot k)$}{
    \Return $\sF \gets \emptyset$\tcp*{Early termination}
  }
  Compute $\logit_{t+1}(\genseq)$ for each $(\genseq, \cdot) \in \sL_t$\tcp*{$\bw$ token evals}
}
\end{algorithm}

\subsubsection{Detailed description of baseline $k$-CBS}\label{app:sec:kcbs:baseline:details}

Algorithm~\ref{app:alg:beam-topk} implements our \newterm{baseline top-$k$ constrained beam search ($k$-CBS)} algorithm 
for identifying candidates for high-probability continuations of the prefix under top-$k$ decoding. 
Because memorized sequences from the training data are by definition very high probability under $\model$ and $\dec$, if a sequence $\suf$ is memorized, we expect this procedure to return a set of candidates $\gensuf$ that contains near-verbatim matches (and possibly the verbatim match) to the target suffix $\suf$ (Appendix~\ref{app:sec:intuition:beam}).
Per sequence tested, this type of search procedure provides a deterministic, correct lower bound on $\pseqe$;
for beam width $\bw$, it is approximately $\frac{\bw}{2}{\times}$ more expensive than greedy-decoded discoverable extraction, but orders of magnitude cheaper than an unbiased Monte Carlo estimate of $\pseqe$ (Appendix~\ref{app:sec:intuition:cost}). 
A more condensed version of Algorithm~\ref{app:alg:beam-topk} is in the main paper in Algorithm~\ref{alg:kcbs}.

To summarize, the algorithm takes a prefix $\pre$ and greedily (i.e., not optimally) searches for the highest-probability length-$\suflen$ continuations under top-$k$ constrained decoding.
It begins with a single prefill forward pass that processes $\pre$ and produces logits $\logit_1(\pre)$ for the first suffix position.
The beam $\sL_0$ is initialized with the prefix and a cumulative $\log$-probability of zero.
The main loop then iterates over suffix positions $t = 1, \ldots, \suflen$.
At each step, every partial history $\genseq$ in the beam is expanded:
the top-$k$ token set $\sS_t(\genseq)$ is produced from the current logits, 
the $\log$-probability vector $\vr_t(\genseq)$ and normalizing constant $Z_t(\genseq)$ are computed, 
and each candidate token $\gentok \in \sS_t(\genseq)$ is appended to form an extended history $\genseq' = \genseq \,\Vert\, \gentok$ with updated cumulative $\log$-probability (with respect to top-$k$) $\log p' = \log p + \vr_t(\genseq)[\gentok] - Z_t(\genseq)$.
These candidates are collected into the set $\sC_t$.
(At $t = 1$, the beam $\sL_0$ contains a single element (the prefix), so $|\sC_1| = k$:
the prefix paired with each of the $k$ tokens in $\sS_1(\pre)$.
For all subsequent steps $t \geq 2$, $|\sL_{t-1}| = \bw$, so $|\sC_t| = \bw \cdot k$.)
On the final step ($t = \suflen$), up to $\bw \cdot k$ candidate $\suflen$-length histories are returned in the output set $\sF$.\footnote{The reason why this is ``up to $\bw \cdot k$'' and not exactly that number is that, on non-final steps, any candidate whose last token is EOS is recorded as an
early-termination path and removed from $\sC_t$; 
it is possible, in degenerate cases with many EOS sequences, that there are fewer than the maximum possible, though we do not observe this in practice.
}
The remaining candidates are pruned to the top $\bw$ by cumulative $\log$-probability
to form the new beam $\sL_t$, and a forward pass on the $\bw$ selected tokens
produces logits for the next position.

\paragraph{Optional early termination.}
When the extraction threshold $\taumin$ is provided, the algorithm can terminate before completing all $\suflen$ steps if it becomes impossible for the final output to accumulate at least $\taumin$ total mass within $\ball(\suf)$.
The key observation is that cumulative $\log$-probabilities are monotonically non-increasing along any path: 
at each step $t$, the update adds $\log \Pr_{\model,\dec}(\gentok \mid \genseq) \leq 0$, so no descendant of a beam element $(\genseq, \log p) \in \sL_t$ can have cumulative probability exceeding $\exp(\log p)$.
Therefore, $\max_{(\_, \log p) \in \sL_t} \exp(\log p)$ upper-bounds the probability of every final candidate at depth $\suflen$.
Even if all $\bw \cdot k$ final candidates achieved this maximum and all fell within $\ball(\suf)$, the total near-verbatim mass would be at most $\bw \cdot k \cdot \max_{(\_, \log p) \in \sL_t} \exp(\log p)$.
When this quantity is strictly less than $\taumin$---equivalently, $\max_{(\_, \log p) \in \sL_t} \exp(\log p) < \taumin / (\bw \cdot k)$---the lower bound from this prefix can never reach $\taumin$.
If we provide a minimum (validated) extraction probability $\taumin$ (Appendix~\ref{app:sec:experiments}), we can then allow the algorithm to terminate early returning $\sF = \emptyset$, as this sequence cannot be near-verbatim extractable with respect to $\taumin$.
When early termination is not triggered, the algorithm behaves identically to the case without $\taumin$.\looseness=-1

The algorithm performs $1$ prefill forward pass ($\prelen$ token evaluations) followed by at most $\suflen - 1$ decoding forward passes ($\bw$ token evaluations each), for a total of at most $\prelen + (\suflen - 1) \cdot \bw$ token evaluations per sequence (Appendix~\ref{app:sec:intuition:cost}).
We follow the notation in Appendix~\ref{app:sec:kcbs:notation}.

\paragraph{Returned suffix candidates.}
There are no duplicate sequences in the output candidates.  
This follows directly from the semantics of Algorithm~\ref{app:alg:beam-topk}, but is worth making explicit for clarity:  
every output \emph{token} sequence is guaranteed to be unique (Lemma~\ref{lem:unique}).
However, it is possible that in \emph{character} space there are duplicates---that unique token sequences decode to the same string of characters.
Because our probability mass computations are done in token space, we do not need to account for character-space duplicates in a special way.
Every unique token sequence contributes to our extraction mass calculations.  
We return up to $\bw \cdot k$ such unique sequences. 
Unlike traditional beam search, which prunes to $\bw$ at the final step, we omit this last across-beam prune: all $\suflen$-length candidates already have valid probabilities with respect to top-$k$ decoding, so retaining all of them in $\sF$ is free and preserves mass that would otherwise be discarded (Lemma~\ref{lem:mass-preservation}).
Because these $\bw \cdot k$ sequences contain the top-$\bw$ (i.e., those that would survive the last prune), the mass of these $\bw \cdot k$ sequences dominates (Corollary~\ref{cor:prune-inequality}).\looseness=-1

The HuggingFace API makes it very simple to implement changes to beam search to respect our top-$k$ scoring rule using the \texttt{LogitProcessor} abstraction. 
(Using beam search with top-$k$ enabled does not do this by default; 
it is a stochastic variant of beam-search with top-$k$ sampling.)
However, this is not the case for skipping the last iteration across-beam prune, or how we handle EOS.
We need to implement our own search, batching, and KV caching to achieve this.\looseness=-1 

\paragraph{Computing the near-verbatim extraction bounds from suffix candidates.} 
Finally, we can apply our chosen distance metric---in this paper, either $\ham$ (Equation~\ref{eq:hamming}) or $\lev$ (Equation~\ref{eq:levenshtein})---with tolerance $\varepsilon$ to the returned suffix candidates, in order to test which are near-verbatim matches to the target suffix $\suf$. 
(One could in principle apply any edit-distance or semantic similarity metric.)
This provides a rigorous lower bound on $\pseqe$ (Equation~\ref{eq:pze}): 
instead of estimating the near-verbatim suffix probability via Monte Carlo sampling (Appendix~\ref{app:sec:intuition:mc}), we sum the probabilities of the suffix candidates that are in the $\varepsilon$-ball $\ball(\suf)$. 

As noted above, $\sF$ is the set of final returned tuples (up to $\bw \cdot k$ of them) of $\suflen$-length candidates and their respective $\log$ probabilities under top-$k$ (Algorithm~\ref{app:alg:beam-topk}). 
Because the algorithm's semantics respect top-$k$ decoding, the search explores a subset of the full probability distribution of top-$k$ continuations (i.e., mass summing to $1$; see Lemma~\ref{lem:frontier-mass}). 
Therefore, the returned mass of the final continuations in $\sF$ is ${\leq}1$ (in practice, almost certainly $<1$). 
We therefore define \newterm{covered mass} as the total mass of all returned continuations: 
\begin{align}
\label{eq:covered-mass}
\cover(\sF) &\;\coloneq\; \sum_{(\_, \log p) \in \sF} \exp(\log p). 
\end{align}
Of course, since the total probability is $1$, the \emph{un}covered mass---i.e., mass that is not captured by our algorithm---is equivalent to $1-\cover(\sF)$. 

To estimate near-verbatim extraction, we then filter the returned suffixes according to the chosen distance metric $\dist$ and tolerance $\varepsilon$.
That is, on the outputs of Algorithm~\ref{app:alg:beam-topk}, for each $\genseq = \pre \Vert \gensuf$, we evaluate  
\begin{align}
\label{eq:nv-filter}
    \sF^{(\leq \varepsilon)} &\coloneqq \{ (\genseq, \log p) :  (\genseq, \log p) \in \sF \text{ and }\, \gensuf \in \ball(\suf) \}.
\end{align} 

After this filtering operation, we can compute the (deterministic) lower bound of $\pseqe$ as 
\begin{align}
\label{hat-pze:baseline-lb}
\pseqe \;&\geq\; \lb \;\triangleq\; 
\sum_{(\_, \log p) \in \sF^{(\leq\varepsilon)}} \exp(\log p). 
\end{align}

From the above covered mass (Equation~\ref{eq:covered-mass}) and lower bound (Equation~\ref{hat-pze:baseline-lb}), we can also compute a (typically very) loose upper bound on $\pseqe$:
\begin{align}
\label{eq:baseline-ub}
\pseqe \;\leq\; \ub \;\triangleq\; \lb \;+\; \big(1-\cover(\sF)\big),
\end{align}
which we elaborate on in Proposition~\ref{prop:loose-ub}. 
Note that $\ub \leq 1$ always holds, just as $\lb \geq 0$ trivially holds; the formula captures the fact that any uncovered mass could in principle all be $\varepsilon$-viable.
For our baseline algorithm for $k$-CBS, we use a subset of the covered mass to produce an $\varepsilon$-viable lower bound;
a simple upper bound, then, is any potentially viable mass that is \emph{not} covered by the algorithm's output continuations. 
Of course, this can be very loose, if this uncovered mass is quite large.

\paragraph{A simple illustrative example.}
To give a sense of what the algorithm returns, we use the same simple illustrative example from \citet{cooper2025books}--- 
a famous quote from \emph{The Great Gatsby}:\\

{\small
\noindent $\pre$: \verb|They were careless people, Tom and Daisy |\textendash\verb| they smashed up things and creatures |\\
 \verb|and then retreated|

\vspace{0.1cm}

\noindent $\suf$: \verb| back into their money or their vast carelessness, or whatever it was that kept |\\
\verb|them together, and let other people clean up the mess they had made.|\\
}

We use $\pre$ as the prompt and will compare generated continuations $\gensuf$ against the target suffix $\suf$ to determine near-verbatim extraction success.
We use \textsc{Llama 1 13B}, which was trained on the \texttt{Books3} corpus~\citep{touvron2023llamaopenefficientfoundation, lee2023talkin}.
\emph{The Great Gatsby} is included in \texttt{Books3}.
With the \textsc{Llama 1} tokenizer, the prefix is $25$ tokens and the suffix is $32$ tokens. 
We run baseline $k$-CBS (Algorithm~\ref{app:alg:beam-topk}) to get a lower bound on the near-verbatim extraction probability, $\pseqe$, with $\temp=1$, $\bw=20, k=40$.
Table~\ref{app:tab:diffs-ex-gatsby} shows results for the $10$ highest-probability continuations that the algorithm returns.

There are several interesting observations about these results:
\begin{itemize}[leftmargin=0.5cm]
    \item \textbf{Verbatim discoverable extraction would fail, returning $0$.} 
    The first row corresponds to the greedy-decoded continuation, which is \emph{not} a verbatim match to the target suffix. 
    If we were to perform traditional discoverable extraction with greedy decoding, we would output the first row's $\gensuf$, and we would not identify this output as successful extraction, since it fails strict equality with $\suf$.  
    
    \item \textbf{Near-verbatim discoverable extraction succeeds, returning $1$.} 
    Using $\varepsilon = 1$ near-verbatim discoverable extraction (either $\ham$ or $\lev$), we would identify the generation in row 1 as successful near-verbatim extraction.
    
    \item \textbf{Verbatim probabilistic extraction succeeds, returning $\pseq=0.1431$.} 
    Probabilistic extraction shows that the verbatim suffix has $\pseq=0.1431$---a very high probability that we would count as valid extraction (e.g., exceeding a reasonable $\taumin$).
    Note that the top-ranked continuation (row 1) is also the greedy-decoded continuation.
    This provides a notion of extraction risk, absent from greedy discoverable extraction. 

    \item \textbf{Near-verbatim probabilistic extraction reveals higher extraction risk.} 
    In this paper, we consider maximum distances of $\varepsilon=5$ for both Levenshtein and Hamming. 
    For this sequence, $p_{\seq,5}^\levshort \geq 0.7155$ (over all $\bw \cdot k = 20 \cdot 40 = 800$ candidates). 
    With even just $\varepsilon=1$ for the Levenshtein distance, $p_{\seq,1}^\levshort \geq 0.4681$, over $3\times$ the extraction risk compared to verbatim $\pseq$. 
    This shows that verbatim probabilistic extraction can greatly underestimate extraction risk.
    All of the continuations above are effectively the same text, with only slight variations in punctuation. 
\end{itemize}

{
\begin{table*}[b!]
    \caption{\textbf{Example $k$-CBS output.} 
    Top-$10$ generations ($|\gensuf|=32$ tokens) for \textsc{Llama 1 13B}  baseline $k$-CBS ($\bw=20,\, k=40$) for the $25$-token prefix $\pre$:
    \texttt{They were careless people, Tom and Daisy - they smashed up things and creatures and then retreated}.
    This is a quote from \emph{The Great Gatsby}~\citep{The_Great_Gatsby}, a book contained in \textsc{Llama 1}'s training data. 
    We diff each $\gensuf$ with $\suf$, showing deletions in red and crossed out, and additions in blue.
    All exact-matched text is in black.
    Row 2 (highlighted in yellow) shows the verbatim generation of the target suffix, i.e., $\gensuf=\suf$; there is no diff highlighting.}
    \label{app:tab:diffs-ex-gatsby}
    \centering
    \scriptsize
    \begin{tabular}{c c c c c}
        \toprule
         & $\gensuf$ & $\Pr_{\model,\dec}(\gensuf \mid \pre)$ & $\levshort$ & $\hamshort$ \\
        \midrule
        1 & \adjustbox{valign=m}{\includegraphics[width=0.35\linewidth]{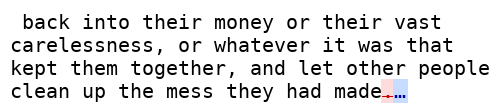}} & 0.1477 & 1 & 1 \\
        \rowcolor{gtrow}
        2 & \adjustbox{valign=m}{\includegraphics[width=0.35\linewidth]{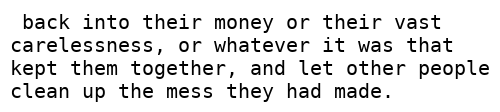}} & 0.1431 & 0 & 0 \\
        3 & \adjustbox{valign=m}{\includegraphics[width=0.35\linewidth]{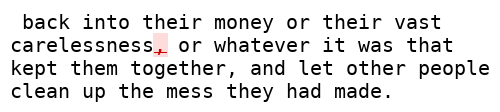}} & 0.0671 & 2 & 22 \\
        4 & \adjustbox{valign=m}{\includegraphics[width=0.35\linewidth]{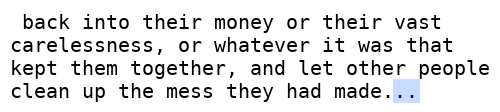}} & 0.0535 & 1 & 1 \\
        5 & \adjustbox{valign=m}{\includegraphics[width=0.35\linewidth]{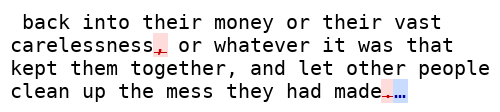}} & 0.0409 & 3 & 22 \\
        6 & \adjustbox{valign=m}{\includegraphics[width=0.35\linewidth]{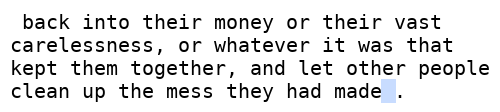}} & 0.0385 & 1 & 1 \\
        7 & \adjustbox{valign=m}{\includegraphics[width=0.35\linewidth]{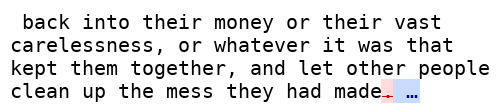}} & 0.0286 & 1 & 1 \\
        8 & \adjustbox{valign=m}{\includegraphics[width=0.35\linewidth]{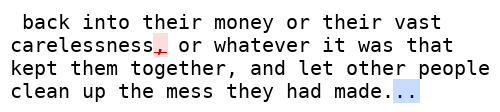}} & 0.0214 & 3 & 22 \\
        9 & \adjustbox{valign=m}{\includegraphics[width=0.35\linewidth]{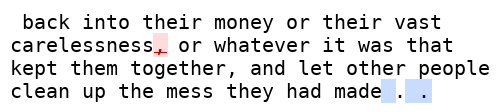}} & 0.0186 & 3 & 22 \\
        10 & \adjustbox{valign=m}{\includegraphics[width=0.35\linewidth]{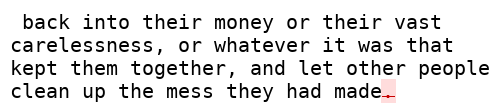}} & 0.0119 & 1 & 1 \\
        \bottomrule
    \end{tabular}
\end{table*}
}
\clearpage 

\subsubsection{Invariants for baseline $k$-CBS}\label{app:sec:kcbs:baseline:invariants}

Using the notation in Appendix~\ref{app:sec:kcbs:notation}, we first show that the semantics of Algorithm~\ref{app:alg:beam-topk} ensure that
every sequence $\genseq$ returned in $\sF$---the collection of pairs $(\genseq,\log p)$---is unique.

\begin{lemma}[No token-level duplicates]
\label{lem:unique}
For a set of sequence-$\log$ prob pairs $\sA \subseteq \vocab^{i}\times\R$ (for a fixed length $i$), define the projection onto sequences by
\[
\pi_1(\sA)\;\coloneqq\;\{\;\genseq\in\vocab^{i}:\exists\,\ell\in\R\text{ with }(\genseq,\ell)\in\sA\;\}.
\] 
As in Algorithm~\ref{app:alg:beam-topk}, assume that we do not maintain partial histories with an EOS until possibly at the final step $\suflen$, so all child sequences at depth $t$ have the same length $\prelen{+}t$ (the $\prelen$-length prefix and $t$ generated tokens so far).
Let $\sL_{t-1}\subseteq \vocab^{\prelen+t-1}\times\R$ be the beam at depth $t{-}1$, and
$\sF\subseteq \vocab^{\prelen+\suflen}\times\R$ the set of returned final sequence-$\log$ prob pairs.
Denote $\sL_{t-1}^{\textnormal{seq}} \coloneqq \pi_1(\sL_{t-1})$ and $\sF^{\textnormal{seq}} \coloneqq \pi_1(\sF)$---i.e., the set of partial histories in the beam at $t-1$ and the set of full sequences at $\suflen$, respectively.
Define the set of unpruned child sequences at depth $t$ as
\[
\sC_t^{\textnormal{seq}} \;=\; \big\{\, \genseq\Vert\gentok \;:\; \genseq \in \sL^{\textnormal{seq}}_{t-1},\ \gentok\in\sS_t(\genseq)\,\big\}.
\]
Then $\sC_t^{\textnormal{seq}}$ contains no duplicates for every $t$, and the returned final sequences satisfy $\sF^{\textnormal{seq}} = \sC_\suflen^{\textnormal{seq}}$ and contain no duplicates.
\end{lemma}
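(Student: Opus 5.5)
The plan is induction on the depth $t$, with an injectivity argument for the expansion map at each step. The invariant is that $\sL^{\textnormal{seq}}_{t-1}$ is a genuine set of distinct histories. Each of them has length $\prelen+t-1$, and each element of $\sL_{t-1}$ carries exactly one $\log$ probability. The base case is $\sL_0=\{(\pre,0)\}$, which is a single sequence, so the invariant holds trivially.

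For the inductive step, consider the expansion map $(\genseq,\gentok)\mapsto \genseq\Vert\gentok$ on pairs with $\genseq\in\sL^{\textnormal{seq}}_{t-1}$ and $\gentok\in\sS_t(\genseq)$. I will show this map is injective. Suppose $\genseq_1\Vert\gentok_1=\genseq_2\Vert\gentok_2$. By the EOS assumption, both parents have the same length $\prelen+t-1$. Comparing the first $\prelen+t-1$ coordinates then gives $\genseq_1=\genseq_2$, and comparing the last coordinate gives $\gentok_1=\gentok_2$.

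Since the parents are distinct by hypothesis, and $\sS_t(\genseq)$ is a set of $k$ distinct tokens (the output of $\TopK_k$), it follows that $\sC^{\textnormal{seq}}_t$ contains no duplicates. It has exactly $|\sL_{t-1}|\cdot k$ elements before EOS removal. Every child sequence also comes from a unique (parent, token) pair. Because the score $\log p'=\log p+\vr_t(\genseq)[\gentok]-Z_t(\genseq)$ is a function of that pair, each child sequence receives a single $\log$ probability. So $\pi_1$ restricted to $\sC_t$ is a bijection.

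To propagate the invariant, note that EOS removal and the top-$\bw$ selection producing $\sU_t=\sL_t$ only take subsets of $\sC_t$. A subset of a duplicate-free set is duplicate-free, so $\sL^{\textnormal{seq}}_t$ satisfies the hypothesis at depth $t$. At the final step the algorithm returns $\sF\gets\sC_\suflen$ with no pruning. Hence $\sF^{\textnormal{seq}}=\pi_1(\sC_\suflen)=\sC^{\textnormal{seq}}_\suflen$, and $\sF^{\textnormal{seq}}$ is duplicate-free by the inductive step at $t=\suflen$.

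The only delicate point is the equal-length assumption used in the injectivity step. Without it, a parent ending in EOS could in principle collide with a longer sequence. I will therefore state explicitly that the algorithm removes EOS children before forming $\sL_t$ for $t<\suflen$, so every element of $\sL_{t-1}$ has length exactly $\prelen+t-1$. The rest of the argument is bookkeeping.
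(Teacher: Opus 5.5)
Your proposal is correct and follows the paper's proof. Both arguments rest on the injectivity of $(\genseq,\gentok)\mapsto\genseq\Vert\gentok$ over parents of equal length $\prelen+t-1$ with distinct top-$k$ tokens, and both conclude with $\sF=\sC_\suflen$ because the final prune is skipped. Your induction, with its invariant that each beam sequence carries a single $\log$ probability (preserved because EOS removal and top-$\bw$ selection only take subsets), makes explicit the pair-level bookkeeping that the paper leaves implicit when it works directly with $\sL^{\textnormal{seq}}_{t-1}$. Incidentally, the equal-length hypothesis is less delicate than you suggest: $\genseq_1\Vert\gentok_1=\genseq_2\Vert\gentok_2$ already forces $|\genseq_1|=|\genseq_2|$, so injectivity within $\sC_t$ holds without it.
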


\begin{proof}
Consider the map $f:(\genseq,\gentok)\mapsto \genseq\Vert\gentok$, from
$\bigcup_{\genseq\in \sL_{t-1}^{\textnormal{seq}}}\big(\{\genseq\}\times\sS_t(\genseq)\big)$ into $\vocab^{\prelen+t}$.
Given that we do not maintain histories with an EOS until possibly at $\suflen$, all $\genseq\in \sL_{t-1}^{\textnormal{seq}}$ have the same length $\prelen{+}t{-}1$. 
If $\genseq_1\neq \genseq_2$, let $i$ be the first index where they differ; then
$(\genseq_1\Vert\gentok_1)_i \neq (\genseq_2\Vert\gentok_2)_i$, so $\genseq_1\Vert\gentok_1 \neq \genseq_2\Vert\gentok_2$.
If $\genseq_1=\genseq_2$ but $\gentok_1{\neq}\gentok_2$ (which is guaranteed since $\sS_t(\genseq)$ is a set), the last token differs, so $\genseq_1\Vert\gentok_1 \neq \genseq_2\Vert\gentok_2$. 
Therefore, $f$ is injective on its domain, and $\sC_t^{\textnormal{seq}}$ has no duplicates at all steps $t$.
Since the final sequences at depth $\suflen$ are the unpruned children at $\suflen$, $\sF^{\textnormal{seq}} = \sC_\suflen^{\textnormal{seq}}$ and is also duplicate-free.
\end{proof}

Next, we show that Algorithm~\ref{app:alg:beam-topk} has a local optimality guarantee among the unique child histories (Lemma~\ref{lem:unique}) enumerated at each depth.
(Finite-width beam search need not recover the globally highest-probability sequences among all possible $k^\suflen$ paths.
Importantly, our lower bound result does not rely on global optimality.) 

\begin{proposition}[Local optimality of the beam prunes]
\label{prop:beam-local-topk}
Let $\sL_{t-1}$ be the beam at depth $t{-}1$ in Algorithm~\ref{app:alg:beam-topk}. 
Form the set $\sC_t$ of all child sequences $\genseq'$ and their respective $\log$ probabilities produced at depth $t$ prior to pruning:
\[
\sC_t \coloneqq \big\{\,(\genseq',\log p'):\ (\genseq,\log p)\in\sL_{t-1},\ \gentok\in\sS_t(\genseq),\
\genseq'=\genseq\;\Vert\;\gentok,\ \log p'=\log p+\vr_t(\genseq)[\gentok]-Z_t(\genseq)\,\big\},
\]
where $\log p$ is the running $\log$-probability for the partial history $\genseq$ with a continuation of length $t{-}1$, and $\log p'$ is the running $\log$-probability for the new partial history $\genseq'$ with a continuation of length $t$.
For $t < \suflen$, any candidate whose last token is EOS is removed from $\sC_t$ before pruning.
There are at most $\bw \cdot k$ such pairs in $\sC_t$ (exactly $k$ at $t=1$ since $|\sL_0|=1$, and at most $\bw \cdot k$ for $t \geq 2$).
After the intermediate prune to width $\bw$, the new beam $\sL_t$ consists of the $\bw$ pairs $(\genseq',\log p')$ from $\sC_t$ with the largest values of $\log p'$.
(In practice, ties are broken by a fixed, deterministic rule; such ties are extremely rare.)
At $t=\suflen$, we do not perform the final across-beam prune;
the returned set is exactly the unpruned set $\sF=\sC_\suflen$, which contains the top-$\bw$ pairs by $\log p'$ as a subset.\looseness=-1

\end{proposition}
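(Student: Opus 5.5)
The proposition is essentially a restatement of what Algorithm~\ref{app:alg:beam-topk} does, so I will prove it by unwinding the algorithm step by step at a fixed depth $t$. The claims are three: the set description of $\sC_t$, the bound $|\sC_t|\le \bw\cdot k$ (exactly $k$ at $t=1$), and that $\sL_t$ is exactly the top-$\bw$ of $\sC_t$ while $\sF=\sC_\suflen$ contains those top-$\bw$ pairs.

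\textbf{Set description and exact scores.} The inner loops range over every $(\genseq,\log p)\in\sL_{t-1}$ and every $\gentok\in\sS_t(\genseq)$. Each iteration inserts exactly the pair $(\genseq\Vert\gentok,\ \log p+\vr_t(\genseq)[\gentok]-Z_t(\genseq))$, so $\sC_t$ equals the displayed set. For $t<\suflen$, the EOS loop then removes precisely the candidates whose last token is EOS. By induction on $t$, each stored $\log p$ is the exact top-$k$ log-probability of its partial continuation: the base case is $\log p=0$ for the empty continuation, and each update adds $\log\Pr_{\model,\dec}(\gentok\mid\genseq)$, using the update rule in Appendix~\ref{app:sec:kcbs:notation}.

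\textbf{Cardinality.} By Lemma~\ref{lem:unique}, the map $(\genseq,\gentok)\mapsto\genseq\Vert\gentok$ is injective, so no candidates collapse and $|\sC_t|=\sum_{\genseq\in\sL_{t-1}}|\sS_t(\genseq)|=k\,|\sL_{t-1}|$ before EOS removal. At $t=1$ we have $\sL_0=\{(\pre,0)\}$, giving exactly $k$ candidates. For $t\ge2$, the pruning step guarantees $|\sL_{t-1}|\le\bw$, giving at most $\bw\cdot k$; EOS removal only decreases the count.

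\textbf{Pruning and the final step.} For $t<\suflen$, the algorithm sets $\sL_t\gets\sU_t$, where $\sU_t$ is by definition the top-$\bw$ elements of the post-EOS $\sC_t$ ranked by $\log p'$, with a fixed tie-breaking rule. This is the claimed description of $\sL_t$; if $|\sC_t|<\bw$, the whole set is kept. At $t=\suflen$, the return statement executes before any EOS removal or pruning, so $\sF=\sC_\suflen$. Any top-$\bw$ selection from $\sC_\suflen$ is a subset of $\sC_\suflen$, so $\sF$ contains the pairs a final prune would have kept.

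\textbf{Main subtlety.} Nothing here is hard. The one point needing care is that ``top-$\bw$'' must be well defined, which requires duplicate-free candidates (supplied by Lemma~\ref{lem:unique}) and deterministic tie-breaking. I would also note explicitly that the claimed optimality is local to $\sC_t$ and makes no promise about global optimality over all $k^\suflen$ paths.
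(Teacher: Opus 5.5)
Your proof is correct and follows the paper's argument, which is likewise a by-construction reading of Algorithm~\ref{app:alg:beam-topk}: $\sL_t=\sU_t$ is by definition the top-$\bw$ of the post-EOS $\sC_t$, and at $t=\suflen$ the return executes before any pruning, so $\sF=\sC_\suflen$ trivially contains the top-$\bw$ subset. You additionally justify the cardinality claims via Lemma~\ref{lem:unique} and the exactness of the stored scores; the paper leaves both implicit, and the latter is really the content of Lemma~\ref{lem:topk-step-update} and Corollary~\ref{cor:topk-path}, so these are harmless additions.
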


\begin{proof}
By construction, at each depth $t < \suflen$ the algorithm enumerates all children $\sC_t$ of the current beam $\sL_{t-1}$ (removing EOS candidates), sorts $\sC_t$ by descending $\log p'$, and keeps the top $\bw$ as the set $\sU_t$.
Therefore, $\sL_t = \sU_t$ is exactly the set of the $\bw$ pairs in $\sC_t$ with the largest $\log p'$.
At $t=\suflen$, no prune is applied: $\sF = \sC_\suflen$ contains up to $\bw \cdot k$ candidates, which trivially includes whatever the top-$\bw$ subset would have been.
\end{proof}

Next, we show that the per-step $\log$ probability that Algorithm~\ref{app:alg:beam-topk} accumulates for each sequence is equivalent to that sequence's top-$k$ decoding $\log$ probability. 

\begin{lemma}[Per-step $\log$ probability equals top-$k$ decoding $\log$ probability]
\label{lem:topk-step-update}
For any history $\genseq$---consisting of the prefix $\pre$ and tokens generated so far---and any token $\gentok\in\sS_t(\genseq)$,
\[
\vr_t(\genseq)[\gentok]-Z_t(\genseq)=\log \Pr_{\model,\dec}(\gentok\mid\genseq),
\]
where $\dec$ is top-$k$ renormalization, i.e., 
\[
\Pr_{\model,\dec}(\gentok \mid \genseq)
\;=\;
\frac{\Pr_{\model}(\gentok \mid \genseq)}{\sum_{u\in \sS_t(\genseq)} \Pr_{\model}(u \mid \genseq)}
\quad\text{for }\gentok\in \sS_t(\genseq),
\qquad
\Pr_{\model,\dec}(v \mid \genseq)=0 \text{ for } v\notin \sS_t(\genseq).
\]
\end{lemma}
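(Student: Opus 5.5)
The argument is a direct unfolding of the definitions in Appendix~\ref{app:sec:kcbs:notation}. First I would expand the left-hand side using the definition of $\vr_t(\genseq)$ as the log-softmax of the logits. This gives $\vr_t(\genseq)[\gentok] = \log \Pr_{\model}(\gentok \mid \genseq)$ for every $\gentok\in\vocab$, and in particular for $\gentok\in\sS_t(\genseq)$.

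Next I would expand $Z_t(\genseq)$ from its definition as a $\LogSumExp$ over $\sS_t(\genseq)$:
\[
Z_t(\genseq)=\log\sum_{u\in\sS_t(\genseq)}\exp\big(\vr_t(\genseq)[u]\big)=\log\sum_{u\in\sS_t(\genseq)}\Pr_{\model}(u\mid\genseq).
\]
Subtracting and combining the logarithms then yields
\[
\vr_t(\genseq)[\gentok]-Z_t(\genseq)=\log\frac{\Pr_{\model}(\gentok\mid\genseq)}{\sum_{u\in\sS_t(\genseq)}\Pr_{\model}(u\mid\genseq)},
\]
which is the stated formula for $\log\Pr_{\model,\dec}(\gentok\mid\genseq)$ on $\sS_t(\genseq)$.

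To tie this to the logit form of top-$k$ in Equation~\ref{eq:topk}, I would repeat the cancellation already carried out for general $\temp$ around Equation~\ref{eq:temp-update}, specialized to $\temp=1$. Writing $\Pr_{\model}(v\mid\genseq)=\exp(\logit_t(\genseq)[v]-C_t)$ with $C_t$ the full-vocabulary log-partition function, the factor $\exp(-C_t)$ appears in both numerator and denominator and cancels. What remains is $\exp(\logit_t[\gentok])/\sum_{u\in\sS_t}\exp(\logit_t[u])$, which is exactly Equation~\ref{eq:topk} with $\vocab_k=\sS_t(\genseq)$. The second clause, zero probability off $\sS_t(\genseq)$, is by definition of top-$k$ filtering. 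It is also consistent with the algorithm, which never forms children outside $\sS_t(\genseq)$, so the claimed identity is only needed for $\gentok\in\sS_t(\genseq)$.

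I expect no real obstacle. The only points needing care are well-definedness and the identification of $\sS_t(\genseq)$ with $\vocab_k$. For well-definedness, $Z_t(\genseq)$ is finite because the softmax probabilities are strictly positive and $k\ge1$, so the logarithms and the division are legitimate. For the identification, the top-$k$ set is defined by largest logits, and the largest logits coincide with the largest log-probabilities because log-softmax is a shift by the constant $C_t$. This is order-preserving, so ties are resolved identically under the same deterministic rule.
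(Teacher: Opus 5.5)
Your proposal is correct and matches the paper's proof: both use the log-softmax definition to get $\exp(\vr_t(\genseq)[v])=\Pr_{\model}(v\mid\genseq)$, rewrite $Z_t(\genseq)$ as the log of the top-$k$ mass, and combine the two logarithms. The extra steps are not in the paper's proof but are correct and harmless: tying the result to the logit form in Equation~\ref{eq:topk} by cancelling the full partition function $C_t$, checking well-definedness, and noting that log-softmax preserves the top-$k$ ordering.
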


\begin{proof}
By definition, 
\begin{align*}
\vr_t(\genseq)=\log \softmaxsf(\logit_t(\genseq)) = \log \Pr_{\model}(\cdot | \genseq).
\end{align*}
So, 
\begin{align*}
\exp\big(\vr_t(\genseq)[v]\big)=\Pr_{\model}(v \mid \genseq) \quad \forall v\in\vocab,
\end{align*}
and 
\begin{align*}
Z_t(\genseq)  =\log\!\!\!\sum_{u\in \sS_t(\genseq)} \exp\big(\vr_t(\genseq)[u]\big) =\log\!\!\!\sum_{u\in \sS_t(\genseq)} \Pr_{\model}(u \mid \genseq). 
\end{align*}
Therefore, for any $\gentok\in \sS_t(\genseq)$,
\begin{align*}
\vr_t(\genseq)[\gentok]-Z_t(\genseq)
&= \log \Pr_{\model}(\gentok \mid \genseq) - \log\!\!\!\sum_{u\in \sS_t(\genseq)} \Pr_{\model}(u \mid \genseq)\\
&= \log \frac{\Pr_{\model}(\gentok \mid \genseq)}{\sum_{u\in \sS_t(\genseq)} \Pr_{\model}(u \mid \genseq)}\\
&= \log \Pr_{\model,\dec}(\gentok \mid \genseq). \qedhere
\end{align*}
\end{proof}

\begin{corollary}[Path $\log$-probability equals sum of top-$k$ increments]
\label{cor:topk-path}
Let $\pre\in\vocab^{\prelen}$ be a prefix of length $\prelen$, and let
$\gensuf = (\gentok_1,\ldots,\gentok_{\suflen})$
be any length-$\suflen$ continuation produced by Algorithm~\ref{app:alg:beam-topk}.
For $t=1,\ldots,\suflen$, define the step-$t$ history
\(
\genseq \;\coloneqq\; \pre \,\Vert\, \subgensuf{1:t-1} =\genseq_{1:\,\prelen+t-1}
\) 
Let $\vr_t(\genseq)\in\R^{|\vocab|}$ denote the $\log$-probability vector with entries $\vr_t(\genseq)[u] \coloneqq \log \Pr_{\model}(u\mid \genseq)$.
Let $\sS_t(\genseq)$ be the step-$t$ top-$k$ set and
\[
Z_t(\genseq) \;\coloneqq\; \LogSumExp\big(\,\vr_t(\genseq)[\sS_t(\genseq)]\,\big)
\;=\; \log\!\!\!\sum_{u\in\sS_t(\genseq)} \exp\big(\vr_t(\genseq)[u]\big).
\]
Then the $\log$-probability of $\gensuf$ under the top-$k$ policy $\dec$ decomposes as
\[
\log \Pr_{\model,\dec}\big(\gensuf \mid \pre\big)
\;=\;
\sum_{t=1}^{\suflen} \Big(\vr_t(\genseq)[\gentok_t] \;-\; Z_t(\genseq)\Big).
\]
\end{corollary}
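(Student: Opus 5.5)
The plan is to combine the chain rule for the top-$k$ decoding distribution with the per-step identity of Lemma~\ref{lem:topk-step-update}, and then take logarithms. Fix a continuation $\gensuf=(\gentok_1,\ldots,\gentok_\suflen)$ returned by Algorithm~\ref{app:alg:beam-topk}. By construction, at each step $t$ the token $\gentok_t$ was appended to a beam element whose history is exactly $\genseq=\pre\Vert\subgensuf{1:t-1}$, and $\gentok_t$ was drawn from $\sS_t(\genseq)$. This membership is what allows us to invoke Lemma~\ref{lem:topk-step-update} at every step.

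\textbf{Step 1 (factorization).} The autoregressive process defined by $(\model,\dec)$ gives
\[
\Pr_{\model,\dec}(\gensuf\mid\pre)=\prod_{t=1}^{\suflen}\Pr_{\model,\dec}\big(\gentok_t\mid \pre\Vert\subgensuf{1:t-1}\big).
\]
This is the same decomposition used for $\pseq$ in Equation~\ref{eq:pz}, applied here to an arbitrary continuation rather than to the target suffix.

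\textbf{Step 2 (positivity).} Each factor is strictly positive. Because $\gentok_t\in\sS_t(\genseq)$, the factor equals $\Pr_\model(\gentok_t\mid\genseq)$ divided by a positive normalizer, and softmax probabilities are strictly positive. The logarithm of the product is therefore finite and equals the sum of the logarithms of the factors.

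\textbf{Step 3 (substitution).} Applying Lemma~\ref{lem:topk-step-update} to each summand replaces $\log\Pr_{\model,\dec}(\gentok_t\mid\genseq)$ with $\vr_t(\genseq)[\gentok_t]-Z_t(\genseq)$. Summing over $t$ gives the claimed identity.

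\textbf{Consistency check.} It is also worth noting that the accumulated score $\log p$ in the algorithm equals this same sum. This follows by induction on $t$: $\sL_0$ starts at score $0$, and each expansion adds exactly $\vr_t(\genseq)[\gentok]-Z_t(\genseq)$. The returned score is therefore the exact top-$k$ log-probability.

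\textbf{Main obstacle.} The only delicate point is in Step 3: we must ensure that the history used at step $t$ in the algorithm coincides with the prefix of the returned path. Beam pruning reorders and drops elements, but it never alters a history, because each $(\genseq',\log p')$ pair is carried intact through the prune. EOS-terminated paths are removed rather than extended, so they cannot break this correspondence. Once this is in place, the rest is routine.
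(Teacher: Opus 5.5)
Your proposal is correct and follows the paper's proof. Both factor $\Pr_{\model,\dec}(\gensuf\mid\pre)$ autoregressively and replace each log-factor with $\vr_t(\genseq)[\gentok_t]-Z_t(\genseq)$ via Lemma~\ref{lem:topk-step-update}; your positivity and history-correspondence checks are sound details that the paper leaves implicit.
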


\begin{proof}
By Lemma~\ref{lem:topk-step-update}, at step $t$ the $\log$-increment from choosing
$\gentok_t\in\sS_t(\genseq)$ given history $\genseq$ is
$\vr_t(\genseq)[\gentok_t]-Z_t(\genseq)=\log\Pr_{\model,\dec}(\gentok_t\mid \genseq)$.
Summing over $t=1,\ldots,\suflen$ yields
$\log \Pr_{\model,\dec}(\gensuf\mid\pre)
=\sum_{t=1}^{\suflen}\log\Pr_{\model,\dec}(\gentok_t\mid \genseq)$,
the standard autoregressive factorization.
\end{proof}

Following from above, we can show that the entire mass is conserved under top-$k$ normalization.
We use this below to show that final-step mass is preserved when the last pruning step is disabled.\looseness=-1

\begin{lemma}[Frontier mass identity under top-$k$ renormalization]
\label{lem:frontier-mass}
Fix a prefix $\pre$ and decoding policy $\dec$ (top-$k$).
Let $\vu$ be a partial continuation of length at most $\suflen$, and for brevity let
\[
\Pr_{\model,\dec}(\vu)\;\coloneqq\;\Pr_{\model,\dec}(\vu \mid \pre)
\]
denote its $\dec$-computed conditional probability.
At any step $t<\suflen$, the children of a length-$t$ continuation $\vu$ are the one-token extensions
\[
\{\,\vu\Vert v:\ v\in \sS_{t+1}(\pre\Vert \vu)\,\}
\]
where $\sS_{t+1}(\pre\Vert \vu)$ is the top-$k$ set at step $t{+}1$ given history $\pre\Vert \vu$.

A set $\sA$ of \emph{pairs} $(\vw,\log p)$---$\vw$ is a partial continuation and $\log p=\log \Pr_{\model,\dec}(\vw\mid\pre)$---is a \newterm{frontier} if the sequence components form a cut: 
(i) no $\vw$ is a prefix of another (i.e., no ancestor-descendant pairs), and 
(ii) every length-$\suflen$ continuation has exactly one $\vw\in\sA$ as a prefix (i.e., initial segment of the continuation). 
Then the total mass of any frontier equals $1$:
\[
\sum_{(\_,\log p)\in\sA} \exp(\log p) \;=\; 1.
\]

\paragraph{Returned finals and pruned nodes.}
Let $\sF$ be the set of returned finals and their $\log$ probabilities, pairs $(\gensuf,\log p)$ at depth $\suflen$.
Let $\sR_{\textnormal{prune}}$ be the set of all pruned nodes and their $\log$ probabilities: 
these are pairs $(\vu,\log p)$ for any depth at which the node $\vu$ (and its $\log$ probability) was removed by pruning. 
(By construction, once a node is pruned, none of its descendants are ever constructed, so they cannot be pruned later.  
Siblings and nodes on different branches can both be in $\sR_{\textnormal{prune}}$, but there are no ancestor-descendant pairs inside $\sR_{\textnormal{prune}}$.)

The returned set $\sF$ contains all depth-$\suflen$ children that survived earlier pruning, so $\sR_{\textnormal{prune}}$ contains prunes only from depths $<\suflen$. 
Then $\sF\cup\sR_{\textnormal{prune}}$ is a frontier and
\[
\underbrace{\sum_{(\_,\log p)\in\sF} \exp(\log p)}_{\text{\emph{computed from} } \gensuf}
\;+\;
\underbrace{\sum_{(\vu,\log p)\in\sR_{\textnormal{prune}}} \exp(\log p)}_{\text{\emph{leftover computed from }} \vu}
\;=\; 1.
\]
That is, $\sF\cup\sR_{\textnormal{prune}}$ is a frontier/cut because each root-to-$\suflen$ path intersects it exactly once---either it is kept as a final in $\sF$ or at the first node on that path that was pruned in $\sR_{\textnormal{prune}}$. 
\end{lemma}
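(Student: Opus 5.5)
The plan has two parts: first prove the general frontier identity, then show that $\sF\cup\sR_{\textnormal{prune}}$ is a frontier and apply the identity. The general identity will follow from a one-step mass conservation fact. For any partial continuation $\vu$ of length $t<\suflen$, Lemma~\ref{lem:topk-step-update} gives that the top-$k$ renormalized next-token probabilities on $\sS_{t+1}(\pre\Vert\vu)$ sum to $1$. Combined with the autoregressive factorization (Corollary~\ref{cor:topk-path}), this yields
\[
\sum_{v\in\sS_{t+1}(\pre\Vert\vu)}\Pr_{\model,\dec}(\vu\Vert v)=\Pr_{\model,\dec}(\vu).
\]
That is, replacing a node by its children preserves total mass. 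The root (empty continuation) has mass $1$.

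To handle an arbitrary frontier $\sA$, I would restrict attention to the top-$k$ tree, meaning the continuations with nonzero mass. I will work with nonzero-mass nodes, and note that zero-mass continuations contribute nothing, so they can be ignored. This interpretation matters because $\log p$ must be finite for a pair to appear in $\sA$.

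Step 1 is a downward expansion argument. By induction on $D(\sA)=\sum_{\vw\in\sA}(\suflen-|\vw|)$, I would show that the mass of $\sA$ equals the mass of the full depth-$\suflen$ layer of the tree. If some $\vw\in\sA$ has $|\vw|<\suflen$, I replace it by its $k$ children. Property (i) of a frontier (no ancestor--descendant pairs) together with property (ii) (each full path hits exactly one element) guarantees that the result is again a frontier. The mass is unchanged by the one-step identity, and $D$ strictly decreases.

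Step 2 closes the induction. When $D=0$, the frontier is exactly the set of all depth-$\suflen$ continuations, by property (ii). Its mass is $1$, which I would prove by induction on depth, repeatedly applying the one-step identity starting from the root.

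Step 3 shows that $\sF\cup\sR_{\textnormal{prune}}$ is a frontier. Take any root-to-depth-$\suflen$ path in the top-$k$ tree and follow it through the algorithm. At each depth, the node on the path is either constructed or not. If it is constructed and then pruned, it lies in $\sR_{\textnormal{prune}}$, and none of its descendants are ever constructed. If it survives every prune up to depth $\suflen$, then it lies in $\sC_\suflen=\sF$, since no final prune is performed.

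The main obstacle is showing that these cases are exhaustive. Suppose the node at depth $t$ is constructed and not pruned. Then it survives into $\sL_t$ and is expanded, so every top-$k$ child, including the path's next node, lands in $\sC_{t+1}$. The induction starts from the root, which lies in $\sL_0$. Hence the first node on the path that is not kept is pruned, or else the path reaches $\sF$, and the path meets the set exactly once.

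There are two edge cases to address. First, EOS removal has to be treated as pruning, so EOS-terminated nodes belong to $\sR_{\textnormal{prune}}$. Second, early termination returns $\emptyset$ and must be excluded, or its remaining beam nodes counted as pruned. Uniqueness (Lemma~\ref{lem:unique}) ensures that each node appears once as a pair, so masses are not double-counted.
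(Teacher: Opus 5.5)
Your plan uses the same ingredients as the paper: one-step conservation, a covering argument, then a path-by-path check that $\sF\cup\sR_{\textnormal{prune}}$ is a cut. However, the induction in Step~1 is not well-founded as written. Replacing $\vw$ with $|\vw|=t<\suflen$ by its $k$ children changes $D(\sA)=\sum_{\vw\in\sA}(\suflen-|\vw|)$ by $k(\suflen-t-1)-(\suflen-t)=(k-1)(\suflen-t)-k$. This is $\ge 0$ whenever $k\ge 2$ and $\suflen-t\ge 2$; for instance, it is $+38$ for $k=40$ and $\suflen-t=2$. So the claim that $D$ strictly decreases is false, and $D$ cannot serve as the induction measure.

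The repair is easy. One option is $\Phi(\sA)=\sum_{\vw\in\sA}\sum_{j=0}^{\suflen-|\vw|-1}k^{j}$, the number of depth-$(<\suflen)$ nodes of the top-$k$ tree lying at or below $\sA$ (the subtrees are disjoint by (i)). It drops by exactly one per expansion. Another option is to expand all minimum-depth elements at once and induct on $\suflen-\min_{\vw\in\sA}|\vw|$.

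The paper avoids the issue by working per node. It proves the descendant-sum identity $\Pr_{\model,\dec}(\vw)=\sum_{\vx\in\mathrm{Desc}_\suflen(\vw)}\Pr_{\model,\dec}(\vx)$ by pushing mass down the subtree of $\vw$ one full level at a time. It then uses (i)--(ii) only to see that the sets $\mathrm{Desc}_\suflen(\vw)$, $\vw\in\sA$, partition the depth-$\suflen$ layer, and sums.

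With that fix your argument is complete. Step~2 is fine. Step~3 is more careful than the paper, which asserts the cut property of $\sF\cup\sR_{\textnormal{prune}}$ ``by construction.'' Your induction along a path from the root and your restriction to the nonzero-mass top-$k$ tree are what make the identity literally true. The restriction matters because (ii) cannot hold otherwise: off-top-$k$ extensions are never constructed. Your two caveats are also needed. In particular, the $\taumin$ early exit discards a nonempty beam whose nodes lie in neither $\sF$ nor $\sR_{\textnormal{prune}}$, so they must be counted as pruned. The lemma leaves this implicit, and the paper only remarks afterward that EOS-terminated paths can be treated as pruned parents.
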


\begin{proof}
For any length-$t$ continuation $\vu$ with $t<\suflen$, top-$k$ renormalization ensures that the conditional probabilities of the next token sum to $1$:
\[
\sum_{v\in\sS_{t+1}(\pre\Vert \vu)} \Pr_{\model,\dec}(v \;\big|\; \pre\Vert \vu)=1.
\]
Combining this with the autoregressive chain rule,
\[\Pr_{\model,\dec}(\vu\Vert v \mid \pre) = \Pr_{\model,\dec}(\vu \mid \pre)\cdot\Pr_{\model,\dec}(v \mid \pre\Vert \vu),
\]
we get one-step conservation:
\begin{align}
\label{eq:onestep}
\sum_{v\in\sS_{t+1}(\pre\Vert \vu)} \Pr_{\model,\dec}(\vu\Vert v \;\big|\; \pre)
&=\sum_{v\in\sS_{t+1}(\pre\Vert \vu)} \Pr_{\model,\dec}(\vu \;\big|\; \pre)\,\Pr_{\model,\dec}(v \;\big|\; \pre\Vert \vu) \nonumber\\
&= \Pr_{\model,\dec}(\vu \;\big|\; \pre)\cdot\underbrace{\sum_{v\in\sS_{t+1}(\pre\Vert \vu)}\Pr_{\model,\dec}(v \;\big|\; \pre\Vert \vu)}_{=\,1}
\;=\; \Pr_{\model,\dec}(\vu \;\big|\; \pre),
\end{align}
i.e., the mass of a parent node $\vu$ equals the total mass of its $k$ children (before pruning).

Now let $\mathrm{Desc}_\suflen(\vw) = \{\vx\in\vocab^\suflen :\vw \text{ is a prefix of }\vx\}$, where here ``prefix of'' refers to an initial segment of the continuation, not the input prefix $\pre$).
This is the set of depth-$\suflen$ descendants of $\vw$ (the full-length-$\suflen$ continuations beginning with $\vw$) under the top-$k$ policy. 
By repeatedly applying the one-step conservation along the subtree rooted at $\vw$ (i.e., push mass all the way down to depth $\suflen$), we get the \newterm{descendant sum identity}:
\begin{equation}
\label{eq:descendant-sum}
\sum_{\vx\in \mathrm{Desc}_\suflen(\vw)} \Pr_{\model,\dec}(\vx \;\big|\; \pre)
= \Pr_{\model,\dec}(\vw\;\big|\;\pre).
\end{equation}
Simply put, this is per-node conservation: 
the probabilities of all depth-$\suflen$ descendants of node $\vw$ (i.e., the subtree with $\vw$ as the root) sum to the probability of $\vw$ under top-$k$.
By induction on depth: the base case is Equation~\ref{eq:onestep} (one-step conservation); for the inductive step, apply one-step conservation to each leaf of the current frontier of the subtree, replacing each leaf's mass with the sum of its children's masses, until all leaves are at depth $\suflen$.

If $\sA$ is a frontier, the sets of descendants $\{\mathrm{Desc}_\suflen(\vw): (\vw,\log p)\in\sA\}$ for different $\vw$ are disjoint and their union is the full set of all depth-$\suflen$ continuations (every path hits the frontier). 
From per-node conservation above (descendant sum identity, Equation~\ref{eq:descendant-sum}), summing over $(\vw,\log p)\in\sA$ gives
\begin{align*}
\sum_{(\_,\log p)\in\sA} \exp(\log p)
&= \sum_{(\vw,\log p)\in\sA} \Pr_{\model,\dec}(\vw\;\big|\;\pre) \\
&= \sum_{(\vw,\log p)\in\sA}\;\sum_{\vx\in\mathrm{Desc}_\suflen(\vw)} \Pr_{\model,\dec}(\vx\;\big|\;\pre)
\tag{descendant sum, Eq.~\ref{eq:descendant-sum}} \\
&= \sum_{\text{all }\vx\text{ of length }\suflen} \Pr_{\model,\dec}(\vx\;\big|\;\pre)
\tag{disjoint partition} \\
&= 1.
\tag{descendant sum at the root, which has mass $1$}
\end{align*}
Finally, by construction, every root-to-length-$\suflen$ path ends either at a returned final in $\sF$ or at its first pruned ancestor in $\sR_{\textnormal{prune}}$ (the step where it left the beam), and the identity follows.
\end{proof}

Since Lemma~\ref{lem:frontier-mass} holds for any frontier---regardless of the depths at which its nodes appear---it holds in particular when the frontier includes all depth-$\suflen$ nodes that survived pruning.
At $\suflen$, we simply keep more in $\sF$ rather than allocating mass to $\sR_{\textnormal{prune}}$, since we do not perform the final across-beam prune.
This is also why we can track the mass of partial paths before $\suflen$ when we hit an EOS;
we can effectively treat those paths as parents that contain the mass of children that we do not explore further.

With the conservation of mass, per-step probabilities, and overall path probabilities shown above, we quantify how probability mass behaves at the final expansion when the last pruning operation is disabled.

\begin{lemma}[Final-step mass preservation without pruning]
\label{lem:mass-preservation}
Let $\sL_{\suflen-1} \subseteq \vocab^{\prelen+\suflen-1}\times\R$ be the beam at depth $\suflen{-}1$ in Algorithm~\ref{app:alg:beam-topk}, and let
\[
\sC_\suflen \;=\; \big\{\,(\genseq\,\Vert\,\gentok,\;\log p+\vr_\suflen(\genseq)[\gentok]-Z_\suflen(\genseq)) \;:\; (\genseq,\log p)\in\sL_{\suflen-1},\ \gentok\in\sS_\suflen(\genseq)\,\big\}
\]
be the set of up to $\bw \cdot k$ unpruned children at depth $\suflen$ paired with their associated $\log$ probabilities. 
With the final prune disabled, the total probability associated with the returned final sequences equals the beam mass at depth $\suflen{-}1$:
\[
\sum_{(\_,\log p')\in\sC_\suflen} \exp(\log p')
\;=\;
\sum_{(\_,\log p)\in\sL_{\suflen-1}} \exp(\log p).
\]
\end{lemma}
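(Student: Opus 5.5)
The plan is to reduce the claim to the one-step conservation identity (Equation~\ref{eq:onestep}) in the proof of Lemma~\ref{lem:frontier-mass}, applied to each parent in $\sL_{\suflen-1}$. Then sum over parents, using Lemma~\ref{lem:unique} to ensure the children are counted without collision. First, rewrite the left-hand side as a double sum: by definition of $\sC_\suflen$, every element is indexed by a pair $((\genseq,\log p),\gentok)$ with $(\genseq,\log p)\in\sL_{\suflen-1}$ and $\gentok\in\sS_\suflen(\genseq)$. Lemma~\ref{lem:unique} says the map $(\genseq,\gentok)\mapsto\genseq\Vert\gentok$ is injective, so no child is merged or lost when forming the set $\sC_\suflen$. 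Hence
\[
\sum_{(\_,\log p')\in\sC_\suflen}\exp(\log p') \;=\; \sum_{(\genseq,\log p)\in\sL_{\suflen-1}}\;\sum_{\gentok\in\sS_\suflen(\genseq)} \exp\big(\log p+\vr_\suflen(\genseq)[\gentok]-Z_\suflen(\genseq)\big).
\]

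Second, handle the inner sum for each fixed parent. Factor out $\exp(\log p)$. By Lemma~\ref{lem:topk-step-update}, $\exp(\vr_\suflen(\genseq)[\gentok]-Z_\suflen(\genseq))=\Pr_{\model,\dec}(\gentok\mid\genseq)$. Under top-$k$ renormalization these values sum to $1$ over $\sS_\suflen(\genseq)$. This can also be checked directly: $\sum_{\gentok\in\sS_\suflen}\exp(\vr_\suflen[\gentok])=\exp(Z_\suflen)$ by the definition of $Z_\suflen$ as a LogSumExp. So each inner sum equals $\exp(\log p)$, and summing over the beam gives the right-hand side.

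The only point needing care is bookkeeping rather than mathematics. At $t=\suflen$, Algorithm~\ref{app:alg:beam-topk} returns $\sF=\sC_\suflen$ before any EOS removal or pruning, so all $k$ children of every parent are present. I would state this explicitly, because EOS removal at step $\suflen$ would break the identity. I would also note that, as in Lemma~\ref{lem:unique}, the partial histories in $\sL_{\suflen-1}$ are distinct, so the beam-side sum has no double counting either.
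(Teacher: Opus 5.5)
Your proposal is correct and follows the paper's proof: write each child's mass as $\exp(\log p)\cdot\Pr_{\model,\dec}(\gentok\mid\genseq)$ via Lemma~\ref{lem:topk-step-update}, then sum the top-$k$ conditionals to $1$ by renormalization. Your extra bookkeeping makes explicit two points the paper leaves implicit: injectivity via Lemma~\ref{lem:unique}, which justifies rewriting the set sum as a double sum, and the fact that EOS removal happens only for $t<\suflen$.
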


\begin{proof}
By Lemma~\ref{lem:topk-step-update} (and Corollary~\ref{cor:topk-path}), each child from parent $(\genseq,\log p)$ has
\[
\exp(\log p')=\exp(\log p)\,\cdot\,\exp(\vr_\suflen(\genseq)[\gentok]-Z_\suflen(\genseq))
=\exp(\log p)\,\cdot\,\Pr_{\model,\dec}(\gentok\mid\genseq).
\]
Therefore,
\[
\sum_{(\_,\log p')\in\sC_\suflen} \exp(\log p')
=\sum_{(\genseq,\log p)\in\sL_{\suflen-1}} \bigg[\exp(\log p)\,\cdot\,
\sum_{\gentok\in\sS_\suflen(\genseq)} \Pr_{\model,\dec}(\gentok\mid\genseq)\bigg]
=\sum_{(\genseq,\log p)\in\sL_{\suflen-1}} \exp(\log p),
\]
since $\sum_{\gentok\in\sS_\suflen(\genseq)} \Pr_{\model,\dec}(\gentok\mid\genseq)=1$ by top-$k$ renormalization.
\end{proof}

As an immediate consequence, we capture more mass by not pruning at step $\suflen$. 

\begin{corollary}[Pruned vs. unpruned final sequences]
\label{cor:prune-inequality}
Let $\sC_\suflen$ be the unpruned set of up to $\bw \cdot k$ child sequences and their $\log$ probabilities at depth $\suflen$. 
Let $\sF_{(\bw)}\subseteq\sC_\suflen$ be the $\bw$ final sequences and their $\log$ probabilities that would be kept if we performed the last prune.
Then
\[
\sum_{(\_,\log p')\in\sF_{(\bw)}} \exp(\log p')
\;\le\;
\sum_{(\_,\log p')\in\sC_\suflen} \exp(\log p')
\;=\;
\sum_{(\_,\log p)\in\sL_{\suflen-1}} \exp(\log p).
\]
\end{corollary}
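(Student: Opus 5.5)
The plan is to treat the two relations separately: prove the inequality by a subset argument, and obtain the equality directly from Lemma~\ref{lem:mass-preservation}.

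\textbf{Step 1 (the equality).} The right-hand equality $\sum_{(\_,\log p')\in\sC_\suflen}\exp(\log p')=\sum_{(\_,\log p)\in\sL_{\suflen-1}}\exp(\log p)$ is exactly the conclusion of Lemma~\ref{lem:mass-preservation}. I will invoke that lemma verbatim; it rests on top-$k$ renormalization, so that each parent's $k$ children carry total mass equal to the parent's mass.

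\textbf{Step 2 (the inequality).} First I will show that $\sF_{(\bw)}\subseteq\sC_\suflen$ as sets of pairs. This holds by definition, since the hypothetical last prune selects the top-$\bw$ elements of $\sC_\suflen$ by $\log p'$ (Proposition~\ref{prop:beam-local-topk}). By Lemma~\ref{lem:unique}, the sequences in $\sC_\suflen$ are pairwise distinct, so each pair is counted once in each sum. Every summand $\exp(\log p')$ is strictly positive. Therefore, splitting
\[
\sum_{\sC_\suflen}\exp(\log p')=\sum_{\sF_{(\bw)}}\exp(\log p')+\sum_{\sC_\suflen\setminus\sF_{(\bw)}}\exp(\log p'),
\]
shows the second sum is nonnegative, which gives the inequality.

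\textbf{Main obstacle.} There is no real obstacle; the only care needed is bookkeeping. I must confirm that EOS removal occurs only at non-final steps, so $\sC_\suflen$ is exactly the set of children of $\sL_{\suflen-1}$ used in Lemma~\ref{lem:mass-preservation}. I must also confirm that tie-breaking in the hypothetical prune still yields a subset of $\sC_\suflen$. Both follow from the algorithm's definition.
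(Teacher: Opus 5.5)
Your proposal is correct and matches the paper's proof essentially step for step. The paper gets the inequality from $\sF_{(\bw)}\subseteq\sC_\suflen$ with nonnegative summands, and the equality from Lemma~\ref{lem:mass-preservation}. Your extra bookkeeping on duplicates and EOS handling at the final step is accurate but not something the paper needs to spell out.
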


\begin{proof}
Since $\sF_{(\bw)}\subseteq\sC_\suflen$ and all terms are non-negative, the first inequality is monotonicity of finite sums.
The second equality is by Lemma~\ref{lem:mass-preservation}.
\end{proof}

From the above, we now turn from probability mass accounting to the lower bound of the near-verbatim probability: 
summing probabilities of \emph{any} subset of the $\varepsilon$-ball yields a certified lower bound by monotonicity.

\begin{theorem}[Lower bound on near-verbatim mass]
\label{thm:beam-lower-bound}
We denote the $\varepsilon$-ball around the $\suflen$-length target suffix $\suf$ for distance metric $\dist\in \{\ham, \lev\}$ as $\ball(\suf)=\{\vv\in\vocab^{\suflen}:\dist(\vv,\suf)\le\varepsilon\}$ (Equation~\ref{eq:eball}). Let $\sF_{(\bw)}$ be the set of up to $\bw$ pairs of
final $\suflen$-length continuations and their $\log$ probabilities $(\gensuf,\log p)$ that would be returned by Algorithm~\ref{app:alg:beam-topk} with the final prune enabled. 
Define the beam-based estimated mass lower bound to be 
\[
\mathrm{LB}^{(\bw)}_{\varepsilon,\dist}
\;\coloneqq\;
\sum_{(\gensuf,\,\log p)\in \sF_{(\bw)}\,:\ \gensuf \in \ball(\suf)} \exp (\log p).
\]
Then
\[
\mathrm{LB}^{(\bw)}_{\varepsilon,\dist}
\;\le\;
\sum_{\vv\in\ball(\suf)} \Pr_{\model,\dec}(\vv\mid\pre)
\;=\;
\pseqe \quad \text{(by Equation~\ref{eq:pze}).}
\]
\end{theorem}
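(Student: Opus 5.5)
The argument is a monotonicity argument over a subset of the $\varepsilon$-ball, combined with the exactness of the scores that Algorithm~\ref{app:alg:beam-topk} attaches to each returned continuation. I will carry it out in three steps.

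\emph{Step 1: exact scores.} By Corollary~\ref{cor:topk-path}, for every pair $(\gensuf,\log p)\in\sF_{(\bw)}$ the accumulated score satisfies $\exp(\log p)=\Pr_{\model,\dec}(\gensuf\mid\pre)$. The accumulated $\log p$ is exactly the sum of the per-step increments $\vr_t(\genseq)[\gentok_t]-Z_t(\genseq)$, and each increment is a top-$k$ log-probability by Lemma~\ref{lem:topk-step-update}. The algorithm therefore assigns no inflated or spurious mass to any returned sequence.

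\emph{Step 2: the filtered set injects into the ball.} Let $\sG\coloneqq\{\gensuf:(\gensuf,\log p)\in\sF_{(\bw)},\ \gensuf\in\ball(\suf)\}$. Two facts are needed.
\begin{itemize}
\item Each $\gensuf\in\sG$ is a length-$\suflen$ element of $\vocab^\suflen$ lying in $\ball(\suf)$, so $\sG\subseteq\ball(\suf)$.
\item By Lemma~\ref{lem:unique}, the sequences in $\sC_\suflen$, and hence in its subset $\sF_{(\bw)}$, are pairwise distinct. Each sequence therefore appears with exactly one score, and no element of $\ball(\suf)$ is counted twice.
\end{itemize}
Combining these with Step 1 gives
\[
\mathrm{LB}^{(\bw)}_{\varepsilon,\dist}=\sum_{\vv\in\sG}\Pr_{\model,\dec}(\vv\mid\pre).
\]

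\emph{Step 3: monotonicity.} All terms $\Pr_{\model,\dec}(\vv\mid\pre)$ are nonnegative and $\sG\subseteq\ball(\suf)$. Hence the sum over $\sG$ is at most the sum over $\ball(\suf)$, which equals $\pseqe$ by Equation~\ref{eq:pze}.

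I would also remark that replacing $\sF_{(\bw)}$ by the unpruned set $\sF=\sC_\suflen$ preserves the argument verbatim, since only the subset property and uniqueness are used. This yields Equation~\ref{hat-pze:baseline-lb}, with $\mathrm{LB}^{(\bw)}\le\lb$ by Corollary~\ref{cor:prune-inequality}.

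The main obstacle is bookkeeping rather than depth: I must make sure that the EOS-removal and early-termination branches do not break the argument. If early termination fires, $\sF=\emptyset$ and the bound is $0$, which holds trivially. Sequences removed because of EOS are simply absent from $\sG$, which only shrinks the sum. I also need to confirm that the $\suflen$-length objects compared against $\ball(\suf)$ are the continuations $\gensuf$ themselves, not full histories, so that the membership test matches Equation~\ref{eq:eball}.
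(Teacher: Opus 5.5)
Your proposal is correct and follows the paper's proof essentially step for step: the accumulated scores are exact by Lemma~\ref{lem:topk-step-update} and Corollary~\ref{cor:topk-path}; there are no duplicates by Lemma~\ref{lem:unique}, so the filtered set is a subset of $\ball(\suf)$; and the bound then follows by monotonicity of finite sums of nonnegative terms. One small correction to your side remark: $\mathrm{LB}^{(\bw)}_{\varepsilon,\dist}\le\mathrm{LB}^{(\bw\cdot k)}_{\varepsilon,\dist}$ follows from the same subset argument applied to the filtered sets (as in Corollary~\ref{cor:beam-lower-bound-noprune}), not from Corollary~\ref{cor:prune-inequality}, which only compares unfiltered total mass.
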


\begin{proof}
By Lemma~\ref{lem:topk-step-update} and Corollary~\ref{cor:topk-path},
for each $(\gensuf,\log p)\in\sF_{(\bw)}$ we have $\exp(\log p)=\Pr_{\model,\dec}(\gensuf\mid\pre)$.
So 
\[
\mathrm{LB}^{(\bw)}_{\varepsilon,\dist}
=\sum_{(\gensuf,\,\log p)\in \sF_{(\bw)}\,:\ \gensuf \in \ball(\suf)} \Pr_{\model,\dec}(\gensuf\mid\pre).
\]
By Lemma~\ref{lem:unique}, each sequence appears in $\sF_{(\bw)}$ exactly once (no duplicates), so 
$\{\,\gensuf : (\gensuf,\log p)\in\sF_{(\bw)} \text{ and } \gensuf\in\ball(\suf)\,\} \subseteq \ball(\suf)$. 
Monotonicity of finite sums then yields
\[
\mathrm{LB}^{(\bw)}_{\varepsilon,\dist}
\le \sum_{\vv\in\ball(\suf)} \Pr_{\model,\dec}(\vv\mid\pre)
= \pseqe\ \quad \text{(Equation~\ref{eq:pze}).}
\]
\end{proof}

The same monotonicity argument applies if we keep up to $\bw \cdot k$ final pairs of continuations and their $\log$ probs at depth $\suflen$ (i.e., no final across-beam prune to size $\bw$).

\begin{corollary}[Lower bound for the no-final-prune]
\label{cor:beam-lower-bound-noprune}
Let $\sC_\suflen$ be the set of up to $\bw \cdot k$ child pairs $(\gensuf,\,\log p)$ at depth $\suflen$ before the final prune to $\bw$ sequences (as in Proposition~\ref{prop:beam-local-topk} with $t=\suflen$). 
Define
\[
\mathrm{LB}^{(\bw \cdot k)}_{\varepsilon,\dist}
\;\coloneqq\;
\sum_{(\gensuf,\,\log p)\in \sC_\suflen\,:\ \gensuf\in \ball(\suf)} \exp(\log p).
\]
Then 
\[
\mathrm{LB}^{(\bw \cdot k)}_{\varepsilon,\dist}
\;\le\;
\sum_{\vv\in\ball(\suf)} \Pr_{\model,\dec}(\vv\mid\pre)
\;=\;
\pseqe\ \; \text{(Equation~\ref{eq:pze})},
\quad\text{and}\quad
\mathrm{LB}^{(\bw)}_{\varepsilon,\dist}
\;\le\;
\mathrm{LB}^{(\bw \cdot k)}_{\varepsilon,\dist}.
\]
\end{corollary}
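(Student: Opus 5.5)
The plan is to prove the two claims of Corollary~\ref{cor:beam-lower-bound-noprune} separately, reusing the ingredients of Theorem~\ref{thm:beam-lower-bound} with $\sC_\suflen$ in place of $\sF_{(\bw)}$. The argument is the same monotonicity-of-sums reasoning; the only change is the index set.

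\emph{First claim.} For each pair $(\gensuf,\log p)\in\sC_\suflen$, Lemma~\ref{lem:topk-step-update} and Corollary~\ref{cor:topk-path} give $\exp(\log p)=\Pr_{\model,\dec}(\gensuf\mid\pre)$. These results apply to every child formed at depth $\suflen$, since each child's score is accumulated by the same update rule along its path. Lemma~\ref{lem:unique} gives that the sequence projection of $\sC_\suflen$ has no duplicates, with $\sF^{\textnormal{seq}}=\sC^{\textnormal{seq}}_\suflen$. Hence the filtered sequences $\{\gensuf:(\gensuf,\log p)\in\sC_\suflen,\ \gensuf\in\ball(\suf)\}$ form a subset of $\ball(\suf)$ on which each element is counted exactly once. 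Because all summands $\Pr_{\model,\dec}(\vv\mid\pre)$ are nonnegative, summing over this subset is at most summing over all of $\ball(\suf)$. That sum is $\pseqe$ by Equation~\ref{eq:pze}, which proves $\mathrm{LB}^{(\bw\cdot k)}_{\varepsilon,\dist}\le\pseqe$.

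\emph{Second claim.} By Proposition~\ref{prop:beam-local-topk}, $\sF_{(\bw)}\subseteq\sC_\suflen$. Both the selection into $\sF_{(\bw)}$ and the membership test $\gensuf\in\ball(\suf)$ depend only on the pair itself. It follows that the $\varepsilon$-filtered part of $\sF_{(\bw)}$ is contained in the $\varepsilon$-filtered part of $\sC_\suflen$. Nonnegativity of the terms $\exp(\log p)$ then gives $\mathrm{LB}^{(\bw)}_{\varepsilon,\dist}\le\mathrm{LB}^{(\bw\cdot k)}_{\varepsilon,\dist}$. This mirrors the first inequality of Corollary~\ref{cor:prune-inequality}, restricted to the $\varepsilon$-ball.

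The only point needing care is the EOS convention. EOS candidates are removed only at steps $t<\suflen$, so $\sC_\suflen$ consists of genuine length-$\suflen$ continuations whose scores are exact top-$k$ probabilities. These continuations therefore lie in $\vocab^\suflen$, where $\ball(\suf)$ is defined. Beyond this, I expect no real obstacle.
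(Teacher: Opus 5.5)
Your proposal is correct and takes essentially the same approach as the paper. The paper gets the first inequality by applying Theorem~\ref{thm:beam-lower-bound} verbatim with $\sF_{(\bw)}$ replaced by $\sC_\suflen$ (citing Lemma~\ref{lem:unique} for no duplicates), which is what you re-derive inline, and gets the second from $\sF_{(\bw)}\subseteq\sC_\suflen$ plus monotonicity of nonnegative finite sums, exactly as you do.
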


\begin{proof}
By Lemma~\ref{lem:unique}, $\sC_\suflen$ has no token-level duplicates, so Theorem~\ref{thm:beam-lower-bound} applies exactly with $\sF_{(\bw)}$ replaced by $\sC_\suflen$, yielding $\mathrm{LB}^{(\bw \cdot k)}_{\varepsilon,\dist}\le \pseqe$. 
Since $\sF_{(\bw)}\subseteq\sC_\suflen$ (final prune keeps the top $\bw$ from the $\bw \cdot k$ candidates in $\sC_\suflen$), we have
$\{\gensuf:(\gensuf,\log p)\in\sF_{(\bw)},\ \gensuf\in\ball(\suf)\}\subseteq
\{\gensuf:(\gensuf,\log p)\in\sC_\suflen,\ \gensuf\in\ball(\suf)\}$. 
So, by monotonicity of finite sums, $\mathrm{LB}^{(\bw)}_{\varepsilon,\dist}\le \mathrm{LB}^{(\bw \cdot k)}_{\varepsilon,\dist}$. 
Lemma~\ref{lem:mass-preservation} and Corollary~\ref{cor:prune-inequality} additionally show that total mass (not just near-verbatim mass) is preserved when the final prune is disabled. 
\end{proof}

We can also show that Algorithm~\ref{app:alg:beam-topk} produces a loose (almost always very loose) upper bound from the covered mass it returns. 

\begin{proposition}[Loose upper bound from covered mass]
\label{prop:loose-ub}
Let $\mathrm{LB}_{\varepsilon,\dist}$ be the beam-based lower bound from Theorem~\ref{thm:beam-lower-bound} or Corollary~\ref{cor:beam-lower-bound-noprune} (the superscript $(\bw)$ or $(\bw \cdot k)$ is suppressed since the result holds for either), and (as in Equation~\ref{eq:covered-mass}) define the (unfiltered) covered mass
\[
\cover(\sF) \;\coloneqq\; \sum_{(\_,\log p)\in\sF} \exp(\log p), 
\]
and define the upper bound 
\[
\mathrm{UB}_{\varepsilon,\dist} \;\coloneqq\; \mathrm{LB}_{\varepsilon,\dist} \;+\; (1-\cover(\sF)).
\]
Then for any distance $\dist\in\{\ham,\lev\}$, the near-verbatim mass satisfies 
\[
\pseqe \;\le\; \mathrm{UB}_{\varepsilon,\dist}.
\]
\end{proposition}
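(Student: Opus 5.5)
The plan is to split the near-verbatim mass $\pseqe$ according to whether each $\suflen$-length continuation in $\ball(\suf)$ is returned in $\sF$ or not. Throughout, take $\sF$ to be the same returned set used to define $\mathrm{LB}_{\varepsilon,\dist}$: the up to $\bw\cdot k$ finals in the no-final-prune case, or $\sF_{(\bw)}$ if the final prune is enabled.

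Write $\sF^{\textnormal{seq}}$ for the projection of $\sF$ onto sequences. By Lemma~\ref{lem:unique}, each sequence appears in $\sF$ exactly once. By Lemma~\ref{lem:topk-step-update} and Corollary~\ref{cor:topk-path}, each returned pair $(\gensuf,\log p)$ satisfies $\exp(\log p)=\Pr_{\model,\dec}(\gensuf\mid\pre)$. Hence
\[
\pseqe \;=\; \sum_{\vv\in\ball(\suf)\cap\sF^{\textnormal{seq}}}\Pr_{\model,\dec}(\vv\mid\pre)\;+\;\sum_{\vv\in\ball(\suf)\setminus\sF^{\textnormal{seq}}}\Pr_{\model,\dec}(\vv\mid\pre).
\]
The first sum is exactly $\mathrm{LB}_{\varepsilon,\dist}$ by the definition in Theorem~\ref{thm:beam-lower-bound} and Corollary~\ref{cor:beam-lower-bound-noprune}.

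For the second sum, drop the restriction to $\ball(\suf)$; every term is nonnegative, so this only increases it. The sum is then bounded by the total mass of all length-$\suflen$ continuations not in $\sF^{\textnormal{seq}}$. Because the top-$k$ distribution over $\vocab^\suflen$ has total mass $1$ (the frontier identity of Lemma~\ref{lem:frontier-mass}, applied to the root or to the frontier of all depth-$\suflen$ nodes), that complement mass equals $1-\sum_{\vv\in\sF^{\textnormal{seq}}}\Pr_{\model,\dec}(\vv\mid\pre)=1-\cover(\sF)$. Combining the two pieces gives $\pseqe\le\mathrm{LB}_{\varepsilon,\dist}+(1-\cover(\sF))=\mathrm{UB}_{\varepsilon,\dist}$. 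Nothing in the argument depends on the choice of $\dist$, so it holds for both $\ham$ and $\lev$.

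The only delicate step is the complement-mass identity. It requires that every returned final is a genuine distinct length-$\suflen$ continuation carrying its exact top-$k$ probability, so that $\cover(\sF)$ is the mass of a subset of $\vocab^\suflen$ and cannot double count. Lemma~\ref{lem:unique} and Corollary~\ref{cor:topk-path} supply exactly this.

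There is one further subtlety. EOS paths recorded at depth $<\suflen$, and the branches cut off by early termination, are simply part of the uncovered mass, and this is harmless. Whether they lie inside $\ball(\suf)$ or not, they are counted inside $1-\cover(\sF)$, which is an upper bound on everything unreturned. Alternatively, $\sF\cup\sR_{\textnormal{prune}}$ is a frontier by Lemma~\ref{lem:frontier-mass}, so the uncovered mass equals the pruned mass, and the same bound follows.
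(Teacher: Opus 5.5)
Your proof is correct, and it reaches the bound by a slightly different route than the paper.

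\textbf{The paper's route.} It invokes Lemma~\ref{lem:frontier-mass} to treat $\sF\cup\sR_{\textnormal{prune}}$ as a frontier. It then writes $\pseqe$ as the in-ball mass of the returned finals plus the in-ball mass inside each pruned subtree $\mathrm{Desc}_\suflen(\vu)$. Finally, it drops the indicator on the latter and collapses it with the descendant-sum identity to $\sum_{(\vu,\log p)\in\sR_{\textnormal{prune}}}\exp(\log p)=1-\cover(\sF)$.

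\textbf{Your route.} You take the complement of $\sF^{\textnormal{seq}}$ directly inside $\vocab^\suflen$. This needs only three facts: the top-$k$ path distribution has total mass $1$, the returned finals are distinct (Lemma~\ref{lem:unique}), and they carry their exact probabilities (Corollary~\ref{cor:topk-path}).

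\textbf{What each buys.} Your argument is more elementary and more robust. EOS-removed candidates, and a beam abandoned at a $\taumin$ early termination, fall into the uncovered mass automatically. The paper's frontier, by contrast, needs those nodes to be counted among the pruned ones. For the same reason, your closing ``alternatively'' is the less safe of your two routes. The paper's decomposition buys something else: it identifies the uncovered mass with specific pruned subtrees. That is exactly the template for the tighter $\mathrm{UB}=\mathrm{LB}+\textit{bank}$ of the $\varepsilon$-pruned variants, where non-viable pruned subtrees are discarded instead of charged to the bound.

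You were also right to insist that $\mathrm{LB}$ and $\cover(\sF)$ be computed from the same returned set. Pairing $\mathrm{LB}^{(\bw)}$ with the coverage of all $\bw\cdot k$ finals can violate the inequality.
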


\begin{proof}
By Lemma~\ref{lem:frontier-mass}, for the frontier $\sF\cup\sR_{\textnormal{prune}}$ we have
\begin{align}
\label{eq:frontier-one}
1 \;=\; \underbrace{\sum_{(\_,\log p)\in\sF} \exp(\log p)}_{\cover(\sF)}
\;+\; \sum_{(\_,\log p)\in\sR_{\textnormal{prune}}} \exp(\log p).
\end{align}
As in Lemma~\ref{lem:frontier-mass}, let $\mathrm{Desc}_\suflen(\vu)$ denote the set of depth-$\suflen$ descendants of $\vu$ (i.e., full-length continuations with prefix $\vu$; $\vu$ is a prefix of all these continuations).  
Then, we can decompose $\pseqe$ into contributions from the returned finals and pruned subtrees: 
\begin{align*}
\pseqe
&= \underbrace{\sum_{(\gensuf,\log p)\in\sF} \exp(\log p)\,\1[\gensuf\in\ball(\suf)]}_{=\;\lb}\\
  &\qquad \;+\;
  \sum_{(\vu,\log p)\in\sR_{\textnormal{prune}}}\;\sum_{\vx \in \mathrm{Desc}_\suflen(\vu)}
     \Pr_{\model,\dec}(\vx\mid\pre)\,\1[\vx\in\ball(\suf)].
\end{align*}
Dropping the indicator in the second term yields an upper bound (i.e., allows for the possibility that each $\vx$ is a valid suffix in $\ball$):
\begin{align*}
\sum_{(\vu,\log p)\in\sR_{\textnormal{prune}}}\;\sum_{\vx \in \mathrm{Desc}_\suflen(\vu)}
     \Pr_{\model,\dec}(\vx\mid\pre)
&\;=\; \sum_{(\vu,\log p)\in\sR_{\textnormal{prune}}} \Pr_{\model,\dec}(\vu\mid\pre)\\
&\;=\; \sum_{(\_,\log p)\in\sR_{\textnormal{prune}}} \exp(\log p)\\
&\;=\; 1-\cover(\sF),
\end{align*}
where the first equality uses Equation \ref{eq:descendant-sum} with $\vw\leftarrow\vu$, summed over $(\vu,\log p)\in\sR_{\textnormal{prune}}$; 
the second equality is by definition; and the third equality is by Equation~\ref{eq:frontier-one}.
Therefore, $\pseqe\le \lb + (1-\cover(\sF)) = \ub$.
In Algorithm~\ref{app:alg:beam-topk}, $\sF = \sC_\suflen$ (all $\bw \cdot k$ finals), but the reasoning applies also if $\sF$ contains only the top-$\bw$ finals retained by pruning. 
\end{proof}

\paragraph{Validity of early termination.}
The optional early termination in Algorithm~\ref{app:alg:beam-topk} is justified by the results above.
At any intermediate step $t < \suflen$, cumulative $\log$-probabilities are non-increasing along paths (each step adds $\log \Pr_{\model,\dec}(\gentok \mid \genseq) \leq 0$), so the highest-probability beam element in $\sL_t$ upper-bounds the probability of every depth-$\suflen$ descendant.
By Theorem~\ref{thm:beam-lower-bound} and Corollary~\ref{cor:beam-lower-bound-noprune}, the lower bound $\mathrm{LB}^{(\bw \cdot k)}_{\varepsilon,\dist}$ is a sum of at most $\bw \cdot k$ such descendant probabilities, each bounded by $\max_{(\_, \log p) \in \sL_t} \exp(\log p)$.
Therefore, if $\bw \cdot k \cdot \max_{(\_, \log p) \in \sL_t} \exp(\log p) < \taumin$, then $\mathrm{LB}^{(\bw \cdot k)}_{\varepsilon,\dist} < \taumin$ regardless of the distance metric $\dist$ or threshold $\varepsilon$.
This means that $k$-CBS does not identify mass that indicates the sequence is extractable at level $\taumin$.
(Since $k$-CBS is a greedy search, it may miss such mass, so this is not a guarantee, but nevertheless reflects what occurs in practice.) 
In this case, the algorithm returns $\sF = \emptyset$ (yielding $\mathrm{LB} = 0$) without completing the remaining $\suflen - t$ steps.

\subsection{Nucleus sampling constrained beam search ($p$-CBS)}\label{app:sec:other:nucleus}

Throughout, we discuss all of our algorithms in relation to top-$k$ decoding. 
We note that we could also incorporate temperature into our update rule (Appendix~\ref{app:sec:kcbs:notation}), but choose to focus on $\temp=1$.
This reflects the base distribution of the LLM $\model$, which we (and prior work, namely~\citet{hayes2025measuringmemorizationlanguagemodels} and \citet{cooper2025books}) believe is a useful regime to study for extraction and memorization. 
However, any probability-distribution-preserving scoring rule could also be incorporated into our constrained beam search algorithm, such as \newterm{nucleus (top-$p$) sampling}~\citep{holtzman2020curiouscaseneuraltext}.
(We could also implement pruning with this change in scoring rule, though we do not discuss this in detail and instead talk about the pruned variants of $k$-CBS that we run in Section~\ref{sec:pruning} and Appendix~\ref{app:sec:prune}.) 

This approach is biased in the same sense as our top-$k$ constrained algorithms (deterministic and correct, but downward biased lower bound). 
The scoring rule is changed to be valid under the truncated and renormalized nucleus chain. 
We refer to this algorithm as \newterm{nucleus-constrained beam search ($p$-CBS)}. 

\paragraph{Setup and notation.}
Fix a prefix $\pre$, a suffix length $\suflen \in \sN$, and an EOS policy identical to the baseline (remove from the candidate set prior to across-beam pruning at steps $1{:}\suflen-1$, allow at $t=\suflen$). 
Given a beam element with current history $\genseq$ (which includes the $\prelen$-length prefix) at step $t$, let $\logit_t(\genseq)\in\R^{|\vocab|}$ be the logits from $\model$ and let
\[
  p_t(u \mid \genseq) \;\coloneqq\; \softmaxsf(\logit)[u]
\]
denote the base next-token probability over $\vocab$. 
Ties are broken by a fixed deterministic rule throughout.
For consistency with our notation for top-$k$ sampling (Equation~\ref{eq:topk}), below we write $\Pr(u \mid \genseq)$ for the same base conditional, i.e., $\Pr(u \mid \genseq) \equiv p_t(u \mid \genseq)$;
we introduce this notation only to emphasize $t$. 

\paragraph{Nucleus (top-$p$) sampling.}
Given a threshold $p \in (0,1]$, nucleus sampling retains the smallest set of next tokens whose total base probability covers at least a $p$ fraction of the mass; 
$p$ acts as a coverage knob (small $p$ narrows support, $p=1$ leaves the support unchanged from the base distribution).

At step $t$ for a parent history $\genseq$, consider the ordering of the entire vocabulary by the base probabilities conditioned on $\genseq$.
Let $\sigma_t(\genseq)$ be a permutation of $\vocab$ that lists tokens in nonincreasing order of $\Pr(\cdot \mid \genseq)$, so $\sigma_t(\genseq)[1]$ is the highest-probability token given $\genseq$, $\sigma_t(\genseq)[2]$ is the second-highest, etc.
The permutation $\sigma_t(\genseq)$ is recomputed per parent and per step because the base probabilities $\Pr(\cdot\mid\genseq)$ depend on the parent sequence. 
Define the smallest rank index at step $t$ 
\[
  R_t(\genseq;p) \;\coloneqq\; 
  \min\!\Big\{ r \in \{1,\dots,|\vocab|\} \;:\; \sum_{i=1}^{r} \Pr\!\big(\sigma_t(\genseq)[i]\mid \genseq\big) \;\ge\; p \Big\}.
\]
This makes the \newterm{nucleus set} 
\begin{align}
  \sC_t(\genseq;p)
  \;\triangleq\;
  \big\{\, \sigma_t(\genseq)[i] \;:\; 1 \le i \le R_t(\genseq;p) \,\big\}
\end{align}
uniquely defined.
(We slightly overload the notation $\sC_t$ from Appendix~\ref{app:sec:kcbs:notation}, disambiguating here with $p$ as an argument.)
The corresponding per-step nucleus normalization constant is
\begin{align}
  Z_t(\genseq;p) \;\triangleq\; \sum_{u \in \sC_t(\genseq;p)} \Pr(u \mid \genseq).
\end{align}
Analogous to Equation~\ref{eq:topk}, define the per-step, renormalized nucleus probability by
\begin{align}
\label{eq:nucleus-pr}
  \Pr_{p}(u \mid \genseq)
  \;\coloneqq\;
  \begin{cases}
    \dfrac{\Pr(u \mid \genseq)}{Z_t(\genseq;p)}, & u \in \sC_t(\genseq;p),\\[0.6em]
    0, & u \notin \sC_t(\genseq;p).
  \end{cases}
\end{align}

\paragraph{Nucleus-constrained beam search.}
A baseline approach for $p$-CBS is similar to the baseline $k$-CBS (Algorithm~\ref{app:alg:beam-topk}), with two substitutions:
(i) replace the fixed top-$k$ rule by the dynamic nucleus rule $u \in \sC_t(\genseq;p)$; 
(ii) replace the scoring increment with $\log \Pr_{p}(u \mid \genseq)$.
We need 
\begin{itemize}[leftmargin=.5cm]
  \item \textbf{Nucleus mass parameter} $p\!\in\!(0,1]$ (defines $\sC_t(\genseq;p)$).
  \item \textbf{Beam cap} $\bw$ applied after pooling children at each step $t<\suflen$.
  Setting $\bw=\infty$ yields the variable-width beam (keep all children) at each step. 
\end{itemize} 

Then, at each step $t$ and for each beam prefix $\genseq$:
\begin{enumerate}[leftmargin=.5cm]
  \item \textbf{Child set.} 
  Expand $\genseq$ only to tokens $u\in\sC_t(\genseq;p)$.
  \item \textbf{Scoring rule.} 
  For a child $\genseq\Vert u$, update its cumulative $\log$-score by adding
  \begin{align}
    \log \Pr_{p}(u \mid \genseq)
    \;=\; \log \Pr(u \mid \genseq) \;-\; \log Z_t(\genseq;p).
  \end{align} 
  And so, a length-$t$ partial path has cumulative score $\sum_{s=1}^{t} \log \Pr_{p}(\gentok_s \mid \pre \Vert \subgensuf{1:s-1})$.
  \item \textbf{Pruning.} 
  Pool all children from the current beam and (aside from the same EOS policy as $k$-CBS) retain them according to one of the following rules:\looseness=-1
    \begin{itemize}[leftmargin=.5cm]
      \item \emph{Variable-width beam (full nucleus expansion).} 
      Keep \emph{all} children $\bigcup_{\genseq \in \text{beam}} \sC_t(\genseq;p)$; 
      in variable-width mode ($\bw=\infty$), the step-$t$ beam cardinality is 
      \[
        \sum_{\genseq\ \text{in the step-}(t-1)\ \text{beam}} \big|\,\sC_t(\genseq;p)\,\big|.
      \]
      (This can be quite expensive, depending on $|\sC_t(\genseq;p)|$; see discussion below about computational considerations.)
      \item \emph{Capped beam.} 
      Keep the \emph{top} $\bw$ children by cumulative $\log$-score, exactly as in Algorithm~\ref{app:alg:beam-topk}.
    \end{itemize}
\end{enumerate}

Within a fixed parent $\genseq$, the term $-\log Z_t(\genseq;p)$ is constant across its children, so that parent's local ranking can use $\log \Pr(u \mid \genseq)$.
When pooling children across \emph{different} parents, $Z_t(\genseq;p)$ varies with $\genseq$; subtracting $\log Z_t(\genseq;p)$ aligns scales so that global sorting reflects $\log \Pr_{p}(\cdot \mid \cdot)$.
(The same cross-parent adjustment appears in the $k$-CBS baseline, using its per-step top-$k$ normalizer.)\looseness=-1

\paragraph{Returned results and bias.}
Let $\sF \subseteq \vocab^{\suflen} \times \R$ be the set of continuations and their $\log$ probabilities returned by $p$-CBS.
We return all step-$\suflen$ children generated from the step-$(\suflen-1)$ beam:
\[
|\sF| \;=\; \sum_{\genseq\ \text{in the step-}(\suflen-1)\ \text{beam}} \big|\,\sC_{\suflen}(\genseq;p)\,\big|.
\]
The covered mass is
\begin{align}
  \mathrm{covered\_mass}_{p}(\sF)
  \;\coloneqq\;
  \sum_{(\gensuf,\, \log p) \in \sF}
  \prod_{t=1}^{\suflen} \Pr_{p}\!\big(\gentok_t \,\big|\, \pre\Vert\subgensuf{1:t-1}\big),
\end{align}
where $\subgensuf{1:0}$ is the empty token sequence. 
Because across-beam pruning discards feasible paths, the covered mass is a downward-biased (but correct) lower bound on the total mass under the nucleus filtering rule (directly analogous to the top-$k$ baseline).

\paragraph{Computational considerations (adaptive branching and mitigation).}
The size of $\sC_t(\genseq;p)$ depends on the shape of the base distribution $\Pr(\cdot \mid \genseq)$ at step $t$: 
when the distribution is sharp, few tokens are needed to reach total mass $p$; when it is flatter (many tokens with similar probability), $\sC_t(\genseq;p)$ can be very large, as opposed to the constant size in $k$-CBS.

\paragraph{Why we do not explore $p$-CBS further in this work.}
We defer this to future work, as we find that top-$k$ is sufficient for showing the value of our method.
This choice also aligns with the main configurations reported by prior work on probabilistic extraction~\citep{hayes2025measuringmemorizationlanguagemodels, cooper2025books}.

\paragraph{Extensions mirror the top-$k$ variants.}
Exactly as for the top-$k$ constrained beams, one can (i) bake in a near-verbatim distance budget at decode time (an $\varepsilon$-viable filtering rule), and/or (ii) add tail closure beyond the beam. 
We omit these details for brevity.

\vspace{-.1cm}
\section{$\varepsilon$-pruned $k$-CBS}\label{app:sec:prune}
\vspace{-.1cm}

The baseline $k$-CBS algorithm (Algorithm~\ref{app:alg:beam-topk}) is a good starting place, but we can potentially do better in terms of compute cost.
Rather than post-processing outputs to respect a chosen distance metric and budget $\varepsilon$, we can bake them into the search procedure and prune the beam as the algorithm evolves to only explore nodes that respect these settings.
This only involves additional bookkeeping; 
it is free, with respect to the dominating cost of forward passes.
In doing so, we can also possibly achieve a better upper bound (beyond the trivial one that baseline $k$-CBS algorithm provides) and a potentially better lower bound, as we can concentrate the search on $\varepsilon$-viable nodes only.
The trade-off is that we do not get to run one algorithm, and then post-process for whichever $\dist$ and $\varepsilon$ budget we might want to use;
we have to specify these up front.
But, an added benefit is that we will terminate the search procedure early (potentially very early) if no $\varepsilon$-viable continuations remain in the beam.

We describe these improvements below, first for the Hamming (Appendix~\ref{app:sec:prune:hamming}) and then for the Levenshtein distance (Appendix~\ref{app:sec:prune:lev}). 
The former is significantly simpler, as Hamming is monotone non-decreasing and the Levenshtein is not. 
While both methods require additional bookkeeping (and therefore slight additional overhead compared to the baseline algorithm), they also can both terminate early  if there are no continuations left in the beam that satisfy the chosen distance-based pruning rule. 
Overall, this can make them significantly cheaper to run than $k$-CBS, in terms of overall wall-clock time (Appendix~\ref{app:sec:intuition:cost}). 
We provide unified proofs of the invariants of both algorithms in Appendix~\ref{app:sec:prune:invariants}. 

\paragraph{Note.} 
This approach does not provably provide a stricter lower bound, because baking $\varepsilon$-viability into the pruning rule changes the candidate set of children that get expanded.
In particular, a child that was not part of the baseline $k$-CBS candidate set at a given iteration (because it surfaced due to non-$\varepsilon$-viable children ranked above it being pruned) may be retained because it is $\varepsilon$-viable;
that child may out-compete an $\varepsilon$-viable baseline $k$-CBS child at a later iteration, and then later get pruned if it is no longer viable---in which case neither child contributes to the final lower bound.
\looseness=-1 

\subsection{Hamming-$\varepsilon$-pruned $k$-CBS}\label{app:sec:prune:hamming}

This variant of $k$-CBS bakes a Hamming-distance $\varepsilon$-viability check into the search. 
That is, for prefix $\pre$, $\suflen$-length target suffix $\suf$, and possible $\suflen$-length continuations $\cont$, we seek bounds on
\begin{align}
\label{eq:pseqe:ham}
\pseqeham &\;\triangleq\; \sum_{\cont \in \hamball(\suf)} \Pr_{\model,\dec}(\cont\mid\pre),\\
\hamball(\suf) &\;\triangleq\;\big\{\cont\in\vocab^{\suflen}:\ham(\cont,\suf)\le\varepsilon\big\}.
\end{align}

At each step of the search, we keep a count for each partial path of the current Hamming distance.
Paths whose running Hamming counter would exceed the budget $\varepsilon$ are never expanded;
they are removed from the beam.
EOS handling is unchanged from Algorithm~\ref{app:alg:beam-topk}: candidates whose last token is EOS are recorded and removed from the beam, but do not contribute to the lower bound.
Only the mass of $\varepsilon$-viable paths removed by the across-beam prune is banked toward the upper bound (see below for details);
non-$\varepsilon$-viable paths are discarded by Hamming monotonicity, and EOS-terminated paths are discarded because they cannot produce $\suflen$-length continuations.
As a result, no returned final continuation lies outside the Hamming $\varepsilon$-ball, and beam search pruning capacity is never spent on paths that are not $\varepsilon$-viable under Hamming distance.
 
In a bit more detail, the Hamming distance is monotone non-decreasing:
once a mismatch occurs, it can never be undone.
A partial path whose Hamming distance exceeds $\varepsilon$ can never end up in the Hamming-distance $\varepsilon$-ball, $\hamball(\suf)$.
This means that we can prune that path immediately. 
We refer to this approach as our \newterm{Hamming-$\varepsilon$-pruned $k$-CBS algorithm} (Algorithm~\ref{app:alg:beam-topk-hamming}).
For this algorithm, unlike Algorithm~\ref{app:alg:beam-topk}, we do not wait until the end to run the final distance check on the returned sequences in $\sF$. 
Further, since we establish ahead of time that we want to use $\ham$ as the distance metric with a chosen  $\varepsilon$, continuing to expand such a node would not only be wasted work:
any descendant sequences that remain in the beam would potentially be taking spots for (lower probability) sequences that would otherwise be included in the beam and may be in $\hamball(\suf)$. 
Pruning such sequences once they are encountered allows for the next-highest-probability candidate sequences (if they exist) to be included in the beam $\sL$, so that we can expand them in the search instead.

Of course, pruning with this strategy is not as flexible as Algorithm~\ref{app:alg:beam-topk}, which can be used with any $\dist$ and choice of $\varepsilon$ as a post-processing operation.
But, it introduces additional benefits: 
namely, with some minimal bookkeeping, at no extra cost in token evaluations over Algorithm~\ref{app:alg:beam-topk}, this approach focuses effort on only viable nodes.
It can terminate early if no such nodes exist and as a result can also potentially produce tighter lower and upper bounds.
(However, this is not guaranteed.) 

In a bit more detail, we keep track of the running count of Hamming mismatches between the $t$-token partial continuation $\subgensuf{1:t}=(\subgensuftok{1},\ldots,\subgensuftok{t})$ and the target suffix $\suf=(\suftok{1},\ldots,\suftok{\suflen})$.
Define the running Hamming counter at depth $t$ by
\begin{align}
\label{eq:ham-counter}
\mismatch_t(\gensuf,\suf)\;\coloneqq\;\sum_{i=1}^{t}\1\big[\subgensuftok{i} \neq \suftok{i}\big].
\end{align}
A partial $\subgensuf{1:t}$ is $\varepsilon$-viable iff $\mismatch_t(\subgensuf{1:t},\subsuf{1:t})\le\varepsilon$.

That is, like in Algorithm~\ref{app:alg:beam-topk}, line 10, at step $t$,  we produce $t$-length candidate partial sequences $\genseq' \leftarrow \genseq{\Vert}\gentok$;
however, we only do so for the (at most $\bw \cdot k$) sequences for which the Hamming count is $\leq \varepsilon$, where each sequence is produced from appending each of the top-$k$ tokens $\gentok \in \sS_t(\genseq)$ to each of the $\bw$ partial sequences (of length $t-1$) in the beam. 
Put differently, we immediately prune paths for which the mismatch counter exceeds $\varepsilon$, and therefore do not include them in the candidates that get ranked.
As a result, the candidate list can be far smaller than $\bw \cdot k$. 
In fact, it is possible that no candidates remain---i.e., all $\bw \cdot k$ expanded candidates exceed $\varepsilon$---in which case we terminate the search procedure.
The resulting lower bound is $0$, and the upper bound is the banked mass we have computed so far (see below for more details). 
See Algorithm~\ref{app:alg:beam-topk-hamming} for more details.

\paragraph{A potentially improved lower bound.}
Because we prune every child whose running Hamming distance exceeds $\varepsilon$, every final sequence (if there are any)
in $\sF$ at $t=\suflen$ already satisfies $\ham(\gensuf,\suf)\le\varepsilon$.
(In contrast, Algorithm~\ref{app:alg:beam-topk} must filter outputs as a post-processing operation.)
Consequently, \emph{all} returned final sequences contribute to the lower bound on the near-verbatim mass:
\[
\mathrm{LB}_{\varepsilon,\hamshort}\;=\;\sum_{(\_,\log p)\in\sF}\exp(\log p).
\]
This can potentially exceed the baseline lower bound in Algorithm~\ref{app:alg:beam-topk}, because beam capacity is never spent on non-$\varepsilon$-viable paths during the search;
however, this is not guaranteed, and in some cases could be worse. 
(See note at the end of Section~\ref{app:sec:prune}.)

\paragraph{A potentially improved upper bound.}
Algorithm~\ref{app:alg:beam-topk} technically produces an upper bound on the probability for $\pseqe$, but it is often too loose to be useful (Equation~\ref{eq:baseline-ub}).
In contrast, the Hamming-pruned version discussed here provides a potentially tighter upper bound.
When the across-beam prune removes $\varepsilon$-viable sequences from the beam in a given iteration, we keep track of this information;
we bank the mass of these viable sequences, which are roots of subtrees that potentially contain viable paths that contribute mass to $\pseqe$, but which we do not explore further in the beam search procedure.
(This is why it is an upper bound; 
some unexplored leaves of that subtree would not be $\varepsilon$-viable if explored.)
We build up the upper bound additively from banked mass of $\varepsilon$-viable paths that leave the beam.

The key intuition for how this works depends on the fact that at a given step $t$, the beam (and the beam candidates that we prune) contains partial paths that are \emph{parents} of any possible \emph{descendant} sequences that the algorithm could produce at future steps (i.e., $\geq t$ and $\leq\suflen$, inclusive). 
For each parent, the possible child sequences conserve their probability mass;
the mass of a parent at $t$ gets perfectly redistributed among all of its possible children (Lemma~\ref{lem:frontier-mass}).\looseness=-1 

To see why, fix a partial continuation $\vu$ at depth $t<\suflen$.
Top-$k$ renormalization makes a distribution over the next-possible tokens 
$\sS_{t+1}(\pre\Vert\vu)$:
\[
\sum_{v\in\sS_{t+1}(\pre\Vert\vu)} \Pr_{\model,\dec}(v \;\big|\; \pre\Vert\vu)\;=\;1.
\]
By the chain rule,
\[
\Pr_{\model,\dec}(\vu\Vert v \;\big|\; \pre)
=\Pr_{\model,\dec}(\vu \;\big|\; \pre)\cdot \Pr_{\model,\dec}(v \;\big|\; \pre\Vert\vu).
\]
Summing over $v\in\sS_{t+1}(\pre\Vert\vu)$ yields the one-step conservation identity (see also Lemma~\ref{lem:frontier-mass}, Equation~\ref{eq:onestep}):
\begin{equation}
\label{eq:onestep-ham}
\sum_{v\in\sS_{t+1}(\pre\Vert \vu)} \Pr_{\model,\dec}(\vu\Vert v \;\big|\; \pre)
= \Pr_{\model,\dec}(\vu \;\big|\; \pre).
\end{equation}
Therefore, before any pruning occurs, the mass at a parent continuation $\vu$ is exactly redistributed across its immediate children. 
Iterating this identity level by level down a subtree shows that the total mass of all depth-$\suflen$ descendants of any partial $\vw$ equals $\Pr_{\model,\dec}(\vw\;\big|\;\pre)$. 
(In general, please refer to Lemma~\ref{lem:frontier-mass}.)

After pruning, we only keep some of the children.
Of course, this means the sum over the kept children is $\leq\Pr_{\model, \dec}(\vu \;\big|\; \pre)$;
the discarded remainder that contains $\varepsilon$-viable paths gets counted toward the upper bound.
Refer to Appendix~\ref{app:sec:prune:invariants:viable-finals} for more details.

\paragraph{How we use the UB bookkeeping.}
We keep a scalar accumulator \textit{bank} that stores the total probability mass of $\varepsilon$-viable candidates that were pruned by the intermediate across-beam prune to the top-$\bw$ paths when $t<\suflen$.
This accumulator is used in two different ways:
\begin{itemize}[leftmargin=0.5cm]
    \item \textbf{Early termination (no viable candidates before $\suflen$).}
    If at some depth $t<\suflen$ the viable set is empty, the algorithm stops; then
    \[
    \mathrm{LB}_{\varepsilon,\hamshort}\;=\;0,
    \qquad
    \mathrm{UB}_{\varepsilon,\hamshort}\;=\;\textit{bank}.
    \]

    \item \textbf{Final step ($t=\suflen$).}
    When we reach $\suflen$, every returned final sequence in $\sF$ is $\varepsilon$-viable, so
    \[
    \mathrm{LB}_{\varepsilon,\hamshort}\;=\;\sum_{(\_,\log p)\in\sF}\exp(\log p),
    \qquad
    \mathrm{UB}_{\varepsilon,\hamshort}\;=\;\mathrm{LB}_{\varepsilon,\hamshort}+\textit{bank}.
    \]
\end{itemize}

To see why $\mathrm{UB}_{\varepsilon,\hamshort}$ is a valid upper bound, observe that every depth-$\suflen$ node in the full top-$k$ tree falls into exactly one of the following categories:
\begin{enumerate}[leftmargin=0.5cm]
    \item \textbf{In $\sF$}: counted in $\mathrm{LB}_{\varepsilon,\hamshort}$.
    \item \textbf{Descendant of a banked node}: its mass is bounded by \textit{bank}, since the total mass of all depth-$\suflen$ descendants of a banked ancestor equals that ancestor's mass (Lemma~\ref{lem:frontier-mass}).
    \item \textbf{Descendant of a Hamming-pruned node}: non-$\varepsilon$-viable by Hamming monotonicity, so it cannot contribute to $\pseqeham$.
    \item \textbf{Descendant of an EOS-terminated node}: cannot produce a $\suflen$-length continuation and thus cannot contribute to $\pseqeham$.
\end{enumerate}
Since only categories (a) and (b) can contribute to $\pseqeham$, we have $\pseqeham \leq \mathrm{LB}_{\varepsilon,\hamshort} + \textit{bank} = \mathrm{UB}_{\varepsilon,\hamshort}$.

As with baseline $k$-CBS (Appendix~\ref{app:sec:kcbs:baseline}), the optional $\taumin$-based early termination applies here as well:
if $\max_{(\_, \log p, \_) \in \sL_t} \exp(\log p) < \taumin / (\bw \cdot k)$, then the lower bound for the extraction probability can never reach $\taumin$ regardless of Hamming $\varepsilon$-viability. 
(The same caveat applies: since $k$-CBS is a greedy search, it may miss mass, so this is not an absolute guarantee of non-extractability;
the lower bound is $0$.) 

\paragraph{Filtering the upper bound in practice.}
When a sequence terminates early---whether due to Hamming viability ($\sC^{\leq\varepsilon}_t = \emptyset$) or the $\taumin$ threshold---the upper bound $\mathrm{UB}_{\varepsilon,\hamshort} = \textit{bank}$ is a valid bound on $\pseqeham$, but we find that it is not informative in practice:
the banked mass from pruned $\varepsilon$-viable ancestors can be substantial even when no final $\varepsilon$-viable path exists in the final search results.
In our analysis, we therefore restrict attention to the upper bound only for sequences with $\mathrm{LB}_{\varepsilon,\hamshort} > 0$ (i.e., those for which the algorithm found at least some near-verbatim mass).
This filtering is a postprocessing step and does not affect the algorithm outputs.

\paragraph{Can return fewer than $\bw \cdot k$ suffixes.}
Because we prune any path at step $t$ with $\mismatch_t>\varepsilon$ \emph{before} ranking for the across-beam prune, the candidate set at depth $t$ can be strictly smaller than $\bw \cdot k$ (or even empty, if there are no $\varepsilon$-viable paths remaining). 
As noted above, if at some $t<\suflen$ no $\varepsilon$-viable candidates remain, we terminate; then
$\mathrm{LB}_{\varepsilon,\hamshort}^{(t)}=0$ (EOS-terminated paths do not contribute to the lower bound) and
$\mathrm{UB}_{\varepsilon,\hamshort}^{(t)}=\textit{bank}^{(t)}$ (the banked mass of the $\varepsilon$-viable prunes).
Because of this within-search pruning, the procedure often performs \emph{far fewer} token evaluations than $k$-CBS, which always expands $\bw$ parents into up to $\bw \cdot k$ candidates at each depth (except in degenerate cases of many EOS tokens).
(See discussion of token-evaluation cost comparisons in Appendix~\ref{app:sec:intuition:cost}.)

\paragraph{Tighter bounds at the cost of more forward passes.}
We could run a slight variation of this algorithm that, in addition to the minor bookkeeping described here, runs (potentially many) additional forward passes through the model to tighten the bounds.
The basic idea is that, when the across-beam prune to $\bw$ candidates prunes paths that have \emph{exactly} cost $\varepsilon$, those paths each have \emph{exactly} $1$ viable full suffix in the $\varepsilon$-ball:
the candidate path generated thus far at step $t$, plus the remaining $\suflen-t$ verbatim tokens in the target suffix.
We can gather these \newterm{tail-closed sequences}, and teacher force them at the end of our search algorithm in order to adjust the bounds to be tighter.
We can also do the same for the Levenshtein distance.
Given the cost of this approach, we do not pursue it in our experiments, especially since we observe useful estimates of the near-verbatim extraction probability without doing so. 

\clearpage

\begin{algorithm}[t]
\caption{Hamming-$\varepsilon$-pruned Top-$k$ Constrained Beam Search}\label{app:alg:beam-topk-hamming}
\KwIn{LLM $\model$; prefix $\pre$ of length $\prelen$; target suffix $\suf=(\suftok{1},\ldots,\suftok{\suflen}) \in \vocab^\suflen$; beam width $\bw$;
  top-$k$ parameter $k$ for decoding policy $\dec$ ($\bw \leq k^2$, $k \ll |\vocab|$);
  Hamming distance budget $\varepsilon \ll \suflen$;
  EOS token id; optional $\taumin > 0$}
\KwOut{Set $\sF$ of (at most $\bw \cdot k$) triples
  $(\genseq,\; \log p,\; \mismatch)$,
  where $\genseq = \pre \,\Vert\, \gensuf$ is a full history with length $\prelen + \suflen$, $\log p = \log \Pr_{\model,\dec}(\gensuf \mid \pre)$, and $\mismatch \leq \varepsilon$;
  lower bound $\mathrm{LB}_{\varepsilon,\hamshort}$; upper bound $\mathrm{UB}_{\varepsilon,\hamshort}$}
\BlankLine
\textbf{Notation.} Same as Algorithm~\ref{app:alg:beam-topk}, and $\suftok{t}$ denotes the $t$-th token of the target suffix $\suf$.
\BlankLine
\textbf{Beam state.}
Maintain $\sL_t$ as triples $(\genseq,\log p, \mismatch)$, where $\log p$ is the accumulated top-$k$ decoding $\log$-probability and $\mismatch$ is the running Hamming mismatch count 
$\textstyle
\sum_{i=1}^{t}\1\big[\gentok_i \neq \suftok{i}\big]
$ (Equation~\ref{eq:ham-counter}).
Max.\ beam capacity $|\sL_t|=\bw$.
\BlankLine
\textbf{Viability test (monotone in $t$).} 
$
\mismatch' \leftarrow \mismatch + \1\big[\gentok_t \ne \suftok{t}\big].
$
Keep child path only if $\mismatch'\le \varepsilon$; otherwise drop it permanently (Hamming mismatches cannot be undone).\looseness=-1
\BlankLine
Compute $\logit_1(\pre)$ via forward pass on $\pre$\tcp*{Prefill: $\prelen$ token evals}
$\sL_0 \gets \big\{(\pre,\; 0,\; 0)\big\}$\tcp*{Beam: triples $(\genseq,\; \log p,\; \mismatch)$}
$\textit{bank} \gets 0$\tcp*{UB mass accumulator}
\BlankLine
\For{$t = 1, \ldots, \suflen$}{
  $\sC^{\leq\varepsilon}_t \gets \emptyset$\tcp*{$\varepsilon$-viable candidate set for step $t$}
  \ForEach{$(\genseq,\; \log p,\; \mismatch) \in \sL_{t-1}$}{
    $\sS_t(\genseq) \gets \TopK_k\big(\logit_t(\genseq)\big)$\;
    $\vr_t(\genseq) \gets \LogSoftmax\big(\logit_t(\genseq)\big)$\;
    $Z_t(\genseq) \gets \LogSumExp\big(\vr_t(\genseq)[\sS_t(\genseq)]\big)$\;
    \ForEach{$\gentok \in \sS_t(\genseq)$}{
      $\mismatch' \leftarrow \mismatch + \1[\gentok \neq \suftok{t}]$ \tcp*{Update Hamming counter}
      \If(\tcp*[f]{$\varepsilon$-viable: keep}){$\mismatch' \leq \varepsilon$}{
        $\genseq' \leftarrow \genseq \,\Vert\, \gentok$ \tcp*{Append token to partial history}
        $\log p' \leftarrow \log p + \vr_t(\genseq)[\gentok] - Z_t(\genseq)$ \tcp*{Update continuation $\log$ prob}
        $\sC^{\leq\varepsilon}_t \gets \sC^{\leq\varepsilon}_t \cup
          \big\{\!\big(\genseq',\;\;
          \log p',\;\; \mismatch' \big)\!\big\}$\;
      }
      \tcp{If $\mismatch' > \varepsilon$: discard (non-$\varepsilon$-viable by Hamming monotonicity)}
    }
  }
  \BlankLine
  \If(\tcp*[f]{Final step: return all $\varepsilon$-viable candidates}){$t = \suflen$}{
    $\sF \gets \sC^{\leq\varepsilon}_\suflen$\;
    $\mathrm{LB}_{\varepsilon,\hamshort} \gets \sum_{(\_,\,\log p,\,\_)\in\sF}\exp(\log p)$\;
    $\mathrm{UB}_{\varepsilon,\hamshort} \gets \mathrm{LB}_{\varepsilon,\hamshort} + \textit{bank}$\;
    \Return $\sF,\; \mathrm{LB}_{\varepsilon,\hamshort},\; \mathrm{UB}_{\varepsilon,\hamshort}$\;
  }
  \BlankLine
  \tcp{Non-final step ($t < \suflen$): EOS handling, early termination, beam pruning}
  \ForEach{$(\genseq',\; \log p',\; \mismatch') \in \sC^{\leq\varepsilon}_t$ with latest $\gentok = \textup{EOS}$}{
    Record $(\genseq',\; \log p',\; t)$ as early-termination path;
      remove from $\sC^{\leq\varepsilon}_t$\;
  }
  \BlankLine
  \If(\tcp*[f]{Early termination: no $\varepsilon$-viable candidates remain}){$\sC^{\leq\varepsilon}_t = \emptyset$}{
    \Return $\sF \gets \emptyset,\; \mathrm{LB}_{\varepsilon,\hamshort} \gets 0,\; \mathrm{UB}_{\varepsilon,\hamshort} \gets \textit{bank}$\;
  }
  \BlankLine
  $\sU_t \gets$ top-$\bw$ elements of $\sC^{\leq\varepsilon}_t$ ranked by $\log p'$\;
  $\textit{bank} \gets \textit{bank} + \sum_{(\_,\,\log p',\,\_)\in\sC^{\leq\varepsilon}_t\setminus\sU_t}\exp(\log p')$\tcp*{Bank pruned $\varepsilon$-viable mass}
  $\sL_t \gets \sU_t$ \tcp*{Prune to beam width}
  \If{$\taumin$ and $\max_{(\_,\,\log p,\,\_) \in \sL_t} \exp(\log p) < \taumin / (\bw \cdot k)$}{
    \Return $\sF \gets \emptyset,\; \mathrm{LB}_{\varepsilon,\hamshort} \gets 0,\; \mathrm{UB}_{\varepsilon,\hamshort} \gets \textit{bank}$\;
  }
  Compute $\logit_{t+1}(\genseq)$ for each $(\genseq, \cdot, \cdot) \in \sL_t$\tcp*{$|\sL_t|$ token evals}
}
\end{algorithm}
\clearpage

\subsection{Levenshtein-$\varepsilon$-pruned $k$-CBS}\label{app:sec:prune:lev}

We also design a $k$-CBS variant that uses the Levenshtein distance as its viability test.
Fix the prefix $\pre$ and the target suffix $\suf=(\suftok{1},\ldots,\suftok{\suflen})$ of length $\suflen$.
For possible $\suflen$-length continuations $\cont$ under decoding policy $\dec$ (again, we use top-$k$), we seek tight bounds on
\begin{align}
\label{eq:pseqe:lev}
\pseqelev &\;\triangleq\; \sum_{\cont \in \levball(\suf)} \Pr_{\model,\dec}(\cont\mid\pre),\nonumber\\
\levball(\suf) &\;\triangleq\; \big\{\cont\in\vocab^{\suflen}:\lev(\cont,\suf)\le\varepsilon\big\}.
\end{align}
For equal-length sequences, $\lev\le \ham$, so Levenshtein-pruned beam search can be (strictly) less conservative than Hamming pruning (Appendix~\ref{app:sec:nv:distance}, Theorem~\ref{thm:epsballs}).

\paragraph{Why alignment state is needed (differs from Hamming).}
Unlike Hamming viability (Section~\ref{sec:pruning} \& Appendix~\ref{app:sec:prune:hamming}), Levenshtein $\varepsilon$-viability is \emph{not} monotone because insertions and deletions can repair alignment later, when more tokens are added to the partial path:
a partial path that currently ``looks bad'' by positional mismatches might still finish within $\varepsilon$ edits after realignment---i.e., the minimum achievable Levenshtein distance for the completed continuation can decrease at later steps $t$.

Therefore, a running mismatch counter---sufficient for $\ham$-based pruning---is insufficient for $\lev$-based pruning. 
Instead, we must maintain \newterm{alignment state} that encodes the best attainable edit cost observed so far, as we generate (i.e., \newterm{stream}) new tokens, one at a time, for candidate paths. 
For this, \newterm{dynamic programming}\footnote{We will never shorthand dynamic programming as ``DP'' because in ML privacy/security this refers to \newterm{differential privacy}.\looseness=-1
} 
can help us keep track of state in an efficient way.
We carry forward alignment state across search iterations, not just a counter of mismatched positions, in order to inform a safe $\lev$-pruning rule.

In the following subsections, we discuss the intuition. 
We provide background on the Wagner-Fischer dynamic program solution for the Levenshtein distance (Appendix~\ref{app:sec:prune:lev:wf}), discuss how the Ukkonen's band refinement of this program applies to our $\varepsilon$ setting (Appendix~\ref{app:sec:prune:lev:ukkonen}), and explain how the pieces all fit together for a $\lev$-pruned version of $k$-CBS (Appendix~\ref{app:sec:prune:lev:alg-details}). 
We provide a complete algorithm statement in Algorithm~\ref{app:alg:beam-topk-lev}.\looseness=-1 

\subsubsection{Wagner-Fischer dynamic program}\label{app:sec:prune:lev:wf}

We first need to provide some background on the dynamic program used to solve the Levenshtein distance.
We will also include a simple worked example to give an intuition.

Let $\subgensuf{1:i}=(\subgensuftok{1},\ldots,\subgensuftok{i})$ be the generated partial path at depth $i$ (i.e., does not include $\pre$), and
$\subsuf{1:j}=(\suftok{1},\ldots,\suftok{j})$ be the $j$-length prefix of the target $\suf$.
Define the table\looseness=-1
\begin{equation}
\label{eq:lev-dp-table}
\dptab[i,j]\;\triangleq\; \lev\big(\subgensuf{1:i},\,\subsuf{1:j}\big),
\qquad
i,j\in\{0,\ldots,\suflen\}.
\end{equation}
(Note that $D$ has dimensions $(\suflen + 1) \times (\suflen + 1)$.) 
The row index $i$ is the number of generated tokens, corresponding to $\subgensuf{1:i}$.
The column index $j$ is the number of target tokens, corresponding to the prefix $\subsuf{1:j}$ of the suffix $\suf$ that we align to.
Therefore, the cell $\dptab[i,j]$ is the minimum number of edits needed to turn the current $i$-length prefix of the (ultimately) $\suflen$-length generated continuation $\subgensuf{1:i}$ into the first $j$ tokens of the target suffix $\subsuf{1:j}$.

This table is an \newterm{edit graph}, a $(\suflen\!+\!1)\times(\suflen\!+\!1)$ grid in which each interior cell $\dptab[i,j]$ (with $i,j\ge 1$) can be reached from three predecessors, corresponding to three edit operations:

\begin{itemize}[leftmargin=0.5cm]
    \item move $\downarrow$ (from $\dptab[i\!-\!1,j]$ to $\dptab[i,j]$) = \textbf{delete} $\subgensuftok{i}$ (cost $1$):
    the row advances but the column stays---the new generated token is not aligned to any target token, so it is deleted.
    This reduces to the subproblem $\subgensuf{1:i-1}\!\to\!\subsuf{1:j}$, plus $+1$ for the deletion.
    
    \item move $\rightarrow$ (from $\dptab[i,j\!-\!1]$ to $\dptab[i,j]$) = \textbf{insert} $\suftok{j}$ (cost $1$):
    the column advances but the row stays---target token $\suftok{j}$ is not aligned to any generated token, so it is inserted.
    This reduces to the subproblem $\subgensuf{1:i}\!\to\!\subsuf{1:j-1}$, plus $+1$ for the insertion.
    
    \item move $\searrow$ (from $\dptab[i\!-\!1,j\!-\!1]$ to $\dptab[i,j]$) = \textbf{match/substitute} (cost $0$ if $\subgensuftok{i}\!=\!\suftok{j}$, else $1$):
    both indices advance. 
    This reduces to the subproblem $\subgensuf{1:i-1}\!\to\!\subsuf{1:j-1}$, plus the match/substitute cost.
\end{itemize}

\paragraph{Boundary conditions.}
The boundaries of the table are
\begin{align}
\label{eq:wf:dp:boundary}
\dptab[0,0]&=0,\nonumber\\ 
\dptab[i,0]&=i\ \ (\text{delete all $\subgensuftok{1},\ldots,\subgensuftok{i}$}),\nonumber\\ 
\dptab[0,j]&=j\ \ (\text{insert all $\suftok{1},\ldots,\suftok{j}$}).
\end{align}
Any optimal edit script for $\subgensuf{1:i} \to \subsuf{1:j}$ must end with one of these three operations, each building on the optimal cost of the corresponding smaller subproblem. 
This gives the standard recurrence:  
\begin{equation}
\label{eq:wf:dp:recurrence}
\dptab[i,j] \;=\; \min\{
\underbrace{\dptab[i-1,j]+1}_{\text{delete }\subgensuftok{i}},\
\underbrace{\dptab[i, j-1]+1}_{\text{insert }\suftok{j}},\
\underbrace{\dptab[i-1, j-1] + \1[\subgensuftok{i}\neq \suftok{j}]}_{\text{match / substitute}}\}.
\end{equation}
Each term takes the optimal cost for the smaller subproblem, then pays for the last operation; 
the underbraces correspond to the three edit-graph moves above.

\paragraph{How to read a row in $\dptab$ and why the minimum over $j$ matters.}
A row $i$ summarizes all the ways to align the length-$i$ partial path candidate sequence to the different possible prefixes of the target suffix.
That is, at decoding depth $i\!=\!t$, the entire row $\dptab[t, 0], \dptab[t, 1], \ldots , \dptab[t, \suflen]$ summarizes all possible alignments of the current $t$-length partial path to every length-$j$ prefix of the target suffix $\suf$.
The row minimum $\min_j \dptab[t,j]$ answers: ``what is the best we can possibly do right now (for cost) if later insertions/deletions are allowed to realign before reaching length $\suflen$?''
If that best cost already exceeds $\varepsilon$, then \emph{every} full alignment path from $(0,0)$ to $(\suflen,\suflen)$---which must pass through some column $j^\star$ on row $t$---incurs partial cost $>\varepsilon$ at row $t$ and cannot recover (because remaining edit costs are non-negative).
Therefore, no continuation can end within $\varepsilon$.

\paragraph{A fully worked $2{\times}3$ example (token-by-token).}
Let $\suf=(a,b,c)$ and $\subgensuf{1:2}=(b,c)$.
We compute the entire table using the boundary conditions (Equation~\ref{eq:wf:dp:boundary}) and recurrence (Equation~\ref{eq:wf:dp:recurrence}).
See Figure~\ref{fig:ex1} for a summary.

\begin{itemize}[leftmargin=0.5cm]
\item \textbf{Row $i=1$ ($\subgensuftok{1} = b$)}:
    \begin{itemize}[leftmargin=0.5cm]
        \item $\dptab[1,1]=\min\{\dptab[0,1]\!+\!1,\; \dptab[1,0]\!+\!1,\; \textcolor{teal}{\dptab[0,0]\!+\!\1[b{\neq}a]}\}=\min\{2,\;2,\;\textcolor{teal}{1}\}=1$
        \quad (\textcolor{teal}{substitute})
        \item $\dptab[1,2]=\min\{\dptab[0,2]\!+\!1,\; \dptab[1,1]\!+\!1,\; \textcolor{teal}{\dptab[0,1]\!+\!\1[b\!=\!b]}\}=\min\{3,\;2,\;\textcolor{teal}{1}\}=1$
        \quad (\textcolor{teal}{match})
        \item $\dptab[1,3]=\min\{\dptab[0,3]\!+\!1,\; \textcolor{violet}{\dptab[1,2]\!+\!1},\; \dptab[0,2]\!+\!\1[b{\neq}c]\}=\min\{4,\;\textcolor{violet}{2},\;3\}=2$
        \quad (\textcolor{violet}{insert})
    \end{itemize}
\item \textbf{Row $i=2$ ($\subgensuftok{1}\!=\!b,\; \subgensuftok{2} = c$)}:
    \begin{itemize}[leftmargin=0.5cm]
        \item $\dptab[2,1]=\min\{\textcolor{red}{\dptab[1,1]\!+\!1},\; \dptab[2,0]\!+\!1,\; \textcolor{teal}{\dptab[1,0]\!+\!\1[c{\neq}a]}\}=\min\{\textcolor{red}{2},\;3,\;\textcolor{teal}{2}\}=2$
        \quad (\textcolor{red}{delete}/\textcolor{teal}{substitute} tie)
        \item $\dptab[2,2]=\min\{\textcolor{red}{\dptab[1,2]\!+\!1},\; \dptab[2,1]\!+\!1,\; \textcolor{teal}{\dptab[1,1]\!+\!\1[c{\neq}b]}\}=\min\{\textcolor{red}{2},\;3,\;\textcolor{teal}{2}\}=2$
        \quad (\textcolor{red}{delete}/\textcolor{teal}{substitute} tie)
        \item $\dptab[2,3]=\min\{\dptab[1,3]\!+\!1,\; \dptab[2,2]\!+\!1,\; \textcolor{teal}{\dptab[1,2]\!+\!\1[c\!=\!c]}\}=\min\{3,\;3,\;\textcolor{teal}{1}\}=1$
        \quad (\textcolor{teal}{match})
    \end{itemize}
\end{itemize}

\noindent The last row is $[2,2,2,1]$, so $\min_j \dptab[2,j]=1$ at $j\!=\!3$: ``insert $a$ at the front (cost $1$), then match $b$, then match $c$.''
Note that the diagonal value $\dptab[2,2]=2$ is \emph{not} the best cost---showing why we must minimize over all $j$.

\begin{figure}[t]
\centering
{\footnotesize
\begin{tabular}{l|llll}
$\dptab[i,j]$ & $j\!=\!0$ ($\emptyset$) & $j\!=\!1$ ($a$) & $j\!=\!2$ ($ab$) & $j\!=\!3$ ($abc$)\\\hline
$i\!=\!0$ ($\emptyset$) & $0$ & $1$ & $2$ & $3$ \\[2pt]
$i\!=\!1$ ($b$) & $1$ & $1$\;\textcolor{teal}{$\searrow$} & $1$\;\textcolor{teal}{$\searrow$} & $2$\;\textcolor{violet}{$\rightarrow$} \\[2pt]
$i\!=\!2$ ($bc$) & $2$ & $2$\;\textcolor{red}{$\downarrow$}\!\textcolor{teal}{$\searrow$} & $2$\;\textcolor{red}{$\downarrow$}\!\textcolor{teal}{$\searrow$} & $1$\;\textcolor{teal}{$\searrow$}
\end{tabular}
}%
\caption{\textbf{Fully worked example.} 
Wagner-Fischer table for $\suf=(a,b,c)$ and $\subgensuf{1:2}=(b,c)$.
Colored arrows show the winning operation at each cell:
\textcolor{teal}{$\searrow$}~match/substitute,
\textcolor{red}{$\downarrow$}~delete,
\textcolor{violet}{$\rightarrow$}~insert.
The last row is $[2,2,2,1]$ with $\min_j \dptab[2,j]=1$ at $j\!=\!3$;
the diagonal $\dptab[2,2]\!=\!2$ is not the minimum, illustrating why we minimize over all $j$.}
\label{fig:ex1}
\end{figure}

\paragraph{A streaming example (row minima can go down, then exceed a threshold).}
Let $\suf=(a,b,c,d)$ and consider the stream
$\subgensuf{1:1}=(b)$, $\subgensuf{1:2}=(b,c)$, $\subgensuf{1:3}=(b,c,a)$, $\subgensuf{1:4}=(b,c,a,d)$.
The row minima evolve as
\[
\min_j \dptab[1,j]=1,\quad
\min_j \dptab[2,j]=1,\quad
\min_j \dptab[3,j]=2,\quad
\min_j \dptab[4,j]=2.
\]
At $i\!=\!t\!=\!3$, inserting $a$ breaks the earlier ``insert-$a$-then-match'' alignment and the best partial cost becomes $2$;
at $i\!=\!t\!=\!4$ the best cost remains $2$ via a match on $d$.
Therefore, with $\varepsilon\!=\!1$, we would prune at $i\!=\!t\!=\!3$;
in contrast, with $\varepsilon\!=\!2$, the path remains viable through $i\!=\!t\!=\!4$ (and ends at distance $2$).
See Figure~\ref{fig:ex2}.

\begin{figure}[t]
\centering
{\footnotesize
\begin{tabular}{l|lllllc}
$\dptab[i,j]$ & $j\!=\!0$ ($\emptyset$) & $j\!=\!1$ ($a$) & $j\!=\!2$ ($ab$) & $j\!=\!3$ ($abc$) & $j\!=\!4$ ($abcd$) & \\\hline
$i\!=\!0$ ($\emptyset$) & $0$ & $1$ & $2$ & $3$ & $4$ \\[2pt]
$i\!=\!1$ ($b$) & $1$ & $1$\;\textcolor{teal}{$\searrow$} & $1$\;\textcolor{teal}{$\searrow$} & $2$\;\textcolor{violet}{$\rightarrow$} & $3$\;\textcolor{violet}{$\rightarrow$}
& $\min\!=\!1$ \\[2pt]
$i\!=\!2$ ($bc$) & $2$ & $2$\;\textcolor{red}{$\downarrow$}\!\textcolor{teal}{$\searrow$} & $2$\;\textcolor{red}{$\downarrow$}\!\textcolor{teal}{$\searrow$} & $1$\;\textcolor{teal}{$\searrow$} & $2$\;\textcolor{violet}{$\rightarrow$}
& $\min\!=\!1$ \\[2pt]
$i\!=\!3$ ($bca$) & $3$ & $2$\;\textcolor{teal}{$\searrow$} & $3$\;\textcolor{red}{$\downarrow$}\!\textcolor{violet}{$\rightarrow$}\!\textcolor{teal}{$\searrow$} & $2$\;\textcolor{red}{$\downarrow$} & $2$\;\textcolor{teal}{$\searrow$}
& $\min\!=\!2$ \\[2pt]
$i\!=\!4$ ($bcad$) & $4$ & $3$\;\textcolor{red}{$\downarrow$} & $3$\;\textcolor{teal}{$\searrow$} & $3$\;\textcolor{red}{$\downarrow$} & $2$\;\textcolor{teal}{$\searrow$}
& $\min\!=\!2$
\end{tabular}
}%
\caption{\textbf{Streaming example.} 
$\suf=(a,b,c,d)$ and $\subgensuf{1:4}=(b,c,a,d)$.
Row minima (shown at right) evolve as $1,1,2,2$.
With $\varepsilon\!=\!1$, pruning triggers at $t\!=\!3$;
with $\varepsilon\!=\!2$, the path remains viable and ends at $\dptab[4,4]\!=\!2$.
Colors as in Figure~\ref{fig:ex1}.}
\label{fig:ex2}
\end{figure}

\paragraph{Two facts we use later.}
First, for any $i,j$:
\begin{equation}
\label{eq:length-gap-lb}
\dptab[i,j]\ \ge\ |i-j|,
\end{equation}
i.e., turning a length-$i$ string into a length-$j$ string requires at least $|i\!-\!j|$ insertions or deletions. 
Second, any full alignment path from $(0,0)$ to $(\suflen,\suflen)$ must pass through some column $j^\star$ on each row $i$, and $\dptab[i,j^\star]$ is the minimum partial edit cost to reach $(i,j^\star)$.
Any completion from $(i,j^\star)$ to $(\suflen,\suflen)$ adds a non-negative number of edits, so $\dptab[i,j^\star]$ lower-bounds the total cost of any complete alignment path through $(i,j^\star)$.
This gives our pruning rule: if $\min_j \dptab[i,j] > \varepsilon$, then no extension of that partial path can achieve total cost $\leq \varepsilon$, and we can safely prune it.\looseness=-1

\paragraph{Bookkeeping complexity.}
Maintaining the full (unbanded) row for each partial path requires $O(\suflen)$ time and $O(\suflen)$ space per decoding step (only the current and previous rows are needed).
In Appendix~\ref{app:sec:prune:lev:ukkonen} we reduce both to $O(\varepsilon)$ per step per partial path via banding, which is the regime we use in practice.

\subsubsection{Ukkonen's band and a viable pruning rule}\label{app:sec:prune:lev:ukkonen}

For our purposes, we only need to retain partial paths for which $\dptab[i,j] \leq \varepsilon$ could hold.
By \Eqref{eq:length-gap-lb}, $\dptab[i,j] \geq |i\!-\!j|$, so any cell with $|i\!-\!j| > \varepsilon$ satisfies $\dptab[i,j] > \varepsilon$ and cannot contribute to the row minimum that we use to determine $\varepsilon$-viability pruning.
Concretely, $\dptab[i,j] \leq \varepsilon$ requires $|i\!-\!j| \leq \varepsilon$, i.e., $i\!-\!\varepsilon \leq j \leq i\!+\!\varepsilon$.
Clamping to valid column indices $j \in \{0,\ldots,\suflen\}$, at row $i$ it suffices to maintain only the \newterm{band}
\begin{equation}
\label{eq:ukkonen-band}
j \in [\,j_{\min}(i),\,j_{\max}(i)\,]
\;\equiv\;
[\,\max\{0,\,i-\varepsilon\},\ \min\{\suflen,\,i+\varepsilon\}\,],
\end{equation}
a diagonal strip centered on the main diagonal $j{=}i$.
The unclamped interval $[i\!-\!\varepsilon,\, i\!+\!\varepsilon]$ has width $2\varepsilon\!+\!1$; the band is its intersection with $[0, \suflen]$, so it contains at most $2\varepsilon\!+\!1$ entries for every row $i$.
We treat out-of-band entries as $+\infty$ (never chosen by the $\min$ in the recurrence), reducing the per-row cost from $O(\suflen)$ to $O(\varepsilon)$---a strict improvement when $\varepsilon \ll \suflen$, which is the regime of our near-verbatim extraction setting.
This is exactly the restriction used in \newterm{Ukkonen's thresholded dynamic program}~\citep{ukkonen, navarro2001string}.

The banded pruning rule for a partial path at decoding depth $i$ is then:
\begin{align}
\label{eq:ukkonen-test}
\min_{\,j\in [j_{\min}(i),\,j_{\max}(i)]}\ \dptab[i,j]\;>\;\varepsilon
\quad\implies\quad \text{prune (no viable completion exists).}
\end{align}
This is sound because the banded test implies the \emph{full} row minimum exceeds $\varepsilon$:
every in-band column satisfies $\dptab[i,j] > \varepsilon$ (checked directly by the test),
and every out-of-band column satisfies $\dptab[i,j] \geq |i\!-\!j| > \varepsilon$ (by \Eqref{eq:length-gap-lb}).
Since $\min_j \dptab[i,j] > \varepsilon$ over all columns, no completion of this partial path can achieve total cost $\leq \varepsilon$, by the pruning argument in Appendix~\ref{app:sec:prune:lev:wf} (Lemma~\ref{lem:ukkonen-prune-and-safety}).

\paragraph{Streaming, banded updates (how we maintain state).}
For each active beam item at depth $t$, we store the banded row $\dptab[t,j]$ for $j \in [j_{\min}(t),\, j_{\max}(t)]$ (Equation~\ref{eq:ukkonen-band}), initialized at $t{=}0$ with $\dptab[0,j]=j$ for $j \in [0,\, \min\{\suflen, \varepsilon\}]$.
This single row is sufficient to compute the next: the recurrence (Equation~\ref{eq:wf:dp:recurrence}) for $\dptab[t{+}1,j]$ reads only from row $t$ (the $\textcolor{red}{\downarrow}$~delete and $\textcolor{teal}{\searrow}$~match/substitute predecessors) and from earlier entries of row $t{+}1$ (the $\textcolor{violet}{\rightarrow}$~insert predecessor, already filled left to right).
In practice, we pre-allocate two buffers of size $2\varepsilon{+}1$ and alternate between them, avoiding per-step allocation.
When we append a token $\subgensuftok{t+1}$:
\begin{enumerate}[leftmargin=0.5cm]
\item \textbf{Compute} the next band limits $[j_{\min}(t\!+\!1),\,j_{\max}(t\!+\!1)]$ via \Eqref{eq:ukkonen-band}.

\item \textbf{Fill} $\dptab[t\!+\!1,j]$ for $j = j_{\min}(t\!+\!1),\ldots,j_{\max}(t\!+\!1)$ (left to right) using Equation~\ref{eq:wf:dp:recurrence}, treating any out-of-band predecessor as $+\infty$.
This keeps the recurrence exactly consistent with Wagner-Fischer.

\begin{itemize}[leftmargin=0.5cm]
    \item[-] For instance, when $j{=}j_{\min}(t\!+\!1){=}0$, the $\textcolor{violet}{\rightarrow}$~insert predecessor $\dptab[t\!+\!1,{-}1]$ and the $\textcolor{teal}{\searrow}$~match/substitute predecessor $\dptab[t,{-}1]$ are out-of-band ($+\infty$), so only the $\textcolor{red}{\downarrow}$~delete predecessor remains: $\dptab[t\!+\!1,0]=\dptab[t,0]+1$.

    \item[-] The left-to-right fill order ensures that $\dptab[t\!+\!1,j\!-\!1]$ has already been computed when needed for the $\textcolor{violet}{\rightarrow}$~insert term.
\end{itemize}

\item \textbf{Prune} this partial path immediately if $\min_{j\in [j_{\min}(t\!+\!1),\,j_{\max}(t\!+\!1)]}\dptab[t\!+\!1,j]>\varepsilon$ (Lemma~\ref{lem:ukkonen-prune-and-safety});
otherwise, carry its banded row forward as the new state.
\end{enumerate}
This keeps a beam item's per-child update cost and memory both $O(\varepsilon)$---the same order of bookkeeping overhead as the $\ham$-pruned variant (which maintains a single mismatch counter per beam item).
Note that $\lev$-pruning is less conservative than $\ham$-pruning, so fewer partial paths may be pruned, potentially requiring more forward passes; 
this is a difference in pruning rate, not in per-step overhead.
We discuss the cost comparison further in the next subsection.

\paragraph{A note on further optimization.}
The per-step bookkeeping cost could be reduced further using Myers' bit-vector algorithm~\citep{myers1999bit, navarro2001string}, which encodes the banded Wagner-Fischer recurrence with a handful of machine-word operations per step.
We do not implement this, as the $O(\varepsilon)$ bookkeeping overhead is already negligible relative to the cost of a model forward pass at each decoding step.
\subsubsection{Integrating the Levenshtein-pruning dynamic program into $k$-CBS}\label{app:sec:prune:lev:alg-details}

We incorporate this bookkeeping without additional model calls to form our Levenshtein-pruned $k$-CBS variant.
We use $t$ to index into the row of the dynamic programming table (instead of $i$) to align with the generation step. 

\begin{itemize}[leftmargin=0.25in]
\item \textbf{Per-partial sequence state.} 
Maintain triples $(\genseq,\log p,\mathrm{aux})$ with $\mathrm{aux}=\big(\dptab[t,\cdot],j_{\min}(t),j_{\max}(t)\big)$, i.e., the step-$t$ dynamic programming table row for the step-$t$ Ukkonen band (Equation~\ref{eq:ukkonen-band}).

\item \textbf{EOS policy.}
If an EOS is the latest generated token in a beam candidate at step $t < \suflen$, we discard the path, i.e., do not include it in the next beam.
(At $\suflen$, we allow the last token to be EOS.) 
This enforces that all returned continuations have fixed length $\suflen$.

\item \textbf{Child sequence $\genseq'$ formation and $\varepsilon$-viability.} 
Just as with the Hamming variant (Algorithm~\ref{app:alg:beam-topk-hamming}), all probability bookkeeping ($\log p$, $Z_t$, etc.) matches our baseline $k$-CBS algorithm (Algorithm~\ref{app:alg:beam-topk}).
For history (including the prefix) $\genseq$, for each $\gentok\in\sS_{t+1}(\genseq)$, append $\gentok$ (i.e., $\genseq' = \genseq \Vert \gentok$), update the banded row in $O(\varepsilon)$, and keep the child if and only if the updated band minimum $\le\varepsilon$ (i.e., $\min_{j\in[j_{\min}(t+1),j_{\max}(t+1)]}D[t{+}1,j]\le\varepsilon$) (Lemma~\ref{lem:ukkonen-prune-and-safety}).
This is the streamed Ukkonen viability condition, adapted to our fixed-length-$\suflen$ target and per-row band.

\item \textbf{Across-beam prune.} Among $\varepsilon$-viable children, sort by $\log p'$ and keep up to $\bw$ in the beam, as usual.
(Non-$\varepsilon$-viable partial paths are discarded before ranking.)

\item \textbf{Intermediate UB bookkeeping.}
For $t\!<\!\suflen$ we bank only across-beam pruned $\varepsilon$-viable partial paths.
That is, as with the Hamming-pruned variant (Appendix~\ref{app:sec:prune:hamming}), we can bank the mass of pruned but still $\varepsilon$-viable nodes (band minimum $\le\varepsilon$ at prune time, but $\log p$ did not sort this path into the top-$\bw$) for a potentially tighter upper bound; 
nodes pruned after they become non-$\varepsilon$-viable (band minimum $>\varepsilon$) contribute~$0$ to this banked mass.\looseness=-1

\item \textbf{Final acceptance test.}
At $t\!=\!\suflen$, accept a continuation if and only if $\dptab[\suflen,\suflen]\le\varepsilon$;
that way, every continuation in the final set $\sF$ has $\lev\le\varepsilon$ by construction.
Unlike the $\ham$-pruned version, we need an explicit final acceptance check.
The $\varepsilon$-viability test used during search asks whether the partial path is within $\varepsilon$ edits of \emph{some} prefix of the target (the minimum over $j$), which is the correct criterion for safe pruning.
But a $\suflen$-length continuation can pass that test---by aligning well to a shorter prefix ($j < \suflen$)---while still having $\dptab[\suflen,\suflen] > \varepsilon$, i.e., exceeding $\varepsilon$ edits against the full target.
The explicit check $\dptab[\suflen,\suflen] \leq \varepsilon$ addresses this.
(For Hamming, no separate check is needed: the mismatch 
counter at step $\suflen$ already equals $\ham(\cont, \suf)$.)

\item \textbf{Bounds.} 
We do not perform the last across-beam prune, keeping up to $\bw \cdot k$ finals. 
$\mathrm{LB}_{\varepsilon,\levshort}$ is the sum of $\exp(\log p)$ over all kept final sequences (all are $\levshort\le\varepsilon$, by construction); 
non-$\varepsilon$-viable finals contribute to neither the lower bound nor the upper bound bank.
$\mathrm{UB}_{\varepsilon,\levshort}=\textit{bank}+\mathrm{LB}_{\varepsilon,\levshort}$, with no mass counted twice.

\item \textbf{Early termination.}
If at some $t\!<\!\suflen$ no $\varepsilon$-viable children remain ($\sC^{\leq\varepsilon}_t\!=\!\emptyset$), the algorithm terminates with $\mathrm{LB}_{\varepsilon,\levshort}=0$ and $\mathrm{UB}_{\varepsilon,\levshort}=\textit{bank}$,
i.e., the mass banked from $\varepsilon$-viable nodes pruned so far, which upper-bounds the remaining near-verbatim mass by the descendant-sum/frontier identity, just as in the Hamming-pruned variant (Lemma~\ref{lem:meta-early-ub}).
Additionally, the same minimum-probability early termination used in the Hamming variant applies here: if the highest-probability beam candidate falls below a threshold derived from a user-specified minimum extraction probability, we terminate with $\mathrm{LB}_{\varepsilon,\levshort}=0$ and $\mathrm{UB}_{\varepsilon,\levshort}=\textit{bank}$.
In practice, the upper bound from early-terminated examples is too loose to be informative; we filter these at reporting time.
\end{itemize}

\paragraph{Complexity and practical choices.}
Scalar-banded Wagner-Fischer~\citep{wagnerfischer1974} (with Ukkonen's criterion~\citep{ukkonen}) requires $O(\varepsilon)$ time and $O(\varepsilon)$ memory per partial path per step, totaling $O(\bw\,k\,\suflen\,\varepsilon)$ time and $O(\bw\,k\,\varepsilon)$ peak memory across the beam (reduced to $O(\bw\,\varepsilon)$ after each across-beam prune).
This is purely bookkeeping---no extra forward passes through the model---so the overall cost is dominated by the same number of token evaluations as in baseline $k$-CBS.

\begin{algorithm}[t]
\caption{Levenshtein-$\varepsilon$-pruned Top-$k$ Constrained Beam Search}\label{app:alg:beam-topk-lev}
\small
\KwIn{LLM $\model$; prefix $\pre$ of length $\prelen$; target suffix $\suf=(\suftok{1},\ldots,\suftok{\suflen}) \in \vocab^\suflen$; beam width $\bw$;
  top-$k$ parameter $k$ for decoding policy $\dec$ ($\bw \leq k^2$, $k \ll |\vocab|$);
  Levenshtein distance budget $\varepsilon \ll \suflen$;
  EOS token id; optional $\taumin > 0$}
\KwOut{Set $\sF$ of (at most $\bw \cdot k$) triples
  $(\genseq,\; \log p,\; d)$,
  where $\genseq = \pre \,\Vert\, \gensuf$ is a full history with length $\prelen + \suflen$, $\log p = \log \Pr_{\model,\dec}(\gensuf \mid \pre)$, and $d = \dptab[\suflen,\suflen] = \lev(\gensuf, \suf) \leq \varepsilon$;
  lower bound $\mathrm{LB}_{\varepsilon,\levshort}$; upper bound $\mathrm{UB}_{\varepsilon,\levshort}$}
\BlankLine
\textbf{Notation.} Same as Algorithm~\ref{app:alg:beam-topk}.
Additionally: $\suftok{t}$ denotes the $t$-th token of $\suf$;
$\dptab[t,j] \triangleq \lev(\subgensuf{1:t},\,\subsuf{1:j})$ (Equation~\ref{eq:lev-dp-table});
$[j_{\min}(t),\, j_{\max}(t)]$ is the Ukkonen band at depth $t$ (Equation~\ref{eq:ukkonen-band}).
We write $\dptab[t,\cdot]$ for the banded row $\big(\dptab[t,j]\big)_{j\in[j_{\min}(t),\,j_{\max}(t)]}$.
\BlankLine
\textbf{Beam state.}
Maintain $\sL_t$ as triples $(\genseq,\log p, \dptab[t,\cdot])$, where $\log p$ is the accumulated top-$k$ decoding $\log$-probability and $\dptab[t,\cdot]$ is the banded Wagner-Fischer row for the partial path at depth $t$.
Max.\ beam capacity $|\sL_t|=\bw$.
\BlankLine
\textbf{Viability test (not monotone in $t$; see Appendix~\ref{app:sec:prune:lev:wf}).} 
After appending token $\gentok$ at depth $t$, compute the updated banded row $\dptab[t,j]$ for $j\in[j_{\min}(t),\,j_{\max}(t)]$ via Equation~\ref{eq:wf:dp:recurrence}, treating out-of-band predecessors as $+\infty$.
Keep child only if $\min_{j\in[j_{\min}(t),\,j_{\max}(t)]} \dptab[t,j] \leq \varepsilon$; otherwise prune (Equation~\ref{eq:ukkonen-test}).
\BlankLine
Compute $\logit_1(\pre)$ via forward pass on $\pre$\tcp*{Prefill: $\prelen$ token evals}
$\sL_0 \gets \big\{(\pre,\; 0,\; \dptab[0,\cdot])\big\}$ where $\dptab[0,j]=j$ for $j\in[0,\,\min\{\suflen,\varepsilon\}]$\tcp*{Init beam}
$\textit{bank} \gets 0$\tcp*{UB mass accumulator}
\BlankLine
\For{$t = 1, \ldots, \suflen$}{
  $\sC^{\leq\varepsilon}_t \gets \emptyset$\tcp*{$\varepsilon$-viable candidate set for step $t$}
  \ForEach{$(\genseq,\; \log p,\; \dptab[t{-}1,\cdot]) \in \sL_{t-1}$}{
    $\sS_t(\genseq) \gets \TopK_k\big(\logit_t(\genseq)\big)$\;
    $\vr_t(\genseq) \gets \LogSoftmax\big(\logit_t(\genseq)\big)$\;
    $Z_t(\genseq) \gets \LogSumExp\big(\vr_t(\genseq)[\sS_t(\genseq)]\big)$\;
    \ForEach{$\gentok \in \sS_t(\genseq)$}{
      Compute $\dptab[t,j]$ for $j\in[j_{\min}(t),\,j_{\max}(t)]$ from $\dptab[t{-}1,\cdot]$ and $\gentok$\tcp*{$O(\varepsilon)$ DP update}
      \If(\tcp*[f]{$\varepsilon$-viable: keep}){$\min_{j\in[j_{\min}(t),\,j_{\max}(t)]} \dptab[t,j] \leq \varepsilon$}{
        $\genseq' \leftarrow \genseq \,\Vert\, \gentok$ \tcp*{Append token to partial history}
        $\log p' \leftarrow \log p + \vr_t(\genseq)[\gentok] - Z_t(\genseq)$ \tcp*{Update continuation $\log$ prob}
        $\sC^{\leq\varepsilon}_t \gets \sC^{\leq\varepsilon}_t \cup
          \big\{\!\big(\genseq',\;\;
          \log p',\;\; \dptab[t,\cdot] \big)\!\big\}$\;
      }
      \tcp{If band min $> \varepsilon$: prune (no viable completion; Eq.~\ref{eq:ukkonen-test})}
    }
  }
  \BlankLine
  \If(\tcp*[f]{Final step: acceptance test and bounds}){$t = \suflen$}{
    $\sF \gets \big\{(\genseq',\,\log p',\,\dptab[\suflen,\suflen]) \in \sC^{\leq\varepsilon}_\suflen : \dptab[\suflen,\suflen] \leq \varepsilon\big\}$\tcp*{Accept iff $\lev \leq \varepsilon$}
    $\mathrm{LB}_{\varepsilon,\levshort} \gets \sum_{(\_,\,\log p,\,\_)\in\sF}\exp(\log p)$\;
    $\mathrm{UB}_{\varepsilon,\levshort} \gets \mathrm{LB}_{\varepsilon,\levshort} + \textit{bank}$\;
    \Return $\sF,\; \mathrm{LB}_{\varepsilon,\levshort},\; \mathrm{UB}_{\varepsilon,\levshort}$\;
  }
  \BlankLine
  \tcp{Non-final step ($t < \suflen$): EOS handling, early termination, beam pruning}
  \ForEach{$(\genseq',\; \log p',\; \dptab[t,\cdot]) \in \sC^{\leq\varepsilon}_t$ with latest $\gentok = \textup{EOS}$}{
    Record $(\genseq',\; \log p',\; t)$ as early-termination path;
      remove from $\sC^{\leq\varepsilon}_t$\;
  }
  \BlankLine
  \If(\tcp*[f]{Early termination: no $\varepsilon$-viable candidates remain}){$\sC^{\leq\varepsilon}_t = \emptyset$}{
    \Return $\sF \gets \emptyset,\; \mathrm{LB}_{\varepsilon,\levshort} \gets 0,\; \mathrm{UB}_{\varepsilon,\levshort} \gets \textit{bank}$\;
  }
  \BlankLine
  $\sU_t \gets$ top-$\bw$ elements of $\sC^{\leq\varepsilon}_t$ ranked by $\log p'$\;
  $\textit{bank} \gets \textit{bank} + \sum_{(\_,\,\log p',\,\_)\in\sC^{\leq\varepsilon}_t\setminus\sU_t}\exp(\log p')$\tcp*{Bank pruned $\varepsilon$-viable mass}
  $\sL_t \gets \sU_t$ \tcp*{Prune to beam width}
  \If{$\taumin$ and $\max_{(\_,\,\log p,\,\_) \in \sL_t} \exp(\log p) < \taumin / (\bw \cdot k)$}{
    \Return $\sF \gets \emptyset,\; \mathrm{LB}_{\varepsilon,\levshort} \gets 0,\; \mathrm{UB}_{\varepsilon,\levshort} \gets \textit{bank}$\;
  }
  Compute $\logit_{t+1}(\genseq)$ for each $(\genseq, \cdot, \cdot) \in \sL_t$\tcp*{$|\sL_t|$ token evals}
}
\end{algorithm}
\clearpage

\subsection{Invariants for $\varepsilon$-pruned $k$-CBS}\label{app:sec:prune:invariants}

To show invariants for the $\varepsilon$-pruned $k$-CBS algorithms, we first give some definitions and common facts that we will use in our proofs (Appendix~\ref{app:sec:prune:invariants:generic-viability-kcbs}), and some specific results for each distance metric (Appendix~\ref{app:sec:prune:invariants:specific-prelim}).
Then we present proofs of the respective algorithm invariants, leveraging a unified framework (Appendix~\ref{app:sec:prune:invariants:viable-finals}). 
We show that all returned finals are $\varepsilon$-viable under the chosen distance metric for pruning, observations about the cardinality bounds of the returned finals, and early termination upper bounds. 
It is not the case that the lower and upper bounds returned by $\varepsilon$-pruned variants are guaranteed to be tighter than those identified by $k$-CBS, but we often observe them to be in practice.\looseness=-1  

\subsubsection{Distance-viability-pruned $k$-CBS: Notation and background}
\label{app:sec:prune:invariants:generic-viability-kcbs}

\paragraph{Shared viability semantics.}
The Hamming- and Levenshtein-pruned variants share identical beam-search machinery---expansion, probability bookkeeping, banking, early termination---and differ only in how viability is tracked.
We abstract the distance-specific logic into three operations on a per-path auxiliary state $\mathrm{aux}$, parameterized by a target suffix $\suf$ and distance budget~$\varepsilon$:

\begin{itemize}[leftmargin=0.5cm]
\item $\mathsf{Viable.Init}(\suf,\,\varepsilon) \to \mathrm{aux}_0$\;: initialize per-path viability state.
\item $\mathsf{Viable.Update}(\mathrm{aux},\,\gentok,\,t,\,\suf,\,\varepsilon) \to (\mathrm{aux}',\,\varepsilon_\star)$\;:
update $\mathrm{aux}$ after appending token $\gentok$ at depth~$t$, and return a lower bound $\varepsilon_\star$ on the final distance $\dist(\gensuf,\suf)$ achievable by any completion of this partial path.
If $\varepsilon_\star > \varepsilon$, the path is provably non-$\varepsilon$-viable and is pruned.\looseness=-1 
\item $\mathsf{Viable.IsFinal}(\mathrm{aux},\,\varepsilon) \to \{0,1\}$\;:
at depth~$\suflen$, return $1$ if the completed continuation satisfies $\dist(\gensuf,\suf) \leq \varepsilon$, and $0$ otherwise.
\end{itemize}

We instantiate these semantics for the Hamming and Levenshtein distance as follows: 

\begin{center}
{\small
\renewcommand{\arraystretch}{1.4}
\begin{tabular}{@{}l@{\;\;}p{4.8cm}@{\;\;}p{6.8cm}@{}}
\toprule
& \textbf{Hamming} ($\dist = \ham$) & \textbf{Levenshtein} ($\dist = \lev$) \\
\midrule
$\mathrm{aux}$
  & Mismatch count $\mismatch \in \{0,\ldots,\varepsilon\}$
  & Banded DP row $\dptab[t,\cdot] = \big(\dptab[t,j]\big)_{j\in[j_{\min}(t),\,j_{\max}(t)]}$ \\
\textsf{Init}
  & $\mathrm{aux}_0 = 0$
  & $\dptab[0,j] = j$ for $j \in [0,\,\min\{\suflen,\varepsilon\}]$ \\
\textsf{Update}
  & $\mismatch' \leftarrow \mismatch + \1[\gentok \neq \suftok{t}]$; \newline $\varepsilon_\star \leftarrow \mismatch'$
  & Fill $\dptab[t,\cdot]$ via Eq.~\ref{eq:wf:dp:recurrence} (banded; Eq.~\ref{eq:ukkonen-band}); \newline $\varepsilon_\star \leftarrow \min_{j\in[j_{\min}(t),\,j_{\max}(t)]} \dptab[t,j]$ \\
\textsf{IsFinal}
  & Always $1$ (viability $\Rightarrow$ acceptance)
  & $\1[\dptab[\suflen,\suflen] \leq \varepsilon]$ \\
\midrule
Cost per call & $O(1)$ & $O(\varepsilon)$ \\
\bottomrule
\end{tabular}
}%
\end{center}

For Hamming, the mismatch count is monotonically non-decreasing along any path, so $\varepsilon_\star$ never decreases---and surviving to depth~$\suflen$ with $\mismatch \leq \varepsilon$ already guarantees $\ham(\gensuf, \suf) \leq \varepsilon$, making \textsf{IsFinal} trivially true.
For Levenshtein, $\varepsilon_\star$ can decrease between steps (Appendix~\ref{app:sec:prune:lev:wf}), but remains a valid lower bound at each step;
the explicit final check $\dptab[\suflen,\suflen] \leq \varepsilon$ is needed because the viability test during search only ensures alignment to \emph{some} prefix of~$\suf$, not the full target (Appendix~\ref{app:sec:prune:lev:alg-details}).
We show how this interface can get integrated with $k$-CBS in Algorithm~\ref{alg:kcbs-pruned}. 

This viability interface accommodates any token-level distance metric for which a lower bound on the final distance can be computed incrementally from streamable per-path state, e.g., $\mathsf{LCS}$ distance and weighted edit distances.
For these semantics, we provide common notation, soundness criteria for viability pruning rules, and some basic facts that we use in our proofs.\looseness=-1 

\begin{algorithm}[t]
\caption{$\varepsilon$-pruned Top-$k$ Constrained Beam Search ($\dist$-pruned $k$-CBS)}\label{alg:kcbs-pruned}
\small
\KwIn{LLM $\model$; prefix $\pre$; target suffix $\suf \in \vocab^\suflen$; beam width $\bw$; top-$k$ parameter $k$;
  distance budget $\varepsilon$; viability oracle $(\mathsf{Init},\, \mathsf{Update},\, \mathsf{IsFinal})$;
  EOS token id; optional $\taumin > 0$}
\KwOut{Set $\sF$ of (at most $\bw \cdot k$) triples $(\genseq,\;\log p,\;\mathrm{aux})$ with $\mathsf{IsFinal}(\mathrm{aux},\varepsilon)=1$;
  $\mathrm{LB}_{\varepsilon,\dist}$; $\mathrm{UB}_{\varepsilon,\dist}$}
\BlankLine
Compute $\logit_1(\pre)$ via forward pass on $\pre$\tcp*{Prefill}
$\sL_0 \gets \big\{(\pre,\; 0,\; \mathsf{Viable.Init}(\suf,\varepsilon))\big\}$\tcp*{Beam (max.\ capacity $\bw$): triples $(\genseq,\;\log p,\;\mathrm{aux})$}
$\textit{bank} \gets 0$\;
\BlankLine
\For{$t = 1, \ldots, \suflen$}{
  $\sC^{\leq\varepsilon}_t \gets \emptyset$\;
  \ForEach{$(\genseq,\; \log p,\; \mathrm{aux}) \in \sL_{t-1}$}{
    $\sS_t \gets \TopK_k\big(\logit_t(\genseq)\big)$; \,
    $\vr_t \gets \LogSoftmax\big(\logit_t(\genseq)\big)$; \,
    $Z_t \gets \LogSumExp\big(\vr_t[\sS_t]\big)$\;
    \ForEach{$\gentok \in \sS_t$}{
      $(\mathrm{aux}',\, \varepsilon_\star) \gets \mathsf{Viable.Update}(\mathrm{aux},\, \gentok,\, t,\, \suf,\, \varepsilon)$\;
      \If{$\varepsilon_\star \leq \varepsilon$}{
        $\genseq' \leftarrow \genseq \,\Vert\, \gentok$; \,
        $\log p' \leftarrow \log p + \vr_t[\gentok] - Z_t$; \,
        $\sC^{\leq\varepsilon}_t \gets \sC^{\leq\varepsilon}_t \cup \big\{\!\big(\genseq',\, \log p',\, \mathrm{aux}'\big)\!\big\}$\;
      }
    }
  }
  \BlankLine
  \If(\tcp*[f]{Final step: acceptance test and bounds}){$t = \suflen$}{
    $\sF \gets \big\{(\genseq',\,\log p',\,\mathrm{aux}') \in \sC^{\leq\varepsilon}_\suflen : \mathsf{Viable.IsFinal}(\mathrm{aux}',\,\varepsilon) = 1\big\}$\;
    $\mathrm{LB}_{\varepsilon,\dist} \gets \sum_{(\_,\,\log p',\,\_)\in\sF}\exp(\log p')$; \,
    $\mathrm{UB}_{\varepsilon,\dist} \gets \mathrm{LB}_{\varepsilon,\dist} + \textit{bank}$\;
    \Return $\sF,\; \mathrm{LB}_{\varepsilon,\dist},\; \mathrm{UB}_{\varepsilon,\dist}$\;
  }
  \BlankLine
  Remove from $\sC^{\leq\varepsilon}_t$ any $(\genseq',\, \log p',\, \mathrm{aux}')$ whose last token is EOS\;
  \lIf(\tcp*[f]{No $\varepsilon$-viable candidates remain}){$\sC^{\leq\varepsilon}_t = \emptyset$}{\Return $\emptyset,\; 0,\; \textit{bank}$}
  $\sU_t \gets$ top-$\bw$ of $\sC^{\leq\varepsilon}_t$ by $\log p'$\;
  $\textit{bank} \gets \textit{bank} + \sum_{(\_,\,\log p',\,\_)\in\sC^{\leq\varepsilon}_t\setminus\sU_t}\exp(\log p')$\tcp*{Bank pruned $\varepsilon$-viable mass}
  $\sL_t \gets \sU_t$\;
  \lIf{$\taumin$ and $\max_{(\_,\,\log p,\,\_) \in \sL_t} \exp(\log p) < \taumin / (\bw \cdot k)$}{\Return $\emptyset,\; 0,\; \textit{bank}$}
  Compute $\logit_{t+1}(\genseq)$ for each $(\genseq, \cdot, \cdot) \in \sL_t$\;
}
\end{algorithm}

\paragraph{Token-sequence notation.} We recall the sequence notation from Appendix~\ref{app:sec:kcbs:notation}.
As noted in that section, this notation lets us refer to prefixes, suffixes, and partial continuations directly, avoiding index arithmetic on the full history~$\genseq$.

\paragraph{Candidate sets and beam operations.}
We extend the notation in Appendix~\ref{app:sec:kcbs:notation} to also include $\varepsilon$-viability pruning. 
At depth $t\in\{0,\ldots,\suflen\}$, denote
\begin{itemize}[leftmargin=0.5cm]
    \item $\sL_t$: the beam at depth $t$ (at most $\bw$ elements), obtained by expanding $\sL_{t-1}$, applying $\varepsilon$-viability and across-beam pruning.
    \item $\sC_t$: all children formed from $\sL_{t-1}$ by one-step top-$k$ expansion (at most $|\sL_{t-1}| \cdot k$ candidates, including non-$\varepsilon$-viable ones).
    \item $\sC^{\leq\varepsilon}_t \subseteq \sC_t$: the subset of $\varepsilon$-viable children ($\varepsilon_\star \leq \varepsilon$). Algorithm~\ref{alg:kcbs-pruned} constructs $\sC^{\leq\varepsilon}_t$ directly by filtering during expansion.
    \item $\sU_t$: the (at most) top-$\bw$ of $\sC^{\leq\varepsilon}_t$ by cumulative probability (across-beam prune); $\sL_t \gets \sU_t$.
    \item $\sF$, the set of returned $\varepsilon$-viable finals;
    it is possible that we may return early (at $t < \suflen$) with $\emptyset$, if no such finals exist. 
\end{itemize}

As in the invariants for $k$-CBS, we denote $\mathrm{Desc}_\suflen(\vu)$ the set of depth-$\suflen$ descendants (for any node $\vu$ at depth $t$) reachable by continuing the same per-state top-$k$ policy (ignoring across-beam pruning) from $\vu$.
That is, $\mathrm{Desc}_\suflen(\vu)=\{\vw\in\vocab^\suflen :\vu \text{ is a prefix of }\vw\}$ is the set of depth-$\suflen$ descendants of $\vu$ (i.e., the full-length continuations with prefix $\vu$, under the top-$k$ policy).\looseness=-1 

\paragraph{Generic per-depth $t$ viability predicate.}
Fix a target suffix $\suf$ of length $\suflen$ and a maximum distance (i.e., tolerance) $\varepsilon$ from $\suf$.
A \newterm{viability predicate} is a family $\{\mathsf{Viable}_t(\cdot)\}_{t=0}^{\suflen}$ with the following property. For any generated continuation $\gensuf$ of prefix $\pre$:
\begin{itemize}[leftmargin=0.5cm]
    \item \customtarget{soundness}{\textbf{(Soundness.)}}
    $\mathsf{Viable}_t$ is a necessary condition for $\varepsilon$-viability:
    if $\dist(\gensuf, \suf) \leq \varepsilon$, then
    $\mathsf{Viable}_t(\subgensuf{1:t})$ holds for every
    $t \in \{0, \ldots, \suflen\}$.
    Equivalently (by contrapositive), if
    $\neg\,\mathsf{Viable}_t(\subgensuf{1:t})$, then no
    length-$\suflen$ completion of $\subgensuf{1:t}$ satisfies
    $\dist(\cdot,\, \suf) \leq \varepsilon$, so the path may be
    safely pruned.
\end{itemize}
Viability at step $t$ depends only on the partial $\subgensuf{1:t}$ and the target suffix $\suf$, not on the prefix $\pre$, as the history $\genseq$ and the training sequence $\seq$ share the same $\prelen$-token prefix.

For $\ham$, $\mathsf{Viable}_t$ is ``running mismatch counter $\le\varepsilon$'' (\customlink{soundness}{soundness} via monotonicity; see Lemma~\ref{lem:ham-monotone}).
For $\lev$, $\mathsf{Viable}_t$ is ``banded row minimum $\le\varepsilon$ at row $t$'' (\customlink{soundness}{soundness} via Ukkonen's criterion; see Lemma~\ref{lem:ukkonen-prune-and-safety}).

\paragraph{Common facts used throughout.}
All candidates at a fixed depth are ordered by a strict total order $\succ$ on cumulative log-probabilities $\log p'$,
with a fixed deterministic tie-breaker.
For a finite set $\sX$ and $\vx\in\sX$, let $\mathrm{rank}_{\sX}(\vx)$ be the position of $\vx$ under $\succ$ in $\sX$ ($1$ is highest).
\begin{itemize}[leftmargin=0.5cm]
    \item \customtarget{f1}{F1 (rank under restriction).}
    If $\sY\subseteq \sX$ and $\vx\in \sY$, then
    \[
        \mathrm{rank}_{\sY}(\vx)\le \mathrm{rank}_{\sX}(\vx)
    \quad \text{(restricting to a subset can only remove items that outrank $\vx$)}.
    \]

    \item \customtarget{f2}{F2 (no cross-parent duplicates at a depth).}
    Children formed from distinct parents at the same depth are token-distinct
    (no duplicates across parents);
    see Lemma~\ref{lem:unique}.
    Consequently,
    \[
        |\sC_t| \le k\,|\sL_{t-1}| \le \bw \cdot k, \quad |\sC^{\leq \varepsilon}_t|\le|\sC_t|, \quad \text{and} \quad |\sU_t|\le|\sC^{\leq \varepsilon}_t|.
    \]

    \item \customtarget{f3}{F3 (descendant-sum identity).}
    For any node $\vu$ at depth $t<\suflen$,
    \begin{equation*}
    \sum_{\vx\in \mathrm{Desc}_\suflen(\vu)} \Pr_{\model,\dec}(\vx\mid\pre)
    \;=\;
    \Pr_{\model,\dec}(\vu\mid\pre).
    \end{equation*}
    Under per-state top-$k$ renormalization, a parent's mass equals the sum of its immediate children's masses;
    iterating to depth $\suflen$ yields the identity.
    (See Lemma~\ref{lem:frontier-mass}, Equation~\ref{eq:descendant-sum}.)

    \item \customtarget{f4}{F4 (equal per-path probability across variants).}
    The probability assigned to a path depends only on the model's conditional distributions along that path's history, not on the other beam elements.
    Therefore, any depth-$\suflen$ path $\gensuf$ enumerated by both baseline and viability-pruned $k$-CBS receives the same probability in both:
    \[
    \Pr_{\model,\dec}^{\textnormal{base}}(\gensuf\mid\pre)
    \;=\;
    \Pr_{\model,\dec}^{\textnormal{prune}}(\gensuf\mid\pre).
    \]
    (See Lemma~\ref{lem:topk-step-update} and Corollary~\ref{cor:topk-path}.)
\end{itemize}

\subsubsection{Distance-specific results for streaming generation $\varepsilon$-viability}\label{app:sec:prune:invariants:specific-prelim}

Our proof framework will depend on a distance $\dist$ satisfying the generic per-depth $t$ viability predicate, $\{\mathsf{Viable}_t(\cdot)\}_{t=0}^{\suflen}$, described in the prior subsection.
For $\ham$, $\mathsf{Viable}_t$ is ``running mismatch counter $\le\varepsilon$,'' for which we show \customlink{soundness}{soundness} via monotonicity in Lemma \ref{lem:ham-monotone}.
For $\lev$, $\mathsf{Viable}_t$ is ``banded row minimum $\le\varepsilon$ at row $t$,'' for which we show \customlink{soundness}{soundness} in Lemma~\ref{lem:ukkonen-prune-and-safety}.

We start with Hamming: 
extending a continuation cannot decrease the Hamming distance to the target suffix;
once a partial path becomes non-$\varepsilon$-viable, it remains non-$\varepsilon$-viable.

\begin{lemma}[Monotone Hamming $\varepsilon$-viability]
\label{lem:ham-monotone}
Let $\suf=(\suftok{1},\ldots,\suftok{\suflen})$
and let $\subgensuf{1:t}=(\subgensuftok{1},\ldots,\subgensuftok{t})$.
Define the running Hamming counter
\[
\mismatch_t(\subgensuf{1:t},\suf)\;=\;\sum_{i=1}^{t}\1\!\big[\subgensuftok{i}\neq \suftok{i}\big],
\]
which is the Hamming distance at $t$ between a $t$-length continuation and the first $t$ tokens of the $\suflen$-length suffix (Equation~\ref{eq:ham-counter}).
Then $t\mapsto \mismatch_t(\subgensuf{1:t},\suf)$ is nondecreasing.
In particular, if $\mismatch_t(\subgensuf{1:t},\suf)>\varepsilon$ for some $t<\suflen$, every length-$\suflen$ extension has Hamming distance $>\varepsilon$ and cannot lie in $\hamball(\suf)$.
\end{lemma}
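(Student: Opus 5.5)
The plan is to argue directly from the definition of the running counter in Equation~\ref{eq:ham-counter}, proving the two claims of Lemma~\ref{lem:ham-monotone} in order: first monotonicity in $t$, then the consequence for $\varepsilon$-ball membership.

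\emph{Monotonicity.} Consider the increment between consecutive depths. For any $t<\suflen$ and any extension token $\subgensuftok{t+1}$, we have
\[
\mismatch_{t+1}(\subgensuf{1:t+1},\suf) - \mismatch_t(\subgensuf{1:t},\suf) \;=\; \1\big[\subgensuftok{t+1}\neq \suftok{t+1}\big] \;\in\;\{0,1\}.
\]
The first $t$ summands are unchanged by appending a token, because Hamming distance compares fixed positions. The increment is therefore non-negative, and induction on $t$ gives $\mismatch_t \le \mismatch_{t'}$ for all $t\le t'\le\suflen$.

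\emph{Consequence.} For any length-$\suflen$ extension $\gensuf$ of $\subgensuf{1:t}$, the full-depth counter equals the Hamming distance: $\mismatch_\suflen(\gensuf,\suf)=\ham(\gensuf,\suf)$ by Equation~\ref{eq:hamming}. Monotonicity then yields $\ham(\gensuf,\suf)\ge \mismatch_t(\subgensuf{1:t},\suf)>\varepsilon$, so $\gensuf\notin\hamball(\suf)$.

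I would also note that this establishes the \customlink{soundness}{soundness} property of the viability predicate $\mathsf{Viable}_t := [\mismatch_t\le\varepsilon]$, stated in contrapositive form.

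There is no real obstacle. The only point needing care is that appending tokens never alters earlier positional comparisons, which holds because all continuations share the same alignment to $\suf$ and contain no insertions or deletions.
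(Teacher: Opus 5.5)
Your proof is correct and matches the paper's argument: the one-step identity $\mismatch_{t+1}=\mismatch_t+\1[\cdot]\ge\mismatch_t$ gives monotonicity, and monotonicity rules out every length-$\suflen$ extension. Your explicit identification $\mismatch_\suflen=\ham(\gensuf,\suf)$ spells out a step the paper leaves implicit.
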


\begin{proof}
$\mismatch_{t+1}=\mismatch_t+\1[\subgensuftok{t+1}\neq \subsuf{t+1}]\ge \mismatch_t$.
So, if $\mismatch_t>\varepsilon$, later edits cannot decrease the Hamming distance, so all descendants remain non-$\varepsilon$-viable. 
It is therefore safe to prune children once they become non-$\varepsilon$-viable;
$\varepsilon$-viable finals must also be $\varepsilon$-viable at all prefixes $t$.\looseness=-1
\end{proof}

The Hamming results rely on monotonicity of the mismatch counter (Lemma~\ref{lem:ham-monotone}), which does not hold for the Levenshtein distance.
Instead, we establish \customlink{soundness}{soundness} via \citeauthor{ukkonen}'s criterion on the banded \citeauthor{wagnerfischer1974} dynamic program.
First, we show that the band has width at most $2\varepsilon+1$.

\begin{lemma}[Band restriction]
\label{lem:band-restriction}
Fix a row $i$.
If $|j-i|>\varepsilon$, then $\dptab[i,j]>\varepsilon$.
Therefore, any column with $\dptab[i,j]\le\varepsilon$ must lie in $[j_{\min}(i),\,j_{\max}(i)]$, whose width is at most $2\varepsilon+1$.
\end{lemma}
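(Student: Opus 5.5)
The plan is to reduce the claim to the length-gap inequality in Equation~\ref{eq:length-gap-lb}, $\dptab[i,j]\ge|i-j|$, and then read off the band from Equation~\ref{eq:ukkonen-band}. The steps are: (i) justify the length-gap bound directly from the definition of $\lev$; (ii) take the contrapositive to locate all cells with $\dptab[i,j]\le\varepsilon$; (iii) count the columns in the clamped interval.

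For step (i), I would argue that any edit script transforming $\subgensuf{1:i}$ (length $i$) into $\subsuf{1:j}$ (length $j$) changes the length by exactly $(\#\text{insertions})-(\#\text{deletions})$. Substitutions preserve length. Hence the script must satisfy $|\#\text{ins}-\#\text{del}| = |i-j|$, so its cost is at least $\#\text{ins}+\#\text{del}\ge|i-j|$. Minimizing over scripts gives $\dptab[i,j]=\lev(\subgensuf{1:i},\subsuf{1:j})\ge|i-j|$. Alternatively, one could induct on $i+j$ using the boundary conditions (Equation~\ref{eq:wf:dp:boundary}) and the recurrence (Equation~\ref{eq:wf:dp:recurrence}). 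For the induction step:
\begin{itemize}
\item each delete or insert predecessor has gap at least $|i-j|-1$ and adds cost $1$;
\item the diagonal predecessor has the same gap $|i-j|$.
\end{itemize}
So every branch of the min is at least $|i-j|$. I would present the direct counting argument and mention the induction as a check.

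For step (ii), if $|j-i|>\varepsilon$ then $\dptab[i,j]\ge|i-j|>\varepsilon$, which proves the first claim. Contrapositively, $\dptab[i,j]\le\varepsilon$ forces $i-\varepsilon\le j\le i+\varepsilon$. Since column indices range over $\{0,\ldots,\suflen\}$, this gives $j\in[\max\{0,i-\varepsilon\},\min\{\suflen,i+\varepsilon\}]=[j_{\min}(i),j_{\max}(i)]$.

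For step (iii), the clamped interval is a subset of the integer interval $[i-\varepsilon,i+\varepsilon]$, which has $2\varepsilon+1$ integers when $\varepsilon$ is an integer. If $\varepsilon$ is non-integer, the count is still at most $2\lfloor\varepsilon\rfloor+1\le 2\varepsilon+1$.

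There is no real obstacle. The only point needing care is step (i), specifically making sure that the cost counts insertions and deletions separately, so the bound holds even when substitutions are interleaved with them.
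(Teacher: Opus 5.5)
Your proposal is correct and follows essentially the same route as the paper: reduce to the length-gap bound $\dptab[i,j]\ge|i-j|$ (Equation~\ref{eq:length-gap-lb}), take the contrapositive, clamp to $[0,\suflen]$ to obtain $[j_{\min}(i),j_{\max}(i)]$, and count at most $2\varepsilon+1$ integer columns. The only differences are that you prove the length-gap bound explicitly (via the insertion/deletion count, or induction on the recurrence) and handle non-integer $\varepsilon$, whereas the paper simply invokes Equation~\ref{eq:length-gap-lb}, which it had justified in one line earlier.
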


\begin{proof}
By \Eqref{eq:length-gap-lb}, $\dptab[i,j]\ge |i-j|$. If $|i-j|>\varepsilon$ then $\dptab[i,j]>\varepsilon$, so any entry with value $\le\varepsilon$ must satisfy $|i-j|\le\varepsilon$.
The condition $|i-j|\le \varepsilon$ is equivalent to
\(
i-\varepsilon \ \le\ j \ \le\ i+\varepsilon
\).
Since indices must also lie in $[0,\suflen]$, the admissible columns at row $i$ lie in
\(
j\ \in\ [\,j_{\min}(i),\,j_{\max}(i)\,]
\ \equiv\
\big[\,\max\{0,\,i-\varepsilon\},\ \min\{\suflen,\,i+\varepsilon\}\,\big],
\)
as in \Eqref{eq:ukkonen-band}.
The interval $[i-\varepsilon,\,i+\varepsilon]$ contains
\[
(i+\varepsilon) - (i-\varepsilon) + 1 \;=\; 2\varepsilon + 1
\]
integer columns.
After intersecting with $[0,\suflen]$, the width can only decrease.
Therefore, the band has width at most $2\varepsilon+1$.
\end{proof}

\begin{lemma}[Ukkonen band $\varepsilon$-viability soundness]
\label{lem:ukkonen-prune-and-safety}
Fix $\varepsilon\ge 0$ and a target suffix $\suf\in\vocab^{\suflen}$.
For each row $t\in\{0,\ldots,\suflen\}$ define the Ukkonen band
\[
  [j_{\min}(t),\,j_{\max}(t)]
  \;\coloneqq\;
  \big[\max\{0,\,t-\varepsilon\},\ \min\{\suflen,\,t+\varepsilon\}\big]
  \quad\text{(Equation~\ref{eq:ukkonen-band})}.
\]
Let $\dptab[t,j]=\lev\big(\subgensuf{1:t},\,\subsuf{1:j}\big)$ be the Wagner-Fischer dynamic program value (Equation~\ref{eq:lev-dp-table}).
The predicate
\[
\mathsf{Viable}_t(\subgensuf{1:t})
\;\coloneqq\;
\Big[\min_{\,j\in[j_{\min}(t),\,j_{\max}(t)]} \dptab[t,j] \;\leq\; \varepsilon\Big]
\]
satisfies \customlink{soundness}{soundness}:
if $\lev(\gensuf,\,\suf) \leq \varepsilon$, then $\mathsf{Viable}_t(\subgensuf{1:t})$ holds for every $t \in \{0,\ldots,\suflen\}$.
Equivalently, if $\mathsf{Viable}_t(\subgensuf{1:t})$ fails at any $t$, no length-$\suflen$ completion of $\subgensuf{1:t}$ satisfies $\lev(\cdot,\,\suf) \leq \varepsilon$, so the path may be safely pruned.
\end{lemma}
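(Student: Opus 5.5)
The plan is to prove the contrapositive form directly, by showing that a completed alignment's cost dominates the partial row minimum. Fix $\gensuf\in\vocab^\suflen$ with $\lev(\gensuf,\suf)\le\varepsilon$ and fix any row $t\in\{0,\ldots,\suflen\}$. The argument has three steps: extract a full optimal alignment, locate where it crosses row $t$, and then use Lemma~\ref{lem:band-restriction} to move the witness column into the band.

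\textbf{Step 1 (extract an alignment).} Since $\dptab[\suflen,\suflen]=\lev(\gensuf,\suf)$, I will take an optimal monotone path in the Wagner-Fischer edit graph from $(0,0)$ to $(\suflen,\suflen)$. The moves are $\downarrow$, $\rightarrow$ and $\searrow$, with nonnegative costs as in Equation~\ref{eq:wf:dp:recurrence}, and the total cost is $\le\varepsilon$. Standard Wagner-Fischer optimal substructure says that every cell $(i,j)$ on such a path carries prefix cost exactly $\dptab[i,j]$: the prefix of an optimal path is itself optimal for its endpoint, by the usual exchange argument.

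\textbf{Step 2 (cross row $t$).} Row indices along the path are nondecreasing and increase by at most one per move. The path therefore visits at least one cell $(t,j^\star)$ with $0\le j^\star\le\suflen$. The remaining moves from $(t,j^\star)$ to $(\suflen,\suflen)$ cost $\ge 0$, so
\[
\dptab[t,j^\star]\;\le\;\dptab[\suflen,\suflen]\;\le\;\varepsilon .
\]
This is the ``two facts'' argument from Appendix~\ref{app:sec:prune:lev:wf}.

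\textbf{Step 3 (move into the band).} By Lemma~\ref{lem:band-restriction}, $\dptab[t,j^\star]\le\varepsilon$ forces $j^\star\in[j_{\min}(t),j_{\max}(t)]$. Hence the banded minimum is $\le\varepsilon$, i.e., $\mathsf{Viable}_t(\subgensuf{1:t})$ holds. The equivalent pruning statement is the contrapositive: if $\mathsf{Viable}_t$ fails, then every completion has $\lev>\varepsilon$.

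\textbf{Main obstacle.} The delicate point is that the algorithm computes a banded table with out-of-band cells set to $+\infty$, not the true $\dptab$. I must check that the banded values agree with the true $\dptab[t,j]$ wherever the true value is $\le\varepsilon$. I will show this by induction on $t$, and within a row by induction on $j$. The minimum in Equation~\ref{eq:wf:dp:recurrence} for a cell with true value $\le\varepsilon$ is attained at a predecessor whose true value is also $\le\varepsilon$, because costs are nonnegative. By Lemma~\ref{lem:band-restriction} that predecessor lies in-band, so by the induction hypothesis it is computed correctly. Banded values are always $\ge$ true values, since they are minima over a subset of the same candidates. Together these give equality on every cell with true value $\le\varepsilon$, so the banded test sees exactly the witness $j^\star$ from Step 3.
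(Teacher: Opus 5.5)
Your proof is correct and is essentially the paper's argument run in the forward direction rather than by contrapositive. Both rely on the fact that any full alignment path meets row $t$ at some column $j^\star$ with $\dptab[t,j^\star]\le\dptab[\suflen,\suflen]$ (by nonnegativity of edit costs, which your Steps~1--2 make explicit where the paper just says ``any completion can only add cost''), and on Lemma~\ref{lem:band-restriction} to place $j^\star$ in the band. Your extra induction showing that the $+\infty$-padded banded table agrees with the true $\dptab$ on every cell of value at most $\varepsilon$ is also correct. It is not needed for the lemma as stated, where $\dptab$ denotes true Levenshtein values; instead it justifies the streaming-update discussion's claim that banding stays consistent with Wagner-Fischer.
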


\begin{proof}
We prove the pruning direction (the converse follows by contrapositive).
Suppose $\min_{j\in[j_{\min}(t),\,j_{\max}(t)]} \dptab[t,j] > \varepsilon$ for some $t < \suflen$.
By Lemma~\ref{lem:band-restriction}, any column with $\dptab[t,j]\le\varepsilon$ must lie inside the band, so in fact \emph{all} entries in row $t$ exceed $\varepsilon$.
Since edit costs are non-negative, any completion of $\subgensuf{1:t}$ to length $\suflen$ can only add cost, so $\dptab[\suflen,\suflen] > \varepsilon$ and no $\varepsilon$-viable completion exists.

For the boundary case of $t=\suflen$, note that $j=\suflen\in[j_{\min}(\suflen),\,j_{\max}(\suflen)]$ and $\dptab[\suflen,\suflen]=\lev(\gensuf,\suf)\le\varepsilon$, so the band minimum is $\le\varepsilon$.
\end{proof}

\subsubsection{General invariants that follow}\label{app:sec:prune:invariants:viable-finals}

Algorithm~\ref{alg:kcbs-pruned} returns only paths passing the $\mathsf{IsFinal}$ check, so every returned final is $\varepsilon$-viable.

\begin{corollary}[All returned finals are $\varepsilon$-viable]
\label{cor:outputs-in-ball}
Every returned final $(\genseq,\,\log p,\,\mathrm{aux})\in\sF$ from Algorithm~\ref{alg:kcbs-pruned} satisfies $\dist(\gensuf,\,\suf)\le\varepsilon$, i.e., $\gensuf\in\ball(\suf)$.
\end{corollary}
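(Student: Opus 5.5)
The argument reads the membership condition off the construction of $\sF$ in the final-step branch of Algorithm~\ref{alg:kcbs-pruned} and then discharges the $\mathsf{IsFinal}$ check separately for each distance. In Algorithm~\ref{alg:kcbs-pruned}, $\sF$ is assigned only at $t=\suflen$, as the set of triples in $\sC^{\leq\varepsilon}_\suflen$ with $\mathsf{Viable.IsFinal}(\mathrm{aux}',\varepsilon)=1$. The two early-return branches (an empty viable set, or the $\taumin$ test) return $\sF=\emptyset$, for which the claim holds vacuously. It therefore suffices to show that, for every triple that reaches depth $\suflen$, $\mathsf{IsFinal}(\mathrm{aux}',\varepsilon)=1$ implies $\dist(\gensuf,\suf)\le\varepsilon$. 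This is a correctness property of the auxiliary state, which I would establish by induction on depth.

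\textbf{Hamming.} I would show inductively that for every element of $\sL_t$ (and of $\sC^{\leq\varepsilon}_t$) the stored counter equals $\mismatch_t(\subgensuf{1:t},\suf)$ from Equation~\ref{eq:ham-counter}. The base case is $\mathrm{aux}_0=0$. The inductive step is the update $\mismatch'=\mismatch+\1[\gentok\ne\suftok{t}]$. Every element of $\sC^{\leq\varepsilon}_\suflen$ passed the test $\varepsilon_\star=\mismatch'\le\varepsilon$ when it was formed. At $t=\suflen$ the counter equals $\ham(\gensuf,\suf)$, so $\ham(\gensuf,\suf)\le\varepsilon$. This is why $\mathsf{IsFinal}\equiv 1$ is sound; the same conclusion is immediate from Lemma~\ref{lem:ham-monotone}.

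\textbf{Levenshtein.} Here $\mathsf{IsFinal}$ checks $\dptab[\suflen,\suflen]\le\varepsilon$, so the task is to show that the stored banded value equals the true Wagner--Fischer value $\lev(\gensuf,\suf)$ whenever it is at most $\varepsilon$. By induction on $t$, every in-band entry computed with out-of-band predecessors set to $+\infty$ satisfies
\[
\dptab^{\mathrm{band}}[t,j]\ \ge\ \dptab[t,j],
\]
with equality whenever $\dptab[t,j]\le\varepsilon$. The first part holds because replacing predecessors by $+\infty$ can only increase the minimum in Equation~\ref{eq:wf:dp:recurrence}. For equality, note that if $\dptab[t,j]\le\varepsilon$ then the minimizing predecessor has value at most $\varepsilon$. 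By Lemma~\ref{lem:band-restriction} that predecessor lies in the band, and by the induction hypothesis its stored value is exact. Since $j=\suflen$ lies in the band at row $\suflen$, a stored value $\dptab^{\mathrm{band}}[\suflen,\suflen]\le\varepsilon$ forces $\lev(\gensuf,\suf)=\dptab[\suflen,\suflen]\le\dptab^{\mathrm{band}}[\suflen,\suflen]\le\varepsilon$.

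The main obstacle is this exactness of the banded dynamic program. The inequality direction is all that the corollary actually needs, and it holds under any $+\infty$ truncation. I would therefore state only that direction, so the Levenshtein case reduces to one short induction. Combining the two cases gives $\gensuf\in\ball(\suf)$ for every returned final.
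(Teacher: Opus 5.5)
Your proof is correct and follows the same route as the paper. You restrict to the final-step branch, where $\sF$ is the $\mathsf{IsFinal}$-filtered subset of $\sC^{\leq\varepsilon}_\suflen$, and then handle Hamming via the running counter (Lemma~\ref{lem:ham-monotone}) and Levenshtein via the check $\dptab[\suflen,\suflen]\le\varepsilon$. The only difference is that the paper identifies the stored $\dptab[\suflen,\suflen]$ with $\lev(\gensuf,\suf)$ directly ``by definition'' (Equation~\ref{eq:lev-dp-table}), whereas you justify this for the banded computation with the induction $\dptab^{\mathrm{band}}\ge\dptab$, which is exactly the direction the corollary needs.
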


\begin{proof}
Algorithm~\ref{alg:kcbs-pruned} returns only paths where $\mathsf{Viable.IsFinal}(\mathrm{aux},\,\varepsilon) = 1$.
For $\ham$, viability at depth $\suflen$ implies $\mismatch_\suflen(\subgensuf{1:\suflen},\suf) \leq \varepsilon$, and so $\ham(\gensuf,\,\suf) \leq \varepsilon$ (Lemma~\ref{lem:ham-monotone}).
For $\lev$, $\mathsf{IsFinal}$ checks $\dptab[\suflen,\suflen] \leq \varepsilon$, which equals $\lev(\gensuf,\,\suf)$ by definition of the dynamic programming table (Equation~\ref{eq:lev-dp-table}).
\end{proof}

We next show a simple result about the cardinality of the returned finals.

\begin{lemma}[Cardinality of returned finals under top-$k$ and a sound pruning rule]
\label{lem:meta-cardinality}
Assume the standard beam mechanics that we use throughout:
(i) after across-beam pruning, $|\sL_t|\le \bw$ for every step $t$;
(ii) each parent forms at most $k$ children via per-state top-$k$ token expansion;
(iii) no cross-parent duplicates at a given depth $t$ (\customlink{f2}{F2}, Lemma~\ref{lem:unique});
and (iv) $\varepsilon$-viability filtering and final-acceptance checks only remove candidates ($\sC^{\leq\varepsilon}_t \subseteq \sC_t$, $\sF \subseteq \sC^{\leq\varepsilon}_\suflen$).
Then, at depth $\suflen$, $|\sF|\le \bw \cdot k$.  
If there exists $t<\suflen$ with $\sU_t=\emptyset$, the run terminates early with $\sF=\emptyset$.
\end{lemma}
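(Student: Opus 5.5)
The argument is a chain of cardinality inequalities that follows the order of operations in Algorithm~\ref{alg:kcbs-pruned}, with a separate check of the early-termination branch. We treat assumptions (i)--(iv) as given beam mechanics and only compose them.

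\textbf{Bounding the final candidate set.} Assumption (i) gives $|\sL_{\suflen-1}|\le\bw$. This also covers the degenerate case $\suflen=1$, where $|\sL_0|=1\le\bw$.

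\textbf{From parents to children.} Each parent contributes at most $k$ children by (ii). By (iii) (\customlink{f2}{F2}, Lemma~\ref{lem:unique}), children from distinct parents never coincide. Hence $\sC_\suflen$ is a disjoint union of per-parent child sets, so no collapse or double counting can occur, and
\[
|\sC_\suflen|\le k\,|\sL_{\suflen-1}|\le \bw\cdot k .
\]
\textbf{Filtering only shrinks the set.} By (iv), viability filtering and the final acceptance check satisfy $\sF\subseteq\sC^{\leq\varepsilon}_\suflen\subseteq\sC_\suflen$. Monotonicity of cardinality under inclusion then yields $|\sF|\le\bw\cdot k$.

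\textbf{Early termination.} Suppose $\sU_t=\emptyset$ for some $t<\suflen$. Since $\sU_t$ is the top-$\bw$ of $\sC^{\leq\varepsilon}_t$ after EOS removal, and $\bw\ge1$, the set $\sU_t$ is empty only if $\sC^{\leq\varepsilon}_t$ (post-EOS) is empty. This is precisely the condition that triggers the return of $(\emptyset,0,\textit{bank})$ in Algorithm~\ref{alg:kcbs-pruned}, so $\sF=\emptyset$. Even without invoking that return, the beam would satisfy $\sL_t=\emptyset$. Every later candidate set would then be empty by (ii), and $\sF\subseteq\sC_\suflen=\emptyset$. We can make this rigorous by a one-line induction on depth.

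\textbf{Where the care goes.} There is no substantive obstacle. The one subtlety is confirming that $\bw\cdot k$ is never exceeded at $t=\suflen$, even though the final across-beam prune is skipped. The bound comes from the parent count $|\sL_{\suflen-1}|\le\bw$ rather than from any prune at depth $\suflen$, so skipping that prune does not affect it.
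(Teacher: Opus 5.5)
Your proof is correct and follows the paper's argument essentially step for step: (i) bounds $|\sL_{\suflen-1}|\le\bw$, (ii)--(iii) give $|\sC_\suflen|\le k\,|\sL_{\suflen-1}|\le\bw\cdot k$, (iv) gives $|\sF|\le|\sC^{\leq\varepsilon}_\suflen|\le|\sC_\suflen|$, and the early-termination claim holds by construction of Algorithm~\ref{alg:kcbs-pruned}. Two small differences: your inequality $|\sC_\suflen|\le k\,|\sL_{\suflen-1}|$ is slightly more faithful to the ``at most $k$'' in (ii) than the paper's stated equality, and your extra justification of the early-termination branch elaborates what the paper simply asserts ``by construction.''
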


\begin{proof}
At depth $\suflen-1$, $|\sL_{\suflen-1}|\le \bw$ by (i).
Each parent produces exactly $k$ children via top-$k$ expansion (ii), and by (iii) no two parents produce the same child, so
\[
|\sC_{\suflen}| \;=\; k\,|\sL_{\suflen-1}|
\;\le\; \bw \cdot k.
\]
At the final step there is no across-beam prune; by (iv), $\varepsilon$-viability filtering and the $\mathsf{IsFinal}$ check can only remove candidates, so $|\sF| \leq |\sC^{\leq\varepsilon}_\suflen| \leq |\sC_\suflen| \leq \bw \cdot k$.
If $\sU_t = \emptyset$ for some $t < \suflen$, the run halts with $\sF = \emptyset$ by construction.
\end{proof}

Algorithms~\ref{app:alg:beam-topk-hamming} ($\ham$-pruned) and \ref{app:alg:beam-topk-lev} ($\lev$-pruned) satisfy (i)–(iv) in Lemma~\ref{lem:meta-cardinality}, so it applies exactly in both cases.

We now give a unified early-termination upper bound covering both $\ham$- and $\lev$-pruned variants.

\begin{lemma}[Early-termination UB under sound viability pruning]
\label{lem:meta-early-ub}
Fix $\varepsilon$ and a viability predicate satisfying \customlink{soundness}{soundness}.
Run Algorithm~\ref{alg:kcbs-pruned} and let $\sR_{\textnormal{prune}}$ denote the set of nodes removed by the across-beam prune during the run.
Since the across-beam prune acts on $\sC^{\leq\varepsilon}_t$, every node in $\sR_{\textnormal{prune}}$ is $\varepsilon$-viable at the time of pruning.
Suppose the run terminates early at some depth $t < \suflen$ with $\sC^{\leq\varepsilon}_t = \emptyset$, so that $\sF = \emptyset$ and $\lb = 0$.
Then
\[
\pseqe \;\leq\; \textit{bank}
\;=\;
\sum_{(\vu,\,\log p) \in \sR_{\textnormal{prune}}} \exp(\log p).
\]
\end{lemma}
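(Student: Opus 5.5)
The plan is a frontier/cut argument in the spirit of Lemma~\ref{lem:frontier-mass} and Proposition~\ref{prop:loose-ub}: partition all length-$\suflen$ top-$k$ continuations by the first node on their root-to-leaf path at which the run stopped following them, and then show that only one class of that partition can carry mass in $\ball(\suf)$.

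First I would classify every length-$\suflen$ continuation $\vx$ in the top-$k$ tree by the first prefix $\vx_{1:s}$ (with $s \le t$, the termination depth) that the algorithm either did not keep or never constructed. Since the run constructs every top-$k$ child of every beam element at each depth up to $t$, the first departure from the beam happens at some constructed child $\vx_{1:s}\in\sC_s$. Exactly one of the following then holds:
\begin{enumerate}[leftmargin=0.5cm]
\item it failed the viability test ($\varepsilon_\star>\varepsilon$);
\item it was an EOS child removed before ranking;
\item it was $\varepsilon$-viable but removed by the across-beam prune, so it lies in $\sR_{\textnormal{prune}}$.
\end{enumerate}
No path can survive all the way to depth $t$, because $\sC^{\leq\varepsilon}_t=\emptyset$ means every depth-$t$ child falls into the first or second class. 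The ``first departure'' rule guarantees that no element of $\sR_{\textnormal{prune}}$ is an ancestor of another, so the sets $\mathrm{Desc}_\suflen(\vu)$ for $\vu\in\sR_{\textnormal{prune}}$ are pairwise disjoint.

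Second, I would dispose of the first two classes. For the first class, \customlink{soundness}{soundness} in its contrapositive form says that no completion of $\vx_{1:s}$ lies in $\ball(\suf)$; this is Lemma~\ref{lem:ham-monotone} for Hamming and Lemma~\ref{lem:ukkonen-prune-and-safety} for Levenshtein. For the second class, an EOS-terminated path does not yield a length-$\suflen$ continuation under the EOS policy. I will treat this as part of the definition, mirroring category (d) in the Hamming upper-bound argument; it deserves a sentence of justification but no more.

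Third, I would bound the remaining mass by writing
\[
\pseqe=\sum_{\vx\in\ball(\suf)}\Pr_{\model,\dec}(\vx\mid\pre)\le\sum_{(\vu,\log p)\in\sR_{\textnormal{prune}}}\ \sum_{\vx\in\mathrm{Desc}_\suflen(\vu)}\Pr_{\model,\dec}(\vx\mid\pre).
\]
The inequality holds because every $\vx$ in the ball must belong to the third class, the descendant sets are disjoint, and dropping the ball indicator can only increase the sum. The descendant-sum identity \customlink{f3}{F3} (Equation~\ref{eq:descendant-sum}) turns the inner sum into $\Pr_{\model,\dec}(\vu\mid\pre)=\exp(\log p)$, where the last equality uses Corollary~\ref{cor:topk-path}. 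Finally, the bank is incremented by exactly $\sum_{\sC^{\leq\varepsilon}_s\setminus\sU_s}\exp(\log p')$ at each depth, so it equals this total sum.

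The step I expect to be the main obstacle is making the case analysis in the first step exhaustive and disjoint. In particular, I must make sure that a pruned node's descendants are never reconstructed later, which holds because only beam elements are expanded, and that the termination depth contributes nothing to the bank, since the algorithm returns before banking at depth $t$.
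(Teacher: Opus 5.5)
Your proposal is correct and follows essentially the paper's argument: each $\varepsilon$-close continuation is charged to its unique earliest across-beam-pruned prefix in $\sR_{\textnormal{prune}}$, the descendant sets of pruned nodes are pairwise disjoint, and the descendant-sum identity turns the resulting bound into \textit{bank}. Your first-departure case split (non-viable, EOS-removed, or across-beam-pruned) is somewhat more careful than the paper's proof, which asserts directly that the departure must be an across-beam prune and so silently relies on the same EOS convention that you state explicitly.
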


\begin{proof}
Fix any length-$\suflen$ continuation $\gensuf$ with $\dist(\gensuf,\,\suf) \leq \varepsilon$.
By \customlink{soundness}{soundness}, $\subgensuf{1:i}$ is $\mathsf{Viable}_i$ for every $i \in \{0, \ldots, \suflen\}$.
If $\subgensuf{1:t-1}$ were in $\sL_{t-1}$, the expansion at step $t$ would generate $\subgensuf{1:t}$, which is $\varepsilon$-viable by \customlink{soundness}{soundness} and therefore in $\sC^{\leq\varepsilon}_t$---contradicting $\sC^{\leq\varepsilon}_t = \emptyset$.
So $\subgensuf{1:t-1} \notin \sL_{t-1}$, and some earlier prefix of $\gensuf$ must have been removed by the across-beam prune.
Let
\begin{align*}
t^\star(\gensuf)\;&\coloneqq\;\min\big\{\,i\in\{1,\ldots,\suflen-1\}:\ \text{$\subgensuf{1:i}$ was removed by the across-beam prune}\,\big\},\\
\vu(\gensuf)\;&\coloneqq\;\subgensuf{1:t^\star(\gensuf)}.
\end{align*}
That is, $t^\star(\gensuf)$ is the earliest step at which some prefix of $\gensuf$ was discarded by the across-beam prune, and $\vu(\gensuf)$ is that pruned prefix.
By \customlink{soundness}{soundness}, $\vu(\gensuf)$ was $\varepsilon$-viable when pruned, so $\big(\vu(\gensuf),\,\log p\big) \in \sR_{\textnormal{prune}}$.
More generally, once a node is pruned, no descendants are ever constructed, so no element of $\sR_{\textnormal{prune}}$ is an ancestor of another and the descendant sets $\{\mathrm{Desc}_\suflen(\vu) : (\vu, \cdot) \in \sR_{\textnormal{prune}}\}$ are pairwise disjoint.
Every $\varepsilon$-viable $\gensuf$ belongs to exactly one such set, namely $\mathrm{Desc}_\suflen\big(\vu(\gensuf)\big)$.
Therefore,
\[
\pseqe
\;\leq\;
\sum_{(\vu,\,\_) \in \sR_{\textnormal{prune}}}
\;\sum_{\vx \in \mathrm{Desc}_\suflen(\vu)} \Pr_{\model,\dec}(\vx\mid\pre)
\;=\;
\sum_{(\vu,\,\log p) \in \sR_{\textnormal{prune}}} \exp(\log p)
\;=\;
\textit{bank},
\]
where the inequality holds because each $\varepsilon$-viable continuation is a descendant of exactly one pruned node (and non-$\varepsilon$-viable descendants only add mass), and the first equality is the descendant-sum identity (\customlink{f3}{F3}, Equation~\ref{eq:descendant-sum}).
\end{proof}

Both the Hamming (Algorithm~\ref{app:alg:beam-topk-hamming}) and Levenshtein (Algorithm~\ref{app:alg:beam-topk-lev}) variants satisfy the assumptions of Lemma~\ref{lem:meta-early-ub}: \customlink{soundness}{soundness} is established by Lemma~\ref{lem:ham-monotone} and Lemma~\ref{lem:ukkonen-prune-and-safety}, respectively, so $\pseqe \leq \textit{bank}$ upon early termination in both cases.

Last, recall that at the top of this appendix (Appendix~\ref{app:sec:prune}) we noted that $\varepsilon$-pruned $k$-CBS does not provably produce a tighter (dominating) lower bound than baseline $k$-CBS (with post-processing for distance $\leq \varepsilon$).
We now give a concrete example showing how viability pruning can cause the pruned algorithm to miss an $\varepsilon$-viable candidate that the baseline would have retained.
The same mechanism implies that the upper bound $\mathrm{UB}_{\varepsilon,\dist}$ is also not guaranteed to be tighter than the analogous (trivial) baseline upper bound. 

\medskip

\begin{remark}[No dominance guarantee]\label{rem:no-dominance}
There exist inputs for which $\mathrm{LB}_{\varepsilon,\dist}$ from $\varepsilon$-pruned $k$-CBS (Algorithm~\ref{alg:kcbs-pruned}) is strictly less than the lower bound obtained by running baseline $k$-CBS (Algorithm~\ref{app:alg:beam-topk}) and post-processing for distance $\leq\varepsilon$.
The same holds for the upper bound: the pruned $\mathrm{UB}_{\varepsilon,\dist}$ is neither provably larger nor provably smaller than the baseline upper bound, because the two algorithms explore different regions of the search space and account for unexplored mass differently.
This holds for both $\dist = \ham$ and $\dist = \lev$.
\end{remark}

\paragraph{Counterexample.}
Take $\bw = k = 2$.
Suppose that at some step $t$, the two algorithms have already diverged due to earlier viability pruning, and their beams differ. 
\begin{itemize}[leftmargin=0.5cm]
\item \textbf{Baseline beam} $\sL_t = \{(\vx, \log p(\vx)),\, (\vu, \log p(\vu))\}$, where $\vx$ is $\varepsilon$-viable and $\vu$ is not.
(The baseline retains $\vu$ because it does not prune by viability.)

\item \textbf{Pruned beam} $\sL_t = \{(\vx, \log p(\vx)),\, (\vw, \log p(\vw))\}$, where both $\vx$ and $\vw$ are $\varepsilon$-viable.
($\vw$ entered the pruned beam at an earlier step when $\vu$---or another non-$\varepsilon$-viable candidate---was pruned, freeing a beam slot.)
(We drop $\mathrm{aux}$ in the beam below, as we do not need to reference  it to make this point; it would be present in practice.) 
\end{itemize}

\noindent At step $t+1$, both algorithms expand their $\bw = 2$ beams to $\bw \cdot k = 4$ candidates.
Write $p(\cdot) = \exp(\log p(\cdot))$ for the cumulative path probability under the top-$k$ distribution.
Because top-$k$ renormalization ensures the $k$ conditional probabilities sum to~$1$, the $k$ children's cumulative probabilities $p'$ sum to exactly the parent's $p$.

Suppose $p(\vx) = 0.10$, $p(\vu) = 0.50$, $p(\vw) = 0.30$.
(These are three distinct depth-$t$ paths in the top-$k$ tree, so their sum $0.90 \leq 1$.)

\paragraph{Baseline expansion.} (from $\vx$ and $\vu$, with $\vu$ non-viable):
\[
\vx \;\to\; \{\vx^{(1)}\;(p'{=}0.06),\;\; \vx^{(2)}\;(p'{=}0.04)\},
\qquad
\vu \;\to\; \{\vu^{(1)}\;(p'{=}0.45),\;\; \vu^{(2)}\;(p'{=}0.05)\}.
\]
Top-$\bw = 2$ by probability: $\{\vu^{(1)},\, \vx^{(1)}\}$.
Because $\vu$ is non-viable, no descendant of $\vu$ can be $\varepsilon$-viable at depth $\suflen$: for $\ham$, this follows from monotonicity of $\mismatch_t$ (Lemma~\ref{lem:ham-monotone}); for $\lev$, from \customlink{soundness}{soundness} (Lemma~\ref{lem:ukkonen-prune-and-safety}).
So $\vu^{(1)}$ cannot contribute to the baseline lower bound despite occupying a beam slot.
The baseline's surviving $\varepsilon$-viable candidate at step $t+1$ is $\vx^{(1)}$.

\paragraph{Pruned expansion.} (from $\vx$ and $\vw$, both viable):
\[
\vx \;\to\; \{\vx^{(1)}\;(p'{=}0.06),\;\; \vx^{(2)}\;(p'{=}0.04)\},
\qquad
\vw \;\to\; \{\vw^{(1)}\;(p'{=}0.18),\;\; \vw^{(2)}\;(p'{=}0.12)\}.
\]
All four candidates are $\varepsilon$-viable.
Top-$\bw = 2$: $\{\vw^{(1)},\, \vw^{(2)}\}$.
Candidate $\vx^{(1)}$, with $p'(\vx^{(1)}) = 0.06 < 0.12 = p'(\vw^{(2)})$, is \emph{pruned from the beam}.

\paragraph{Consequence.}
The baseline's beam at step $t+1$ contains the $\varepsilon$-viable candidate $\vx^{(1)}$; the pruned beam does not.
If $\vw^{(1)}$ and $\vw^{(2)}$ later become non-$\varepsilon$-viable (their descendants diverge from the reference), they contribute no mass to the pruned lower bound---but $\vx^{(1)}$'s mass has already been lost from the pruned beam.
Meanwhile, if $\vx^{(1)}$ survives to the final step in the baseline, it contributes to the baseline lower bound.
The pruned algorithm traded $\vx^{(1)}$ for $\vw^{(1)}$ and $\vw^{(2)}$, and that trade can be a net loss.

\paragraph{Mechanism.}
Viability pruning reclaims beam slots occupied by non-$\varepsilon$-viable candidates ($\vu$), admitting new $\varepsilon$-viable candidates ($\vw$) whose children ($\vw^{(1)}, \vw^{(2)}$) can then crowd out $\varepsilon$-viable candidates ($\vx^{(1)}$) that the baseline would have retained.
In effect, the pruned algorithm explores a \emph{different} region of the search space, not a strict superset of the baseline's.

\paragraph{Empirical observation.}
Even so, in our experiments we consistently observe $\mathrm{LB}_{\varepsilon,\dist} \geq \mathrm{LB}_{\varepsilon,\dist}^{\textnormal{baseline}}$.
Non-$\varepsilon$-viable beam slots cannot contribute to the final lower bound, so replacing them with $\varepsilon$-viable candidates tends to increase the mass of $\varepsilon$-close completions discovered by the search.
The counterexample above requires the newly-admitted viable candidates' children to be numerous enough and probable enough to displace a previously-reachable viable candidate---a situation that appears to be rare in practice.\looseness=-1

\vspace{-.1cm}
\section{Details on experiments}\label{app:sec:experiments}
\vspace{-.1cm}

We next provide additional results and details on all experiments.
We first describe the metrics and visualizations that we use in our analysis (Appendix~\ref{app:sec:experiments:metrics}), 
then provide information about the specific models, datasets, and code (Appendix~\ref{app:sec:experiments:setup}).
We provide extended results for \textsc{OLMo 2} on Wikipedia (Appendix~\ref{app:experiments:olmo}), \textsc{Pythia} on Enron (Appendix~\ref{app:experiments:pythia}), and \textsc{Llama 2} (Appendix~\ref{app:experiments:llama2-scale}) and \textsc{Llama 3.1 8B} (Appendix~\ref{app:experiments:llama3}) on public domain books.

Following \citet{cooper2025books}, we set the minimum extraction threshold $\taumin=0.001$, which we also validate with our negative controls. 
We defer to \citet{cooper2026firstprinciples} for additional details on how calibrate more fine-grained thresholds. 
In most experiments, we use beam width $\bw=20$, as we observe this setting to work well for top-$k$ with $k=40$.
We make this decision based on the experiments shown in Appendix~\ref{app:experiments:width};
the gains from larger beam widths are marginal.\footnote{Of course, one could start with a wider beam and let $\varepsilon$-pruning naturally narrow it.
But to actually achieve improved efficiency through narrowing, we would need to implement batch compaction, which is not trivial.
In general, we defer improved efficiency, implementation, and detailed exploration of algorithm parameter tuning to future work.}
In Appendix~\ref{app:sec:experiments:prune}, we also do a full analysis comparing baseline $k$-CBS (Section~\ref{sec:kcbs:baseline}) to Hamming- and Levenshtein-pruned (Section~\ref{sec:pruning}) $k$-CBS, in terms of how changing $\varepsilon$ as a constraint impacts results.
Finally, we also run brief analysis (following \citet{ippolito-etal-2023-preventing}) on baseline $k$-CBS outputs for BLEU score (Appendix~\ref{app:sec:bleu}). 

We omit analysis comparing running $k$-CBS with and without the last-iteration across-beam prune.
We find that omitting the last prune leads to small overall gains in identifying extractable sequences, and can lead to large gains in extraction risk for individual sequences.
Since we get the up $\bw \cdot k$ outputs for free (i.e., with no additional model forward passes), we just report results for this configuration.  

We find that the Levenshtein-pruned algorithm performs the best:
it is faster than the baseline method, and in practice generally returns tighter lower bounds. 
We conclude from this analysis that running the $\levshort\, \varepsilon=5$ configuration is the reasonable choice;
the results can be post-processed for smaller $\varepsilon$ with minimal compromise in quality for those lower-$\varepsilon$ runs.\looseness=-1

The project \href{https://afedercooper.info/near-verbatim/}{website} contains additional visualizations.

\subsection{Metrics and visualizations for assessing extraction}\label{app:sec:experiments:metrics}

We detail the different types of metrics and visualizations that we use to analyze near-verbatim extraction: extraction rates (Section~\ref{app:sec:experiments:metrics:rates}), visualizing near-verbatim extractable sequences that verbatim extraction misses and their increased risk (Section~\ref{app:sec:experiments:metrics:unlocked-and-risk}), and per-sequence extraction mass composition (Section~\ref{app:sec:experiments:metrics:composition}).
For books, we also provide heatmaps that visualize where extraction occurs in a book, following \citet{cooper2025books} (Section~\ref{app:sec:experiments:metrics:heatmaps})

\subsubsection{Extraction rates}\label{app:sec:experiments:metrics:rates}

Extraction rates are the most common metric in the literature 
\citep{carlini2021extracting, carlini2023quantifying, nasr2023scalable, nasr2025scalable,hayes2025measuringmemorizationlanguagemodels,lee2022dedup}.
For our purposes, plotting extraction rates (like Figure~\ref{fig:olmo:rates:main}) is useful for showing how the number of extractable sequences changes over different settings---greedy vs.\ probabilistic, verbatim vs.\ near-verbatim.
As discussed in Section~\ref{sec:rw} and Appendix~\ref{app:sec:background:extraction:rate}, for some population of sequences $\seqset$:
\begin{align}
\mathsf{extraction\_rate}(\seqset; \suc) \;\triangleq\; \frac{1}{|\seqset|}\sum_{\seq\in\seqset} \1[\,\suc(\seq)\,], \tag{\ref{eq:extraction-rate}}
\end{align}
where $\suc(\seq)$ is a success predicate defined according to the chosen criterion.
For greedy extraction, the success criterion is
\begin{align}
    \suc^\dist_{\model,\greedy,\varepsilon}(\seq) \;\triangleq\;
    \1\Big[\, \dist\big(\gensuf, \suf\big) \le \varepsilon \,\Big] \tag{\ref{eq:success-greedy-nv}},
\end{align}
where setting $\varepsilon=0$ captures the verbatim case.
For probabilistic extraction and a chosen (validated) minimum extraction probability $\taumin$, the success criterion is
\begin{align}
\suc^\dist_{\model,\dec,\varepsilon}(\seq;\taumin)
\;\triangleq\;
\1\big[\, \pseqe \;\geq\; \taumin\,\big], \tag{\ref{eq:prob-success-nv}}
\end{align}
where similarly setting $\varepsilon=0$ captures the verbatim case. 

Note that extraction rates flatten information about variation in extraction risk surfaced by probabilistic extraction.
Since greedy extraction is deterministic, this flattening has no effect;
it just counts greedy extracted sequences. 
But for probabilistic extraction, any extractable sequence (irrespective of its risk) gets counted equally in the extraction rate.

\subsubsection{Visualizing ``unlocked'' sequences and per-sequence risk increase}\label{app:sec:experiments:metrics:unlocked-and-risk}

We use two types of visualizations for extraction risk.
First, we use scatter plots (like Figure~\ref{fig:gatsby:scatter:main}) to visualize how extractability and risk change per sequence.
This gives a qualitative sense of how risk changes for the set of sequences.
On a $\log$--$\log$ scale, we plot a sequence's near-verbatim mass as a function of its verbatim mass, showing the line $y=x$ as a reference point. 
We highlight three different categories in different colors.
\textcolor{seabornbluemid}{\textbf{Blue}} points are verbatim-extractable ($\pseq\!\geq\!\taumin$).
Points above the dashed $y\!=\!x$ line therefore show increased extraction risk when near-verbatim mass is accounted for.
\textcolor{seabornredmid}{\textbf{Red}}/\textcolor{seabornorangemid}{\textbf{orange}}
points are ``unlocked'' by near-verbatim extraction. 
They are all to the left of the $\taumin$ dotted reference line. 
For \textcolor{seabornorangemid}{\textbf{orange}} points, $0\!<\!\pseq\!<\!\taumin$, but $p_{\seq,5}^\levshort\!\geq\!\taumin$).
That is, these points have some verbatim mass, but insufficient mass to be verbatim extractable; 
they become extractable when the near-verbatim mass is included.
\textcolor{seabornredmid}{\textbf{Red}} points have zero verbatim mass (which is why they are not on the $\log$--$\log$ plot canvas), but are near-verbatim extractable. 

Second, we provide complementary cumulative distribution functions (CCDFs) of per-sequence near-verbatim mass gain.
For each sequence $\seq$ in the training dataset, we compute the per-sequence mass gain 
\begin{align}
\label{app:eq:per-seq-mass-gain}
\Delta_\seq \triangleq \hat{p}^\levshort_{\seq,5} - \pseq,
\end{align}
i.e., the mass for the Levenshtein distance with $\varepsilon=5$ minus the verbatim mass from $k$-CBS results. 
This is the absolute additional extraction risk revealed by relaxing from verbatim to near-verbatim for that specific sequence.
We plot the CCDF of these gains.
The $x$-axis is the per-sequence mass gain threshold $\Delta_\seq$. 
The $y$-axis is the percentage of all sequences in the population (not just extractable ones) with gain $> \Delta_\seq$

Only sequences with strictly positive gain appear in the curve, but the denominator is the full set, so the $y$-intercept gives the fraction of all sequences that have any near-verbatim mass gain.
The curve then steps down as $\Delta_\seq$ increases, showing the tail of the gain distribution.
A point on the CCDF (e.g., and annotation like $1.2\%$ at $\Delta_\seq=0.001$) means $1.2\%$ of all sequences in the population have per-sequence absolute mass gain $\geq0.001$ from the near-verbatim relaxation;
at $\Delta_\seq = 0.1$, it shows the fraction of sequences that gain (absolute) mass gain $\geq 0.1$.\looseness=-1

A curve shifting right/upward for larger models means that larger models have both more sequences with gains and larger gains per sequence---a heavier-tailed distribution of near-verbatim risk increase.

We also provide tables containing values from the CCDF over per-sequence mass gain (Equation~\ref{app:eq:per-seq-mass-gain}), computed with respect to each model/dataset's \emph{fixed} extractable set.
This shows the gain just for extracted sequences, and is not easily comparable across experiments since the extractable set is different (i.e., each is normalized by a different denominator). 

\subsubsection{Analyzing per-sequence risk composition}\label{app:sec:experiments:metrics:composition}

We provide three different ways to visualize how per-sequence extraction mass is allocated at different Levenshtein distances. 
First, for each extractable sequence ($p^\levshort_{\seq,5} \geq \taumin = 10^{-3}$), we
decompose its total near-verbatim mass into incremental \newterm{$\varepsilon$-shells}: 
\begin{align}
\label{app:eq:mass-shells}
\Delta_\seq(\varepsilon) \triangleq \hat{p}^\levshort_{\seq,\,\varepsilon} - \hat{p}^{\levshort}_{\seq,\,\varepsilon-1}, 
\end{align}
where $\Delta_\seq(0) = \pseq$ (verbatim mass).
While Equation~\ref{app:eq:per-seq-mass-gain} computes the mass gain with respect to the near-verbatim mass using the most permissive distance threshold and the verbatim mass, the above equation diffs the mass between adjacent thresholds to identify how much mass is exactly attributable to a specific $\levshort$ distance---a shell in the $\varepsilon$-ball. 

For $\varepsilon \in \{0,\ldots 5\}$, we can compute the \newterm{$\varepsilon$-shell share}:
\begin{align}
\label{app:eq:shell-fraction}
\mathsf{shell\_share}(\seq,\varepsilon)\triangleq \frac{\Delta_\seq(\varepsilon)}{\hat{p}^\levshort_{\seq,\,5}}.
\end{align}
For each sequence, the respective $\varepsilon$-shell shares sum to $1$.
We also focus particular attention on the \newterm{verbatim share} shell fraction (i.e., for $\Delta_\seq(0)$):
\begin{align}
\label{app:eq:verbatim-share}
\mathsf{verbatim\_share}(\vz) = \frac{\pseq}{\hat{p}^\levshort_{\seq,\,5}}. 
\end{align}
From computing the $\varepsilon$-shell share for every $\varepsilon \in \{0,\ldots,5\}$ using Equation~\ref{app:eq:shell-fraction}, we can produce a per-sequence heatmap of the mass contributions at each $\levshort$ distance to a sequence's overall extraction mass (e.g., Figure~\ref{app:fig:olmo:heatmap}).
Each row represents an extracted sequence (with rows sorted by the verbatim mass share, see Equation~\ref{app:eq:verbatim-share}). 
The columns are the $\varepsilon$-shells.
Each cell's color is the intensity of that shell's fraction for that sequence.
For instance, sequences at the top have all of their mass allocated to the verbatim continuation;
the verbatim share is $100\%$, so that shell is indicated to have $100\%$ of the share and the other $5$ shells indicate $0\%$.
The heatmap as a whole visualizes heterogeneity across sequences, with respect to which $\levshort$ distances contribute different relative amounts of mass. 

We also plot distributions over extracted-sequence $\varepsilon$-mass share.
In the main paper, we provide comparative plots across model sizes for the verbatim share---i.e., Equation~\ref{app:eq:verbatim-share}, for the same dataset across a model family (e.g., Figure~\ref{fig:violin:main}). 
We provide this type of plot here as well, and complement it with per-model-and-dataset distribution breakdowns for each $\varepsilon$-shell's mass (e.g., Figure~\ref{app:fig:olmo:incremental}). 
These show the distribution of per-sequence $\varepsilon$-shell fractions, with one violin plot per $\varepsilon$-shell. 
They reveal the spread and central tendency (if there is one) of how mass is distributed across $\levshort$ distances. 

\subsubsection{Heatmaps to visualize book extraction coverage}\label{app:sec:experiments:metrics:heatmaps}

We use the same heatmap visualization as in \citet{cooper2025books} for book memorization. 
For the books experiments, we produce sequences by sliding through the length of the book, taking $100$-token segments every $20$ characters (Appendix~\ref{app:sec:experiments:setup:llama}). 
The sequences therefore overlap (which is intentional, see \citet{cooper2025books}). 
Multiple suffixes cover each character position in the book; 
for each position, we show the maximum extraction probability across covering suffixes, so that the heatmap reflects the highest extraction risk at each location. 
Larger probabilities have darker color/higher intensity (on a $\log$ scale), and everything below $\taumin$ is shown in white (not extractable). 
In \citet{cooper2025books}, the heatmaps are built from verbatim extraction probabilities ($\varepsilon = 0$).
Here, we show near-verbatim extraction probabilities $\hat{p}^\levshort_{\seq, \varepsilon}$ for the chosen distance threshold $\varepsilon$. 
We omit heatmaps for the Hamming distance.\looseness=-1

\subsection{Experimental settings}\label{app:sec:experiments:setup}

Being a \newterm{member} of the training data is a necessary (though not sufficient) condition for extraction: 
by definition, extraction only applies to memorized training data.
Therefore, the simplest setting for studying memorization and extraction is to run experiments on known training data.
For open models, in some cases we have ground-truth knowledge about the training data, and so we leverage these cases to simplify our experimental setup.
For each extraction experiment, we test models with data that was known with certainty to be in the training dataset, and run experiments for three different settings:
Wikipedia entries, public domain books, and emails.

Even when we know if a particular sequence was included in the training data, it is often quite challenging to know with certainty that a particular sequence was \emph{not} in the training data 
\citep{lee2023explainers, cooper2024unlearning}.
This is because models are typically trained on enormous web scrapes, which contain duplicates and near-duplicates of content available from multiple sources~\citep{lee2022dedup}. 
Nevertheless, to assess the validity of an extraction procedure, it is important to make a best-effort to run \newterm{negative controls}:
experiments on non-training data (\newterm{held-out data}), which should register no extraction signal with our measurement procedure.
If our extraction procedure registers such signal on non-training data (where extraction is impossible), then the procedure produces false positives. 
In memorization research, it is more common to tolerate false negatives than false positives~\citep{hayes2025strongmia, cooper2025books};
there is a preference toward conservative claims originating from the \newterm{membership inference} literature in ML security~\citep{carlini2022membership, shokri2017membership}. 
We therefore make a best effort to pair our extraction experiments with appropriate negative controls~\citep{cooper2026firstprinciples}.\looseness=-1 

We detail each model and dataset below, including which datasets we run for which model based on known information about training-set membership.
Altogether, we test open-weight, non-instruction-tuned LLMs from three different families. 
We obtain all model weights from HuggingFace.
All code can be found on \href{https://github.com/pasta41/probabilistic-extraction-toolkit}{GitHub}. 

\subsubsection{\textsc{OLMo 2} on Wikipedia}\label{app:sec:experiments:setup:olmo} 

\paragraph{Models.} 
We run experiments on all three \textsc{OLMo 2} model sizes: \href{https://huggingface.co/allenai/OLMo-2-1124-7B}{7B}, \href{https://huggingface.co/allenai/OLMo-2-1124-13B}{13B}, and \href{https://huggingface.co/allenai/OLMo-2-0325-32B}{32B}. 
        
\paragraph{Training data subset.} 
The training dataset for \textsc{OLMo 2} is publicly available, and the overall training recipe is documented in the release report~\citep{olmo2}.
Wikipedia was included in the training data, and we draw a subset of these entries for training data for our extraction experiments.
Specifically, we draw $10{,}000$ unique Wikipedia entries, which has a cutoff date of December 2023, where each entry is at least $100$ tokens long.  
The data can be found on \href{https://huggingface.co/datasets/allenai/olmo-mix-1124}{HuggingFace}.
We specifically streamed records from \texttt{data/wiki/wiki-0000.json.gz} and \texttt{data/wiki/wiki-0001.json.gz} from the \texttt{allenai/olmo-mix-1124} (used for training \textsc{OLMo 2}).
We take the first $100$ tokens of each entry as a sequence.
For convenience, we provide the subset of the data that we use as the input file for our experiments 
\href{https://drive.google.com/file/d/1K76kfAG0hXxgTiinvWm-0U4k148ALN-n/view?usp=drive_link}{here}.

\paragraph{Negative control (held-out subset).}
We curated a set of $10{,}000$ Wikipedia entries that post-date \textsc{OLMo 2}'s training data.
To obtain these pages, we paginated through Wikipedia's \texttt{logevents} API (\texttt{letype=create}, main namespace) from February 24, 2026 back to January 1, 2024. 
For each batch of $500$ creation events, we checked the page lengths via \texttt{prop=info} and kept pages with $\geq\!8{,}000$ bytes of wikitext. 
We then \newterm{reservoir sampled}\footnote{Reservoir sampling obtains a uniform sample of $N$ items from a stream of unknown size, without loading everything into memory.
We place the first $N$ items into the reservoir; 
then, for every subsequent item we encounter, we assign a shrinking random chance of swapping it into the reservoir, replacing a random existing entry.
By the time we have iterated through every item, each item in the entire stream has had an equal probability of ending up in the final reservoir.} $100{,}000$ pages that fit these conditions from the whole range of Wikipedia pages in this time frame, de-duplicating by page ID.
(This means we only ever hold $100{,}000$ pages in memory at one time.) 
We maintained this full list of pages.
However, since page-fetching can be slow, we trimmed the $100{,}000$ pages to $30{,}000$ (randomly sampled), and filtered redirects and disambiguation pages using \texttt{prop=pageprops|info}.
Then, we verified each page's creation date, fetching the oldest revision (\texttt{rvdir=newer\&rvlimit=1}) and confirming the timestamp was on or after \texttt{2024-01-01}, removing any pages that failed this check. 
We then fetched the plain text for each page one at a time (\texttt{prop=extracts\&explaintext=true}) and filtered by length, dropping any page with fewer than $1{,}500$ characters of plain text.
We then trimmed to $10{,}000$ total pages, shuffled them, and saved them as JSON (both metadata and full article text).
We provide the dataset 
\href{https://drive.google.com/file/d/1UjN_EfzPUlspD9PmTBf_WUfhyKc9idTH/view?usp=sharing}{here}, 
and note that we ran experiments using only the first $5{,}000$ pages.\looseness=-1

This is also an imperfect negative control, as old deleted Wikipedia pages (that were in the training data) may be re-created and posted with new create dates;
these new pages (containing old content) would then not actually post-date the training cutoff, and our process would pull them in.
There are also instances of highly templated text in Wikipedia pages;
new pages can be near-duplicates of older pages.
We document observed instances of this in our negative controls in Appendix~\ref{app:experiments:olmo:extraction}. 

\subsubsection{{\textsc{Pythia} on emails}}\label{app:sec:experiments:setup:pythia} 

\paragraph{{Models}.} 
We run experiments on the four largest \textsc{Pythia} model sizes: \href{https://huggingface.co/EleutherAI/pythia-1b}{1B}, \href{https://huggingface.co/EleutherAI/pythia-2.8b}{2.8B}, \href{https://huggingface.co/EleutherAI/pythia-6.9b}{6.9B}, and \href{https://huggingface.co/EleutherAI/pythia-12b}{12B}~\citep{pythia}. 

\paragraph{{Training data subset}.} 
The Pythia suite was trained on the Pile~\citep{gao2020pile}, which contains multiple copies of the Enron email dataset. 
Similar to \citet{hayes2025measuringmemorizationlanguagemodels}, we therefore use Enron (obtained from the Pile) as our sample of \textsc{Pythia} training data in extraction experiments.
We use $10{,}000$ unique emails, where each email is at least $100$ tokens long. 
We take the first $100$ tokens of each email as a sequence.
We provide the specific version of the Enron dataset we used 
\href{https://drive.google.com/file/d/1WG7GlNSP0ZbsLODzxl7P0Lj9kMmKXbLT/view?usp=sharing}{here}.

\paragraph{Negative control (held-out subset).}
As a corresponding negative control, we run experiments on a subset of \href{https://github.com/imdeepmind/Preprocessed-TREC-2007-Public-Corpus-Dataset}{TREC 2007 Spam}. 
We adapt this experiment from \citet{hayes2025measuringmemorizationlanguagemodels}.
However, we note from manual inspection of the ``ham'' subset that many of these emails contain verbatim news articles (i.e., are news digests), which are very likely to be in Common Crawl and other web scrapes, and thus are likely candidates to be training data~\citep{lee2023explainers}. 
We therefore filter these emails out, and use a resulting set of $2{,}000$ emails that are at least $100$ tokens long for our held-out subset, which we provide 
\href{https://drive.google.com/file/d/1ns5P1CKDKUqDHPUVoVP3wFore9cVR4qW/view?usp=drive_link}{here}. 
We take the first $100$ tokens of each email as a sequence.
    
\subsubsection{{\textsc{Llama} on public domain books}}\label{app:sec:experiments:setup:llama} 

\paragraph{{Models}.} 
We run experiments on \href{https://huggingface.co/huggyllama/llama-13b}{\textsc{Llama 1 13B}}~\citep{touvron2023llamaopenefficientfoundation} (Table~\ref{app:tab:diffs-ex-gatsby}), the \textsc{Llama 2} family (\href{https://huggingface.co/meta-llama/Llama-2-7b}{7B}, \href{https://huggingface.co/meta-llama/Llama-2-13b}{13B}, \href{https://huggingface.co/meta-llama/Llama-2-70b}{70B})~\citep{llama2}, and \href{https://huggingface.co/meta-llama/Llama-3.1-8B}{\textsc{Llama 3.1 8B}}~\citep{llama3}. 
We pick these models based on results from \citet{cooper2025books}:
we use the same illustrative example from that work involving \textsc{Llama 1 13B}; 
we use \textsc{Llama 2} as it has three sizes available  to examine different scales;
and we use \textsc{Llama 3.1 8B} to also examine one model from the more recent series.
For concision, we predominantly omit for \textsc{Llama 3.1 8B}.

\paragraph{{Training data subset}.} 
It is public knowledge that Meta trained all \textsc{Llama} models on books, and that the first three generations of these models (through the last minor versions of \textsc{Llama 3}) were trained on the Books3 corpus~\citep{touvron2023llamaopenefficientfoundation,kadreyamendedconsolidated,cooper2025books}. 
We pull four public domain books from Books3: 
\textit{The Great Gatsby}~\citep{The_Great_Gatsby}, \textit{Winnie the Pooh}~\citep{Winnie_the_Pooh}, \textit{Orlando}~\citep{Orlando}, and \textit{Pride and Prejudice}~\citep{Pride_and_Prejudice}.
We trim any edition-specific front matter (e.g., editor's forward) and back matter (e.g., index) so that the file only contains text from the book. 
We then chunk up the book into $100$-token sequences, starting from the first character and using a stride length of $20$ characters. 
We upload the text files for these trimmed four public domain books 
\href{https://drive.google.com/drive/folders/1CoSFs4Rcv7ijb4j2NZwY6RAS6wCuiSQ_?usp=sharing}{here}. 
To create sequences from books, we follow the procedure in \citet{cooper2025books}, where we take $100$-token sequences every $s$ characters;
we set $s=20$.
For the \textsc{Llama 2} tokenizer, this yields $13{,}390$ sequences for \emph{The Great Gatsby}, $21{,}822$ sequences for \emph{Orlando}, $6{,}152$ sequences for \emph{Winnie the Pooh}, and $10{,}457$ for \emph{The People's Dictator} (first three chapters). 
For the \textsc{Llama 3.1} tokenizer, this yields $34{,}271$ sequences for \emph{Pride and Prejudice} and $10{,}454$ for \emph{The People's Dictator} (first three chapters).

\paragraph{Negative control (held-out subset).}
We use \emph{The People's Dictator} as a negative control. 
This is an openly licensed (CC-BY-SA-NC) book from 2025---well after \textsc{Llama 3}'s training date cutoff.
We provide a link to the full book PDF 
\href{https://www.taylorfrancis.com/books/oa-mono/10.4324/9781032691848/people-dictator-alejandro-quiroga?_gl=1*19hy4lr*_gcl_au*MjExMjI2NzQ0OC4xNzc0NDg1OTk4*_ga*MTcxMDgyNjg2MS4xNzc0NDg2MDAw*_ga_0HYE8YG0M6*czE3NzQ0ODYwMDAkbzEkZzAkdDE3NzQ0ODYwMDAkajYwJGwwJGgw}{here}.
We removed the licensing information from the text file and trimmed to the first three chapters prior to running experiments. 

As \citet{cooper2025books} document, while this is a useful way to select a negative control, it is imperfect, as books may verbatim quote texts from prior to the training cutoff data;
the \emph{document} of a post-cutoff book may not be included in the training data, but \emph{specific text} within that document may be included in the training data from other sources.
A more ideal negative control would use a public domain novel that we know with certainty was not included in \textsc{Llama}'s training data;
however, we do not have sufficient information to identify such a book, and so instead use long-form narrative text from the book noted above.\looseness=-1 

\clearpage
\subsubsection{Summary of reported experiments}\label{app:sec:experiment:summary}

Table~\ref{tab:exp-summary} summarizes all experimental runs and configurations reported in this paper. 

\begin{table*}[h!]
\centering
\scriptsize
\setlength{\tabcolsep}{5pt}
\caption{\textbf{Reported experimental configurations.}
Configurations across datasets, models, and constraints
(where appropriate, beam width $\bw$, distance metric $\dist \in \{\hamshort,\levshort\}$, and tolerance $\varepsilon$).
We report the number of GPUs and batch size for each experiment.
NC = negative control; stride = number of characters used for chunking in book runs.}
\label{tab:exp-summary}
\begin{tabular}{llllll}
\toprule
\textbf{Domain} & \textbf{Dataset} & \textbf{Models} & \textbf{NC} & \textbf{Constraint} & \textbf{Run config} \\
\midrule

\multicolumn{6}{l}{\textbf{Verbatim probabilistic (teacher forcing); greedy generation}} \\
\midrule

Books
& \makecell[l]{\emph{Winnie} \\ \emph{the Pooh}}
& \makecell[l]{\textsc{Llama 2}\\ \{7B, 13B, 70B\}}
& No
& --
& \makecell[l]{batch $200$ (7B,13B), $400$ (70B);\\stride $20$; GPUs $1$/$1$/$4$}
\\

Books
& \makecell[l]{\emph{The Great}\\ \emph{Gatsby}}
& \makecell[l]{\textsc{Llama 2}\\ \{7B, 13B, 70B\}}
& No
& --
& \makecell[l]{batch $200$ (7B,13B), $400$ (70B);\\stride $20$; GPUs $1$/$1$/$4$}
\\

Books
& \makecell[l]{\emph{Orlando}}
& \makecell[l]{\textsc{Llama 2}\\ \{7B, 13B, 70B\}}
& No
& --
& \makecell[l]{batch $200$ (7B,13B), $400$ (70B);\\stride $20$; GPUs $1$/$1$/$4$}
\\

Books
& \makecell[l]{\emph{The People's}\\\emph{Dictator}}
& \makecell[l]{\textsc{Llama 2}\\ \{7B, 13B, 70B\};\\ \textsc{Llama 3.1}\\ 8B}
& Yes
& --
& \makecell[l]{batch $200$ (7B,13B,8B), $400$ (70B);\\stride $20$; GPUs $1$/$1$/$1$/$4$}
\\

Books
& \makecell[l]{\emph{Pride and}\\\emph{Prejudice}}
& \makecell[l]{\textsc{Llama 3.1}\\ 8B}
& No
& --
& \makecell[l]{batch $200$; stride $20$; GPUs $1$}
\\

Emails
& Enron
& \makecell[l]{\textsc{Pythia}\\ \{1B, 2.8B, 6.9B, 12B\}}
& No
& --
& \makecell[l]{batch $200$; stride --; GPUs $1$}
\\

Emails
& \makecell[l]{TREC 2007 Spam\\``ham'' subset}
& \makecell[l]{\textsc{Pythia}\\ \{1B, 2.8B, 6.9B, 12B\}}
& Yes
& --
& \makecell[l]{batch $200$; stride --; GPUs $1$}
\\

Wikipedia
& Train
& \makecell[l]{\textsc{OLMo 2}\\\{7B, 13B, 32B\}}
& No
& --
& \makecell[l]{batch $200$ (7B,13B), $400$ (32B);\\stride --; GPUs $1$/$1$/$4$}
\\

Wikipedia
& Held out
& \makecell[l]{\textsc{OLMo 2}\\\{7B, 13B, 32B\}}
& Yes
& --
& \makecell[l]{batch $200$ (7B,13B), $400$ (32B);\\stride --; GPUs $1$/$1$/$4$}
\\

\midrule
\multicolumn{6}{l}{\textbf{Sampling}} \\
\midrule
MC
& \makecell[l]{{Single sequence,}\\\emph{The Great Gatsby}}
& \makecell[l]{\textsc{Llama 2}\\ 7B}
& No
& --
& \makecell[l]{batch $200$; stride --; GPUs $1$; \\ $M=10{,}000$;\\ seeds \{1000,2000,3000\}}
\\

\midrule
\multicolumn{6}{l}{\textbf{$k$-CBS}} \\
\midrule

\textbf{baseline} & & & & &\\
\cmidrule{0-1}

One-off
& \makecell[l]{{Single sequence,}\\\emph{The Great Gatsby}}
& \makecell[l]{\textsc{Llama 1}\\ 13B}
& No
& \makecell[l]{$\bw \in \{20,30,40\}$}
& \makecell[l]{batch $1$; stride --; GPUs $1$}
\\

Emails
& Enron
& \makecell[l]{\textsc{Pythia}\\ \{1B, 2.8B, 6.9B, 12B\}}
& No
& \makecell[l]{$\bw=20$}
& \makecell[l]{batch $10$; stride --; GPUs $1$}
\\

Emails
& \makecell[l]{TREC 2007 Spam\\``ham'' subset}
& \makecell[l]{\textsc{Pythia}\\ \{1B, 2.8B, 6.9B, 12B\}}
& Yes
& \makecell[l]{$\bw=20$}
& \makecell[l]{batch $10$; stride --; GPUs $1$}
\\

Wikipedia
& Train
& \makecell[l]{\textsc{OLMo 2}\\\{7B, 13B, 32B\}}
& No
& \makecell[l]{$\bw=20$}
& \makecell[l]{batch $10$ (7B,13B), $20$ (32B);\\stride --; GPUs $1$/$1$/$4$}
\\

Wikipedia
& Held out
& \makecell[l]{\textsc{OLMo 2}\\\{7B, 13B, 32B\}}
& Yes
& \makecell[l]{$\bw=20$}
& \makecell[l]{batch $10$ (7B,13B), $20$ (32B);\\stride --; GPUs $1$/$1$/$4$}
\\

Books
& \makecell[l]{\emph{Winnie} \\ \emph{the Pooh}}
& \makecell[l]{\textsc{Llama 2}\\ \{7B, 13B, 70B\}}
& No
& \makecell[l]{$\bw=20$}
& \makecell[l]{batch $10$ (7B,13B), $20$ (70B); \\stride $20$; GPUs $1$/$1$/$4$}
\\

Books
& \makecell[l]{\emph{The People's}\\\emph{Dictator}}
& \makecell[l]{\textsc{Llama 2}\\ \{7B, 13B, 70B\}}
& No
& \makecell[l]{$\bw=20$}
& \makecell[l]{batch $10$ (7B,13B), $20$ (70B); \\stride $20$; GPUs $1$/$1$/$4$}
\\

\cmidrule{0-1}
\textbf{$\varepsilon$-viability pruned} & & & & &\\
\cmidrule{0-1}

Books
& \makecell[l]{\emph{Winnie} \\ \emph{the Pooh}}
& \makecell[l]{\textsc{Llama 2}\\ \{7B, 13B, 70B\}}
& No
& \makecell[l]{$\bw=20$;\\ $\hamshort,\, \varepsilon \in \{0,\ldots,5\}$}
& \makecell[l]{batch $10$ (7B,13B), $20$ (70B); \\stride $20$; GPUs $1$/$1$/$4$}
\\

Books
& \makecell[l]{\emph{Winnie} \\ \emph{the Pooh}}
& \makecell[l]{\textsc{Llama 2}\\ \{7B, 13B, 70B\}}
& No
& \makecell[l]{$\bw=20$;\\ $\levshort,\, \varepsilon \in \{0,\ldots,5\}$}
& \makecell[l]{batch $10$ (7B,13B), $20$ (70B); \\stride $20$; GPUs $1$/$1$/$4$}
\\

Books
& \makecell[l]{\emph{The People's}\\\emph{Dictator}}
& \makecell[l]{\textsc{Llama 2}\\ \{7B, 13B, 70B\}}
& Yes
& \makecell[l]{$\bw=20$;\\ $\hamshort,\, \varepsilon \in \{0,\ldots,5\}$}
& \makecell[l]{batch $10$ (7B,13B), $20$ (70B); \\stride $20$; GPUs $1$/$1$/$4$}
\\

Books
& \makecell[l]{\emph{The People's}\\\emph{Dictator}}
& \makecell[l]{\textsc{Llama 2}\\ \{7B, 13B, 70B\}}
& Yes
& \makecell[l]{$\bw=20$;\\ $\levshort,\, \varepsilon \in \{0,\ldots,5\}$}
& \makecell[l]{batch $10$ (7B,13B), $20$ (70B); \\stride $20$; GPUs $1$/$1$/$4$}
\\

Books
& \makecell[l]{\emph{The Great}\\ \emph{Gatsby}}
& \makecell[l]{\textsc{Llama 2}\\ \{7B, 13B, 70B\}}
& No
& \makecell[l]{$\bw=20$;\\ $\levshort,\, \varepsilon=5$}
& \makecell[l]{batch $10$ (7B,13B), $20$ (70B); \\stride $20$; GPUs $1$/$1$/$4$}
\\

Books
& \makecell[l]{\emph{Orlando}}
& \makecell[l]{\textsc{Llama 2}\\ \{7B, 13B, 70B\}}
& No
& \makecell[l]{$\bw=20$;\\ $\levshort,\, \varepsilon=5$}
& \makecell[l]{batch $10$ (7B,13B), $20$ (70B); \\stride $20$; GPUs $1$/$1$/$4$}
\\

Books
& \makecell[l]{\emph{Pride and}\\\emph{Prejudice}}
& \makecell[l]{\textsc{Llama 3.1}\\ 8B}
& No
& \makecell[l]{$\bw=20$;\\ $\levshort,\, \varepsilon=5$}
& \makecell[l]{batch $10$; stride $20$; GPUs $1$}
\\

Books
& \makecell[l]{\emph{Winnie} \\ \emph{the Pooh}}
& \makecell[l]{\textsc{Llama 2}\\ \{7B, 13B, 70B\}}
& No
& \makecell[l]{$\bw\in\{30,40\}$;\\ $\levshort,\, \varepsilon=5$}
& \makecell[l]{$\bw=30$: batch $7$/$7$/$13$;\\$\bw=40$: batch $5$/$5$/$10$;\\stride $20$; GPUs $1$/$1$/$4$}
\\

\bottomrule
\end{tabular}
\end{table*}
\FloatBarrier
\subsection{Extended results for \textsc{OLMo 2} on Wikipedia}\label{app:experiments:olmo}

This appendix provides additional analysis for the \textsc{OLMo 2} experiments presented in Section~\ref{sec:experiments}.
For these settings, we run experiments with the baseline $k$-CBS method in Section~\ref{sec:kcbs:baseline}, and do not include results for the pruned variants in Section~\ref{sec:pruning}.
We only post-process the $k$-CBS results in this section for Levenshtein distance.

\subsubsection{Extracted sequences and  rates}\label{app:experiments:olmo:extraction}

\paragraph{Verbatim greedy extraction  and verbatim probabilistic extraction.} 
As a point of reference, we provide verbatim extraction rates from two different pipelines in Table~\ref{tab:olmo:verbatim:rates}: 
verbatim greedy extraction (via generation) and verbatim probabilistic extraction (via teacher forcing).
While these methods involve roughly equivalent token evaluations, wall clock time for teacher forcing is almost always faster (for reasons discussed in Appendix~\ref{app:sec:intuition:cost}). 

\begin{table}[h]
\centering
\caption{\textbf{Verbatim extraction rates for \textsc{OLMo 2} on Wikipedia.}
Verbatim extraction rates via greedy (deterministic generation), teacher forcing, and $k$-CBS
on training data ($10{,}000$ sequences) and held-out negative controls ($5{,}000$ sequences).
Probabilistic and $k$-CBS rates use threshold $\pseq \geq \taumin = 0.001$.}
\label{tab:olmo:verbatim:rates}
\scriptsize
\begin{tabular}{l cc cc cc}
\toprule
& \multicolumn{2}{c}{\textsc{OLMo 2 7B}} & \multicolumn{2}{c}{\textsc{OLMo 2 13B}} & \multicolumn{2}{c}{\textsc{OLMo 2 32B}} \\
\cmidrule(lr){2-3}
\cmidrule(lr){4-5}
\cmidrule(lr){6-7}
Pipeline & Train & Held-out & Train & Held-out & Train & Held-out \\
\midrule
Verbatim greedy & 0.25\% & 0.00\% & 0.33\% & 0.00\% & 0.61\% & 0.02\% \\
Verbatim probabilistic & 0.69\% & 0.00\% & 1.02\% & 0.00\% & 1.61\% & 0.04\% \\
Verbatim $k$-CBS & 0.55\% & 0.00\% & 0.87\% & 0.00\% & 1.42\% & 0.04\% \\
\bottomrule
\end{tabular}
\end{table}

\begin{table}[h]
\centering
\caption{\textbf{Verbatim probabilistic vs.\ verbatim $k$-CBS on \textsc{OLMo 2} (training data).}
$k$-CBS with beam width $\bw\!=\!20$ may under-count verbatim extraction.
We report rates ($\taumin=0.001$) for $\bw\!=\!20$, the $k$-CBS recovery ratio, and the number of sequences missed by $k$-CBS.}
\label{tab:olmo:kcbs:verbatim:gap}
\scriptsize
\begin{tabular}{l r r r}
\toprule
& \textsc{OLMo 2 7B} & \textsc{OLMo 2 13B} & \textsc{OLMo 2 32B} \\
\midrule
Verbatim probabilistic & 0.69\% & 1.02\% & 1.61\% \\
Verbatim $k$-CBS & 0.55\% & 0.87\% & 1.42\% \\
\midrule
Recovery ratio & 0.80 & 0.85 & 0.88 \\
Missed by $k$-CBS & 14 & 16 & 19 \\
\bottomrule
\end{tabular}
\end{table}

Note that, as underscored by Table~\ref{tab:olmo:kcbs:verbatim:gap}, the lower bound extraction probabilities provided by baseline $k$-CBS under-count the verbatim probabilistic extraction rate produced by teacher forcing.
Part of the reason is that baseline $k$-CBS provides a lower bound due to pruning paths each iteration, and therefore can miss some mass.
We still recover the larger majority of the mass; the extraction rates are fairly similar, with a decrease in (relative) under-counting as model size increases. 
This is because the missed sequences are relatively low mass, and we find that larger model sizes often exhibit higher per-sequence extraction risk.

Further, there are subtle differences between logits in teacher forcing (which does not involve KV caching) and generation (which does), which can slightly change exchange results for the same sequences due to floating point rounding.
Given the low cost of the verbatim probabilistic pipeline, one could also just run both and take the union over results.
However, it would be necessary to note that the noise from logit differences may impact estimates---i.e., from mixing verbatim probabilities from the verbatim probabilistic pipeline with $k$-CBS.
Alternatively, one could also widen the beam width used for $k$-CBS to capture more paths.\looseness=-1

Experiments on pruned variants recover much of this missed mass, as detailed in our other experiments, further suggesting that those methods (for edit-distance-based analysis) are superior to the baseline method (Appendices~\ref{app:experiments:llama2-scale} \&~\ref{app:sec:experiments:prune}). 

\paragraph{Comparing verbatim and near-verbatim extraction rates.}
Figure~\ref{app:fig:olmo:rates} (which replicates Figure~\ref{fig:olmo:rates:main}) compares greedy and baseline $k$-CBS extraction rates for \textsc{OLMo 2} across model sizes and Levenshtein thresholds $\varepsilon \in \{0,\ldots,5\}$.
$k$-CBS dominates greedy at every $\varepsilon$ and model size, with both rates and gaps growing with model size.
That is, greedy extraction under-counts extraction more for larger models.
Held-out (negative-control) lines stay flat near zero, supporting the validity of our extraction procedure.
We address held-out sequences that register as extractable in more detail below.

\begin{figure*}[t!]
\centering
\includegraphics[width=\linewidth]{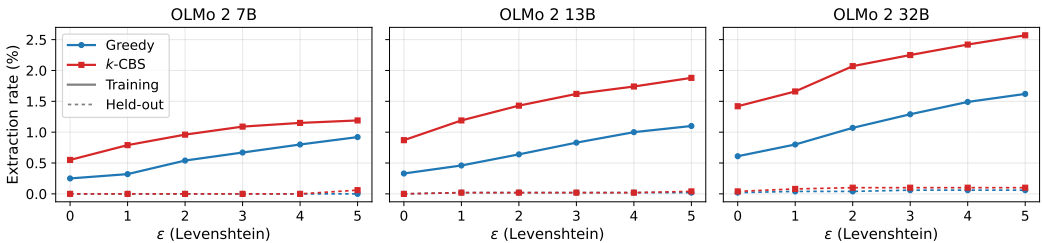}
\caption{\textbf{Comparing extraction rates for \textsc{OLMo 2}.} 
For \textsc{OLMo 2 7B}, \textsc{13B}, and \textsc{32B}, we show greedy and $k$-CBS probabilistic rates for verbatim extraction ($\varepsilon\!=\!0$) and near-verbatim extraction for $\varepsilon\, \in \{1,\ldots,5\}$.
We use a sample of $10{,}000$ sequences from Wikipedia from \textsc{OLMo 2}'s training data;
to assess validity, we also run analogous negative controls on $5{,}000$ sequences scraped from Wikipedia that post-date   \textsc{OLMo 2}'s training cutoff.
The greedy rates are exact. 
The probabilistic rates are computed with $k$-CBS (Section~\ref{sec:kcbs:baseline});
they may miss some valid instances of extraction, and thus should be interpreted as lower bounds on extraction rates.\looseness=-1}
\label{app:fig:olmo:rates}
\end{figure*}

\paragraph{Negative control results.} 
As noted in Appendix~\ref{app:sec:experiments:setup:olmo}, establishing a clean negative control on Internet-scraped data like Wikipedia is challenging.
Using page create dates for determining whether text post-dates a training cutoff is imperfect;
text on a newly posted Wikipedia page may be copied from an older, earlier page from elsewhere on the Internet (and included in Common Crawl); a Wikipedia page may itself be created, deleted, and the re-created as a new page years later; multiple Wikipedia pages share text/templated structure; and more.  
All held-out examples flagged as extractable (at $\varepsilon \leq 5$, by either greedy or probabilistic extraction) fall into these categories: 
they trace back to boilerplate or duplicated text that existed on Wikipedia well before \textsc{OLMo 2}'s training cutoff. 
They are instances of extraction (true positives) rather than false positives. 

Across the three models, we observe a total of $5$ such sequences. 
We discuss each below, along with the model, extraction method, and near-verbatim tolerance for which it is extractable.
We give a few examples of near-verbatim extracted text.

\begin{table}[h]
\centering
{\renewcommand{\arraystretch}{1.2}}
\caption{\textbf{Held-out sequences in negative controls for \textsc{OLMo 2}.} 
Sequences that are extractable verbatim (v.) and near-verbatim (nv.) under both greedy and probabilistic extraction.
A verbatim-extractable sequence is necessarily near-verbatim extractable.
For probabilistic extraction, the near-verbatim mass often significantly exceeds the verbatim mass.\looseness=-1}
\scriptsize
\begin{tabular}{l *{12}{c}}
\toprule
& \multicolumn{4}{c}{\textbf{7B}} & \multicolumn{4}{c}{\textbf{13B}} & \multicolumn{4}{c}{\textbf{32B}} \\
\cmidrule(lr){2-5} \cmidrule(lr){6-9} \cmidrule(lr){10-13}
& \multicolumn{2}{c}{\textbf{greedy}} & \multicolumn{2}{c}{\textbf{prob.}} & \multicolumn{2}{c}{\textbf{greedy}} & \multicolumn{2}{c}{\textbf{prob.}} & \multicolumn{2}{c}{\textbf{greedy}} & \multicolumn{2}{c}{\textbf{prob.}}\\
\cmidrule(lr){2-3} \cmidrule(lr){4-5} \cmidrule(lr){6-7} \cmidrule(lr){8-9} \cmidrule(lr){10-11} \cmidrule(lr){12-13}
& \makecell{\textbf{v.}} & \makecell{\textbf{nv.}} & \makecell{\textbf{v.}} & \makecell{\textbf{nv.}} & \makecell{\textbf{v.}} & \makecell{\textbf{nv.}} & \makecell{\textbf{v.}} & \makecell{\textbf{nv.}} & \makecell{\textbf{v.}} & \makecell{\textbf{nv.}} & \makecell{\textbf{v.}} & \makecell{\textbf{nv.}}\\
\midrule
\textbf{1. Provincial Board} &  &  & & \checkmark  &  &  &  & \checkmark  & \checkmark & \checkmark & \checkmark  & \checkmark  \\
\textbf{2. 98th Academy Awards} &  &  &  & \checkmark  &  & \checkmark  &  & \checkmark  &  & \checkmark  &  & \checkmark  \\
\textbf{3. 2015 WinStar} &  &  &  & \checkmark  &  &  &  &  &  &  &  & \checkmark  \\
\textbf{4. Batman} &  &  &  &  &  &  &  &  &  & \checkmark  & \checkmark  & \checkmark  \\
\textbf{5. Finnmark} &  &  &  &  &  &  &  &  &  &  &  & \checkmark  \\
\bottomrule
\end{tabular}
\end{table}

\begin{enumerate}[leftmargin=0.75cm,topsep=0cm]
    \item \textbf{Maguindanao del Sur Provincial Board.} 
    %
    This page was created in 2025 (i.e., makes sense it would be pulled in during curation of held-out data), but the extracted text is a boilerplate template describing Philippine provincial board elections. 
    From manual investigation using the Wikipedia API, the same template appears verbatim on $\sim\!80$ provincial board pages on Wikipedia, the vast majority created in August 2020 (with some dating back to 2011–2013). 

    {\footnotesize
    \noindent $\pre$: \verb|The Maguindanao del Sur Provincial Board is the Sangguniang Panlalawigan |\\
    \verb|(provincial legislature) of the Philippine province of Maguindanao del Sur.\nThe |\\
    \verb|members are elected via plurality-at-large voting: the|
    
    \vspace{0.1cm}
    
    \noindent $\suf$: \verb| province is divided into two districts, each having five seats. A voter votes|\\
    \verb|up to five names, with the top five candidates per district being elected. The vice|\\
    \verb|governor is the ex officio presiding officer, and only votes to break ties. The|
    }


    \item \textbf{98th Academy Awards -- Best International Feature Film submissions.}
    %
    This page was created January 31, 2026---well after training cutoff. 
    However, the extracted text is identical or near-identical to boilerplate shared across all ``List of submissions to the Nth Academy Awards'' pages. 
    From manual inspection using the Wikipedia API, these pages date back to at least the 85th Academy Awards (created August 2012), and the boilerplate was present from their very first revisions.  

    {\footnotesize
    \noindent $\pre$: \verb|This is a list of submissions to the 98th Academy Awards for Best International |\\ 
    \verb|Feature Film. The Academy of Motion Picture Arts and Sciences (AMPAS) has invited the |\\
    \verb|film industries of various countries to submit their best film for the Academy Award for|
    
    \vspace{0.1cm}
    
    \noindent $\suf$: \verb| Best International Feature Film every year since the award was created in 1956. |\\
    \verb|The award is presented annually by the Academy to a feature-length motion picture |\\
    \verb|produced outside the United States that contains primarily non-English dialogue. The |\\
    \verb|International Feature Film |\textcolor{red}{\texttt{Executive }}\verb|Committee oversees|
    }

    \vspace{0.1cm}

    Sample near-verbatim $\gensuf$ (\textsc{OLMo 2 13B}, greedy $\levshort=1$; in character space, deletions in \textcolor{red}{red} in $\suf$, above, and additions in \textcolor{blue}{blue} in $\gensuf$, below):\looseness=-1

    {\footnotesize
    \noindent $\gensuf$: \verb| Best International Feature Film every year since the award was created in 1956. |\\
    \verb|The award is presented annually by the Academy to a feature-length motion picture |\\
    \verb|produced outside the United States that contains primarily non-English dialogue. The |\\
    \verb|International Feature Film |\textcolor{blue}{\texttt{Award }}\verb|Committee oversees|
    }


    \item \textbf{2015 WinStar World Casino \& Resort 350.} 
    This NASCAR Truck Series race page was created January 10, 2024 (after the cutoff), but the extracted text follows the same template used across hundreds of NASCAR race articles, many of which predate the training cutoff (e.g., the 2022 NextEra Energy 250 was created February 2022). 
    The template structure and stock phrases (track descriptions, race numbering) are widely duplicated.\looseness=-1

    {\footnotesize
    \noindent $\pre$: \verb|The 2015 WinStar World Casino & Resort 350 was the 21st stock car race of the |\\
    \verb|2015 NASCAR Camping World Truck Series, and the 17th iteration of the event. The race |\\
    \verb|was held on Friday, November|
    
    \vspace{0.1cm}
    
    \noindent $\suf$: \verb| 6, 2015, in Fort Worth, Texas at Texas Motor Speedway, a 1.5 mi (2.4 km) permanent |\\
    \verb|tri-oval shaped racetrack. The race took the scheduled 147 laps to complete. Erik|
    }


    \item \textbf{Batman in popular culture.} 
    This page was created in December 2025, but the extracted text exists on multiple fan wikis dating to 2019, and thus plausibly was included in Common Crawl (and therefore the training data).
    It also contains a \emph{Guardian} quote about Batman that existed verbatim in the main Batman article as of at least November 2023 (pre-cutoff)---and of course online at the \emph{Guardian} itself. \looseness=-1 

    {\footnotesize
    \noindent $\pre$: \verb|The DC Comics character Batman has become a popular culture icon, recognized around |\\
    \verb|the world. The character's presence has extended beyond his comic book origins; events |\\
    \verb|such as the release of the 1989 Batman film and its accompanying merchandising "b|
    
    \vspace{0.1cm}
    
    \noindent $\suf$: \verb|rought the Batman to the forefront of public consciousness| \texttt{\textcolor{red}{".}} \verb|In an article |\\
    \verb|commemorating the sixtieth anniversary of the character, The Guardian wrote, "Batman |\\
    \verb|is a figure blurred by the endless reinvention that is modern mass culture. He is at |\\
    \verb|once an|
    }

    Sample near-verbatim $\gensuf$ (\textsc{OLMo 2 32B}, greedy $\levshort=1$; in character space, deletions in \textcolor{red}{red} in $\suf$, above, and additions in \textcolor{blue}{blue} in $\gensuf$, below):\looseness=-1

    {\footnotesize
    \noindent $\gensuf$: \verb|rought the Batman to the forefront of public consciousness| \texttt{\textcolor{blue}{."}} \verb|In an article |\\
    \verb|commemorating the sixtieth anniversary of the character, The Guardian wrote, "Batman |\\
    \verb|is a figure blurred by the endless reinvention that is modern mass culture. He is at |\\
    \verb|once an|
    }


    \item \textbf{Finnmark (Storting constituency).} 
    This page was created September 29, 2024. The extracted text is boilerplate shared across all 19 Norwegian Storting constituency pages (that we found through manual Wikipedia API search). 
    The earliest of these (Oslo) was created August 2021, with others following in 2021–2022 (all pre-cutoff).

    {\footnotesize
    \noindent $\pre$: \verb|Finnmark (Northern Sami: Finnmárku; Kven: Finmarkku) is one of the 19 multi-|\\
    \verb|member constituencies of the Storting, which is the national legislature of |\\
    \verb|Norway. The constituency was established in|
    
    \vspace{0.1cm}
    
    \noindent $\suf$: \verb|1921 following the introduction of proportional representation for elections to |\\
    \verb|the Storting. It is conterminous with the county of Finnmark. The constituency |\\
    \verb|currently elects eight of the 169 members of the Storting using the open party-list |\\
    \verb|proportional representation electoral|
    }
\end{enumerate}

\paragraph{``Unlocked'' extracted sequences.} 
Most \textcolor{seabornbluemid}{\textbf{blue}} points sit well above $y$=$x$, indicating increased extraction risk for verbatim-extractable sequences. 
Approximately $\sim\!45–55\%$ of extractable sequences are unlocked (\textcolor{seabornredmid}{\textbf{red}} + \textcolor{seabornorangemid}{\textbf{orange}}) across all three sizes. 
The unlocked population has sequences with substantial mass, but many sequences are below $0.01$. 
As shown below (Appendix~\ref{app:experiments:olmo:risk}), the median verbatim share increases with model size and the \textcolor{seabornbluemid}{\textbf{blue}} fraction grows (from $\sim\!46\%$ for 7B and 13B $\to 55\%$ for 32B), but even at 32B nearly half of extractable sequences have zero or sub-threshold verbatim mass.
\textcolor{seabornorangemid}{\textbf{Orange}} points are relatively sparse; 
sequences tend to be either clearly verbatim-extractable or have zero verbatim mass.
For comparative notes to \textsc{Pythia} on Enron, see Appendix~\ref{app:fig:pythia:rates}.

\begin{figure*}[t!]
\centering
\includegraphics[width=.95\linewidth]{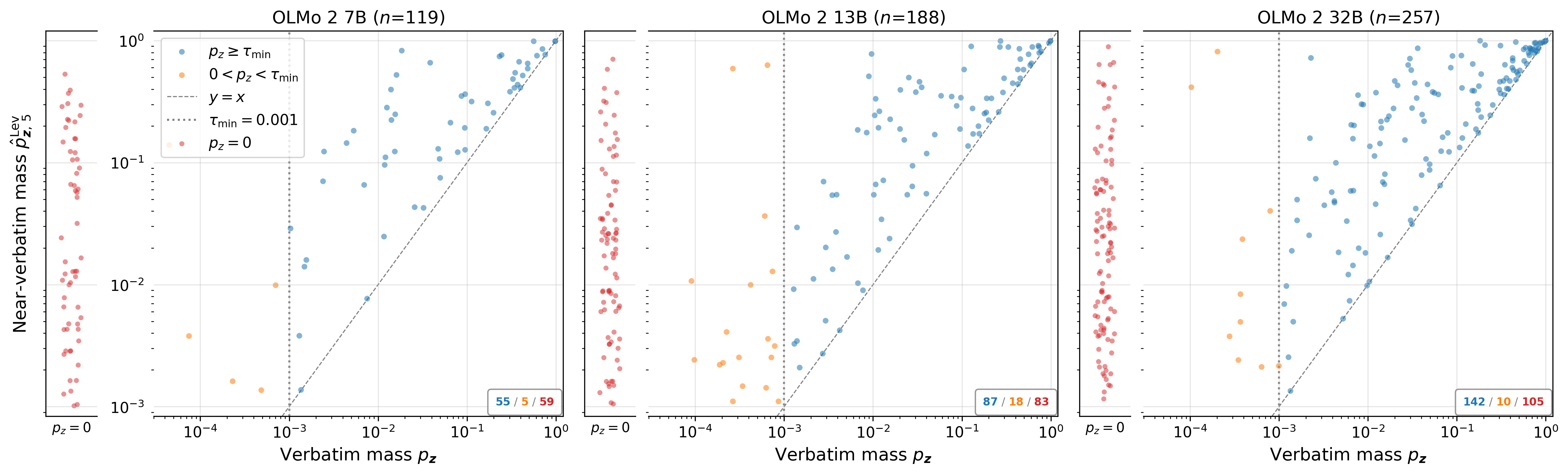}
\caption{\textbf{Near-verbatim mass vs.\ verbatim mass for \textsc{OLMo 2}.} 
\textsc{OLMo 2} on Wikipedia (training subset); 
each point is one sequence.
Axes show near-verbatim ($p_{\seq,5}^\levshort$, $\levshort\,\varepsilon\!=\!5$) vs.\ verbatim ($\pseq$) extraction mass on a $\log$--$\log$ scale.
\textcolor{seabornredmid}{\textbf{Red}}/\textcolor{seabornorangemid}{\textbf{orange}}
points are ``unlocked'' by near-verbatim extraction (to the left of the $\taumin$ dotted reference line, $\pseq\!<\!\taumin$, but $p_{\seq,5}^\levshort\!\geq\!\taumin$);
\textcolor{seabornbluemid}{\textbf{blue}} points are verbatim-extractable ($\pseq\!\geq\!\taumin$).
Points above the dashed $y\!=\!x$ line show increased extraction risk when near-verbatim mass is accounted for.\looseness=-1}
\label{app:fig:olmo:scatter}
\end{figure*}

\subsubsection{Extraction risk}\label{app:experiments:olmo:risk}

\paragraph{CCDF over near-verbatim risk gain.}
We show two views of per-sequence near-verbatim risk gain (Equation~\ref{app:eq:per-seq-mass-gain}):
(1) the CCDF over the population of training sequences, including both extractable and non-extractable (Figure~\ref{app:fig:olmo:ccdf}, which duplicates Figure~\ref{fig:ccdf:main}); 
(2) a table containing points the CCDF of per-sequence mass gain on the fixed extractable set only,  where the fixed set differs by model (Table~\ref{app:tab:olmo:rel-ccdf}).
In the whole-population CCDF, 32B strictly dominates at every gain threshold; 
larger models produce more sequences with near-verbatim mass beyond verbatim.
The curve shifts upward and rightward, indicating that the absolute mass gains are also larger.
In the fixed-set CCDF, there is not a clear dominance pattern with respect to relative mass gains computed according to each model's extractable set.
Since these numbers show \emph{gains}, note that it is possible for sequences that have high verbatim risk to have smaller relative gains in near-verbatim risk.
(For high-risk sequences, there is less possible risk increase, both relatively and absolutely.)

\begin{figure}[t]
\centering
\begin{minipage}[t]{0.48\linewidth}
\vspace{0pt}
\centering
\includegraphics[width=.95\linewidth]{paper/figure/olmo/olmo2_wiki_train_v2_per_seq_nv_gain.png}
\caption{\textbf{CCDF of population per-sequence near-verbatim mass gain for \textsc{OLMo 2}.} 
For $\levshort\, \varepsilon\!=\!5$ mass minus verbatim mass ($\hat{p}_{\seq,5}^{\levshort} - \pseq$), a point $(x, y)$ means $y\%$ of sequences have extraction-mass gain $\geq x$.
Plotted over the whole training set sample ($10{,}000$ Wikipedia sequences).\looseness=-1}
\label{app:fig:olmo:ccdf}
\end{minipage}
\hfill
\begin{minipage}[t]{0.48\linewidth}
\vspace{0pt}
\captionof{table}{\textbf{Points on the CCDF of \emph{extracted} per-sequence mass gain for \textsc{OLMo 2}.}
We provide specific values from the CCDF over the per-sequence mass gain for extracted sequences only.
For this CCDF, the maximum $y$-value is $100\%$. 
The denominators are different for each, given different counts in the fixed extractable set.}
\label{app:tab:olmo:rel-ccdf}
\centering
\begin{tabular}{llll}
\toprule
& \textbf{7B} & \textbf{13B} & \textbf{32B} \\
\midrule
$\bf \geq10^{-3}$ & $97.5\%$ & $96.8\%$  & $96.9\%$  \\
$\bf \geq10^{-2}$ & $73.1\%$ & $66.5\%$  & $73.5\%$  \\
$\bf \geq10^{-1}$ & $37.8\%$ & $29.3\%$ & $35.4\%$ \\
\bottomrule
\end{tabular}
\end{minipage}
\end{figure}

\paragraph{Per-sequence $\varepsilon$-shell share analysis.}
We refer to Appendix~\ref{app:sec:experiments:metrics} for detailed explanations of the metrics and plot types for this analysis.

For \textsc{OLMo 2} on Wikipedia, sequence-extraction mass is dispersed across different distances.
This is clear from both the heatmap (Figure~\ref{app:fig:olmo:heatmap}) and incremental mass share (Equation~\ref{app:eq:shell-fraction}) distributions for the different $\varepsilon$-shells (Figure~\ref{app:fig:olmo:incremental}). 
Nearly half of all extracted sequences for the 7B model have $0$ verbatim share (Equation~\ref{app:eq:verbatim-share}).  
The median verbatim share is $0.03\%$.
Larger models increasingly concentrate more mass on the verbatim share, as is clear from both the tops of the heatmaps (Figure~\ref{app:fig:olmo:heatmap}) and the increasing median of the verbatim share (Figure~\ref{app:fig:olmo:verbatimshare}, duplicated from Figure~\ref{fig:violin:olmo:main}).

\begin{figure*}[t!]
\centering
\begin{subfigure}[t]{0.54\textwidth}
    \centering
    \includegraphics[width=\linewidth,trim={0 0 0 0.75cm},
clip]{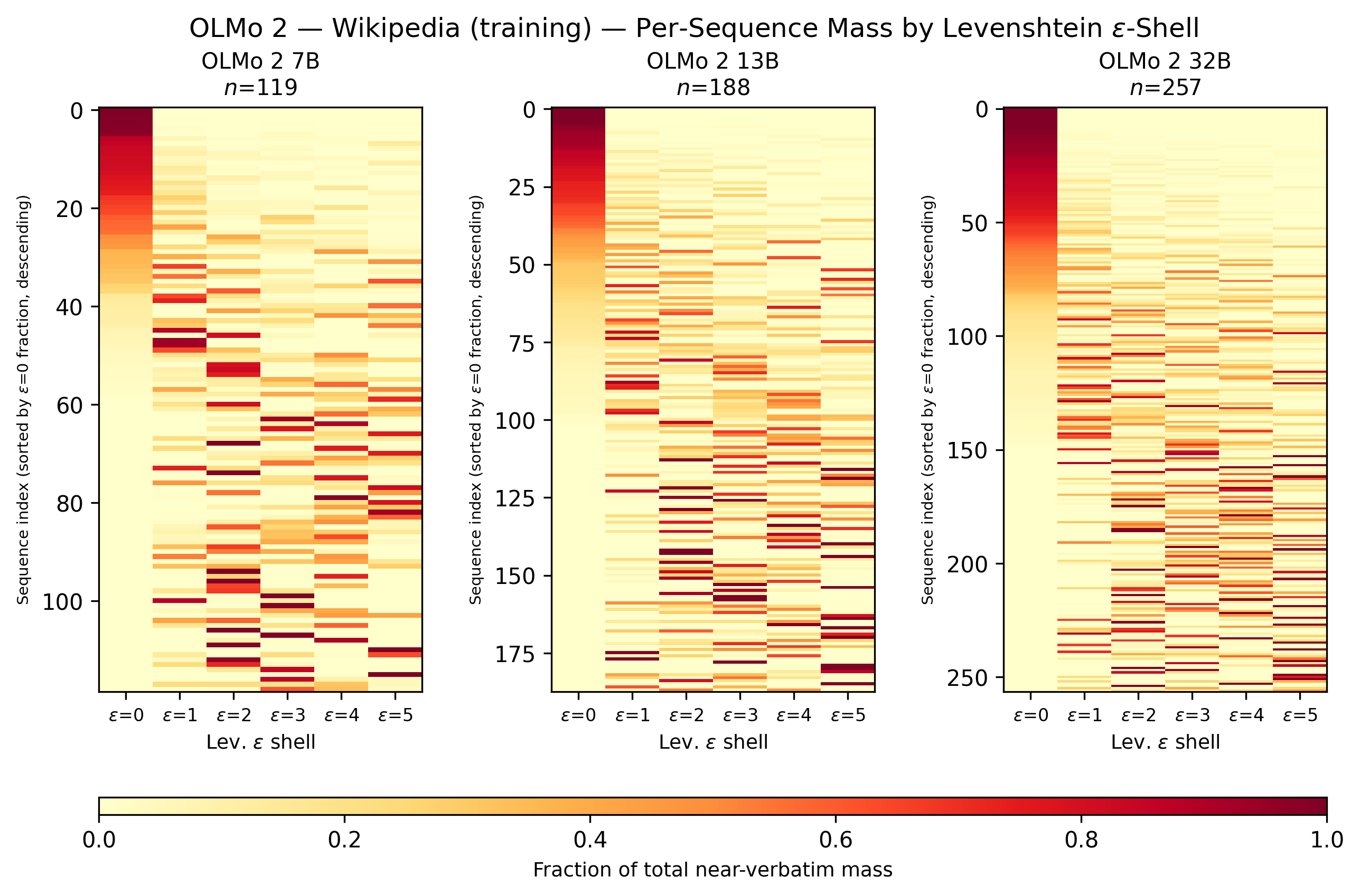}
    \caption{Heatmap of per-sequence mass by $\levshort$ $\varepsilon$-shell share}
    \label{app:fig:olmo:heatmap}
\end{subfigure}
\hfill
\begin{subfigure}[t]{0.45\textwidth}
    \centering
    \includegraphics[width=\linewidth]{paper/figure/olmo/olmo2_wiki_verbatim_share_violin.png}
    \caption{Verbatim mass share distributions}
    \label{app:fig:olmo:verbatimshare}
\end{subfigure}\\
\begin{subfigure}[t]{\textwidth}
    \centering
    \includegraphics[width=\linewidth,trim={0 0 0 0.75cm},clip]{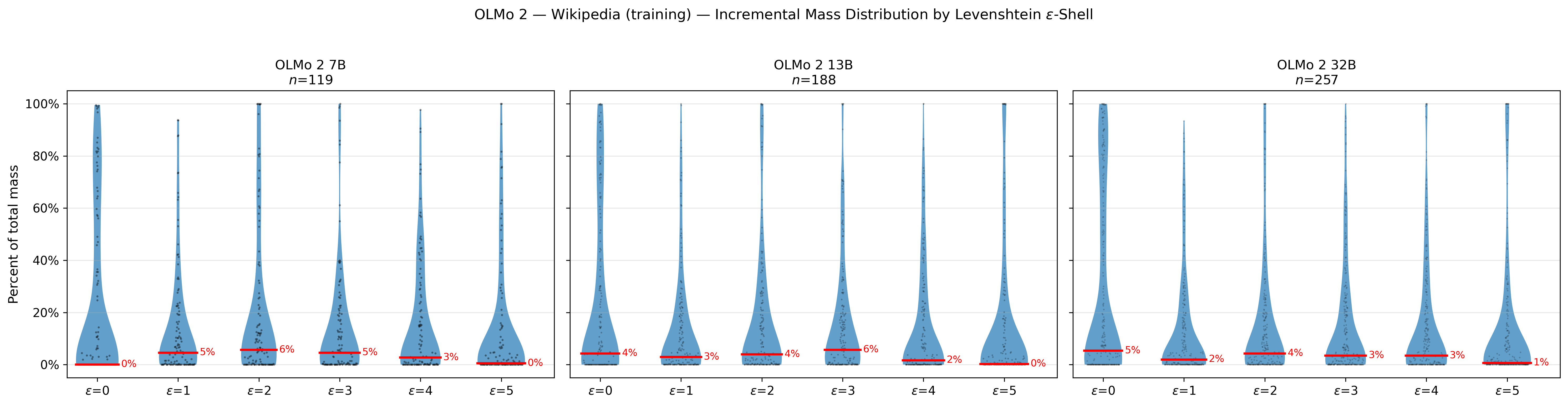}
    \caption{Incremental mass share distributions}
    \label{app:fig:olmo:incremental}
\end{subfigure}
\caption{\textbf{Illustrating different views of $\varepsilon$-shell share for \textsc{OLMo 2} on Wikipedia.}
(\textbf{a}) Heatmaps across model size of per-sequence mass share by $\varepsilon$-shell (Equation~\ref{app:eq:shell-fraction}), sorted by verbatim share (Equation~\ref{app:eq:verbatim-share}).
(\textbf{b}) Violin plots comparing the distribution of per-sequence verbatim share (Equation~\ref{app:eq:verbatim-share}) across model sizes, with the median verbatim share annotated.
(\textbf{c}) Violin plots showing distributions over the per-$\varepsilon$-shell mass share (Equation~\ref{app:eq:shell-fraction}) per model.
Each shell shows the mass share contributed by the given $\lev$ distance. 
Note that each of the three $\varepsilon=0$ violin plots correspond to those in the cross-model comparison in (\textbf{b}).\looseness=-1 
}
\end{figure*}
\FloatBarrier

\subsection{Extended results for \textsc{Pythia} on emails}\label{app:experiments:pythia}

This appendix provides experiments for \textsc{Pythia} on emails---Enron (in the training data) and TREC 2007 Spam ``ham'' (a curated subset) for a negative control. 
For these settings, we run experiments with the baseline $k$-CBS method in Section~\ref{sec:kcbs:baseline}, and do not include results for the pruned variants in Section~\ref{sec:pruning}.
We only post-process the $k$-CBS results in this section for Levenshtein distance.

\subsubsection{Extracted sequences and  rates}\label{app:experiments:pythia:extraction}

\paragraph{Verbatim greedy extraction  and verbatim probabilistic extraction.} 
As a point of reference, we provide verbatim extraction rates from two different pipelines in Table~\ref{tab:pythia:verbatim:rates}: 
verbatim greedy extraction (via generation) and verbatim probabilistic extraction (via teacher forcing).
While these methods involve roughly equivalent token evaluations, wall clock time for teacher forcing is almost always faster (for reasons discussed in Appendix~\ref{app:sec:intuition:cost}). 

\begin{table}[h]
\centering
\caption{\textbf{Verbatim extraction rates for \textsc{Pythia} on Enron emails.}
Verbatim extraction rates via greedy (deterministic generation), teacher forcing, and $k$-CBS
on training data ($10{,}000$ sequences) and held-out negative controls ($2{,}000$ sequences).
Probabilistic and $k$-CBS rates use threshold $\pseq \geq \taumin = 0.001$.}
\label{tab:pythia:verbatim:rates}
\scriptsize
\begin{tabular}{l cc cc cc cc}
\toprule
& \multicolumn{2}{c}{\textsc{Pythia 1B}} & \multicolumn{2}{c}{\textsc{Pythia 2.8B}} & \multicolumn{2}{c}{\textsc{Pythia 6.9B}} & \multicolumn{2}{c}{\textsc{Pythia 12B}} \\
\cmidrule(lr){2-3}
\cmidrule(lr){4-5}
\cmidrule(lr){6-7}
\cmidrule(lr){8-9}
Pipeline & Train & Held-out & Train & Held-out & Train & Held-out & Train & Held-out \\
\midrule
Verbatim greedy & 0.74\% & 0.00\% & 1.81\% & 0.00\% & 2.86\% & 0.00\% & 3.84\% & 0.00\% \\
Verbatim probabilistic & 1.92\% & 0.00\% & 3.53\% & 0.00\% & 5.16\% & 0.00\% & 6.52\% & 0.00\% \\
Verbatim $k$-CBS & 1.54\% & 0.00\% & 3.21\% & 0.00\% & 4.83\% & 0.00\% & 6.10\% & 0.00\% \\
\bottomrule
\end{tabular}
\end{table}

\begin{table}[h]
\centering
\caption{\textbf{Verbatim probabilistic vs.\ verbatim $k$-CBS on \textsc{Pythia} (training data).}
$k$-CBS with beam width $\bw\!=\!20$ may under-count verbatim extraction.
We report rates ($\taumin=0.001$), the $k$-CBS recovery ratio, and the number of sequences missed by $k$-CBS.}
\label{tab:pythia:kcbs:verbatim:gap}
\scriptsize
\begin{tabular}{l r r r r}
\toprule
& \textsc{Pythia 1B} & \textsc{Pythia 2.8B} & \textsc{Pythia 6.9B} & \textsc{Pythia 12B} \\
\midrule
Verbatim probabilistic & 1.92\% & 3.53\% & 5.16\% & 6.52\% \\
Verbatim $k$-CBS & 1.54\% & 3.21\% & 4.83\% & 6.10\% \\
\midrule
Recovery ratio & 0.80 & 0.91 & 0.94 & 0.94 \\
Missed by $k$-CBS & 38 & 32 & 35 & 44 \\
\bottomrule
\end{tabular}
\end{table}

Note that, as underscored by Table~\ref{tab:pythia:kcbs:verbatim:gap}, the lower bound extraction probabilities provided by baseline $k$-CBS under-count the verbatim probabilistic extraction rate produced by teacher forcing.
Part of the reason is that baseline $k$-CBS provides a lower bound due to pruning paths each iteration, and therefore can miss some mass.
We still recover the larger majority of the mass; the extraction rates are fairly similar, with a decrease in (relative) under-counting as model size increases (as with the results in Appendix~\ref{app:experiments:olmo}).  
This is because the missed sequences are relatively low mass, and we find that larger model sizes often exhibit higher per-sequence extraction risk.

Further, there are subtle differences between logits in teacher forcing (which does not involve KV caching) and generation (which does), which can slightly change extraction results for the same sequences due to floating point rounding.
Given the low cost of the verbatim probabilistic pipeline, one could also just run both and take the union over results.
However, it would be necessary to note that the noise from logit differences may impact estimates---i.e., from mixing verbatim probabilities from the verbatim probabilistic pipeline with $k$-CBS.
Alternatively, one could also widen the beam width used for $k$-CBS to capture more paths.\looseness=-1

Experiments on pruned variants recover much of this missed mass, as detailed in our other experiments, further suggesting that those methods (for edit-distance-based analysis) are superior to the baseline method (Appendices~\ref{app:experiments:llama2-scale} \&~\ref{app:sec:experiments:prune}). 

\paragraph{Comparing verbatim and near-verbatim extraction rates.}
Figure~\ref{app:fig:pythia:rates} compares greedy and baseline $k$-CBS extraction rates for \textsc{Pythia} across model sizes and Levenshtein thresholds $\varepsilon \in \{0,\ldots,5\}$.
As with the results for \textsc{OLMo 2} (Section~\ref{sec:experiments} \& Appendix~\ref{app:experiments:olmo}), $k$-CBS dominates greedy at every $\varepsilon$ and model size, with both rates and gaps growing with model size.
That is, greedy extraction under-counts extraction more for larger models.
However, while this is true for comparing gaps between models at the same $\varepsilon$, gaps for the same model shrink with increasing $\varepsilon$. 
In this sense, the greedy rate becomes a better estimate of extraction for a given model at larger $\varepsilon$. 

At $\varepsilon = 5$, \textsc{Pythia 12B} reaches an extraction rate of $7.80\%$ (vs.\ similarly-sized \textsc{OLMo 2 13B} at $1.88\%$ on Wikipedia).
This likely reflects properties of the training data---possibly the data itself (wiki entries vs.\ emails) and training mix.
Notably, the Enron corpus itself contains substantial internal duplication, and the Pile includes it without deduplication. 
Held-out (negative-control) lines stay flat at zero, supporting the validity of our extraction procedure.

\paragraph{Negative control results.} 
Not a single sequence from the held-out ``ham'' email set is flagged at any $\varepsilon$ by either method.

\paragraph{``Unlocked'' extracted sequences.} 
Most \textcolor{seabornbluemid}{\textbf{blue}} points sit well above $y$=$x$, indicating increased extraction risk for verbatim-extractable sequences. 
Compared to the \textsc{OLMo 2} on Wikipedia experiments (Appendix~\ref{app:experiments:olmo:extraction}), \textcolor{seabornbluemid}{\textbf{blue}} points dominate more strongly and increasingly with scale ($55\% \to 78\%$). 
There is a clear dense cluster of \textcolor{seabornbluemid}{\textbf{blue}}  points in the $10^{-2}$ to $10^{-1}$  verbatim mass range whose near-verbatim mass jumps to significantly above $10^{-1}$. 
These are sequences with moderate verbatim mass where near-verbatim extraction roughly goes from $2\times$--$10\times$ the verbatim extraction risk. 
This cluster is visible at all four model sizes, though is slightly less pronounced for the 1B model.
It suggests a population of Enron sequences that the model has memorized well but not perfectly---enough verbatim mass to be extractable, but with a lot of additional probability on close variants.  
The unlocked fraction (\textcolor{seabornredmid}{\textbf{red}} + \textcolor{seabornorangemid}{\textbf{orange}}) shrinks more dramatically ($45\% \to 22\%$). 
\textcolor{seabornorangemid}{\textbf{Orange}} points are relatively sparse; 
sequences tend to be either clearly verbatim-extractable or have zero verbatim mass.

Overall, for both these experiments and those on \textsc{OLMo 2} (Appendix~\ref{app:fig:olmo:rates}), near-verbatim extraction both reveals hidden sequences (\textcolor{seabornredmid}{\textbf{red}}/\textcolor{seabornorangemid}{\textbf{orange}}) and increases measured risk for verbatim-extractable sequences (\textcolor{seabornbluemid}{\textbf{blue}} above $y=x$). 
Both effects are present in both sets of experiments. 
Where they differ is in the balance:
\textsc{OLMo 2} on Wikipedia has a persistently large hidden population with substantial mass; 
the extractable set for \textsc{Pythia} on Enron is more dominated by verbatim memorization. 

\begin{figure*}[t!]
\centering
\includegraphics[width=\linewidth]{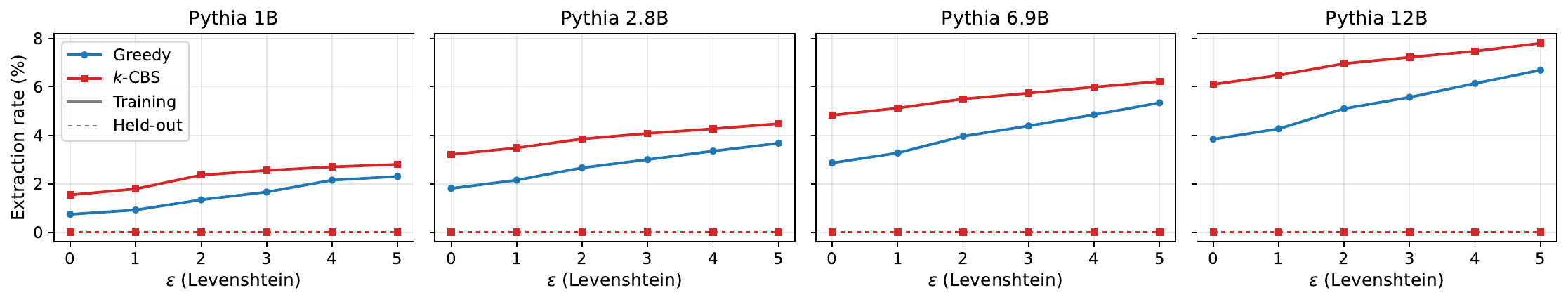}
\caption{\textbf{Comparing extraction rates for \textsc{Pythia}.} 
For \textsc{Pythia 1B}, \textsc{2.8B}, and \textsc{6.9B}, and \textsc{12B} we show greedy and $k$-CBS probabilistic rates for verbatim extraction ($\varepsilon\!=\!0$) and near-verbatim extraction for $\varepsilon\, \in \{1,\ldots,5\}$.
We use a sample of $10{,}000$ sequences from Enron emails from \textsc{Pythia}'s training data;
to assess validity, we also run analogous negative controls on $2{,}000$ sequences from TREC 2007 Spam (``ham'' subset). 
The greedy rates are exact. 
The probabilistic rates are computed with $k$-CBS (Section~\ref{sec:kcbs:baseline});
they may miss some valid instances of extraction, and thus should be interpreted as lower bounds on extraction rates.\looseness=-1}
\label{app:fig:pythia:rates}
\end{figure*}
\begin{figure*}[t!]
\centering
\includegraphics[width=.95\linewidth]{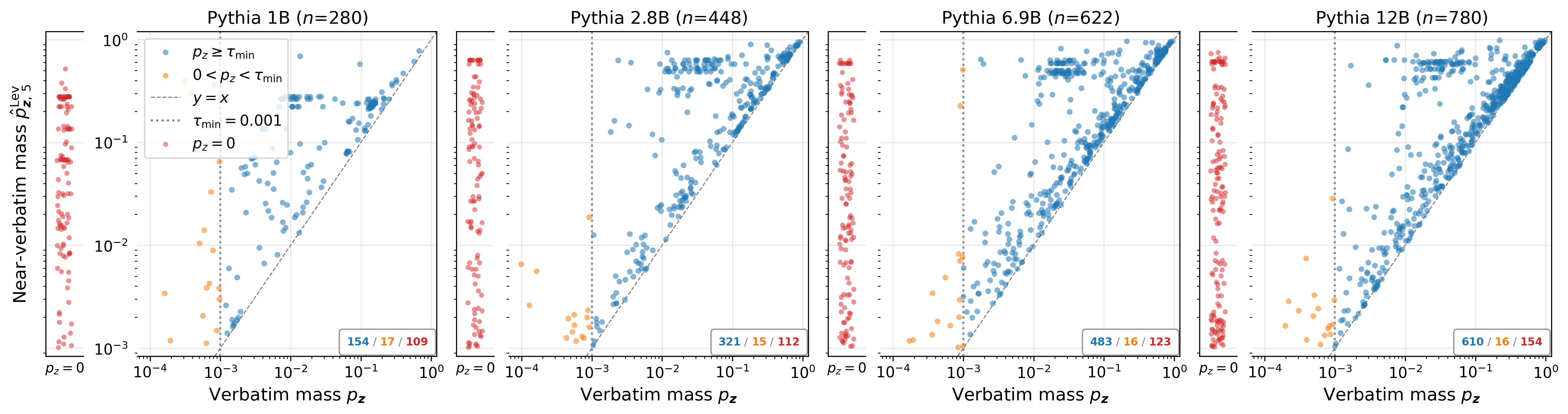}
\vspace{-.3cm}
\caption{\textbf{Near-verbatim mass vs.\ verbatim mass for \textsc{Pythia}.} 
\textsc{Pythia} on Enron emails; 
each point is one sequence.
Axes show near-verbatim ($p_{\seq,5}^\levshort$, $\levshort\,\varepsilon\!=\!5$) vs.\ verbatim ($\pseq$) extraction mass on a $\log$--$\log$ scale.
\textcolor{seabornredmid}{\textbf{Red}}/\textcolor{seabornorangemid}{\textbf{orange}}
points are ``unlocked'' by near-verbatim extraction (to the left of the $\taumin$ dotted reference line, $\pseq\!<\!\taumin$, but $p_{\seq,5}^\levshort\!\geq\!\taumin$);
\textcolor{seabornbluemid}{\textbf{blue}} points are verbatim-extractable ($\pseq\!\geq\!\taumin$).
Points above the dashed $y\!=\!x$ line show increased extraction risk when near-verbatim mass is accounted for.\looseness=-1}
\label{app:fig:pythia:scatter}
\end{figure*}

\subsubsection{Extraction risk}\label{app:experiments:pythia:risk}

\paragraph{CCDF over near-verbatim risk gain.}
We show two views of per-sequence near-verbatim risk gain (Equation~\ref{app:eq:per-seq-mass-gain}):
(1) the CCDF over the population of training sequences, including both extractable and non-extractable (Figure~\ref{app:fig:pythia:ccdf}); 
(2) a table containing points the CCDF of per-sequence mass gain on the fixed extractable set only,  where the fixed set differs by model (Table~\ref{app:tab:pythia:rel-ccdf}).
In the whole-population CCDF, larger models produce more sequences with near-verbatim mass beyond verbatim.
The curve shifts upward and rightward, indicating that the absolute mass gains are also larger, but the differences converge for the 2.8B, 6.9B, and 12B models at around $10^{-1}$.
In the fixed-set CCDF, there is not a clear dominance pattern with respect to relative mass gains computed according to each model's extractable set.
Since these numbers show \emph{gains}, note that it is possible for sequences that have high verbatim risk to smaller relative gains in near-verbatim risk.
(For high-risk sequences, there is less possible risk increase, both relatively and absolutely.)

\begin{figure}[t]
\centering
\begin{minipage}[t]{0.48\linewidth}
\vspace{0pt}
\centering
\includegraphics[width=.95\linewidth]{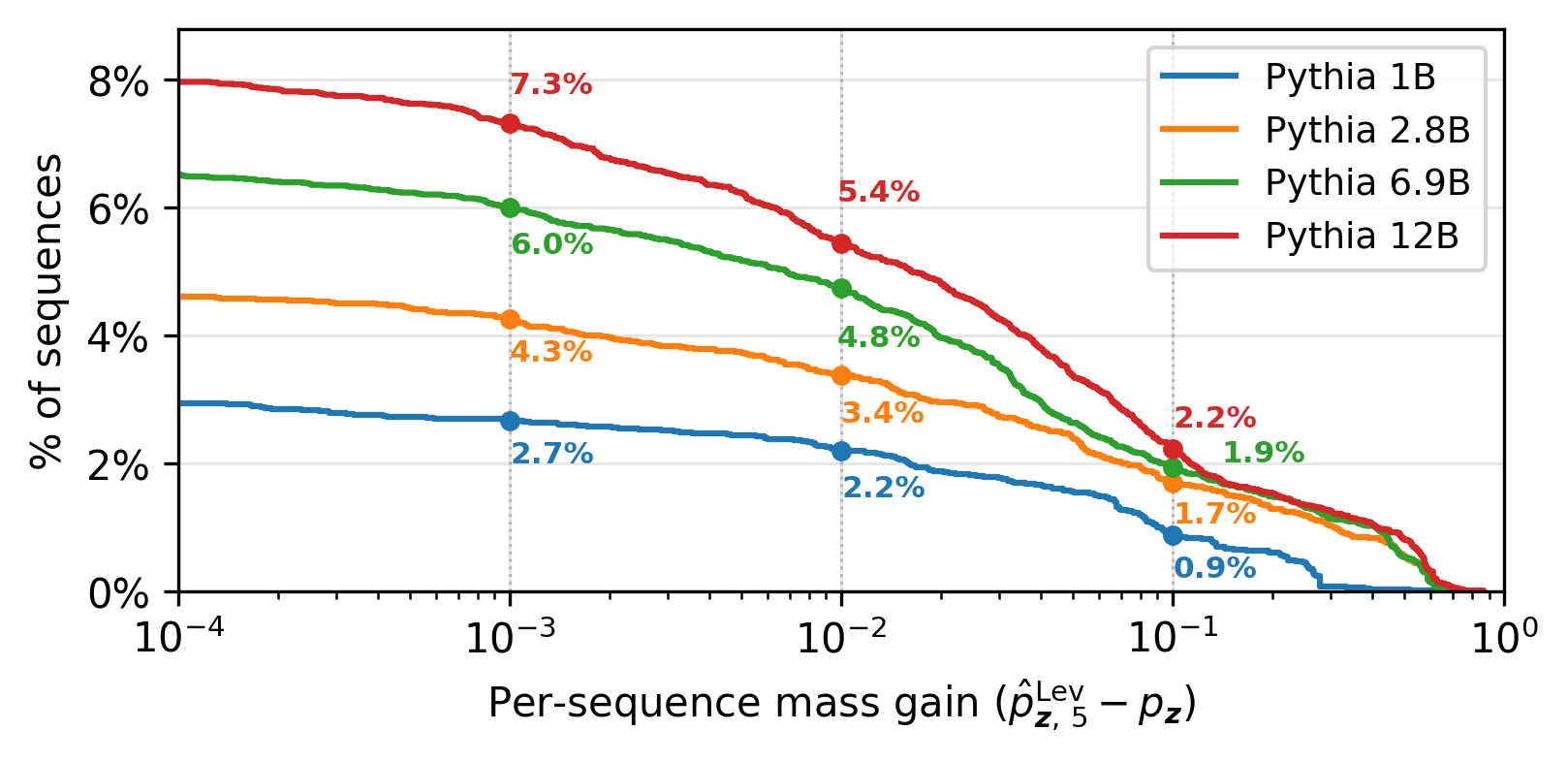}
\caption{\textbf{CCDF of population per-sequence near-verbatim mass gain for \textsc{Pythia}.} 
For $\levshort\, \varepsilon\!=\!5$ mass minus verbatim mass ($\hat{p}_{\seq,5}^{\levshort} - \pseq$), a point $(x, y)$ means $y\%$ of sequences have extraction-mass gain $\geq x$.
Plotted over the whole training set sample ($10{,}000$ Wikipedia sequences).\looseness=-1}
\label{app:fig:pythia:ccdf}
\end{minipage}
\hfill
\begin{minipage}[t]{0.48\linewidth}
\vspace{0pt}
\captionof{table}{\textbf{Points on the CCDF of \emph{extracted} per-sequence mass gain.}
We provide specific values from the CCDF over the per-sequence mass gain for extracted sequences only.
For this CCDF, the maximum $y$-value is $100\%$. 
The denominators are different for each, given different counts in the fixed extractable set for \textsc{Pythia}.}
\label{app:tab:pythia:rel-ccdf}
\centering
\begin{tabular}{lllll}
\toprule
& \textbf{1B} & \textbf{2.8B} & \textbf{6.9B} & \textbf{12B}\\
\midrule
$\bf \geq10^{-3}$ & $95.7\%$ & $95.1\%$ & $96.6\%$ & $93.8\%$  \\
$\bf \geq10^{-2}$ & $78.6\%$ & $75.4\%$ & $76.4\%$ & $69.7\%$ \\
$\bf \geq10^{-1}$ & $31.8\%$ &  $38.2\%$ &  $31.2\%$ &  $28.6\%$ \\
\bottomrule
\end{tabular}
\end{minipage}
\end{figure}

\paragraph{Per-sequence $\varepsilon$-shell share analysis.}
We refer to Appendix~\ref{app:sec:experiments:metrics} for detailed explanations of the metrics and plot types for this analysis.

As is clear in all three groups of plots in Figure~\ref{app:fig:pythia:share}, for \textsc{Pythia} on Enron there is a clear (and increasing) concentration of mass on the verbatim continuation as model size increases.
Mass is more evenly distributed across distances for \textsc{Pythia 1B} (Figure~\ref{app:fig:pythia:incremental}), but in general there is a stark pattern of an increasing median verbatim share (Equation~\ref{app:eq:verbatim-share}) as model size increases from 1B to 12B, with a huge jump in between 1B and 2.8B ($4.69\% \to 38.82\% \to 49.77\% \to 61.94\%$).
For the majority of sequences for the 12B model, verbatim share makes up significantly over half of the extraction mass (Figure~\ref{app:fig:pythia:verbatimshare});
the heatmaps similarly show this, with an enormous number of verbatim-only and verbatim-heavy sequences (Figure~\ref{app:fig:pythia:heatmap}). 
These results are consistent with larger models more consistently memorizing the exact target suffix, with smaller models exhibiting fuzzier memorization of the target suffix.
However, even for the 2.8B model, the verbatim share is substantial. 

These results are also consistent with the scatter plots in Figure~\ref{app:fig:pythia:scatter}, which show substantial numbers of sequences with high verbatim mass, especially at larger model sizes.
(See the \textcolor{seabornbluemid}{\textbf{blue}} points concentrated in the top right, which necessarily are close to the $y=x$ reference line as there is not much more mass that near-verbatim could add, given how high the verbatim mass already is.) 

Even so, at all model sizes, there are still many sequences have $0$ verbatim share (corresponding to the \textcolor{seabornredmid}{\textbf{red}} points in Figure~\ref{app:fig:pythia:scatter}), though the fraction of such zero-verbatim-share sequences decreases per model over each model's respective extractable set ($38.9\%$ for 1B to $19.7\%$ for 12B). 
As shown in Figure~\ref{app:fig:pythia:incremental}, mass from other distances is more evenly dispersed. 

\begin{figure*}[t!]
\centering
\begin{subfigure}[t]{0.54\textwidth}
    \centering
    \includegraphics[width=\linewidth,trim={0 0 0 0.75cm},
clip]{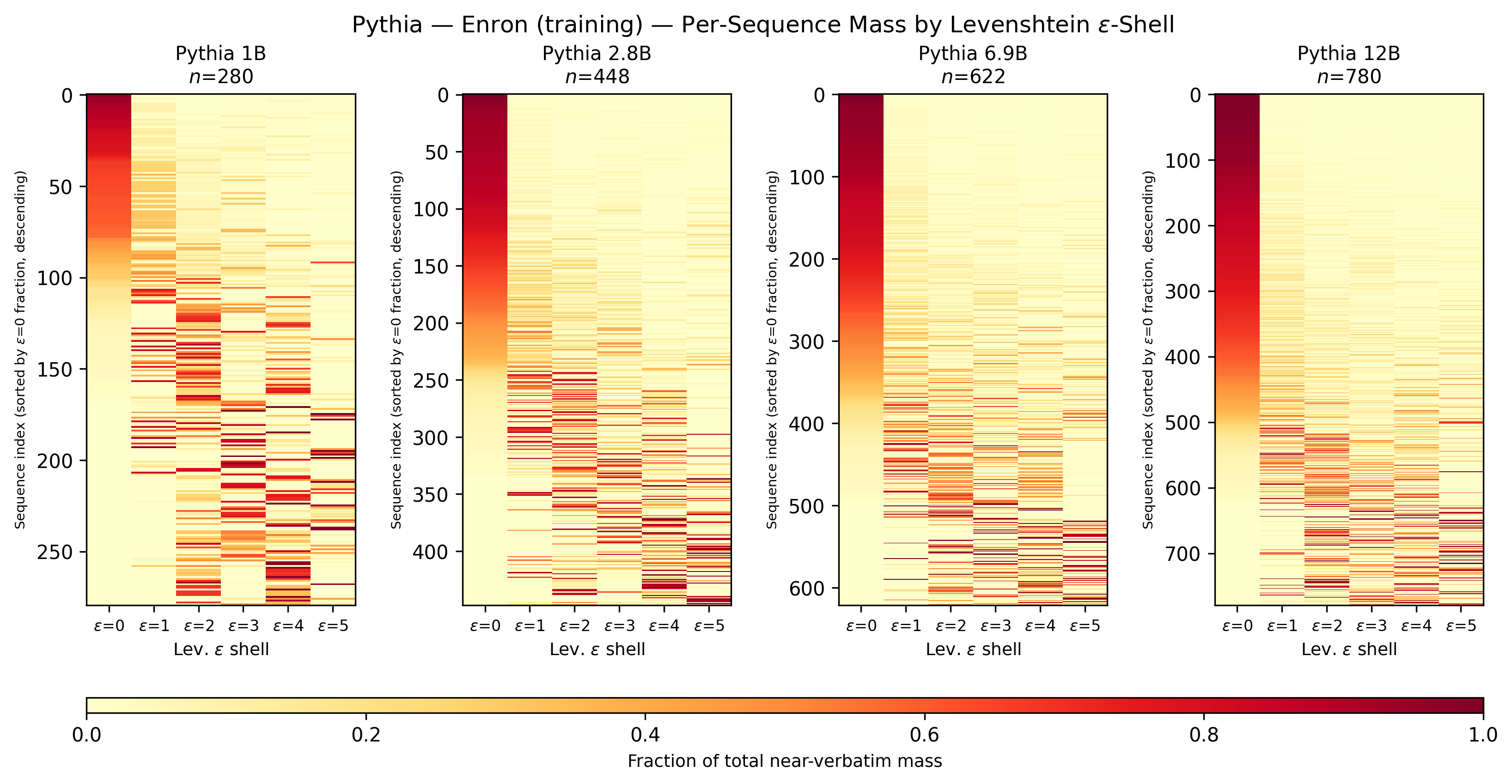}
    \caption{Heatmap of per-sequence mass by $\levshort$ $\varepsilon$-shell share}
    \label{app:fig:pythia:heatmap}
\end{subfigure}
\hfill
\begin{subfigure}[t]{0.45\textwidth}
    \centering
    \includegraphics[width=\linewidth]{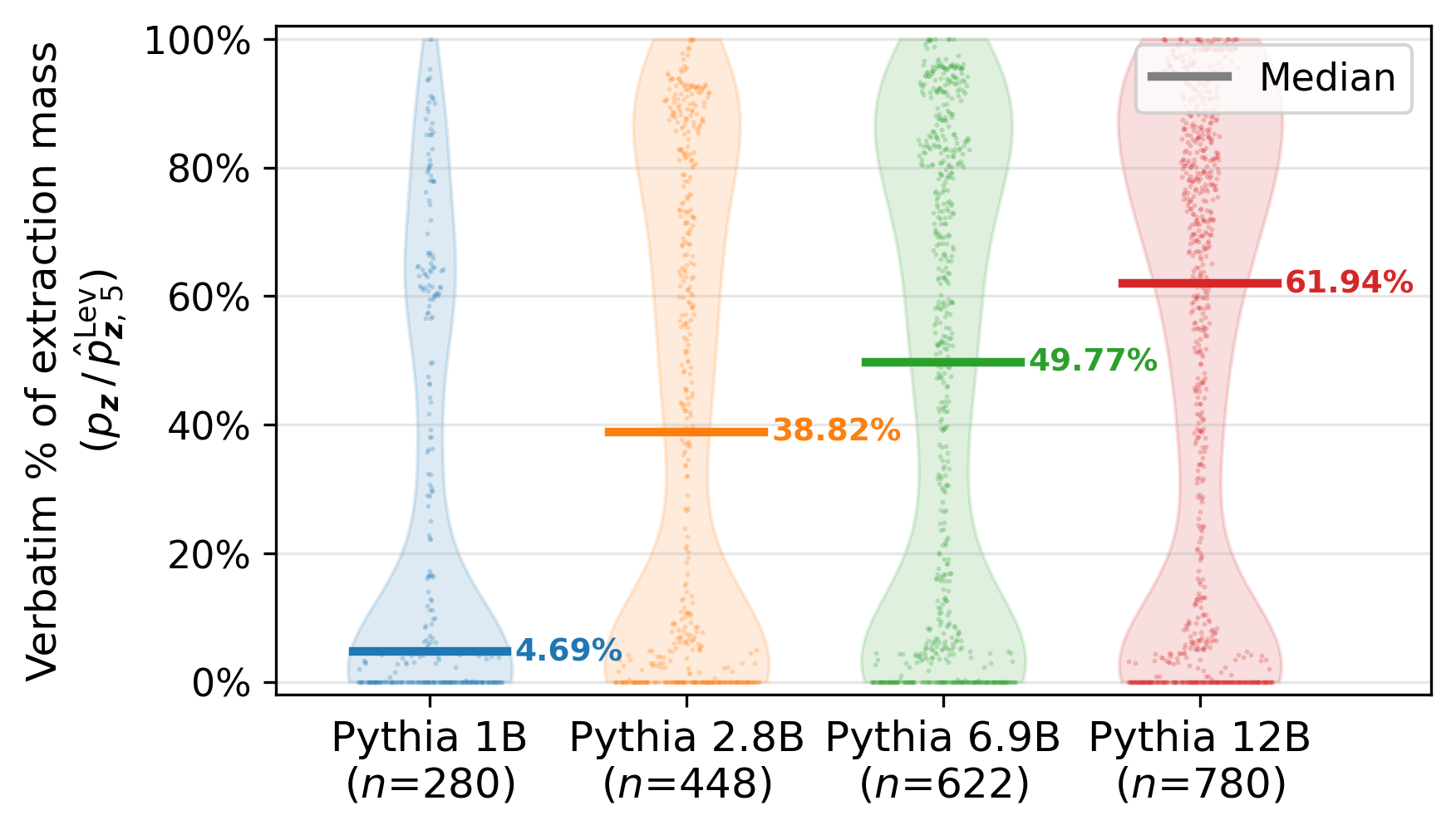}
    \caption{Verbatim mass share distributions}
    \label{app:fig:pythia:verbatimshare}
\end{subfigure}\\
\begin{subfigure}[t]{\textwidth}
    \centering
    \includegraphics[width=\linewidth,trim={0 0 0 0.75cm},clip]{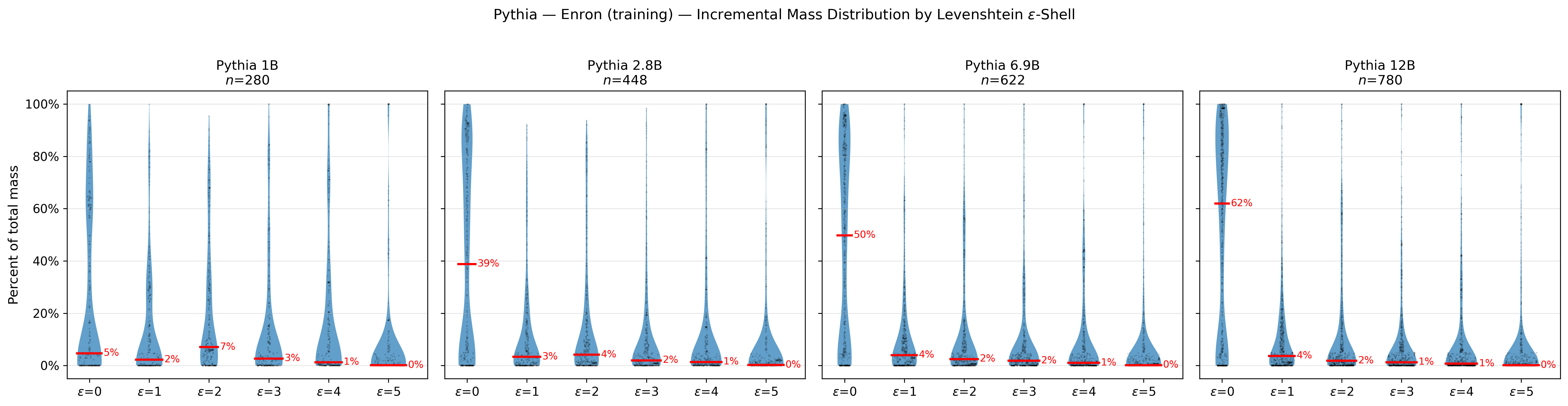}
    \caption{Incremental mass share distributions}
    \label{app:fig:pythia:incremental}
\end{subfigure}
\caption{\textbf{Illustrating different views of $\varepsilon$-shell share for \textsc{Pythia}.}
(\textbf{a}) Heatmaps across model size of per-sequence mass share by $\varepsilon$-shell (Equation~\ref{app:eq:shell-fraction}), sorted by verbatim share (Equation~\ref{app:eq:verbatim-share}).
(\textbf{b}) Violin plots comparing the distribution of per-sequence verbatim share (Equation~\ref{app:eq:verbatim-share}) across model sizes, with the median verbatim share annotated.
(\textbf{c}) Violin plots showing distributions over the per-$\varepsilon$-shell mass share (Equation~\ref{app:eq:shell-fraction}) per model.
Each shell shows the mass share contributed by the given $\lev$ distance. 
Note that each of the three $\varepsilon=0$ violin plots correspond to those in the cross-model comparison in (\textbf{b}). 
}
\label{app:fig:pythia:share}
\end{figure*}
\FloatBarrier

\subsection{Extended results for \textsc{Llama 2} on public domain books}\label{app:experiments:llama2-scale}

We include results for three public domain books (\emph{The Great Gatsby}, \emph{Orlando}, and \emph{Winnie the Pooh}) using the $\levshort$-pruned variant ($\varepsilon=5$). 
The results for \emph{The Great Gatsby} supplement those in Section~\ref{sec:experiments}.
The negative control is the first three chapters of \emph{The People's Dictator}, which has no hits for any of the three \textsc{Llama 2} sizes. 
(It is a perfectly clean negative control.) 

\subsubsection{Extracted sequences and rates}\label{app:experiments:llama2-scale:extraction}

\paragraph{Verbatim greedy extraction  and verbatim probabilistic extraction.} 
As a point of reference, we provide verbatim extraction rates from two different pipelines: 
verbatim greedy extraction (via generation) and verbatim probabilistic extraction (via teacher forcing).
See Table~\ref{tab:llama2:verbatim:rates} for all three books. 
While these methods involve roughly equivalent token evaluations, wall clock time for teacher forcing is almost always faster (for reasons discussed in Appendix~\ref{app:sec:intuition:cost}). 

\begin{table}[t]
\centering
\caption{\textbf{Verbatim extraction rates for \textsc{Llama 2}.}
Verbatim extraction rates via greedy (deterministic generation), teacher forcing, and $k$-CBS ($\levshort \, \varepsilon=5$).
For all experiments, held-out negative controls use the first three chapters of \textit{The People's Dictator} ($10{,}457$ sequences).
Probabilistic and $k$-CBS rates use threshold $\pseq \geq \taumin = 0.001$.}
\label{tab:llama2:verbatim:rates}

\begin{subtable}{\linewidth}
\centering
\caption{\textit{The Great Gatsby} (Train: $13{,}390$ sequences)}
\scriptsize
\begin{tabular}{l cc cc cc}
\toprule
& \multicolumn{2}{c}{\textsc{Llama 2 7B}} & \multicolumn{2}{c}{\textsc{Llama 2 13B}} & \multicolumn{2}{c}{\textsc{Llama 2 70B}} \\
\cmidrule(lr){2-3}
\cmidrule(lr){4-5}
\cmidrule(lr){6-7}
Pipeline & Train & Held-out & Train & Held-out & Train & Held-out \\
\midrule
Verbatim greedy & 0.47\% & 0.00\% & 0.79\% & 0.00\% & 7.24\% & 0.00\% \\
Verbatim probabilistic & 1.21\% & 0.00\% & 2.46\% & 0.00\% & 20.91\% & 0.00\% \\
Verbatim $k$-CBS ($\levshort \, \varepsilon=5$) & 1.22\% & 0.00\% & 2.44\% & 0.00\% & 20.30\% & 0.00\% \\
\bottomrule
\end{tabular}
\end{subtable}

\vspace{0.8em}

\begin{subtable}{\linewidth}
\centering
\caption{\textit{Orlando} (Train: $21{,}822$ sequences)}
\scriptsize
\begin{tabular}{l cc cc cc}
\toprule
& \multicolumn{2}{c}{\textsc{Llama 2 7B}} & \multicolumn{2}{c}{\textsc{Llama 2 13B}} & \multicolumn{2}{c}{\textsc{Llama 2 70B}} \\
\cmidrule(lr){2-3}
\cmidrule(lr){4-5}
\cmidrule(lr){6-7}
Pipeline & Train & Held-out & Train & Held-out & Train & Held-out \\
\midrule
Verbatim greedy & 0.00\% & 0.00\% & 0.00\% & 0.00\% & 0.11\% & 0.00\% \\
Verbatim probabilistic & 0.00\% & 0.00\% & 0.03\% & 0.00\% & 0.31\% & 0.00\% \\
Verbatim $k$-CBS ($\levshort \, \varepsilon=5$) & 0.00\% & 0.00\% & 0.03\% & 0.00\% & 0.29\% & 0.00\% \\
\bottomrule
\end{tabular}
\end{subtable}

\vspace{0.8em}

\begin{subtable}{\linewidth}
\centering
\caption{\textit{Winnie the Pooh} (Train: $6{,}152$ sequences)}
\scriptsize
\begin{tabular}{l cc cc cc}
\toprule
& \multicolumn{2}{c}{\textsc{Llama 2 7B}} & \multicolumn{2}{c}{\textsc{Llama 2 13B}} & \multicolumn{2}{c}{\textsc{Llama 2 70B}} \\
\cmidrule(lr){2-3}
\cmidrule(lr){4-5}
\cmidrule(lr){6-7}
Pipeline & Train & Held-out & Train & Held-out & Train & Held-out \\
\midrule
Verbatim greedy & 0.10\% & 0.00\% & 0.29\% & 0.00\% & 3.75\% & 0.00\% \\
Verbatim probabilistic & 0.39\% & 0.00\% & 0.78\% & 0.00\% & 10.86\% & 0.00\% \\
Verbatim $k$-CBS ($\levshort \, \varepsilon=5$) & 0.39\% & 0.00\% & 0.78\% & 0.00\% & 10.70\% & 0.00\% \\
\bottomrule
\end{tabular}
\end{subtable}

\end{table}

Note that, as underscored by the results in Table~\ref{tab:kcbs:llama2:verbatim:gap}, the lower bound extraction probabilities provided by $\levshort$-pruned $k$-CBS ($\varepsilon=5$) can under-count the verbatim probabilistic extraction rate produced by teacher forcing.
For instance, for \textsc{Llama 2 70B} and \textit{The Great Gatsby} (Table~\ref{tab:kcbs:gatsby:verbatim:gap}), verbatim $k$-CBS misses $86$ instances of extraction ($20.30\%$ extraction rate, compared to the teacher-forced verbatim probabilistic pipeline $20.91\%$ rate; recovery ratio of $97\%$).  
One possible reason is that $k$-CBS provides a lower bound due to pruning paths each iteration, and therefore can miss some mass.
Another reason is that there are subtle differences between logits in teacher forcing (which does not involve KV caching) and generation (which does), which can slightly change exchange results for the same sequences due to floating point rounding.
At a given token in a suffix, there may be multiple tokens with very similar probabilities;
their ranks may flip due to such rounding differences. 
This can also lead to $k$-CBS returning a \emph{higher} verbatim extraction rate than the one from teacher-forced inference.
See, for instance, \textsc{Llama 2 7B} and \emph{The Great Gatsby} (Table~\ref{tab:kcbs:gatsby:verbatim:gap}), where $k$-CBS has a higher extraction rate ($1.22\%$ compared to $1.21\%$; recovery ratio of $1.01$).
Overall, the results between methods are very similar with respect to overall extraction rates;
both flows capture essentially the same results for verbatim extraction. 
We could also widen the beam, at additional cost, to attempt to capture more with $k$-CBS. 

\begin{table}[t]
\centering
\caption{\textbf{Verbatim probabilistic vs.\ verbatim $k$-CBS on \textsc{Llama 2}.}
We report extraction rates ($\taumin = 0.001$, $\bw=20$, $\levshort\,\varepsilon=5$), the $k$-CBS recovery ratio, and the number of sequences missed by $k$-CBS.}
\label{tab:kcbs:llama2:verbatim:gap}

\begin{subtable}{\linewidth}
\centering
\caption{\textit{The Great Gatsby}}
\label{tab:kcbs:gatsby:verbatim:gap}
\scriptsize
\begin{tabular}{l r r r}
\toprule
& \textsc{Llama 2 7B} & \textsc{Llama 2 13B} & \textsc{Llama 2 70B} \\
\midrule
Verbatim probabilistic & 1.21\% & 2.46\% & 20.91\% \\
Verbatim $k$-CBS ($\levshort\,\varepsilon=5$) & 1.22\% & 2.44\% & 20.30\% \\
\midrule
Recovery ratio & 1.01 & 0.99 & 0.97 \\
Missed by $k$-CBS ($\levshort\,\varepsilon=5$) & 0 & 3 & 86 \\
\bottomrule
\end{tabular}
\end{subtable}

\vspace{0.8em}

\begin{subtable}{\linewidth}
\centering
\caption{\textit{Orlando}}
\scriptsize
\begin{tabular}{l r r r}
\toprule
& \textsc{Llama 2 7B} & \textsc{Llama 2 13B} & \textsc{Llama 2 70B} \\
\midrule
Verbatim probabilistic & 0.00\% & 0.03\% & 0.31\% \\
Verbatim $k$-CBS ($\levshort\,\varepsilon=5$) & 0.00\% & 0.03\% & 0.29\% \\
\midrule
Recovery ratio & inf & 1.00 & 0.93 \\
Missed by $k$-CBS & 0 & 0 & 5 \\
\bottomrule
\end{tabular}
\end{subtable}

\vspace{0.8em}

\begin{subtable}{\linewidth}
\centering
\caption{\textit{Winnie the Pooh}}
\scriptsize
\begin{tabular}{l r r r}
\toprule
& \textsc{Llama 2 7B} & \textsc{Llama 2 13B} & \textsc{Llama 2 70B} \\
\midrule
Verbatim probabilistic & 0.39\% & 0.78\% & 10.86\% \\
Verbatim $k$-CBS ($\levshort\,\varepsilon=5$) & 0.39\% & 0.78\% & 10.70\% \\
\midrule
Recovery ratio & 1.00 & 1.00 & 0.99 \\
Missed by $k$-CBS & 0 & 0 & 11 \\
\bottomrule
\end{tabular}
\end{subtable}

\end{table}

\paragraph{Comparing verbatim and near-verbatim extraction rates.}
Figure~\ref{app:fig:llama2:rates} compares greedy and $\levshort\, \varepsilon=5$ $k$-CBS extraction rates for \textsc{Llama 2} across model sizes and Levenshtein thresholds $\varepsilon \in \{0,\ldots,5\}$ for \emph{The Great Gatsby}, \emph{Orlando}, and \emph{Winnie the Pooh}. 
As with the results for \textsc{OLMo 2} (Section~\ref{sec:experiments} \& Appendix~\ref{app:experiments:olmo}) and \textsc{Pythia} (Appendix~\ref{app:experiments:pythia}), $k$-CBS dominates greedy at every $\varepsilon$ and model size, with both rates and gaps growing with model size.
(The exception is \emph{Winnie the Pooh}, where we observe a very slightly increased gap between 7B and 13B; 
it is more pronounced for the other books. See Figure~\ref{app:fig:winnie:rates}.) 
That is, greedy extraction under-counts extraction more for larger models.
The extraction rates for \textsc{Llama 2 70B} are relatively enormous per-book, compared to the smaller models. 
This is true even for \emph{Orlando}, where the overall amount of extraction we observe is very small (the maximum near-verbatim probabilistic rate is just above $0.4\%$, see Figure~\ref{app:fig:orlando:rates}). 
Held-out (negative-control) lines stay flat at zero, supporting the validity of our extraction procedure.

\begin{figure*}[t!]
\centering
\begin{subfigure}{\linewidth}
    \includegraphics[width=\linewidth]{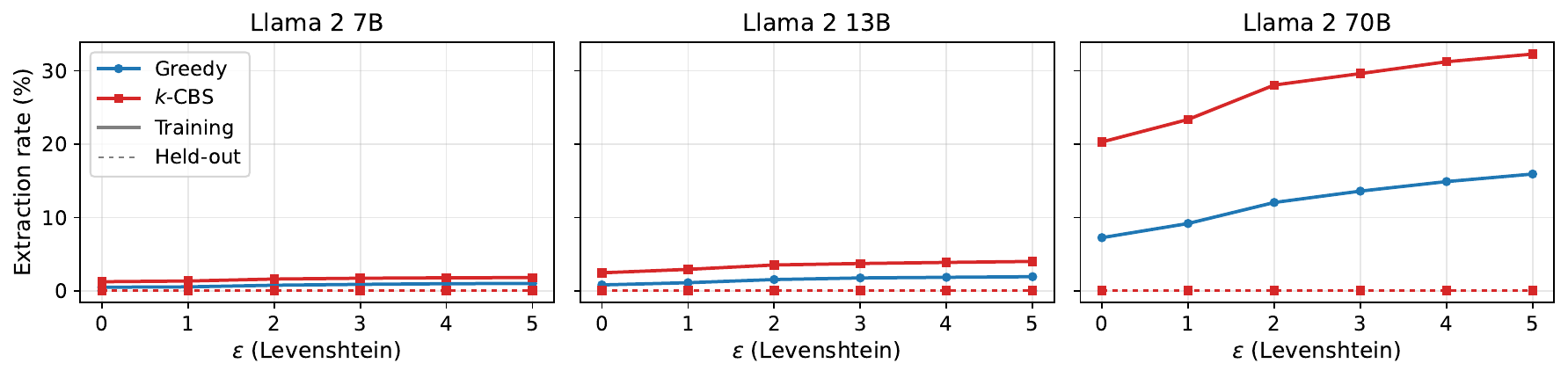}
    \subcaption{\emph{The Great Gatsby} ($13{,}390$ training sequences)}
\end{subfigure}
\begin{subfigure}{\linewidth}
    \includegraphics[width=\linewidth]{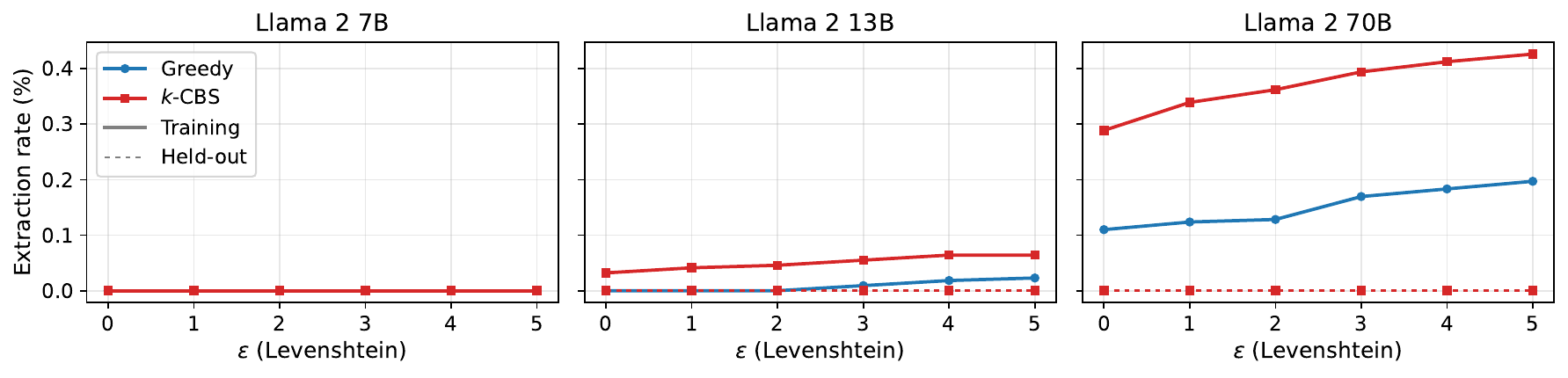}
    \subcaption{\emph{Orlando} ($21{,}822$ training sequences)}
    \label{app:fig:orlando:rates}
\end{subfigure}
\begin{subfigure}{\linewidth}
    \includegraphics[width=\linewidth]{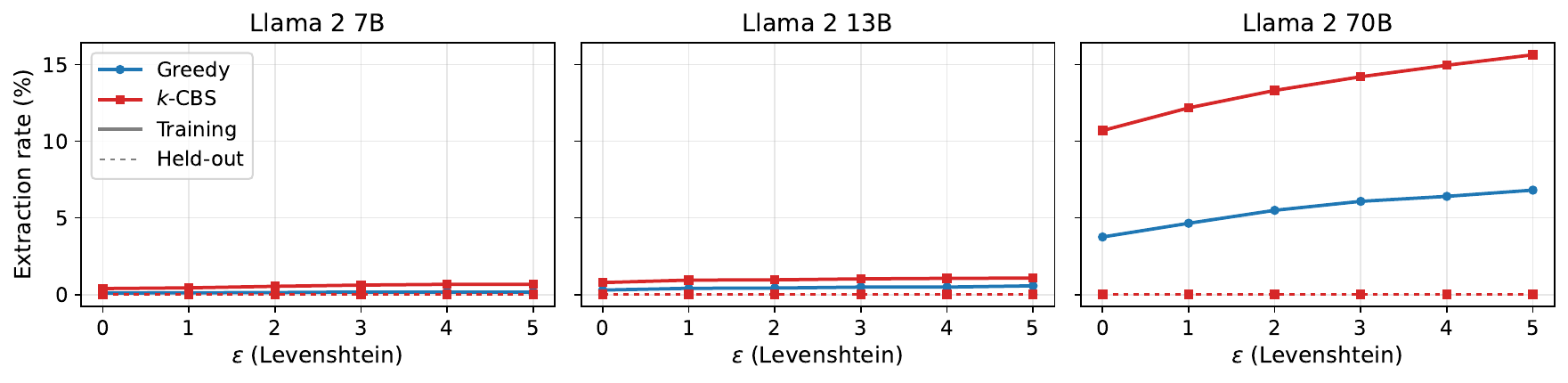}
    \subcaption{\emph{Winnie the Pooh} ($6{,}152$ training sequences)}
    \label{app:fig:winnie:rates}
\end{subfigure}
\caption{\textbf{Comparing extraction rates for \textsc{Llama 2}.} 
For \textsc{Llama 2 7B}, \textsc{13B}, and \textsc{70B} we show greedy and $\levshort\, \varepsilon=5$ $k$-CBS probabilistic rates for verbatim extraction ($\varepsilon\!=\!0$) and near-verbatim extraction for $\varepsilon\, \in \{1,\ldots,5\}$ (based on subsetting the pruned algorithm's results).
To assess validity, we also run analogous negative controls on $10{,}457$ sequences from \emph{The People's Dictator} (first three chapters). 
The greedy rates are exact. 
The probabilistic rates are computed with $\levshort$-pruned $k$-CBS (Section~\ref{sec:pruning});
they may miss some valid instances of extraction, and thus should be interpreted as lower bounds on extraction rates.\looseness=-1}
\label{app:fig:llama2:rates}
\end{figure*}

\paragraph{Negative control results.} 
Not a single sequence from the held-out book (\emph{The People's Dictator}) is flagged at any $\varepsilon$ by either method.

\begin{figure*}[t!]
\centering
\begin{subfigure}{\linewidth}
    \includegraphics[width=\linewidth]{paper/figure/gatsby/gatsby_llama-2_model-size_lev-eps5_verbatim_vs_nearvb_junk.png}
    \subcaption{\emph{The Great Gatsby} (replicated from Figure~\ref{fig:gatsby:scatter:main})}
\end{subfigure}
\begin{subfigure}{\linewidth}
    \includegraphics[width=\linewidth]{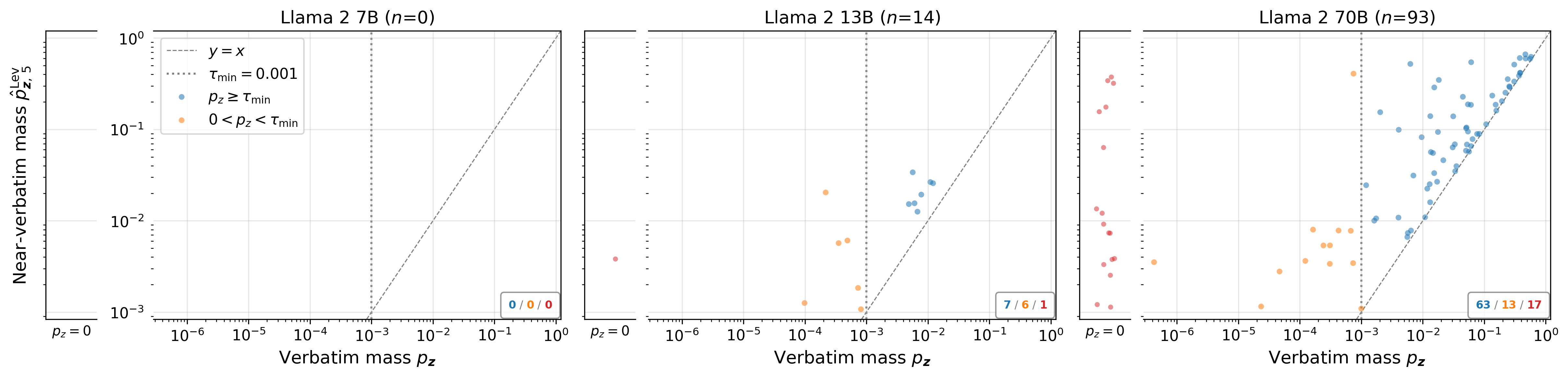}
    \subcaption{\emph{Orlando}}
    \label{app:fig:orlando:scatter}
\end{subfigure}
\begin{subfigure}{\linewidth}
    \includegraphics[width=\linewidth]{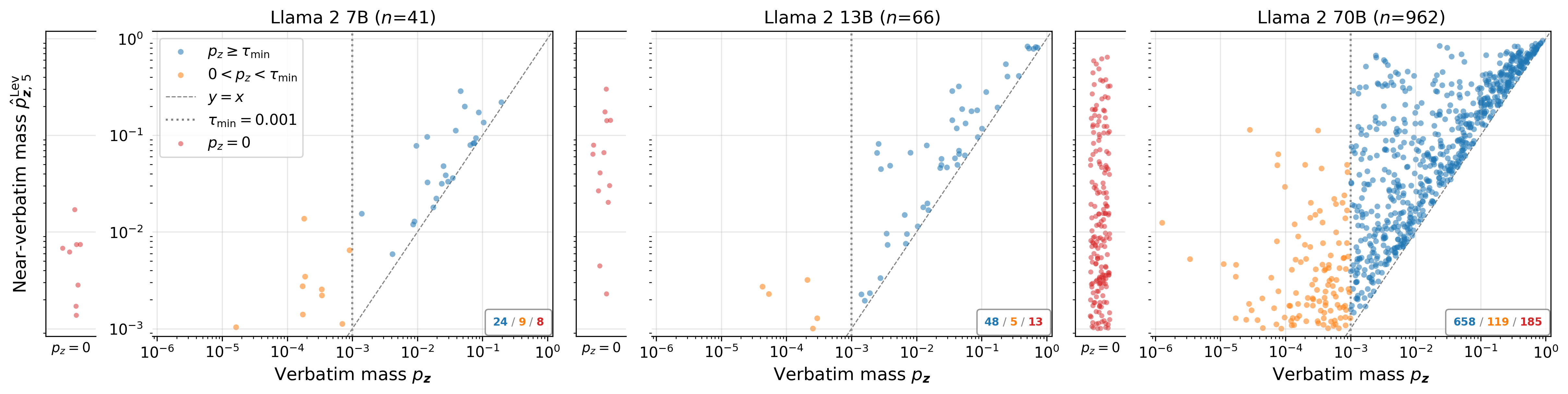}
    \subcaption{\emph{Winnie the Pooh}}
    \label{app:fig:winnie:scatter}
\end{subfigure}
\caption{\textbf{Near-verbatim mass vs.\ verbatim mass for \textsc{Llama 2}.} 
\textsc{Llama 2} on three public domain books; 
each point is one sequence.
Axes show near-verbatim ($p_{\seq,5}^\levshort$, $\levshort\,\varepsilon\!=\!5$) vs.\ verbatim ($\pseq$) extraction mass on a $\log$--$\log$ scale.
\textcolor{seabornredmid}{\textbf{Red}}/\textcolor{seabornorangemid}{\textbf{orange}}
points are ``unlocked'' by near-verbatim extraction (to the left of the $\taumin$ dotted reference line, $\pseq\!<\!\taumin$, but $p_{\seq,5}^\levshort\!\geq\!\taumin$);
\textcolor{seabornbluemid}{\textbf{blue}} points are verbatim-extractable ($\pseq\!\geq\!\taumin$).
Points above the dashed $y\!=\!x$ line show increased extraction risk when near-verbatim mass is accounted for.\looseness=-1}
\label{app:fig:llama2:scatter}
\end{figure*}

\paragraph{``Unlocked'' extracted sequences.} 
The three different books exhibit very different degrees of extraction (in terms of number of instances) and risk (in terms of magnitude and variation). 
With respect to risk, Figure~\ref{app:fig:llama2:scatter} gives a sense of how per-sequence risk changes across model sizes and books. 

\emph{The Great Gatsby} is more significantly memorized in general, and shows larger amounts of verbatim extraction (and increased risk) as well as ``unlocked'' near-verbatim extraction.
\emph{Orlando} is barely memorized by any model, but there is increased memorization (and risk) for \textsc{Llama 2 70B}.
\emph{Winnie the Pooh} shows patterns similar to \emph{The Great Gatsby}, but there is less extraction overall. 

In general, larger models exhibit larger degrees of risk---\textcolor{seabornbluemid}{\textbf{blue}} points sit well above $y$=$x$, indicating increased extraction risk for verbatim-extractable sequences, with higher risk overall for larger models.
Near-verbatim measurements reveal extraction that verbatim methods do not catch (\textcolor{seabornredmid}{\textbf{red}} + \textcolor{seabornorangemid}{\textbf{orange}}). 

We provide some qualitative examples of extracted sequences, to give a sense of what near-verbatim measurements capture (in terms of the diff and risk increase).
We mark missing text from the ground-truth suffix from Books3 in \textcolor{red}{red} and additions in the generation in \textcolor{blue}{blue}.

\begin{itemize}[leftmargin=.75cm]
    \item \textit{The Great Gatsby}, \textsc{Llama 2 70B}
    \begin{itemize}[leftmargin=.75cm]
        \item Verbatim mass: $0$; near-verbatim mass ($\levshort\, \varepsilon=5$): $0.863$.  A typo in the ground-truth text we have from Books3 is repaired by the closest-match generation.

        {\footnotesize
        \noindent $\pre$: \verb|In between time|{\textendash}\verb|"\n\nAs I went over to say goodbye I saw that the |\\ 
        \verb|expression of bewilderment had come back into Gatsby's face, as though a |\\
        \verb|faint doubt had occurred to him as to the|
        
        \vspace{0.1cm}
         
        \noindent $\suf$: \verb| quality of his present happiness. Almost five years! There must have been |\\
        \verb|moments even that afternoon | \textcolor{red}{\texttt{whe }} \verb|Daisy tumbled short of his dreams|\textendash\verb|not |\\
        \verb|through her own fault, but because of the colossal vitality of his illusion|

        \vspace{0.1cm}
    
        \noindent Best near-verbatim $\gensuf$ ($\levshort=1$, mass $0.628$): \verb| quality of his present happiness. |\\
        \verb|Almost five years! There must have been |\verb|moments even that afternoon | \textcolor{blue}{\texttt{when}} \\
        \verb|Daisy tumbled short of his dreams|\textendash\verb|not through her own fault, but because |\\
        \verb|of the colossal vitality of his illusion|
        }

        \item Verbatim mass: $0$; near-verbatim mass ($\levshort\, \varepsilon=5$): $0.822$.  A space in the ground-truth suffix is not included in the closest-match generation.

        {\footnotesize
        \noindent $\pre$: \verb|y anything in his house, old sport."\n\n"She's got an indiscreet voice," I |\\ 
        \verb|remarked. "It's full of|\textendash\verb|" I hesitated.\n\n"Her voice is full of money," he|
        
        \vspace{0.1cm}
         
        \noindent $\suf$: \verb| said suddenly.\n\nThat was it. I'd never understood before. It was full |\\
        \verb|of money|\textendash\colorbox{red!15}{\rule{0pt}{1.2ex}\hspace{0.2em}}\verb|that was the inexhaustible charm that rose and fell in it, the |\\
        \verb|jingle of it, the cymbals|

        \vspace{0.1cm}
    
        \noindent Best near-verbatim $\gensuf$ ($\levshort=1$, mass $0.777$): \verb| said suddenly.\n\nThat was it I'd |\\
        \verb|never understood before. It was full of money|\textendash\verb|that was the inexhaustible charm |\\
        \verb|that rose and fell in it, the jingle of it, the cymbals|
        }
    \end{itemize}
    \item \textit{Orlando}, \textsc{Llama 2 70B}
    \begin{itemize}[leftmargin=.75cm]
        \item Verbatim mass $0.000746$; near-verbatim mass ($\levshort\, \varepsilon=5$): $0.408$.  Punctuation differences.

        {\footnotesize
        \noindent $\pre$: \verb|n. Let us, as he takes his seat, read the following passage from the |\\
        \verb|_Spectator_ :\n\n'I consider woman as a beautiful, romantic animal, that may |\\
        \verb|be adorned with furs and feathers|
        
        \vspace{0.1cm}
         
        \noindent $\suf$: \verb|, pearls and diamonds, ores and silks. The lynx shall cast its skin at |\\
        \verb|her feet to make her a tippet|\textcolor{red}{\texttt{,}} \verb|the peacock, parrot and swan shall pay |\\
        \verb|contributions to her muff|\textcolor{red}{\texttt{;}}

        \vspace{0.1cm}
    
        \noindent Best near-verbatim $\gensuf$ ($\levshort=3$, mass $0.232$): \verb|, pearls and diamonds, ores and |\\
        \verb|silks. The lynx shall cast its skin at her feet to make her a tippet|\textcolor{blue}{\texttt{;}} \verb|the|\\
        \verb|the peacock, parrot, and swan shall pay contributions to her muff|
        }

        \item Verbatim mass $0$; near-verbatim mass ($\levshort\, \varepsilon=5$): $0.176$.  Punctuation differences, and then text shift as a result of tokenization-to-text length variation.

        {\footnotesize
        \noindent $\pre$: \verb|the voyage to the Houyhnhnms:\n\n'I enjoyed perfect Health of Body and |\\
        \verb|Tranquillity of Mind; I did not find the Treachery or Inconstancy of a Friend, |\\
        \verb|nor the Injuries|
        
        \vspace{0.1cm}
         
        \noindent $\suf$: \verb|of a secret or open Enemy. I had no occasion of bribing, flattering |\\
        \verb|or pimping, to procure the Favour of any great Man or of his Minion|\textcolor{red}{\texttt{.}} \\
        \verb|I wanted no Fence against | \textcolor{red}{\texttt{Fraud}}

        \vspace{0.1cm}
    
        \noindent Best near-verbatim $\gensuf$ ($\levshort=5$, mass $0.127$): \verb|of a secret or open Enemy. I had |\\
        \verb|no occasion of bribing, flattering|\textcolor{blue}{\texttt{,}} \verb|or pimping, to procure the Favour of |\\
        \verb|any great Man|\textcolor{blue}{\texttt{,}} \verb|or of his Minion|\textcolor{blue}{\texttt{;}} \verb|I wanted no Fence against|
        }
    \end{itemize}
    \item \textit{Winnie the Pooh}, \textsc{Llama 2 70B}
    \begin{itemize}[leftmargin=.75cm]
        \item Verbatim mass $0$; near-verbatim mass ($\levshort\, \varepsilon=2$): $0.643$.  Hyphenization difference, and then text shift as a result of tokenization-to-text length variation. 

        {\footnotesize
        \noindent $\pre$: \verb|t\n\n## IN WHICH\n\nChristopher Robin Leads an Expotition to the North Pole|\\
        \verb|\n\nONE FINE DAY Pooh had stumped up to the top of the Forest to see if his friend|\\
        
        \vspace{0.1cm}
        \noindent $\suf$: \verb|Christopher Robin was interested in Bears at all. At breakfast that morning |\\
        \verb|(a simple meal of marmalade spread lightly over a honey|\textcolor{red}{\texttt{-}}\verb|comb or two) he had |\\
        \verb|suddenly thought of a new song. It began like this:\n|

        \vspace{0.1cm}
    
        \noindent Best near-verbatim $\gensuf$ ($\levshort=2$, mass $0.405$): \verb|Christopher Robin was interested |\\
        \verb|in Bears at all. At breakfast that morning (a simple meal of marmalade spread |\\
        \verb|lightly over a honeycomb or two) he had suddenly thought of a new |\\
        \verb|song. It began like this:\n|\textcolor{blue}{\texttt{\textbackslash n}}
        }
    \end{itemize}
\end{itemize}

Note that the extracted sequences from \emph{Orlando} included above reflect quotes from other texts, not Woolf's original writing.
The first sequence is from an essay by Joseph Addison, and the second is from \emph{Gulliver's Travels}.

\subsubsection{Extraction risk}\label{app:experiments:llama2-scale:risk}

\textbf{CCDF over near-verbatim risk gain.}
We similarly provide CCDF views of the near-verbatim risk gain, specifically for \emph{The Great Gatsby} and \emph{Winnie the Pooh} (Figure~\ref{app:fig:llama2:ccdfs}).
We omit \emph{Orlando} given the minimal amount of extraction, and instead refer to the scatter plot visualization to assess risk gain for those results (Figure~\ref{app:fig:orlando:scatter}). 
We also provide tables of points on the CCDF of the per-sequence mass gain on the fixed extractable set only, where the fixed set differs by model (Table~\ref{app:tab:llama2:rel-ccdf}).
Overall, we observe the same patterns as in our other results on \textsc{OLMo 2} (Appendix~\ref{app:sec:experiments:setup:olmo}) and \textsc{Pythia} (Appendix~\ref{app:sec:experiments:setup:pythia}), albeit with larger gaps between the 70B model and the smaller models. 
In the fixed-set CCDF results, unlike for the prior results, we do generally observe a dominance pattern with respect to relative mass gains computed according to each model's extractable set for \emph{The Great Gatsby}:
this underscores the enormity of the relative, not only absolute, gains in risk for the 70B model on this book. 

\begin{figure*}[t!]
\centering
\begin{subfigure}{0.45\linewidth}
    \includegraphics[width=\linewidth]{paper/figure/gatsby/gatsby_llama-2_model-size_lev-eps5_per_seq_nv_gain.png}
    \subcaption{\emph{The Great Gatsby} (replicated from Figure~\ref{fig:ccdf:gatsby:main})}
\end{subfigure}
\begin{subfigure}{0.45\linewidth}
    \includegraphics[width=\linewidth]{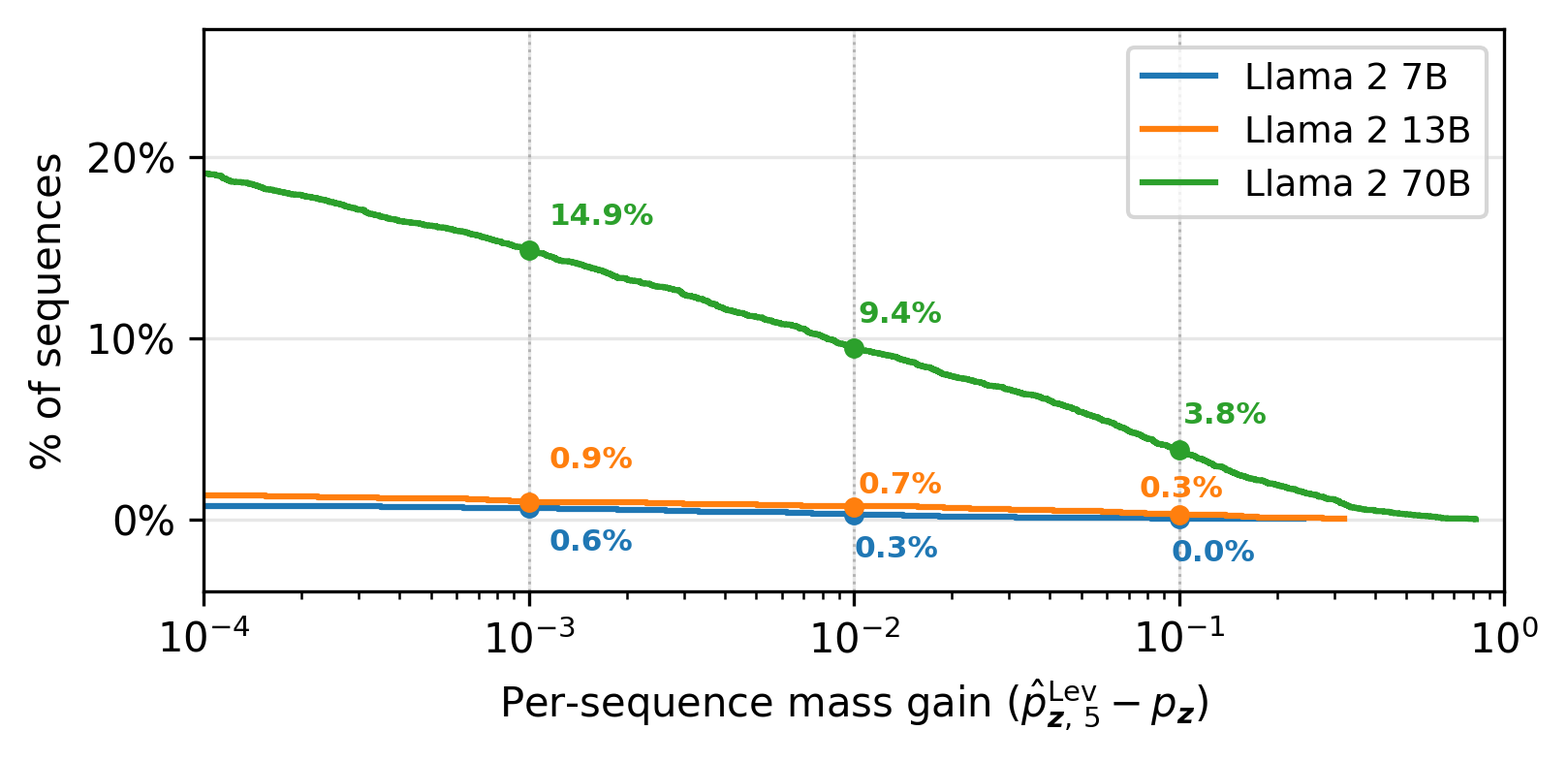}
    \subcaption{\emph{Winnie the Pooh}}
\end{subfigure}
\caption{\textbf{CCDF of population per-sequence near-verbatim mass gain for \textsc{Llama 2}.} 
For $\levshort\, \varepsilon\!=\!5$ mass minus verbatim mass ($\hat{p}_{\seq,5}^{\levshort} - \pseq$), a point $(x, y)$ means $y\%$ of sequences have extraction-mass gain $\geq x$.
Plotted over the whole training set sample for each book ($13{,}390$ sequences from \emph{The Great Gatsby}; $6{,}152$ sequences from \emph{Winnie the Pooh}).\looseness=-1}
\label{app:fig:llama2:ccdfs}
\end{figure*}
\begin{figure}[t]
\centering
\begin{minipage}[t]{\linewidth}
\vspace{0pt}
\captionof{table}{\textbf{Points on the CCDF of \emph{extracted} per-sequence mass gain for \textsc{Llama 2}.}
We provide specific values from the CCDF over the per-sequence mass gain for extracted sequences only.
For this CCDF, the maximum $y$-value is $100\%$. 
The denominators are different for each, given different counts in the fixed extractable set for \textsc{Llama 2}, both on different models and different books.}
\label{app:tab:llama2:rel-ccdf}
\centering
\begin{minipage}[t]{0.48\linewidth}
\centering
\textit{The Great Gatsby}\\[2pt]
\begin{tabular}{lllll}
\toprule
& \textbf{7B} & \textbf{13B} & \textbf{70B}\\
\midrule
$\bf \geq10^{-3}$ & $95.0\%$ & $94.0\%$  & $97.2\%$ \\
$\bf \geq10^{-2}$ & $59.6\%$ & $61.7\%$ & $76.0\%$ \\
$\bf \geq10^{-1}$ & $17.1\%$ & $26.9\%$ & $39.2\%$ \\
\bottomrule
\end{tabular}
\end{minipage}
\hfill
\begin{minipage}[t]{0.48\linewidth}
\centering
\textit{Winnie the Pooh}\\[2pt]
\begin{tabular}{lllll}
\toprule
& \textbf{7B} & \textbf{13B} & \textbf{70B}\\
\midrule
$\bf \geq10^{-3}$ & $95.1\%$ & $87.9\%$  & $95.0\%$ \\
$\bf \geq10^{-2}$ & $45.1\%$ & $65.2\%$ & $24.2\%$\\
$\bf \geq10^{-1}$ & $4.9\%$ & $60.4\%$ & $24.5\%$\\
\bottomrule
\end{tabular}
\end{minipage}
\end{minipage}
\end{figure}

\begin{figure*}[t!]
\centering
\begin{subfigure}{\linewidth}
    \includegraphics[width=\linewidth,trim={0 0 0 1cm}, clip]{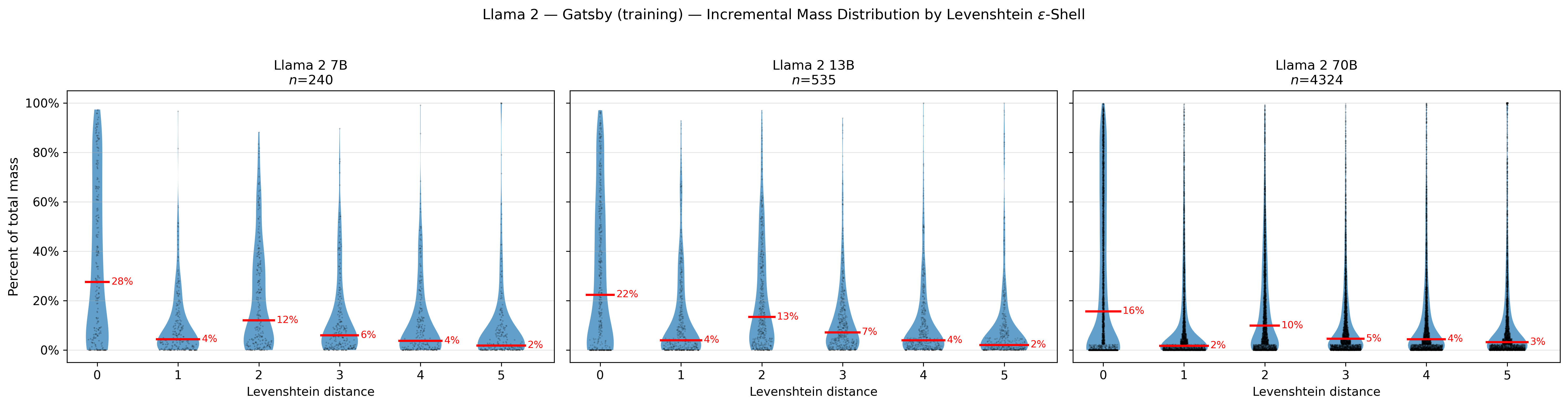}
    \subcaption{\emph{The Great Gatsby}}
    \label{app:fig:gatsby:shell}
\end{subfigure}
\begin{subfigure}{\linewidth}
    \includegraphics[width=\linewidth,trim={0 0 0 1cm}, clip]{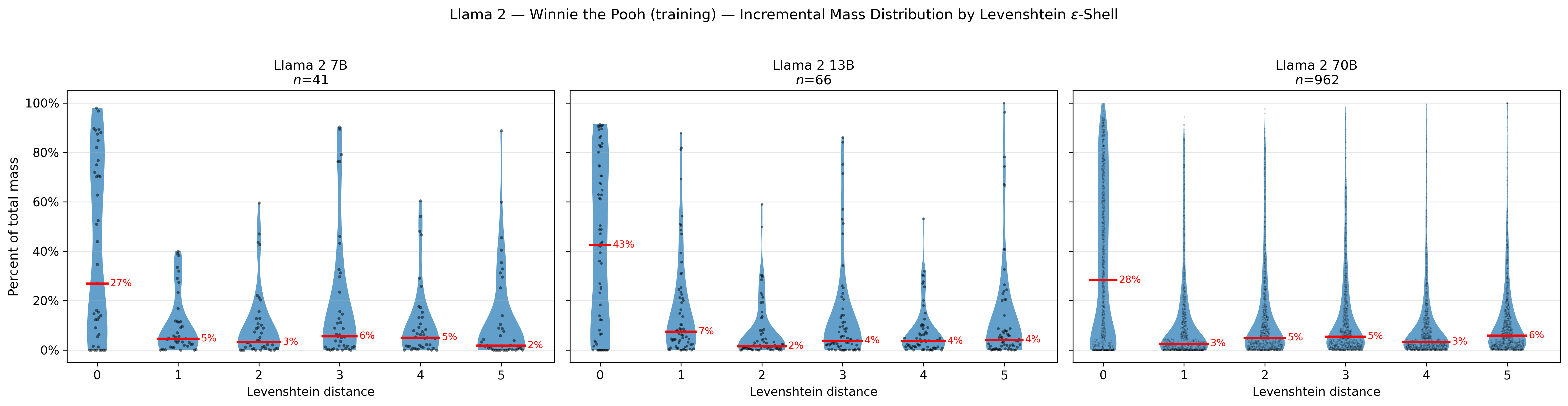}
    \subcaption{\emph{Winnie the Pooh}}
    \label{app:fig:winnie:shell}
\end{subfigure}
\caption{\textbf{Illustrating $\varepsilon$-shell share for \textsc{Llama 2}.}
Violin plots showing distributions over the per-$\varepsilon$-shell mass share (Equation~\ref{app:eq:shell-fraction}) per model and two books.
Each shell shows the mass share contributed by the given $\lev$ distance. 
}
\label{app:fig:llama2:shell}
\end{figure*}

\paragraph{Per-sequence $\varepsilon$-shell share analysis.}
We refer to Appendix~\ref{app:sec:experiments:metrics} for detailed explanations of the metrics and plot types for this analysis.
The dispersal across $\varepsilon$-shells varies by book, with respect to how memorized the book is.
As discussed in the main paper (Section~\ref{sec:experiments:b}), \emph{The Great Gatsby} shows decreasing median verbatim share by model size.
Here, it is also clear that the majority of mass comes from $\varepsilon\leq2$ across all three model sizes, with fairly equal dispersal in the last three distance shells.
There is far less extraction for \emph{Winnie the Pooh} for both \textsc{Llama 2 7B} and \textsc{13B}, so these violins are produced from significantly fewer data points.
We therefore do not make claims about relative mass shifts across model sizes, but we do note that the verbatim shell contains the most mass (with respect to median verbatim share) across all three model sizes. 

\subsubsection{Heatmaps and cross-model sequence analysis}\label{app:experiments:llama2-scale:sequence}

\paragraph{Heatmaps visualizing extracted regions.}
We heatmap draw from visualization strategies in \citet{cooper2025books} to illustrate where memorization manifests in a particular book (character location), enabling comparisons of regions of memorized text across models. 
In Figure~\ref{app:fig:heatmaps} for a given heatmap, each vertical strip corresponds to a character position in the source text; 
color intensity shows the \emph{maximum} extraction probability across all overlapping $50$-token suffixes that cover that position, on a $\log$ scale.
White indicates no extractable suffix ($< \taumin = 0.001$).

The heatmaps showcase how memorization varies across models and books.
Our analysis shows the verbatim heatmaps (as in~\citet{cooper2025books}), compared to the near-verbatim heatmaps produced using $\levshort, \varepsilon=5$. 
There are several points worth highlighting.
The first two underscore our other analysis: 
near-verbatim extraction identifies more unique instances of extraction, and increased extraction risk for sequences that are verbatim extractable.
For instance, consider Figure~\ref{app:fig:gatsby:heatmap}, which shows the heatmaps for \emph{The Great Gatsby}.
For \textsc{Llama 2 7B} at around character $220{,}000$, the near-verbatim heatmap picks up extraction that verbatim extraction misses; 
starting at around character $170{,}000$, the near-verbatim extracted region exhibits an extraction risk increase (darker blue) compared to the verbatim heatmap.\looseness=-1 

The heatmaps also show how near-verbatim extraction at smaller sizes can be an indicator (though not determinative of) verbatim extraction at larger model sizes.
For instance, for \emph{The Great Gatsby} and \textsc{Llama 2 7B}, there are $76$ sequences that are $\levshort\, \varepsilon=5$ near-verbatim extractable (visible on the near-verbatim heatmap) but not verbatim extractable (not visible on the verbatim heatmap).
Of these sequences, $31$ ($40.8\%$) are verbatim extractable at \textsc{Llama 2 13B}.
Similarly, there are $208$ $\levshort\, \varepsilon=5$ near-verbatim extractable (but not verbatim extractable) sequences, of which $106$ ($51.0\%$) are verbatim extractable at \textsc{Llama 2 70B}.
One of the easiest to see examples of this is with the \emph{Orlando} heatmaps (Figure~\ref{app:fig:orlando:heatmap}), as there are much sparser extraction hits to pick apart.
At about $30{,}000$ characters, both \textsc{Llama 2 7B} and \textsc{Llama 2 13B} show a near-verbatim extractable region that is not verbatim extractable; this region is verbatim extractable for \textsc{Llama 2 70B}.

\paragraph{Analyzing mass for common extractable sequences across model sizes.}
The heatmaps (Figure~\ref{app:fig:heatmaps}) show significantly increased instances of near-verbatim extraction for the 70B model, complementing the scatter plots in Figure~\ref{app:fig:llama2:scatter}).
We also and observed decrease in median verbatim share from 7B to 70B for both \emph{The Great Gatsby} and \emph{Winnie the Pooh} (Figure~\ref{app:fig:llama2:shell}). 
To decompose this a bit, we restrict to the sequences extractable at all three model sizes---i.e., hold the set of sequences fixed---and ask: 
how much more probability does a larger model place on the same memorized content?
We show results for \emph{The Great Gatsby} and \emph{Winnie the Pooh} in Figure~\ref{app:fig:common-set};
we omit results for \emph{Orlando} given the minimal amount of overall extraction. 

Both books exhibit similar patterns in their distributions:
from low (near $\taumin$) for 7B to very high extraction risk at 70B. 
At 7B, most sequences cluster near the extraction threshold (for \emph{Gatsby}, median of $0.037$ and mean of $0.119$; for \emph{Winnie the Pooh}, median of $0.053$ and mean of $0.020$).
The typical common-extractable sequence has very low extraction risk. 
At 70B for both books, the bulk of the distribution sits in the $0.4$--$0.9$ range (for \emph{Gatsby} median of $0.693$ and mean of $0.598$; for \emph{Winnie the Pooh},  median of $0.666$ and mean of $0.0564$).
The model assigns majority probability to near-verbatim continuations.
The few sequences that remain low-mass at 70B pull the mean below the median.

Overall, we observe that the shift indicates that thee same sequences carry dramatically more mass at larger model sizes. 
In general, model scale does not just widen the set of memorized content;
it deepens the memorization of content at smaller scales.
Median verbatim share \emph{over the whole population of extracted sequences per model} can still decrease, as larger models surface many more weakly memorized sequences.\looseness=-1

\begin{figure*}[t!]
\centering
\begin{subfigure}{\linewidth}
    \includegraphics[width=\linewidth,trim={0 0 0 1cm}, clip]{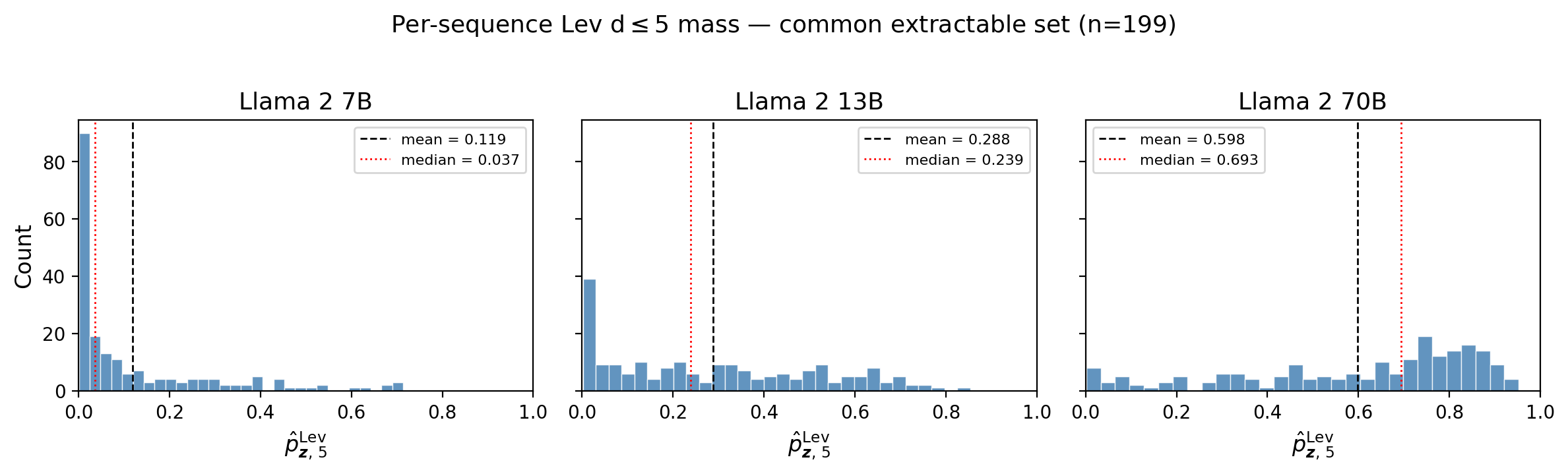}
    \subcaption{\emph{The Great Gatsby} ($n=199$)}
    \label{app:fig:gatsby:common}
\end{subfigure}
\begin{subfigure}{\linewidth}
    \includegraphics[width=\linewidth,trim={0 0 0 1cm}, clip]{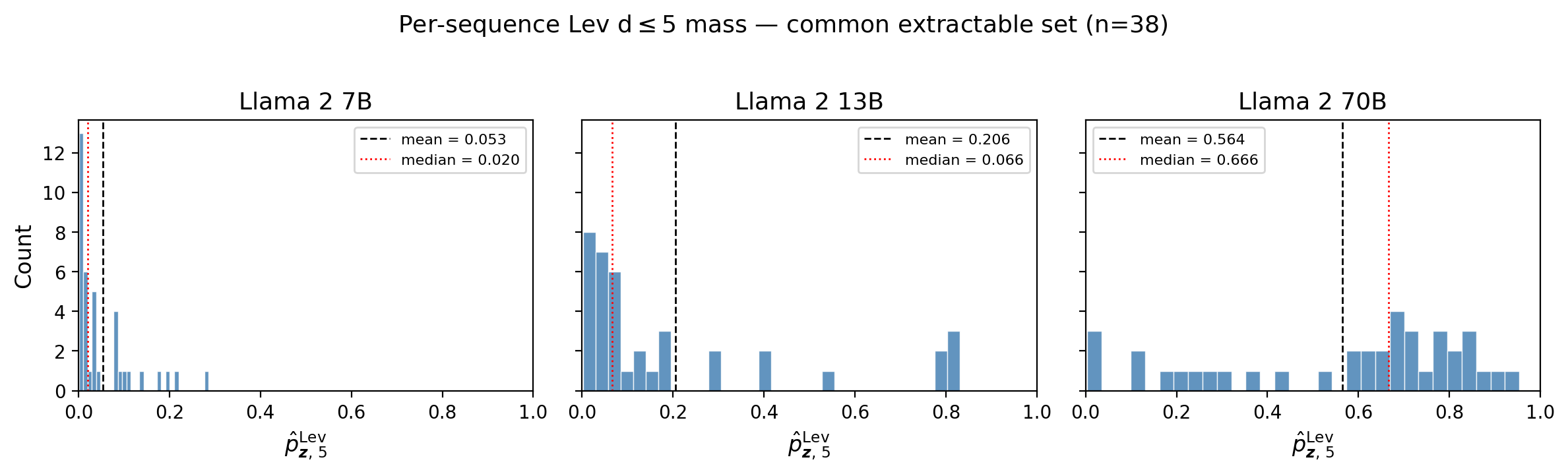}
    \subcaption{\emph{Winnie the Pooh} ($n=38$)}
    \label{app:fig:winnie:common}
\end{subfigure}
\caption{\textbf{Fixed-set mass grows dramatically with model size.}
By restricting to the sequences extractable at all three model sizes, we hold the set of sequences fixed and ask: 
how much more probability does a larger model place on the same memorized content?
}
\label{app:fig:common-set}
\end{figure*}

\begin{figure*}[t!]
\centering
\begin{subfigure}{\linewidth}
    \includegraphics[width=\linewidth]{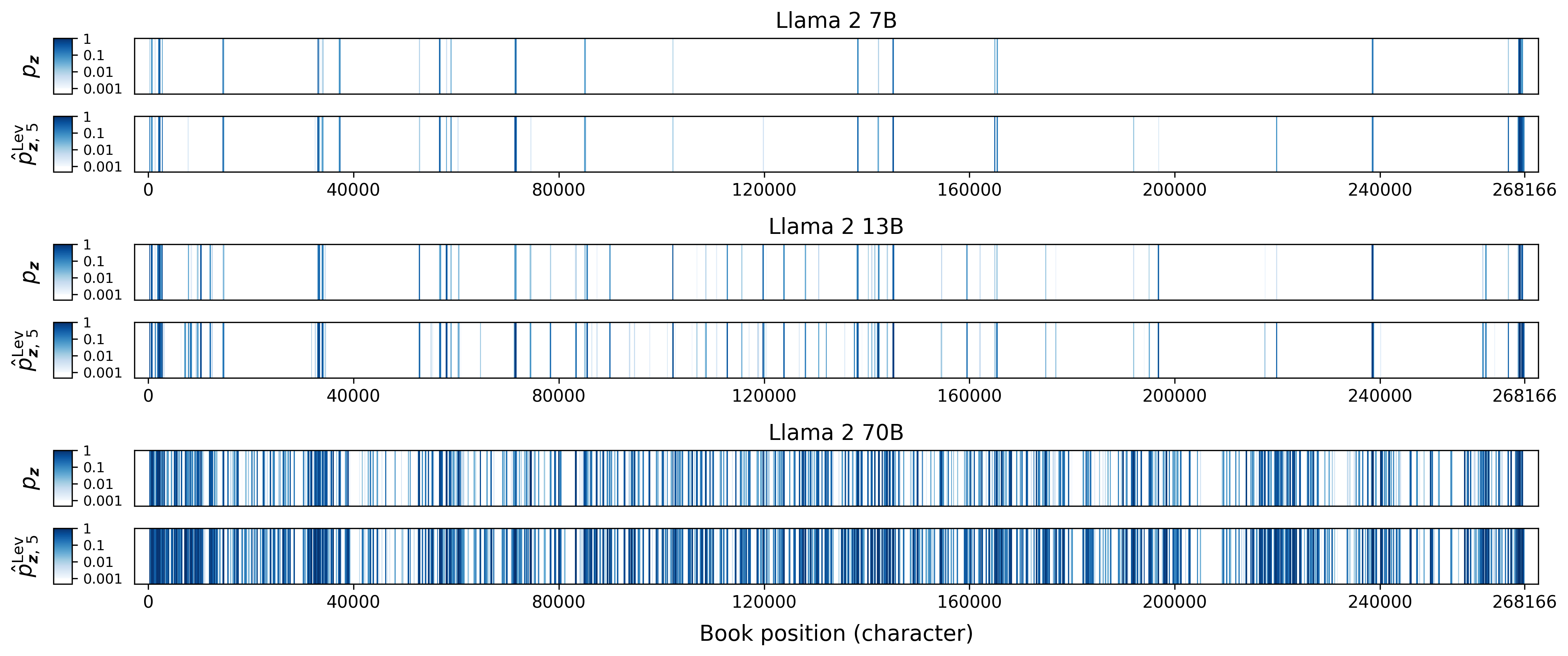}
    \subcaption{\emph{The Great Gatsby}}
    \label{app:fig:gatsby:heatmap}
\end{subfigure}
\begin{subfigure}{\linewidth}
    \includegraphics[width=\linewidth]{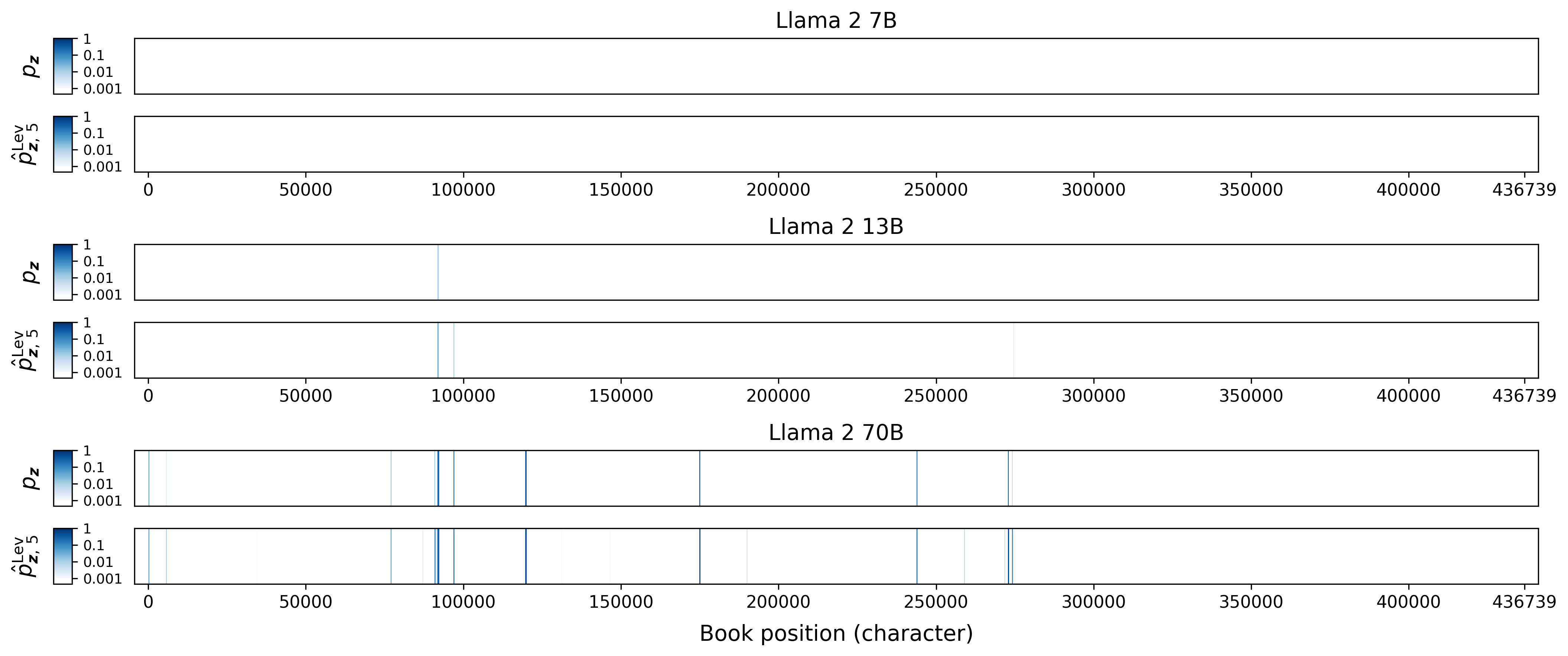}
    \subcaption{\emph{Orlando}}
    \label{app:fig:orlando:heatmap}
\end{subfigure}
\begin{subfigure}{\linewidth}
    \includegraphics[width=\linewidth]{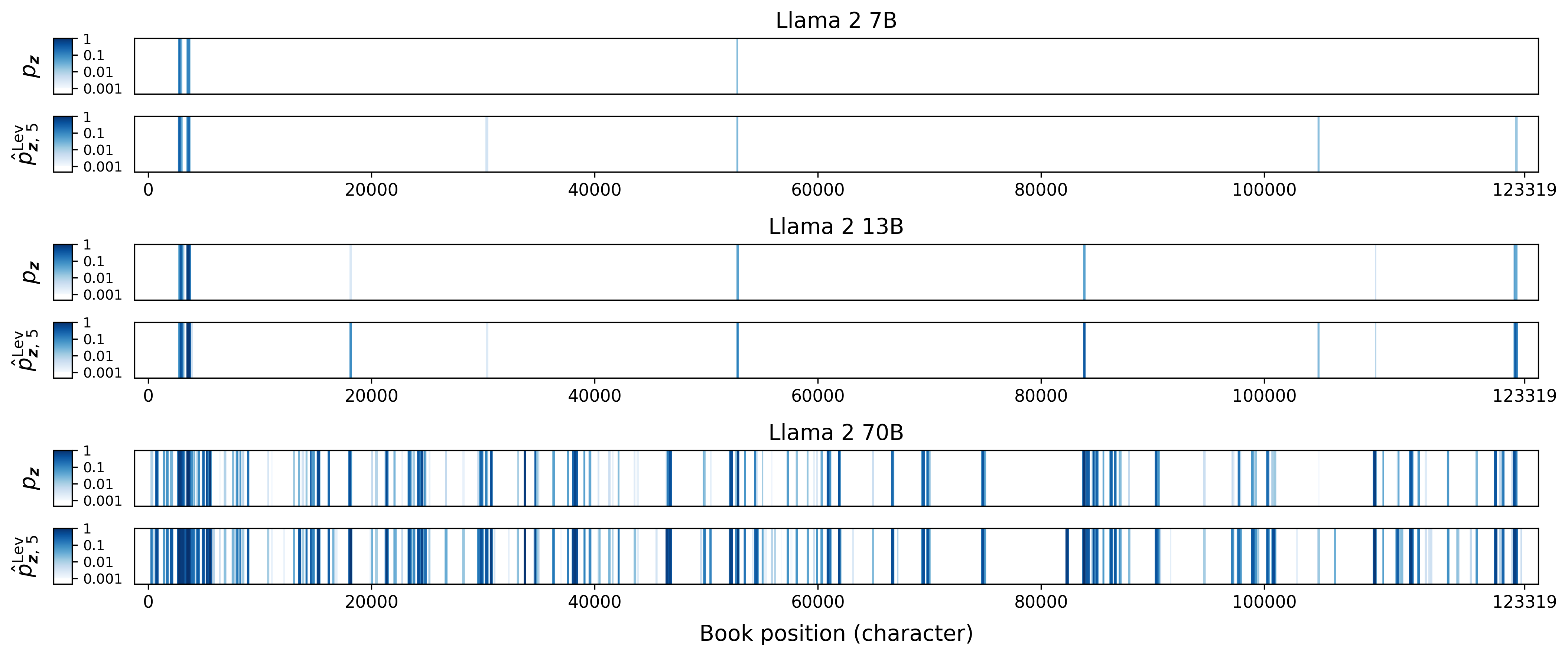}
    \subcaption{\emph{Winnie the Pooh}}
    \label{app:fig:winnie:heatmap}
\end{subfigure}
\caption{\textbf{Heatmaps comparing verbatim and near-verbatim extraction risk across books for \textsc{Llama 2} models.}
  For each book, we show three pairs of heatmaps---one pair for each \textsc{Llama 2} model size, with each pair showing the verbatim extraction probability ($\pseq$) and the near-verbatim extraction probability ($\hat{p}^\levshort_{\seq,5}$). 
  On a given heatmap, each vertical strip corresponds to a character position in the source text; 
  color intensity shows the \emph{maximum} extraction probability across all overlapping $50$-token suffixes that cover that position, on a $\log$ scale.
  White indicates no extractable suffix ($< \taumin$).}
\label{app:fig:heatmaps}
\end{figure*}

\clearpage
\subsection{Additional experiment with \textsc{Llama 3.1 8B}}\label{app:experiments:llama3}

We ran several additional experiments on \textsc{Llama 3.1 8B}, including for additional books. 
For brevity, we omit most of these results
In Figure~\ref{app:fig:pride:scatter-evo}, we offer one additional illustration for \emph{Pride and Prejudice}, as this book exhibits extensive degrees of memorization for this model.
We provide a set of scatter plots for increasing Levenshtein distance, visualizing how the risk evolves and how sequences become unlocked at different $\varepsilon$.
$\varepsilon=0$ is the verbatim case, so there are only \textcolor{seabornbluemid}{\textbf{blue}} points for that plot, and they all sit on the line $y=x$; 
we provide this as a reference, even though it just contains information about verbatim extraction.

\begin{figure*}[b!]
\centering
    \includegraphics[width=\linewidth]{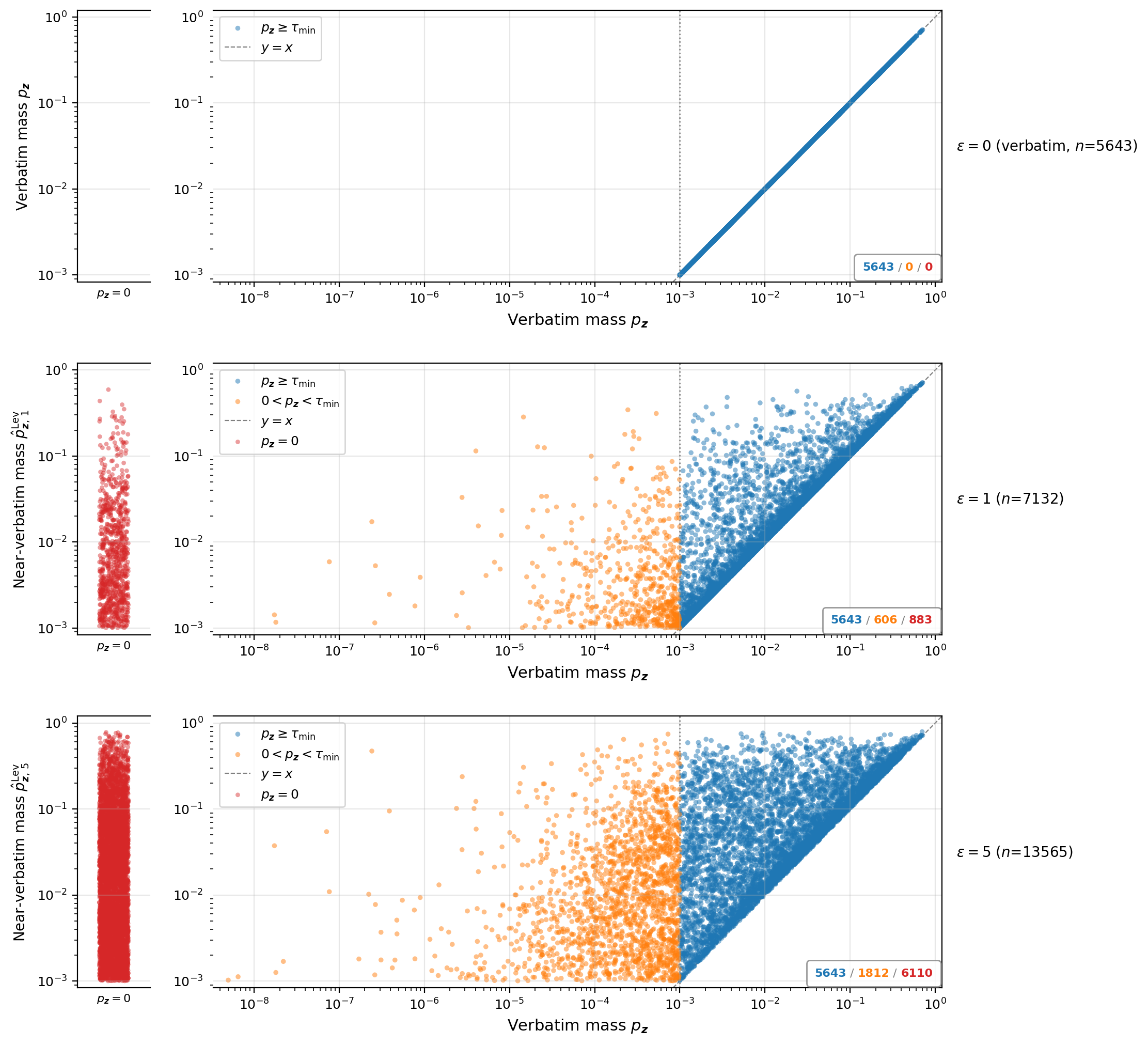}
\caption{\textbf{Evolution of near-verbatim mass vs.\ verbatim mass for \textsc{Llama 3.1 8B} and \emph{Pride and Prejudice}.} 
Each point is one sequence.
Axes show near-verbatim ($p_{\seq,5}^\levshort$, $\levshort\,\varepsilon\!=\!5$) vs.\ verbatim ($\pseq$) extraction mass on a $\log$--$\log$ scale.
\textcolor{seabornredmid}{\textbf{Red}}/\textcolor{seabornorangemid}{\textbf{orange}}
points are ``unlocked'' by near-verbatim extraction (to the left of the $\taumin$ dotted reference line, $\pseq\!<\!\taumin$, but $p_{\seq,5}^\levshort\!\geq\!\taumin$);
\textcolor{seabornbluemid}{\textbf{blue}} points are verbatim-extractable ($\pseq\!\geq\!\taumin$).
Points above the dashed $y\!=\!x$ line show increased extraction risk when near-verbatim mass is accounted for.
The top panel is a reference for what verbatim extraction conveys.
There are only \textcolor{seabornbluemid}{\textbf{blue}} points, and they are all on the line $y=x$; 
there are $5{,}643$ verbatim extractable sequences.
The middle panel shows $\varepsilon=1$; 
even at this distance, there are a large number of ``unlocked'' points that have zero verbatim mass (\textcolor{seabornredmid}{\textbf{red}}) or sub-threshold verbatim mass (\textcolor{seabornorangemid}{\textbf{orange}});
the total number of extractable points goes up from $5{,}643$ to $7{,}132$ sequences.
The bottom panel shows $\varepsilon=5$.
There are even more unlocked sequences ($13{,}565$), and the risk spreads upward for all three categories.\looseness=-1}
\label{app:fig:pride:scatter-evo}
\end{figure*}
\FloatBarrier

\subsection{Varying beam width}\label{app:experiments:width}

In our main experiments, we always set beam width $\bw=20$.
In this appendix, we justify this choice.
For \textsc{Llama 2} models and \emph{Winnie the Pooh}, we run a sweep of experiments for $\levshort\,\varepsilon=5$ $k$-CBS for beam widths $\bw=20,30,40$.\footnote{We initially ran these experiments using baseline $k$-CBS, but then re-ran them with these settings after concluding from the results in Appendix~\ref{app:sec:experiments:prune} that this is the best setting for running once for edit-distance-based extraction.}
These increases reflect a corresponding increase in forward passes (e.g, $\bw=40$ reflects a $2\times$ increase, see Appendix~\ref{app:sec:intuition:cost}, if we discount early termination).
Overall, we find that increasing the beam width comes at substantial compute cost, but minimal change in identified extraction or extraction mass. 
This is why we opt to use $\bw=20$; 
for substantially decreased cost, we effectively obtain the same information.

The three figures in this appendix support this conclusion.
In Figure~\ref{app:fig:winnie:bw-rate}, for each model size we show how extraction counts (rate $\%$) change according to $\bw$ (column) and post-processing for a chosen $\levshort\, \varepsilon$ (row). 
In Figure~\ref{app:fig:winnie:bw-risk}, we produce similar plots, but for how extraction risk mean $\pm$ standard deviation changes. 
In both, for each per-model plot within a given $\varepsilon$ row, red cell shading indicates a relative drop compared to the $\bw=40$ results on the right (so the $\bw=40$ cells are all white).
For instance, for \textsc{Llama 2 70B} verbatim ($\varepsilon=0$) post-processing of the results, $\bw=40$ identifies $665$ instances of extraction; $\bw=30$ closely tracks with $664$ (and is shaded very lightly to reflect this), and $\bw=20$ finds $658$ (and so is darker red). 
Note that shading is determined by normalizing for a given model's $\varepsilon$ results in a given row, so an absolute drop from $69$ to $66$ (for \textsc{Llama 2 13B} and $\varepsilon=5$, from $\bw=40$ to $\bw=20$) shades darker than an absolute drop from $988$ to $962$  (for \textsc{Llama 2 70B} and $\varepsilon=5$, from $\bw=40$ to $\bw=20$), as the latter reflects a smaller relative drop.
The same interpretation applies to the extraction risk plot shading.

Overall, from Figure~\ref{app:fig:winnie:bw-rate}, we find that widening the beam from $\bw=20$ to $\bw=40$ captures slightly more extractable sequences, but the gains are minimal. 
The largest absolute
difference is at 70B / $\varepsilon=5$, where $\vw=40$ finds $26$ more sequences
than $\bw=20$ (as noted above, $988$ vs $962$, a $0.42$ percentage point increase in rate from $15.64\%$ to $16.06\%$).
At 7B the difference is at most $1$ sequence; at 13B at most $3$.
For mean extraction risk, Figure~\ref{app:fig:winnie:bw-risk} shows that the per-sequence mass mean is similarly stable across beam widths.
The largest relative shift in mean mass is ${\sim}4.5\%$
(7B / $\varepsilon=5$: $0.050$ vs.\ $0.052$), and at 13B and 70B the mean masses are within $\sim1$--$2\%$.
Note that sometimes the mean is slightly higher at smaller beam widths $\bw$;
this is due to compositional effects of the extractable set (for larger beam widths, we sometimes retain additional lower-mass sequences that surpass $\taumin$, but can bring down the mean).

\begin{figure*}[t!]
\centering
    \includegraphics[width=\linewidth]{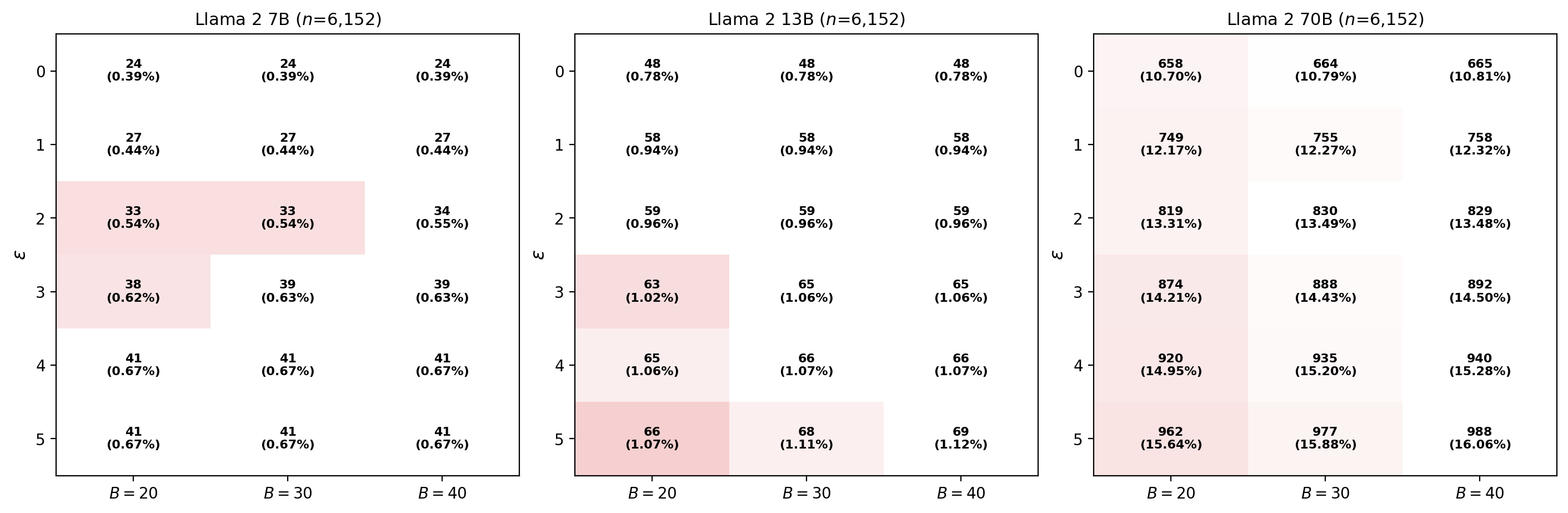}
    \caption{\textbf{Extraction counts and rates by beam width.} 
    For \emph{Winnie the Pooh} across \textsc{Llama 2} model sizes, each cell shows the number of near-verbatim extractable sequences ($\hat{p}^{\levshort}_{\seq,\,\varepsilon} \geq \taumin=0.001$), with the corresponding extraction rate in parentheses. 
    For a given model's plot, red shading in a cell indicates relative decreases compared
    to $\bw=40$ within each row (so the $\bw=40$ cells are all white). 
    In terms of absolute numbers, widening the beam from $\bw=20$ to $\bw=40$ yields at
    most $26$ additional extractable sequences (\textsc{Llama 2 70B}, $\levshort\,\varepsilon=5$).
    in terms of relative differences, \textsc{Llama 2 13B} shows the biggest relative decrease from $\bw=40$ to $\bw=20$ for $\levshort\,\varepsilon=5$ (which is why the red shading changes are more pronounced).\looseness=-1}
\label{app:fig:winnie:bw-rate}
\end{figure*}

\begin{figure*}[t!]
\centering
    \includegraphics[width=\linewidth]{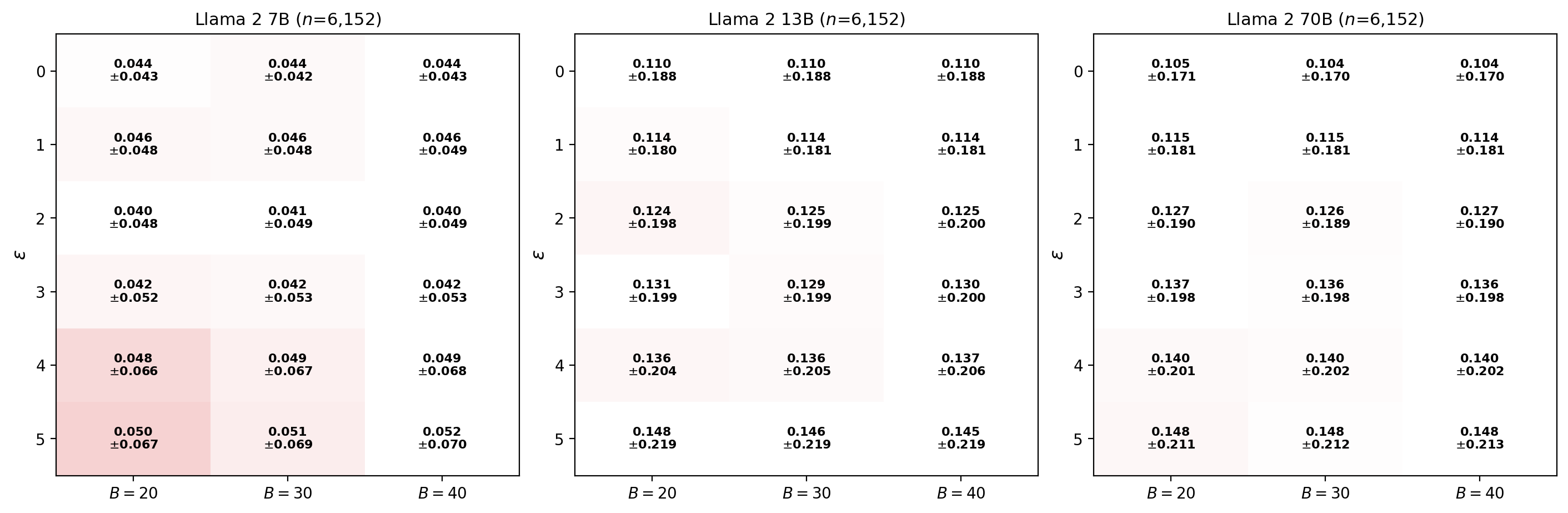}
\caption{\textbf{Extraction mass by beam width.} 
For \emph{Winnie the Pooh} across \textsc{Llama 2} model sizes, each cell shows mean extraction risk ($\pm1$ standard deviation) by beam width for extractable sequences. 
For a given model's plot, red shading in a cell indicates the relative drop in mean $\pm$ standard deviation, compared to $\bw=40$ within each row (so the $\bw=40$ cells are all white). 
The mass distribution is stable across beam widths, with relative differences under $5\%$.
Note that this measurement is about attempting to gauge stability---large swings in mass---as opposed to seeing how much mass decreases at smaller beam widths.
Mean mass can also shift downward at larger beam widths, if an expanded beam width reveals low-mass extractable sequences that smaller beam widths prune.
(See, e.g., \textsc{Llama 2 13B} for $\levshort\, \varepsilon=5$.)\looseness=-1}
\label{app:fig:winnie:bw-risk}
\end{figure*}

\begin{figure*}[t!]
\centering
    \includegraphics[width=.5\linewidth]{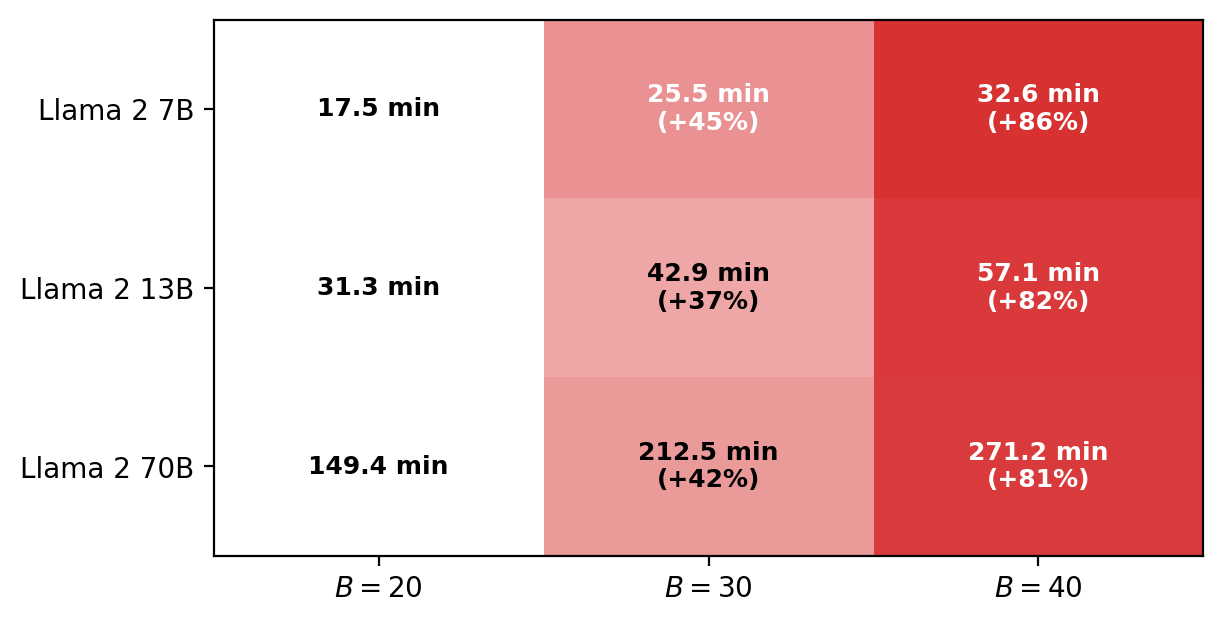}
\caption{\textbf{Wall-clock runtime (minutes) by beam width and model size.} 
The effective batch size is held roughly constant across beam widths
($\bw \times$ batch size $\approx$ $400$ for 70B, $\approx$ $200$ for 7B/13B).
Red cell shading indicates overhead relative to $\bw=20$. 
Doubling the beam width approximately doubles runtime, while the extraction gains shown in Figure~\ref{app:fig:winnie:bw-rate} are minimal.\looseness=-1}
\label{app:fig:winnie:bw-runtime}
\end{figure*}

For these minimal changes, using a larger beam width comes at significantly increased cost, with respect to wall-clock runtime, shown in Figure~\ref{app:fig:winnie:bw-runtime}. 
To make fair runtime comparisons, we choose batch sizes for $k$-CBS such that the effective processed number of sequences (\newterm{effective batch size}) is similar per iteration per configured run (Table~\ref{tab:exp-summary}). 
For a beam width $\bw$, a single iteration processes $\bw \cdot \text{batch size}$ sequences. 
For 70B, we use an effective batch size of $400$, so $\bw=20,30,40$ use batch sizes of $20$ (effective batch size $400$), $13$ ($390$), and $10$ ($400$), respectively.
This controls for GPU utilization: 
a fixed batch size with wider beams would increase parallelism, potentially masking the true per-step cost behind better GPU saturation. 
Holding the total sequences in being processed per iteration constant is a best effort at capturing how the runtime increase reflects the cost of wider beams.
A wider beam retains more mass, keeping the best-beam upper bound higher for longer, so sequences may terminate at different steps across $\bw$ settings.
Overall, going from $\bw=20$ to $\bw=40$ nearly doubles wall-clock time (slightly less, due to early termination). 
(Note that the shading is in the reverse direction of Figures~\ref{app:fig:winnie:bw-rate} and~\ref{app:fig:winnie:bw-risk}; 
we normalize runtime off of the cheapest run, which is for $\bw=20$.)

Together, these results justify using $\bw=20$ in our main experiments: 
doubling the beam width nearly doubles runtime while yielding minimal gains---a handful of marginal sequences at the extraction threshold.

\clearpage
\subsection{Comparing baseline and $\varepsilon$-viability pruned $k$-CBS}\label{app:sec:experiments:prune}

We run a full sweep of experiments for $\levshort\,\varepsilon=0,\ldots,5$ $\hamshort\,\varepsilon=0,\ldots,5$ to see how different pruning constraints impact extraction rate.
We also compare these full sweeps to baseline $k$-CBS post-processed results.
We run these experiments for \emph{Winnie the Pooh} across the three \textsc{Llama 2} sizes, summarized in two groups (per $\dist=\hamshort,\levshort$) of three models (7B, 13B, 7B).
The results are summarized in Figure~\ref{app:fig:winnie:prune-vs-rates}.
We also ran this entire suite of experiments on the negative control text, \emph{The People's Dictator} (first three chapters);
all counts across all run configurations are $0$. 

Overall, we find that the gains from tighter pruning are modest (tens of sequences out of thousands), so a single $\varepsilon=5$ run can be post-processed at any threshold with only minor loss in signal. 
Overall, the additional sequences captured by tighter pruning tend to send near the extraction threshold $\taumin$.
his is practically useful:
one run covers all distance thresholds of interest, rather than requiring separate runs per $\varepsilon$.
As expected, pruning with the Hamming distance finds fewer near-verbatim sequence than Levenshtein, as it is a stricter pruning criterion (Appendix~\ref{app:sec:nv:compute:ham-dominate-lev}). 

\paragraph{Interpreting the plots in this section.} 
Each per-model plot should be read vertically: 
each column fixes a distance threshold and compares how extraction counts (and corresponding rates) vary across configured run settings. 
The top row is baseline $k$-CBS, which does not perform pruning during the search (Section~\ref{sec:kcbs:baseline} \& Appendix~\ref{app:sec:kcbs}).
In the plots in Figure~\ref{app:fig:prune-lev}, subsequent rows show $\levshort$-pruned runs at
$\varepsilon = 0, \ldots, 5$;
the same is true for $\hamshort$-pruned runs in Figure~\ref{app:fig:prune-ham}. 
For a given plot, the diagonal---where pruning $\varepsilon$ matches the threshold used to filter outputs---represents each pruning setting's ``native'' operating point. 
Gray cells indicate distance thresholds above the run's pruning $\varepsilon$ (unavailable). Shading encodes the difference from baseline.
Bluer shading indicates increasingly tighter estimates of the corresponding extraction count (finding more extractable sequence), compared to the baseline (white).
Red shading would indicate identifying fewer such instances, which we do not observe in practice (i.e., pruned runs are always tighter than the baseline). 

\paragraph{Main takeaways.}
Every $\levshort$-pruned run and every $\hamshort$-pruned run finds at least as many extractable sequences as the un-pruned baseline at every applicable distance.
(Though, these runs are not guaranteed to find the same sequences; see Appendix~\ref{app:sec:prune:invariants}, counter example). 

The $\varepsilon=0$ pruned runs very closes match the verbatim probabilistic pipeline numbers. 
For instance, the verbatim probabilistic pipeline finds $24$ (7B), $48$ (13B), and $668$ (70B) extractable sequences, and for $\levshort\,\varepsilon=0$, counts exactly match. 
However, they find slightly different instances of extraction, even though the counts match.
This is due to the two using different pipelines--teacher forcing for verbatim probabilistic vs.\ autoregressive generation with KV caching for $k$-CBS.
Minor floating-point rounding differences can push borderline sequences across the extraction threshold in either direction.
At 70B with $\levshort\,\varepsilon=5$, the verbatim count ($\levshort\leq 0$ column) is $658$, compared to $668$ for verbatim probabilistic;
the net difference is $10$ sequences, but per-sequence comparisons reveal that $k$-CBS missed $11$ sequences identified by the verbatim probabilistic pipeline, and $k$-CBS identified $1$ sequence missed by verbatim probabilistic.
The un-pruned $k$-CBS baseline finds fewer verbatim matches than both of these other runs;
beam slot capacity goes to candidates with the highest mass at a given iteration, regardless of verbatim or near-verbatim match status. 
This crowds out verbatim matching candidates that are lower rank, in contrast to pruned $\varepsilon=0$ runs that concentrate all beam capacity on exact matches. 

Tighter pruning (i.e., setting a smaller $\varepsilon$) helps slightly at its ``native'' threshold, which is reflected on the diagonal of each plot.
Results on the diagonal consistently find a few more sequences. 
For instance, for 70B, the $0$ diagonal cell finds $668$ sequences, compared to $658$ in the $\varepsilon=5$ row;
similarly, the $1$ diagonal cell shows $763$ sequences, while the $\varepsilon=1$ row finds $749$. 
In general, tighter pruning frees beam slots for more viable candidates at that specific distance, revealing (relatively) ${\sim}1$--$2\%$ more extraction.

For the baseline runs, we also manually examine outputs that surpass $\taumin$ but are not within $\levshort\,\varepsilon=5$ of the target suffix. 
The results fall into two categories.
First, the more common case:
there are suffixes that are effectively near-verbatim matches, but exceed $\levshort=5$.
Based on visual inspection, these sequences are clearly memorized, but punctuation and formatting differences lead to inflated distances. 
Second, there are a handful of instances where the generated suffix is a near-verbatim match to \emph{different} text in the ground-truth book.
This type of outcome is what has encouraged others to develop suffix arrays over the training data to assess memorization~\citep{lee2022dedup, carlini2023quantifying}, rather than just comparing the ground-truth suffix for the given sequence to the generation.

\paragraph{Runtime.} 
Baseline $k$-CBS is the slowest, and then the pruned run times for different $\varepsilon$ increase with $\varepsilon$, as beam candidates can remain viable longer.
For 70B, $\levshort\, \varepsilon=5$ pruning is about $10\%$ faster in terms of wall-clock runtime compared to baseline $k$-CBS.
We cannot make a general claim about speedups;
it entirely depends on the amount of extractable sequences in the dataset being tested (with datasets with less memorization finishing faster, due to early termination from minimum probability testing and from pruning). 

\begin{figure*}[t!]
\centering
\begin{subfigure}{\linewidth}
    \includegraphics[width=\linewidth]{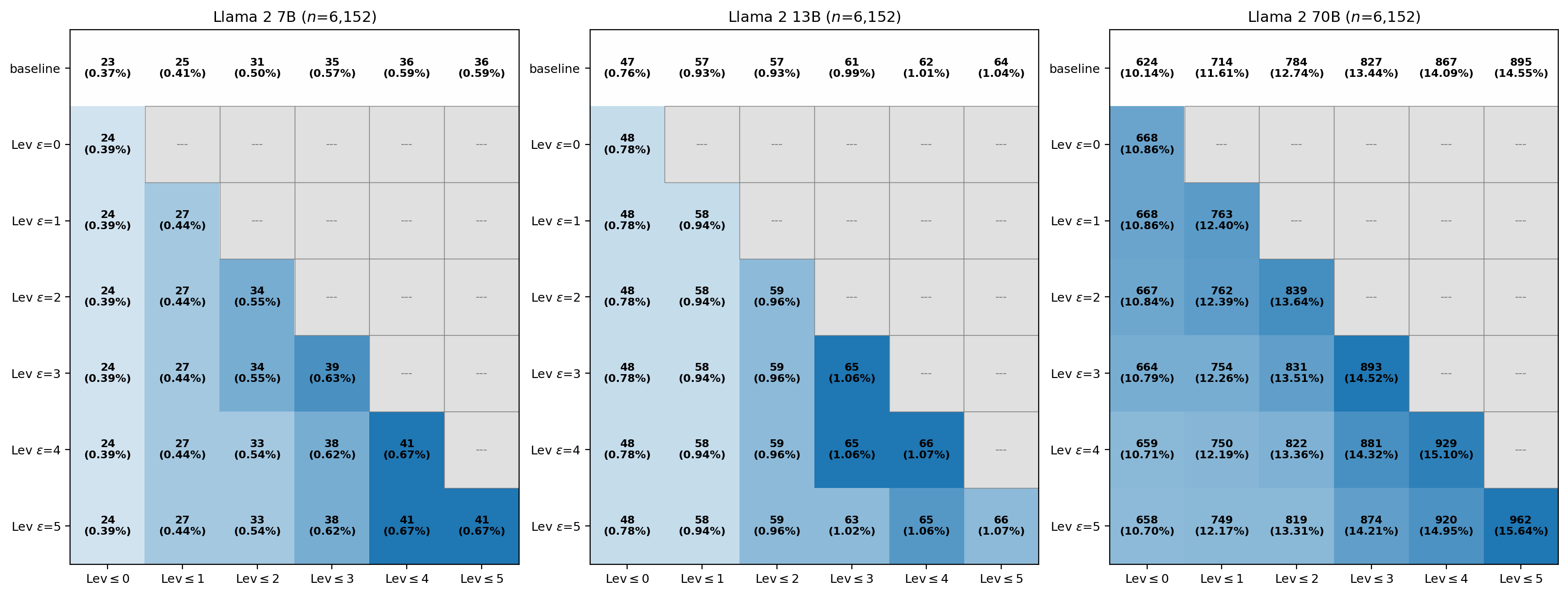}
    \subcaption{Levenshtein-pruned runs}
    \label{app:fig:prune-lev}
\end{subfigure}
\begin{subfigure}{\linewidth}
    \includegraphics[width=\linewidth]{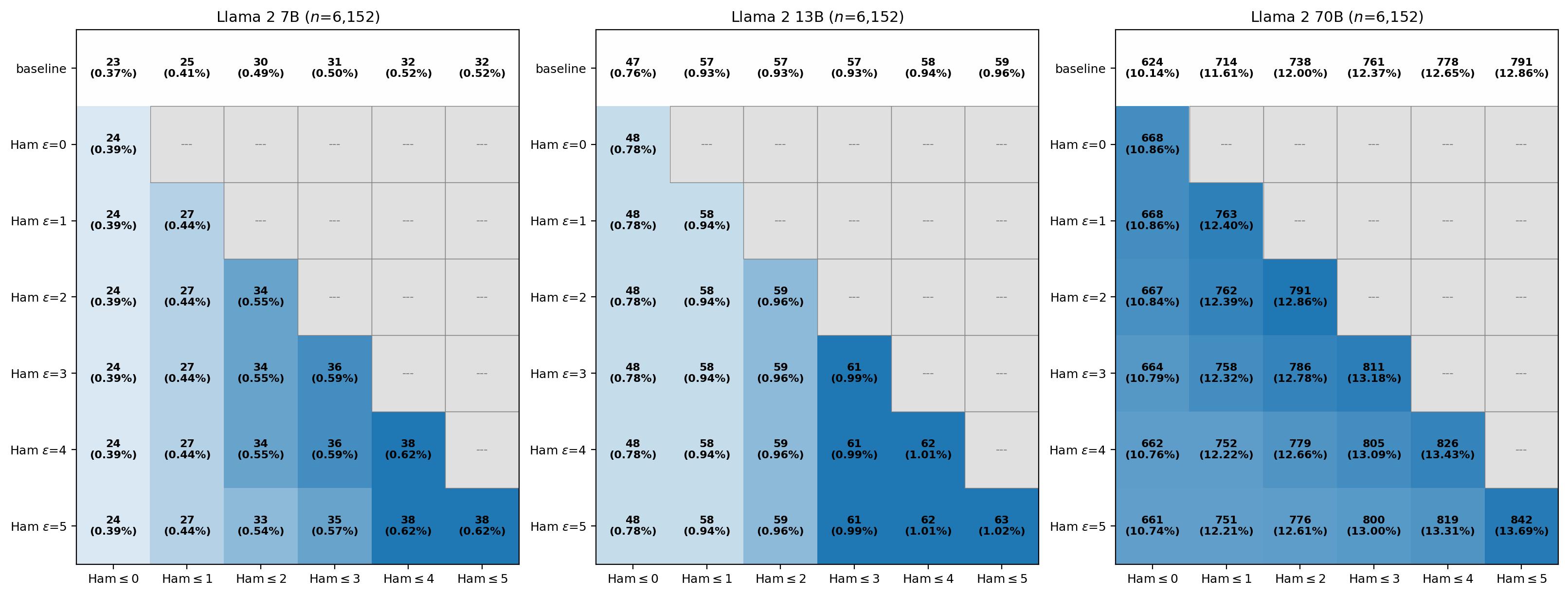}
    \subcaption{Hamming-pruned runs}
    \label{app:fig:prune-ham}
\end{subfigure}
\caption{\textbf{Extraction counts by configured run, based on filtering to a specific  distance threshold.}
For \emph{Winnie the Pooh} and each \textsc{Llama 2} model size, we run baseline $k$-CBS (Section~\ref{sec:kcbs:baseline} \& Appendix~\ref{app:sec:kcbs}), (\textbf{a}) $\levshort$-pruned $k$-CBS (Section~\ref{sec:pruning}  \& Appendix~\ref{app:sec:prune:lev}), and (\textbf{b}) $\hamshort$-pruned $k$-CBS (Section~\ref{sec:pruning}  \& Appendix~\ref{app:sec:prune:lev}).
For each plot, read vertically, each column fixes a distance threshold and compares how extraction counts vary across configured run settings. 
The top row in all plots is baseline $k$-CBS (no pruning); 
subsequent rows show pruned runs at $\varepsilon = 0, \ldots, 5$, for (\textbf{a}) the Levenhstein distance and (\textbf{b}) the Hamming distance. 
The diagonal (where pruning $\varepsilon$ row matches the evaluation threshold column) represents each pruning setting's ``native'' operating point. 
Gray cells indicate distance thresholds above the run's pruning $\varepsilon$ (unavailable). Blue cell shading encodes the difference from baseline, reflecting that pruning found more extractable sequences. 
White shows no change.
Red would indicate fewer extractable sequences (though we do not observe this).
}
\label{app:fig:winnie:prune-vs-rates}
\end{figure*}

\clearpage
\subsection{Examining baseline $k$-CBS outputs with other near-verbatim metrics}\label{app:sec:bleu}

The baseline $k$-CBS pipeline produces candidate continuations without any distance-viability-based pruning. 
We post-process these with Levenshtein and Hamming distances (e.g., see Appendix~\ref{app:sec:experiments:prune}), but the same outputs can be scored with any
similarity metric (or really any metric of interest). 

To illustrate this, we compute sentence-level BLEU score~\citep{bleu} between each candidate in the $\bw \cdot k$ $k$-CBS outputs and the ground-truth suffix.
We follow \citet{ippolito-etal-2023-preventing}, who define approximate memorization as BLEU $\geq 0.75$ over $50$-token suffixes. 
They pick this cutoff from manual inspection of decoded text.
(BLEU is computed on text, not tokenized text.) 
For each sequence, we sum the probability mass of all candidates whose sentence BLEU with the ground-truth suffix is $\geq 0.75$. 
We then say that a sequence is probabilistically BLEU-extractable if this mass $\geq \taumin$---the same probabilistic extraction criterion we use for edit-distance metrics.

\paragraph{BLEU score.}
\newterm{Bilingual Evaluation Understudy (BLEU)}~\citep{bleu} is a precision metric originally developed for machine translation applications.
In particular, \newterm{sentence-level BLEU} computes the geometric mean of $n$-gram precisions, typically for $n = 1, \dots, 4$, multiplied by a brevity penalty:
\begin{align}
\mathsf{BLEU} \triangleq \mathsf{BP} \cdot \exp \Bigl(\sum_{n=1}^{4}
\frac{1}{4} \log p_n\Bigr),
\end{align}
where $p_n$ is the fraction of $n$-grams in the hypothesis that appear in
the reference (using modified precision with clipped counts), and brevity penalty 
$\mathsf{BP} = \min(1,\, e^{1 - r/c})$ penalizes hypotheses shorter than
the reference ($c$ is the hypothesis length, $r$ is the reference length).

The standard configuration from \citet{bleu} uses $n = 1, \ldots, 4$ with uniform weights
$w_n = 1/4$ (which are omitted in the equation);
this is the default in all major implementations (including \texttt{nltk}, which we use, similar to \citet{ippolito-etal-2023-preventing}). 
$\mathsf{BLEU}$ ranges from $0$ to $1$.
A score of $0.75$ indicates that roughly $75\%$ of unigrams
through $4$-grams in the candidate appear in the reference;
this is high $n$-gram overlap, but not necessarily identical text. 
Unlike edit distance, $\mathsf{BLEU}$ does not require sequential alignment: 
it rewards shared $n$-grams regardless of position, so r-eorderings, insertions, and multi-token substitutions that preserve local $n$-gram structure can still yield a high score.
Note that since $\mathsf{BLEU}$ operates on text (not LLM tokens), $\mathsf{BP}$ is not always $1$, given that $50$-token suffixes may decode to different-length texts. 

\paragraph{Application to $k$-CBS.} 
For a given sequence, $k$-CBS returns $\bw \cdot k$ continuations.
We sum the probability mass of all candidates with sentence-level $\mathsf{BLEU} \geq 0.75$, computed with respect to the ground-truth suffix from the training data;
this is effectively our $\varepsilon$ criterion in the lower bound computation in Equation~\ref{eq:pseqe}.
For probabilistic extraction, we then apply the same $\taumin$ to produce $\hat{p}^{\mathsf{BLEU}}_{\seq, 0.75}$. 
We do this for the baseline $k$-CBS results for \emph{Winnie the Pooh}, which we report of Hamming and Levenshtein distance in Appendix~\ref{app:sec:experiments:prune}.
(We also run the associated negative controls on \emph{The People's Dictator}, first three chapters, which similarly yields no hits.)\looseness=-1 

\begin{table}[h]
\centering
\caption{\textbf{Extractable sequence counts under three similarity metrics on \textit{Winnie the Pooh}.} 
Each row counts sequences whose aggregated candidate mass under the given metric meets the extraction threshold $\taumin$, applied to baseline $k$-CBS outputs ($\bw\!=\!20$). 
The bottom section shows the overlap between Lev $d \leq 5$ and BLEU $\geq 0.75$: most extractable sequences are captured by both, but each metric finds a small number of sequences missed by the other.}
\label{tab:bleu_vs_lev}
\small
\begin{tabular}{l rrr}
\toprule
& \textsc{7B} & \textsc{13B} & \textsc{70B} \\
\midrule
\multicolumn{4}{l}{\textit{Extractable sequences by metric}} \\[2pt]
Verbatim & 23 (0.37\%) & 47 (0.76\%) & 624 (10.14\%) \\
$\levshort\, \varepsilon=5$ & 36 (0.59\%) & 64 (1.04\%) & 895 (14.55\%) \\
$\mathsf{BLEU} \geq 0.75$ & 41 (0.67\%) & 71 (1.15\%) & 1012 (16.45\%) \\
\midrule
\multicolumn{4}{l}{\textit{Set overlap: $\levshort\, \varepsilon \leq 5$ vs.\ $\mathsf{BLEU} \geq 0.75$}} \\[2pt]
Both & 36 & 64 & 889 \\
$\levshort\, \varepsilon=5$ only & 0 & 0 & 6 \\
$\mathsf{BLEU} \geq 0.75$ only & 5 & 7 & 123 \\
\bottomrule
\end{tabular}
\end{table}

Overall, $\mathsf{BLEU}$ is more permissive than $\levshort\, \varepsilon=5$.
As shown in the top of Table~\ref{tab:bleu_vs_lev}, in absolute counts (and therefore rates), it finds more extractable sequences at every model size. 
In the bottom of the table, we also examine the decomposition of extractable sequences, according to which are extractable under both metrics, under $\levshort\, \varepsilon=5$ only, and under $\mathsf{BLEU} \geq 0.75$ only. 
Overall, the two metrics largely agree with identifying near-verbatim extractable sequences, but their respective results are not subsets of each other. 

For instance, both metrics capture $889$ of $1{,}018$ unique extractable sequences for 70B.
Upon manual inspection of the outputs, disagreements in both directions indicate false negatives of the respective metric, not false positives of the other.
The sequences are all very close to identical to the verbatim target suffix, and fall through either check due to being below the respective filter thresholds.
For the $123$ sequences that are extractable under $\mathsf{BLEU} \geq$ only (i.e., that $\levshort\, \varepsilon=5$ misses), the differences reflect formatting changes---newline variations, punctuation style differences, ASCII replacements, slight syntactic rearrangements, etc.
These can shift many tokens (i.e., above our threshold), but preserve word-level content.
These candidates often carry large probability (e.g., $\textsf{BLEU}$-captured mass that is $>0.5$, where $\levshort$ distances are $6$--$15$, but clearly are memorized upon visual inspection). 

Coming from the other direction, the $\mathsf{BLEU}$ filter misses $6$ sequences at 70B that are extractable with respect to $\levshort\, \varepsilon=5$.
These consist of small token edits in fairly unusual text---made-up (sound-like words, onomatopoeia, etc.) from \emph{Winnie the Pooh} with capitalization changes.
Others include em-dash to double-hyphen substitutions, verb synonyms, etc.
These are within $5$ token edits for Levenshtein, but sufficient to break enough word-level $4$-grams to drop the $\mathsf{BLEU}$ score below the $0.75$ threshold.

\stopcontents[appendix]

\end{document}